\def\eqref#1{equation~\ref{#1}}
\def\1{\bm{1}}
\def\eps{{\epsilon}}
\def\vx{{\bm{x}}}
\DeclareMathAlphabet{\mathsfit}{\encodingdefault}{\sfdefault}{m}{sl}
\SetMathAlphabet{\mathsfit}{bold}{\encodingdefault}{\sfdefault}{bx}{n}
\newcommand{\E}{\mathbb{E}}
\newcommand{\R}{\mathbb{R}}
\newcommand{\KL}{D_{\mathrm{KL}}}
\newcommand{\Var}{\mathrm{Var}}
\DeclareMathOperator*{\argmax}{arg\,max}
\DeclareMathOperator*{\argmin}{arg\,min}
\newcommand{\defeq}{\mathrel{\mathop:}=}
\newcommand{\trans}{^{\top}}
\newcommand{\norm}[1]{\|{#1} \|}
\newcommand{\Exp}[2]{\mathop\mathbb{E}_{#1}\left[#2\right]}
\newcommand{\pr}[1]{\mathop\mathbb{P}\left[#1\right]}
\newcommand{\prob}[2]{\mathop\mathbb{P}_{#1}\left[#2\right]}
\newcommand{\tlO}{\mathcal{\tilde{O}}}
\newcommand{\N}{\mathbb{N}}
\newcommand{\cF}{\mathcal{F}}
\newcommand{\cM}{\mathcal{M}}
\newcommand{\cD}{\mathcal{D}}
\newcommand{\cH}{\mathcal{H}}
\newcommand{\cT}{\mathcal{T}}
\newcommand{\cG}{\mathcal{G}}
\newcommand{\cn}{\kappa}
\newcommand{\vsigma}{{\bm\sigma}}
\renewcommand{\eps}{\varepsilon}
\newcommand{\one}[1]{\mathds{1}\left[#1\right]}
\renewcommand{\KL}[2]{KL\left( #1,#2\right)}
\def\vx{{\bm{x}}}
\newtheorem{theorem}{Theorem}[section]
\newtheorem{lemma}[theorem]{Lemma}
\declaretheoremstyle[bodyfont=\normalfont]{rm}
\newtheorem{proposition}[theorem]{Proposition}
\theoremstyle{definition}
\newtheorem{definition}[theorem]{Definition}
\newtheorem{assumption}{Assumption}
\newcommand{\cS}{\mathcal{S}}
\newcommand{\cA}{\mathcal{A}}
\newcommand{\cX}{\mathcal{X}}
\newcommand{\Data}{\mathcal{D}}
\newcommand{\DData}{\widetilde{\mathcal{D}}}
\newcommand{\bfun}{f}
\newcommand{\empbfun}{\hat{f}}
\newcommand{\bg}{{g}}
\newcommand{\bFun}{\cF}
\newcommand{\bG}{{\cG}}
\renewcommand{\P}{\mathbb{P}}
\newcommand{\ind}{\mathds{1}}
\newcommand{\Berr}{\mathcal{E}}
\newcommand{\empR}{\widehat{\mathcal{R}}}
\newcommand{\popR}{\mathcal{R}}
\newcommand{\bcT}{{\mathcal{T}}}
\newcommand{\bmu}{{\mu}}
\newcommand{\FQI}{\mathsf{FQI}}
\newcommand{\best}{\dagger}
\newcommand{\empBerr}{\hat{L}_{\text{B}}}
\newcommand{\empDS}{\hat{L}_{\text{DS}}}
\newcommand{\empMM}{\hat{L}_{\text{MM}}}
\newcommand{\Uniform}{\text{Unif}}
\begin{document}

\begin{center} {\bf \LARGE Risk Bounds and Rademacher Complexity in \\ Batch Reinforcement Learning}

\vspace{1.5em}

\begin{tabular}{ccc}
	Yaqi Duan & Chi Jin & Zhiyuan Li \\
	Princeton University & Princeton University & Princeton University \\
	\texttt{yaqid@princeton.edu} & \texttt{chij@princeton.edu} & \texttt{zhiyuanli@cs.princeton.edu}
\end{tabular}

\vspace{1em}

\today

\end{center}

\vspace{1em}

\begin{abstract}
This paper considers batch Reinforcement Learning (RL) with general value function approximation.
Our study investigates the minimal assumptions to reliably estimate/minimize Bellman error, and characterizes the generalization performance by (local) Rademacher complexities of general function classes, which makes initial steps in bridging the gap between statistical learning theory and batch RL. Concretely, we view the Bellman error as a surrogate loss for the optimality gap, and prove the followings: (1) In double sampling regime, the excess risk of Empirical Risk Minimizer (ERM) is bounded by the Rademacher complexity of the function class. (2) In the single sampling regime, sample-efficient risk minimization is not possible without further assumptions, regardless of algorithms. However, with completeness assumptions, the excess risk of FQI and a minimax style algorithm can be again bounded by the Rademacher complexity of the corresponding function classes.  (3) Fast statistical rates can be achieved by using tools of local Rademacher complexity. Our analysis covers a wide range of function classes, including finite classes, linear spaces, kernel spaces, sparse linear features, etc.
\end{abstract}

% !TEX root = main.tex

\section{Introduction}
\label{sec:intro}

Statistical learning theory, since its introduction in the late 1960’s, has become one of the most important frameworks in machine learning, to study problems of inference or function estimation from a given collection of data \citep{hastie2009elements,vapnik2013nature,james2013introduction}. The development of statistical learning has led to a series of new  popular algorithms including support vector machines \citep{cortes1995support,suykens1999least}, boosting \citep{freund1996experiments,schapire1999brief}, as well as many successful applications in fields such as computer vision \citep{szeliski2010computer,forsyth2012computer}, speech recognition \citep{juang1991hidden,jelinek1997statistical}, and bioinformatics \citep{baldi2001bioinformatics}.

Notably, in the area of supervised learning, a considerable amount of effort has been spent on obtaining sharp risk bounds. These are valuable, for instance, in the problem of model selection---choosing a model of suitable complexity. Typically, these risk bounds characterize the excess risk---the suboptimality of the learned function compared to the best function within a given function class, via proper complexity measures of that function class. After a long line of extensive research \citep{vapnik2013nature,vapnik2015uniform,bartlett2005local,bartlett2006convexity}, risk bounds are proved under very weak assumptions which do not require realizability---the prespecified function class contains the ground-truth. 
The complexity measures for general function classes have also been developed, including but not limited to metric entropy \citep{dudley1974metric}, VC dimension \citep{vapnik2015uniform} and Rademacher complexity \citep{bartlett2002rademacher}. (See e.g. \cite{wainwright2019high} for a textbook review.)

Concurrently, batch reinforcement learning \citep{lange2012batch,levine2020offline}---a branch of Reinforcement Learning (RL) that learns from offline data, has been independently developed. This paper considers the value function approximation setting, where the learning agent aims to approximate the optimal value function from a restricted function class that encodes the prior knowledge. Batch RL with value function approximation provides an important foundation for the empirical success of modern RL, and leads to the design of many popular algorithms such as DQN \citep{mnih2015human} and Fitted Q-Iteration with neural networks \citep{riedmiller2005neural,fan2020theoretical}.

Despite being a special case of supervised learning, batch RL also brings several unique challenges due to the additional requirement of learning the rich temporal structures within the data. Addressing these unique challenges has been the main focus of the field so far \citep{levine2020offline}. Consequently, the field of statistical learning and batch RL have been developed relatively in parallel. In contrast to the mild assumptions required and the generic function class allowed in classical statistical learning theory, a majority of batch RL results \citep{munos2008finite,antos2008learning,lazaric2012finite,chen2019information} remain under rather strong assumptions which rarely hold in practice, and are applicable only to a restricted set of function classes. This raises a natural question: can we bring the rich knowledge in statistical learning theory to advance our understanding in batch RL?

This paper makes initial steps in bridging the gap between statistical learning theory and batch RL. We investigate the minimal assumptions required to reliably estimate or minimize the Bellman error, and characterize the generalization performance of batch RL algorithms by (local) Rademacher complexities of general function classes. Concretely, we establish conditions when the Bellman error can be viewed as a surrogate loss for the optimality gap in values. 
We then bound the excess risk measured in Bellman errors. We prove the followings:
\begin{itemize}
\item In the double sampling regime, the excess risk of a simple Empirical Risk Minimizer (ERM) is bounded by the Rademacher complexity of the function class, under almost no assumptions. 
\item In the single sampling regime, without further assumptions, no algorithm can achieve small excess risk in the worse case unless the number of samples scales up polynomially with respect to the number of states. 
\item In the single sampling regime, under additional completeness assumptions, the excess risks of Fitted Q-Iteration (FQI) algorithm and a minimax style algorithm can be again bounded by the Rademacher complexity of the corresponding function classes. 
\item Fast statistical rates can be achieved by using tools of local Rademacher complexity.
\end{itemize}
Finally, we specialize our generic theory to concrete examples, and show that our analysis covers a wide range of function classes, including finite classes, linear spaces, kernel spaces, sparse linear features, etc.

% !TEX root = main.tex

\subsection{Related Work}

We restrict our discussions in this section to the RL results under function approximation.

\paragraph{Batch RL}

There exists a stream of literature regarding finite sample guarantees for batch RL with value function approximation. Among the works, fitted value iteration \citep{munos2008finite} and policy iteration \citep{antos2008learning,farahmand2008regularized,lazaric2012finite,farahmand2016regularized,le2019batch} are canonical and popular approaches.
When using a linear function space, the sample complexity for batch RL is shown to depend on the dimension \citep{lazaric2012finite}.
When it comes to general function classes, several complexity measures of function class such as metric entropy and VC dimensions have been used to bound the performance of fitted value iteration and policy iteration \citep{munos2008finite,antos2008learning, farahmand2016regularized}.

Throughout the existing theoretical studies of batch RL, people commonly use concentrability, realizability and completeness assumptions to prove polynomial sample complexity.
\citet{chen2019information} justify the necessity of low concentrability and hold a debate on realizability and completeness. \citet{xie2020batch} develop an algorithm that only relies on the realizability of optimal Q-function and circumvents completeness condition. However, they use a stronger concentrability assumption and the error bound has a slower convergence rate.
While the analyses in \citet{chen2019information} and \citet{xie2020batch} are restricted to discrete function classes with a finite number of elements,
\citet{wang2020statistical} investigate value function approximation with linear spaces. It is shown that data coverage and realizability conditions are not sufficient for polynomial sample complexity in the linear case.

\paragraph{Off-policy evaluation}
Off-policy evaluation (OPE) refers to the estimation of value function given offline data \citep{precup2000eligibility,precup2001off,xie2019towards,uehara2020minimax,kallus2020double,yin2020near,uehara2021finite}, which can be viewed as a subroutine of batch RL.
Combining OPE with policy improvement leads to policy-iteration-based or actor-critic algorithms \citep{dann2014policy}.
OPE is considered as a simpler problem than batch RL and
its analyses cannot directly translate to guarantees in batch RL.

\paragraph{Online RL}
RL in online mode is in general a more difficult problem than batch RL. The role of value function approximation in online RL remains largely unclear. It requires better tools to measure the capacity of function class in an online manner.
In the past few years, there are some investigations in this direction, including using Bellman rank \citep{jiang2017contextual} and Eluder dimension \citep{wang2020reinforcement} to characterize the hardness of RL problem.

\subsection{Notation} For any integer $K > 0$, let $[K]$ be the collection of $1,2,\ldots,K$. We use $\ind[\cdot]$ to denote the indicator function. For any function $q(\cdot)$ and any measure $\rho$ over the domain of $q$, we define norm $\norm{\cdot}_{\rho}$ where $\norm{q}^2_{\rho} \defeq \E_{x\sim \rho} q^2(x)$. Let $\rho_1$ be a measure over $\mathcal{X}_1$ and $\rho_2(\cdot \mid x_1)$ be a conditional distribution over $\mathcal{X}_2$. Define $\rho_1 \times \rho_2$ as a joint distribution over $\mathcal{X}_1 \times \mathcal{X}_2$, given by $(\rho_1 \times \rho_2)( x_1, x_2) := \rho_1(x_1) \rho_2(x_2 \mid x_1)$.
For any finite set $\cX$, let $\Uniform(\cX)$ define a uniform distribution over $\cX$.

% !TEX root = main.tex

\section{Preliminaries}
\label{sec:prelim}

We consider the setting of episodic Markov decision process ${\rm MDP}(\cS, \cA, H, \P, r)$, where $\cS$ is the set of states which possibly has infinitely many elements; $\cA$ is a finite set of actions with $|\cA| = A$; $H$ is the number of steps in each episode; $\P_h ( \cdot \mid s, a) $ gives the distribution over the next state if action $a$ is taken from state $s$ at step $h\in [H]$; and $r_h \colon \cS \times \cA \to [0,1]$ is the deterministic reward function at step $h$. \footnote{While we study deterministic reward functions for notational simplicity, our results generalize to randomized reward functions. Note that we are assuming that rewards are in $[0,1]$ for normalization.}

In each episode of an MDP, we start with \textbf{a fixed initial state $s_1$}. Then, at each step
$h \in [H]$, the agent observes state $s_h \in \cS$, picks an action
$a_h \in \cA$, receives reward $r_h(s_h, a_h)$, and then transitions
to the next state $s_{h+1}$, which is drawn from the distribution
$\P_h(\cdot \mid s_h, a_h)$. Without loss of generality, we assume there is a terminating state $s_{\text{end}}$ which the environment will \emph{always} transit to % $s_{\text{end}}$ 
at step $H+1$, and the episode terminates when $s_{\text{end}}$ is reached.

A (non-stationary, stochastic) policy $\pi$ is a collection of $H$
functions $\big\{ \pi_h: \cS \rightarrow \Delta_\cA \big\}_{h\in
	[H]}$, where $\Delta_\cA$ is the probability simplex over action set
$\cA$. We denote $\pi_h(\cdot \mid s)$ as the action
distribution for policy $\pi$ at state $s$ and time $h$.  Let
$V^\pi_h \colon \cS \to \mathbb{R}$ denote the value function at
step $h$ under policy $\pi$, which % so that $V^\pi_h(s)$ 
gives the expected
sum of remaining rewards received under policy $\pi$, starting from
$s_h = s$, until the end of the episode. That is,
\begin{equation*}
V^\pi_h(s) \defeq \E_{\pi}\left[ \sum_{h' = h}^H r_{h'}(s_{h'}, a_{h'}) \Biggm| s_h = s\right] .
\end{equation*}
Accordingly, the action-value function $Q^\pi_h:\cS \times \cA \to \mathbb{R}$ at step $h$ is defined as,
\begin{equation*}
Q^\pi_h(s,a) \defeq \E_{\pi}\left[ \sum_{h' = h}^H r_{h'}(s_{h'}, a_{h'}) \Biggm| s_h = s, a_h = a\right].
\end{equation*}

Since the action spaces, and the horizon, are all finite,
there always exists (see, e.g., \cite{puterman2014markov}) an optimal
policy $\pi^\star$ which gives the optimal value $V^\star_h(s) =
\sup_{\pi} V_h^\pi(s)$ for all $s\in \cS$ and $h\in [H]$.  

For notational convenience, we take shorthands $\P_h, \P^{\pi}_h, \P^\star_h$ as follows, where $(s, a)$ is the state-action pair for the current step, while $(s', a')$ is the state-action pair for the next step,
\begin{align*} 
[\P_h V](s, a) \defeq & \E \big[V(s') \bigm| s,a \big], \\
[\P^\pi_h Q](s, a) \defeq & \E_{\pi} \big[Q(s', a') \big| s,a\big],  \\ 
[\P^\star_h Q](s, a) \defeq & \E \big[ \max_{a'} Q(s', a') \bigm| s,a \big]. 
\end{align*}
We further define Bellman operators $\cT_h^\pi, \cT_h^\star: \R^{\cS \times \cA} \rightarrow \R^{\cS \times \cA}$  for $h\in[H]$ as
\begin{align*}
& (\cT_h^\pi Q)(s, a) \defeq (r_h + \P_h^\pi Q)(s, a), \\ & (\cT_h^\star Q)(s, a) \defeq (r_h + \P_h^\star Q)(s, a).
\end{align*}
Then the Bellman equation and the Bellman optimality equation can be written as:
\begin{equation*}
Q^\pi_h(s, a) = (\cT_h^\pi Q^\pi_{h+1})(s, a), \ Q^\star_h(s, a) = (\cT_h^\star Q^\star_{h+1})(s, a).
\end{equation*}

The objective of RL is to find a near-optimal policy, where the sub-optimality is measured by $V_1^{\star}(s_1) - V_1^{\pi}(s_1)$. Accordingly, we have the following definition of $\epsilon$-optimal policy.
\begin{definition}[$\epsilon$-optimal policy]
	We say a policy $\pi$ is \textbf{$\epsilon$-optimal} if $V_1^{\star}(s_1) - V_1^{\pi}(s_1) \le \epsilon$.
\end{definition}

\subsection{(Local) Rademacher complexity}
In this paper, we leverage Rademacher complexity to characterize the complexity of a function class.
For a generic real-valued function space $\cF \subseteq \R^{\cX}$ and $n$ fixed data points $X = \{x_1, \ldots, x_n\} \in \cX^n$, the empirical Rademacher complexity is defined as
\begin{equation*}
\empR_{X}(\cF) := \E\Bigg[\sup_{f \in \cF} \frac{1}{n} \sum_{i=1}^n  \sigma_{i} f(x_{i}) \Biggm| X \Bigg],
\end{equation*}
where $\sigma_i \sim {\rm Uniform}(\{-1,1\})$ are {\it i.i.d.} Rademacher random variables and 
the expectation is taken with respect to the uncertainties in $\{\sigma_i\}_{i=1}^n$.
Let $\rho$ be the underlying distribution of $x_i$. We further define a population Rademacher complexity $\popR^{\rho}_n(\cF) := \E_{\rho} [\empR_{X}(\cF)]$ with expectation taken over data samples $X$. 
Intuitively, $\popR^{\rho}_{n}(\cF)$ measures the complexity of $\cF$ by the extent to which functions in the class $\cF$ correlate with random noise $\sigma_{i}$.

This paper further uses the tools of local Rademacher complexity to obtain results with fast statistical rate. For a generic real-valued function space $\cF \subseteq \R^{\cX}$, and data distribution $\rho$. Let $T$ be a functional $T:\cF \rightarrow \R^{+}$, we study the local Radmacher complexity in the form of 
\begin{equation*}
\popR^{\rho}_n(\{f\in \cF \mid T(f) \le r\}).
\end{equation*}
A crucial quantity that appears in the generalization error bound using local Rademacher complexity is the critical radius \citep{bartlett2005local}. We define as follows.

\begin{definition}[Sub-root function]
	A function $\psi: \R^+ \rightarrow \R^+$ is sub-root if it is nondecreasing, and $r\rightarrow \psi(r)/\sqrt{r}$ is nonincreasing for $r >0$.
\end{definition}

\begin{definition}[Critical radius of local Radmacher complexity] \label{def:critrad} The critical radius of the local Radmacher complexity $\popR^\rho_n(\{f\in \cF \mid T(f) \le r\})$ is the infimum of the set $\mathfrak{B}$, where set $\mathfrak{B}$ is defined as follows: for any $r^\star \in \mathfrak{B}$, there exists a sub-root function $\psi$ such that $r^\star$ is the fixed point of $\psi$, and for any $r \ge r^\star$ we have 
	\begin{equation} \label{eq:critical_radius}
	\psi(r) \ge \popR^{\rho}_n(\{f\in \cF \mid T(f) \le r\}).
	\end{equation}
\end{definition}
We typically obtain an upper bound of this critical radius by constructing one specific sub-root function $\psi$ % with its fixed point $r^\star$ 
satisfying (\ref{eq:critical_radius}).

\vspace{.5em}

\section{Batch RL with Value Function Approximation} \label{sec:batch_RL_setting}
This paper focuses on the offline setting where the data in form of tuples $\Data = \{(s, a, r, s', h)\}$ are collected beforehand, and are given to the agent. In each tuple, $(s, a)$ are the state and action at the $h^{\textrm{th}}$ step, $r$ is the resulting reward, and $s'$ is the next state sampled from $\P_h(\cdot |s, a)$. For each $h\in[H]$, we have access to $n$ data, that are {\it i.i.d} sampled with marginal distribution $\mu_h$ over $(s, a)$ at the $h^{\textrm{th}}$ step. We denote $\mu = \mu_1 \times \mu_2 \times \ldots \times \mu_H$. 
For each $h\in[H]$, 
we further denote the marginal distribution of $s'$ in tuple $(s, a, s', h)$ as $\nu_h$ , and let $\nu = \nu_1 \times \nu_2 \times \ldots \times \nu_H$. Throughout this paper, we will consistently use $\mu$ and $\nu$ to only denote the probability measures defined above.

We assume data distribution $\mu$ is well-behaved and satisfies the following assumption.
\begin{assumption}[Concentrability]\label{as:coverage}
	Given a policy $\pi$, let $P_h^{\pi}$ denote the marginal distribution at time step $h$, starting from $s_1$ and following $\pi$. There exists a parameter $C$ such that
	\begin{equation*}
	\sup_{(s,a,h) \in \cS \times \cA \times [H]}\frac{{\rm d} P_h^{\pi}}{{\rm d} \mu_h}(s, a) \le C \qquad \text{for any policy $\pi$}.
	\end{equation*}
\end{assumption}
Assumption \ref{as:coverage} requires that for any state-action pair $(s, a)$, if there exists a policy $\pi$ that reaches $(s, a)$ with some descent amount of probability, then the chance that sample $(s, a)$ appears in the dataset would not be low.
Intuitively, Assumption \ref{as:coverage} ensures that the dataset $\Data$ is representative for all the ``reachable'' state-action pairs. The assumption is frequently used in the literature of batch RL, e.g. equation (7) in \citet{munos2003error}, Definition 5.1 in \citet{munos2007performance}, Proposition 1 in \citet{farahmand2010error}, Assumption 1 in \citet{chen2019information}, etc.
We remark that Assumption \ref{as:coverage} here is the only assumption of this paper regarding the properties of the batch data.

We consider the setting of value function approximation, where at each step $h$ we use a function $f_h$ in class $\cF_h$ to approximate the optimal $Q$-value function. For notational simplicity, we denote $f := (f_1, \cdots, f_H) \in \cF$ with $\cF :=  \cF_1 \times \cdots \times \cF_H$.  Since no reward is collected in the $(H+1)^{\text{th}}$ steps, we will always use the convention that $f_{H+1} =0$
and $\cF_{H+1} = \{0\}$.
We assume $f_h \in [-H,H]$ for any $f_h \in \cF_h$.
Each $f \in \cF$ induces a greedy policy $\pi_{f} = \{\pi_{f_h}\}_{h=1}^{H}$ where
\begin{equation*}
\pi_{f_h}(a \mid s) = \ind\Big[a = \argmax_{a'} f_h(s, a')\Big].
\end{equation*}

In valued-based batch RL, we take the offline dataset $\Data$ as input and output an estimated optimal $Q$-value function $f$ and the associated policy $\pi_f$.
We are interested in the performance of $\pi_f$, which is measured by suboptimality in values, i.e., $V_1^{\star}(s_1) - V_1^{\pi_f}(s_1)$. However, this gap is highly nonsmooth in $f$, which is similar to the case of supervised learning where the $0-1$ losses for classification tasks are also highly nonsmooth and intractable. To mitigate this issue, a popular approach is to use a surrogate loss---the Bellman error. 
\begin{definition}[Bellman error] \label{def_Berr}
	Under data distribution $\mu$, we define the \emph{Bellman error} of function $f= (f_1, \cdots, f_H)$ as
	\begin{equation} 
	\Berr(f) := \frac{1}{H} \sum_{h=1}^H \| f_h - \mathcal{T}_h^{\star} f_{h+1} \|_{\mu_h}^2 .
	\end{equation}
\end{definition}
Bellman error $\Berr(f)$ appears in many classical RL algorithms including Bellman risk minimization (BRM) \citep{antos2008learning}, least-square temporal difference (LSTD) learning \citep{bradtke1996linear,lazaric2012finite}, etc.

The following lemma shows that under Assumption \ref{as:coverage}, one can control the suboptimality in values by the Bellman error.
\begin{restatable}[Bellman error to value suboptimality]{lemma}{gap}
%\begin{lemma} (Bellman error to value suboptimality)
	\label{lem:surrogate} 
	Under Assumption \ref{as:coverage}, for any $f \in \cF$, we have that ,
	\begin{equation} \label{eq:err_decomp}
	V_1^{\star}(s_1) - V_1^{\pi_{f}}(s_1) \leq 2H \sqrt{C \cdot \Berr(f)},
	\end{equation}
	where $C$ is the concentrability coefficient in Assumption \ref{as:coverage}.
%\end{lemma}
\end{restatable}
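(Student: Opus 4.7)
The plan is to decompose the value suboptimality into two pieces using an intermediate quantity $V_{f,h}(s) := \max_a f_h(s,a) = f_h(s, \pi_{f_h}(s))$, convert each piece into a sum of Bellman residuals $\xi_h := f_h - \cT_h^\star f_{h+1}$ averaged against appropriate occupancy measures, and then apply the concentrability assumption together with Cauchy--Schwarz to move to the $\mu_h$-norm.

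First I would write the split
\begin{equation*}
V_1^\star(s_1) - V_1^{\pi_f}(s_1) = \underbrace{\bigl[V_1^\star(s_1) - V_{f,1}(s_1)\bigr]}_{\text{Term I}} + \underbrace{\bigl[V_{f,1}(s_1) - V_1^{\pi_f}(s_1)\bigr]}_{\text{Term II}}.
\end{equation*}
For Term II, under the greedy policy $\pi_f$ the action taken at step $h{+}1$ satisfies $f_{h+1}(s_{h+1},a_{h+1}) = \max_{a'} f_{h+1}(s_{h+1},a')$, so $\E^{\pi_f}[(\cT_h^\star f_{h+1})(s_h,a_h)] = \E^{\pi_f}[r_h(s_h,a_h) + f_{h+1}(s_{h+1},a_{h+1})]$. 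Telescoping across $h=1,\ldots,H$ (with $f_{H+1}\equiv 0$) yields
\begin{equation*}
V_{f,1}(s_1) - V_1^{\pi_f}(s_1) = \sum_{h=1}^H \E^{\pi_f}\bigl[\xi_h(s_h,a_h)\bigr].
\end{equation*}
For Term I, I would use $V_1^\star(s_1) - V_{f,1}(s_1) \le Q_1^\star(s_1,\pi_1^\star(s_1)) - f_1(s_1,\pi_1^\star(s_1))$ (since $\max_a f_1(s_1,a) \ge f_1(s_1,\pi_1^\star(s_1))$), expand $Q_h^\star - f_h = (\cT_h^\star Q_{h+1}^\star - \cT_h^\star f_{h+1}) - \xi_h$, bound $\max_{a'} Q_{h+1}^\star - \max_{a'} f_{h+1} \le Q_{h+1}^\star(\cdot,\pi_{h+1}^\star) - f_{h+1}(\cdot,\pi_{h+1}^\star)$ pointwise, and iterate to obtain
\begin{equation*}
V_1^\star(s_1) - V_{f,1}(s_1) \le -\sum_{h=1}^H \E^{\pi^\star}\bigl[\xi_h(s_h,a_h)\bigr].
\end{equation*}

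Combining,
\begin{equation*}
V_1^\star(s_1) - V_1^{\pi_f}(s_1) \le \sum_{h=1}^H \Bigl(\bigl|\E^{\pi_f}\xi_h\bigr| + \bigl|\E^{\pi^\star}\xi_h\bigr|\Bigr).
\end{equation*}
For each term, I would write the expectation as a $\mu_h$-integral against the density $\mathrm{d}P_h^\pi/\mathrm{d}\mu_h$ and apply Cauchy--Schwarz: using Assumption \ref{as:coverage} to bound one factor of the density by $C$ gives $\|\mathrm{d}P_h^\pi/\mathrm{d}\mu_h\|_{\mu_h}^2 \le C\cdot \int (\mathrm{d}P_h^\pi/\mathrm{d}\mu_h)\,\mathrm{d}\mu_h = C$, hence $|\E^\pi \xi_h| \le \sqrt{C}\,\|\xi_h\|_{\mu_h}$. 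Finally a Cauchy--Schwarz in the index $h$ converts $\sum_{h=1}^H \|\xi_h\|_{\mu_h}$ into $\sqrt{H}\cdot\sqrt{\sum_h \|\xi_h\|_{\mu_h}^2} = H\sqrt{\Berr(f)}$, yielding the claimed $2H\sqrt{C\cdot \Berr(f)}$.

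The main obstacle is getting Term I right: one must avoid a naive $\|\cdot\|_\infty$ bound on $V_{h+1}^\star - V_{f,h+1}$ and instead propagate the inequality through the specific action $\pi_{h+1}^\star$ so that everything ultimately lives under the $\pi^\star$-occupancy measure $P_h^{\pi^\star}$ — this is what lets the concentrability assumption (which controls $\mathrm{d}P_h^\pi/\mathrm{d}\mu_h$ for every $\pi$) kick in and convert the sup-norm style control into an $L^2(\mu_h)$ control. Everything else is bookkeeping with telescoping sums and Cauchy--Schwarz.
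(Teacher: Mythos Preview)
Your proposal is correct and follows essentially the same route as the paper: the same two-term split via $V_{f,1}(s_1)=f_1(s_1,\pi_f(s_1))$, the same telescoping identity giving $\sum_h \E^{\pi_f}\xi_h$ for Term II and $-\sum_h \E^{\pi^\star}\xi_h$ for Term I, and then concentrability plus Cauchy--Schwarz to land on $2H\sqrt{C\,\Berr(f)}$. The only cosmetic difference is the order of the two Cauchy--Schwarz applications (the paper bounds $\bigl|\E^\pi\sum_h\xi_h\bigr|\le\sqrt{H\sum_h\E^\pi\xi_h^2}$ and then uses $\E^\pi\xi_h^2\le C\|\xi_h\|_{\mu_h}^2$, whereas you bound each $|\E^\pi\xi_h|$ first and then Cauchy--Schwarz over $h$), but both yield the identical constant.
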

Therefore, the Bellman error $\Berr(f)$ is indeed a surrogate loss for the suboptimality of $\pi_f$ under mild conditions. In the next two sections, we will focus on designing efficient algorithms that minimize the Bellman error.
% !TEX root = main.tex

\section{Results for Double Sampling Regime}
\label{sec:double}
As a starting point for Bellmen error minimization, we consider an empirical version of $\Berr(f)$ computed from samples.
A natural choice of this empirical proxy is as follows
\begin{equation}
\empBerr(f) := \frac{1}{nH} \sum_{(s, a, r, s', h) \in \Data} \big(f_h(s, a) - r - V_{f_{h+1}}(s')\big)^2,
\end{equation}
where $V_{f_{h+1}}(s) := \max_{a \in \cA} f_{h+1}(s,a)$.
Unfortunately, the estimator $\empBerr$ is biased due to the error-in variable situation \citep{bradtke1996linear}.
In particular, we have the following decomposition.
\begin{equation} \label{eq:decomp}
\Berr(f) = \E_{\mu}\empBerr(f)  - \frac{1}{H}\sum_{h=1}^H \E_{\mu_h} \Var_{s' \sim \P_h(\cdot| s, a)}(V_{f_{h+1}}(s')).
\end{equation}

That is, the Bellman error and the expectation of $\empBerr$ differ by a variance term. This variance term is due to the stochastic transitions in the system, which is non-negligible even when $f$ approximates the optimal value function $Q^\star$.  A direct fix of this problem is to estimate the variance by double samples, where two independent samples of $s_{h+1}$ are drawn when being in state $s_h$ \citep{baird1995residual}.

Formally, in this section, we consider the setting where for any $(s, a, r, s', h)$ in dataset $\Data$, there exists a paired tuple $(s, a, r, \tilde{s}', h)$ which share the same state-action pair $(s, a)$ at step $h$, while $s', \tilde{s}'$ being two independent samples of the next state. Such data can be collected for instance if a simulator is avaliable, or the system allows an agent to revert back to the previous step. For simplicity, we denote this dataset as $\DData = \{(s, a, r, s', \tilde{s}', h)\}$ without placing additional constraints.

We construct the following empirical risk, which further estimates the variance term in (\ref{eq:decomp}) via double samples,
\begin{equation*}
\empDS(f)
:= \frac{1}{nH} \sum_{(s, a, r, s', \tilde{s}', h) \in \DData}\bigg[ \big(f_h(s, a) - r - V_{f_{h+1}}(s')\big)^2 - \frac{1}{2} \left(V_{f_{h+1}}(s') - V_{f_{h+1}}(\tilde{s}')\right)^2 \bigg].
\end{equation*}
We can show that, for any fixed $f \in \cF$, $\E \empDS(f) = \Berr(f)$, i.e., $\empDS$ is an unbiased estimator of the Bellman error.
Our algorithm for this setting is simply the Empirical Risk Minimizer (ERM), and we prove the following guarantee.

\begin{restatable}[]{theorem}{ds}
%\begin{theorem} 
	\label{thm:doublesample}
	There exists an absolute constant $c>0$, with probability at least $1-\delta$, the ERM estimator $\hat{f}=\argmin_{f \in \cF} \empDS(f)$ satisfies the following:
	\begin{align*}
	\Berr(\hat{f}) \le \min_{f \in \cF} \Berr(f) + c H^2 \sqrt{\frac{\log(1/\delta)}{n}}  + c \sum_{h=1}^H \big( \mathcal{R}^{\mu_h}_n(\mathcal{F}_h) + \mathcal{R}^{\nu_{h}}_n( V_{\mathcal{F}_{h+1}}) \big).
	\end{align*}
%\end{theorem}
\end{restatable}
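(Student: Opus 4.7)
The plan is to follow the classical empirical risk minimization template: (i) show $\empDS$ is an unbiased estimator of $\Berr$, (ii) reduce the excess risk to a uniform deviation, (iii) bound the uniform deviation by concentration plus Rademacher complexity, and (iv) apply contraction and decomposition across steps $h$ to arrive at the stated form.

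First I would verify that $\mathbb{E}\,\empDS(f) = \Berr(f)$ for every fixed $f \in \cF$. Conditioned on $(s,a)$, the two next-states $s', \tilde{s}'$ are i.i.d.\ from $\P_h(\cdot \mid s,a)$, so $\mathbb{E}[(V_{f_{h+1}}(s') - V_{f_{h+1}}(\tilde{s}'))^2 \mid s,a] = 2\,\Var_{s'}(V_{f_{h+1}}(s'))$. Plugging this into the decomposition in (\ref{eq:decomp}) yields unbiasedness. Next, the usual ERM step gives
\begin{equation*}
\Berr(\hat f) - \min_{f \in \cF} \Berr(f) \le 2 \sup_{f \in \cF} \big|\empDS(f) - \Berr(f)\big|,
\end{equation*}
so it suffices to bound the right-hand side with high probability.

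For the uniform deviation, I would first apply McDiarmid's inequality. Each summand in $\empDS$ is bounded by $O(H^2)$ (since $f_h, V_{f_{h+1}} \in [-H,H]$ and $r \in [0,1]$), giving a bounded-differences coefficient of $O(H/n)$ per tuple and hence an $O(H^2 \sqrt{\log(1/\delta)/n})$ fluctuation around the mean. I then bound the expected supremum by standard symmetrization, introducing i.i.d.\ Rademacher variables $\sigma_i$ attached to each of the $nH$ data tuples. Splitting the sum by step $h$ gives a sum over $h \in [H]$ of two empirical process terms:
\begin{equation*}
\mathbb{E}\sup_{f_h, f_{h+1}} \Bigg| \frac{1}{n}\sum_i \sigma_i (f_h(s_i,a_i) - r_i - V_{f_{h+1}}(s_i'))^2 \Bigg| + \mathbb{E}\sup_{f_{h+1}} \Bigg| \frac{1}{2n}\sum_i \sigma_i (V_{f_{h+1}}(s_i') - V_{f_{h+1}}(\tilde s_i'))^2 \Bigg|.
\end{equation*}

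Now I would peel off the squared losses via Talagrand's contraction lemma. Since $|f_h - r - V_{f_{h+1}}| \le 3H$ on the relevant range, the map $u \mapsto u^2$ is $O(H)$-Lipschitz there, so up to absolute constants and a factor of $H$ the first term is dominated by the Rademacher complexity of the class $\{(s,a,s') \mapsto f_h(s,a) - V_{f_{h+1}}(s') : f_h \in \cF_h, f_{h+1} \in \cF_{h+1}\}$ (the reward $r$ drops out by the Rademacher shift property). Subadditivity of Rademacher complexity under sums then splits this into $\popR^{\mu_h}_n(\cF_h) + \popR^{\nu_h}_n(V_{\cF_{h+1}})$. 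The double-sample term is handled analogously, again by contraction and subadditivity, producing only $V_{\cF_{h+1}}$ terms. Summing over $h$ and combining with the McDiarmid fluctuation gives the claimed bound.

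The main obstacle I expect is bookkeeping the Lipschitz constants of the squared losses and the $H$ factors when peeling them off, as well as absorbing the maximization over actions inside $V_{f_{h+1}}(s) = \max_a f_{h+1}(s,a)$ into the notation $\popR^{\nu_h}_n(V_{\cF_{h+1}})$ (which the theorem statement already does). A minor subtlety is that the second empirical process involves \emph{pairs} $(s_i', \tilde s_i')$; it can be handled either by a single Rademacher symmetrization on the pair together with a contraction on $u \mapsto u^2$, or by splitting $(V(s') - V(\tilde s'))^2 \le 2V(s')^2 + 2V(\tilde s')^2$ and symmetrizing each marginally, both ultimately yielding $\popR^{\nu_h}_n(V_{\cF_{h+1}})$ up to constants.
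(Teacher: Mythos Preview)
Your proposal is correct and follows the same overall template as the paper's proof: unbiasedness of $\empDS$, reduction to a uniform deviation via the ERM inequality, McDiarmid plus symmetrization (this is exactly the content of the paper's Lemma~\ref{lemma:RC}), subadditivity across $h$, and contraction to peel off the quadratic losses. The one substantive difference is how the contraction is executed. You split $\ell_{\text{DS}}(f_h,f_{h+1})$ into its two squared pieces and apply scalar Ledoux--Talagrand contraction to each, then use subadditivity to separate $f_h$ from $V_{f_{h+1}}$. The paper instead keeps $\ell_{\text{DS}}$ intact, writes it as a single quadratic form $\tfrac12\,\tilde{\boldsymbol{\phi}}^\top \tilde{\boldsymbol{A}}\,\tilde{\boldsymbol{\phi}}$ in the three coordinates $\tilde{\boldsymbol{\phi}}=\big(f_h(s,a),\, r+V_{f_{h+1}}(s'),\, r+V_{f_{h+1}}(\tilde s')\big)$, and applies the vector-valued contraction inequality of \citet{maurer2016vector}, reading the Lipschitz constant off the spectral norm of the $3\times 3$ matrix $\tilde{\boldsymbol{A}}$. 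Both routes land on $\popR^{\mu_h}_n(\cF_h)+\popR^{\nu_h}_n(V_{\cF_{h+1}})$ up to absolute constants; yours is more elementary in that it avoids the vector contraction lemma, while the paper's handles the cross-terms in one shot. One small caution: your parenthetical alternative for the double-sample term---pointwise bounding $(V(s')-V(\tilde s'))^2\le 2V(s')^2+2V(\tilde s')^2$ and ``symmetrizing each marginally''---does not directly control Rademacher complexity, since pointwise domination of functions does not transfer to $\popR_n$; stick with your primary route (contraction on $u\mapsto u^2$ applied to the difference $V_{f_{h+1}}(s')-V_{f_{h+1}}(\tilde s')$, then subadditivity using that $s'$ and $\tilde s'$ share the marginal $\nu_h$).
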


Here, we use shorthand $V_{\cF_{h+1}} := \{ V_{f_{h+1}} \mid f_{h+1} \in \cF_{h+1} \}$ for any $h \in [H]$.
Theorem \ref{thm:doublesample} asserts that, in the double sampling regime, simple ERM has its excess risk $\Berr(\hat{f}) - \min_{f \in \cF} \Berr(f)$ upper bounded by the Rademacher complexity of function class $\{\cF_h\}_{h=1}^H$, $\{V_{\cF_{h+1}}\}_{h=1}^H$ and a small concentration term that scales as $\tlO(1/\sqrt{n})$. 

Most importantly, we remark that Theorem \ref{thm:doublesample} holds without any assumption on the input data distribution or the properties of the MDP. Function class $\cF$ can also be completely misspecificed in the sense the optimal value function $Q^\star$ may be very far from $\cF$. This allows Theorem \ref{thm:doublesample} to be widely applicable to a large number of applications.

However, a major limitation of Theorem \ref{thm:doublesample} is its reliance on double samples. Double samples are not available in most dynamical systems that have no simulators or can not be reverted back to the previous step. In next section, we analyze algorithms in the standard single sampling regime.
% !TEX root = main.tex

\section{Results for Single Sampling Regime}
\label{sec:single}

% Overview about each subsection.
In this section, we focus on batch RL in the standard single sampling regime, where each tuple $(s,a,r,s',h)$ in dataset $\cD$ has a single next step $s'$ following $(s,a)$. We first present a sample complexity lower bound for minimizing the Bellman error, showing that in order to acheive an excess risk that does not scale polynomially with respect to the number of states,
it is inevitable to have additional structural assumptions on function class $\cF$ and the MDP. Then we analyze fitted Q-iteration (FQI) and a minimax estimator respectively, under different completeness assumptions. In addition to Rademacher complexity upper bounds similar to Theorem~\ref{thm:doublesample}, we also utilize localization techniques and prove bounds with faster statistical rate in these two schemes.

\subsection{Lower bound}

%Raise the question, whether we can achieve similar positive result.
Recall that when double samples are available, the excess risk of ERM estimator is controlled by Rademacher complexities of function classes (Theorem~\ref{thm:doublesample}). 
In the single sampling regime, one natural question to ask is whether there exists an algorithm with a similar guarantee (i.e. the excess risk is upper bounded by certain complexity measure of the function class).
% entirely depending on the capacity of function class? 
Unfortunately, without further assumptions, the answer is negative.

%\begin{theorem}
\begin{restatable}[]{theorem}{lb}
  \label{theorem:lb}
	Let $\mathfrak{A}$ be an arbitrary algorithm that takes any dataset $\cD$ and function class $\cF$ as input and outputs an estimator $\hat{f} \in \cF$. For any $S \in \N^+$ and sample size $n \geq 0$, there exists an $S$-state, single-action MDP paired with a function class $\cF$ with $|\cF| = 2$ such that the $\hat{f}$ output by algorithm $\mathfrak{A}$ satisfies
	\begin{equation} \label{eq:thm_lb}
	\E \Berr(\hat{f}) \ge  \min_{f \in \cF} \Berr(f) + \Omega\left( \min\left\{1, 
	\frac{S^{1/2}}{n}\right\}\right).
	\end{equation} 
	Here, the expectation is taken over the randomness in $\cD$.
%\end{theorem}
\end{restatable}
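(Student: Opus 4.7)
The plan is a minimax lower bound via reduction to a Bayes-risk calculation on a carefully chosen family of MDPs. I would take the $H = 2$, single-action MDP on $\cS = \{s_1, s_2, \ldots, s_S\}$ with fixed initial state $s_1$, zero reward at $(s_1, a_0)$, transition from $s_1$ uniform over $\{s_2, \ldots, s_S\}$, and reward $r_2(s_j, a_0) = \theta_j \in \{0, 1\}$ at each later state; this gives a family $\{M_\theta\}_{\theta \in \{0,1\}^{S-1}}$. Set $\mu_1 = \delta_{(s_1, a_0)}$ and $\mu_2 = \Uniform(\{(s_j, a_0) : 2 \le j \le S\})$. Fix a balanced bit-string $h \in \{0,1\}^{S-1}$ with $\sum_j h_j = (S-1)/2$ and take $\cF = \{f, g\}$ with $f_1(s_1, a_0) = g_1(s_1, a_0) = 1/2$, $f_2(s_j, a_0) = h_j$, and $g_2(s_j, a_0) = 1 - h_j$.

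Using $f_3 \equiv 0$ so that $\cT_2^\star f_3 = r_2$, a direct calculation from Definition~\ref{def_Berr} gives that the step-1 Bellman error vanishes for both $f$ and $g$ under every $\theta$ (since $\bar h = 1/2$), while the step-2 Bellman errors are $\|f_2 - r_2\|_{\mu_2}^2 = |h \oplus \theta|/(S-1)$ and $\|g_2 - r_2\|_{\mu_2}^2 = 1 - |h \oplus \theta|/(S-1)$. Writing $D(\theta) \defeq (S-1)/2 - |h \oplus \theta|$, this yields $\Berr(g) - \Berr(f) = D(\theta)/(S-1)$, so the better element of $\cF$ under $M_\theta$ is determined by $\mathrm{sign}(D(\theta))$ and picking wrongly incurs excess exactly $|D(\theta)|/(S-1)$. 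The only $\theta$-dependent information in the data lies in the step-2 tuples, which reveal $\theta_j$ exactly at each visited coordinate $j \in V \subseteq \{2, \ldots, S\}$ with $|V| \le n$.

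Place a Bernoulli$(1/2)$ prior on $\theta$. The Bayes-optimal decision rule bases its choice on the sign of $D_V \defeq |V|/2 - |h_V \oplus \theta_V|$, the posterior mean of $D(\theta)$. Conditional on $D(\theta) = d$, $D_V$ is a centered hypergeometric-type statistic with conditional mean of order $|V|\,d/S$ and variance of order $|V|$, so a finite-sample Berry--Esseen argument yields $\Pr[\mathrm{sign}(D_V) \ne \mathrm{sign}(d)] \asymp \Phi(-c\sqrt{n}\,|d|/S)$ for an absolute constant $c > 0$. Integrating the excess $|d|/(S-1)$ against the $\mathrm{Bin}(S-1, 1/2)$ marginal of $|h \oplus \theta|$ and changing variables $u = \sqrt{n}\,d/S$ produces a Bayes risk of order $\sqrt{S}/n$ in the meaningful regime; the $\min\{1, \sqrt{S}/n\}$ form is just this rate truncated at the trivial $O(1)$ upper bound on excess, which is active in the small-$n$ regime where any fixed decision rule can be forced to err with constant probability by an adversarial two-point sub-family of $\theta$'s. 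Since minimax risk dominates Bayes risk, this yields the existential statement in the theorem.

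The main obstacle is making the integration rigorous uniformly in $(n, S)$: the Gaussian heuristic for the sign-mismatch probability needs to be replaced by a quantitative finite-sample bound (hypergeometric Berry--Esseen plus anti-concentration of the $\mathrm{Bin}(S-1, 1/2)$ density around its mode) so that the $\sqrt{S}/n$ rate survives without logarithmic corrections, and the two regimes $n \lesssim \sqrt{S}$ and $n \gtrsim \sqrt{S}$ need to be glued carefully at the transition.
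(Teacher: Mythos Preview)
Your construction does not achieve the claimed rate; in fact, on your instance the naive single-sample ERM $\arg\min_{f\in\cF}\empBerr(f)$ already attains excess $O(1/\sqrt{n})$, which is strictly smaller than $\sqrt{S}/n$ whenever $n<S$. The reason is that the bias term in the decomposition~(\ref{eq:decomp}) is \emph{identical} for your $f$ and $g$: at step~$1$ one has $\Var_{s'}[V_{f_2}(s')] = \bar h(1-\bar h) = 1/4 = \Var_{s'}[V_{g_2}(s')]$, and at step~$2$ there is no next-state randomness. Hence $\empBerr(f)-\empBerr(g)$ is an unbiased estimator of $\Berr(f)-\Berr(g)$, and Hoeffding for a two-element class gives worst-case excess $O(1/\sqrt{n})$. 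Your Bayes-risk calculation is consistent with this: under the Bernoulli$(1/2)$ prior the typical gap $|D(\theta)|/(S-1)$ is only $\Theta(1/\sqrt{S})$, so the Bayes risk is capped at $O(1/\sqrt{S})$ regardless of $n$. The change of variables $u=\sqrt{n}\,d/S$ yields $\sqrt{S}/n$ only in the fictitious regime $n\gg S$, where the premise $|V|\approx n$ has already collapsed (all coordinates are observed and the excess is zero). The two-point argument for the $\Omega(1)$ regime likewise fails, since $\theta$'s with $|D(\theta)|=\Omega(S)$ must differ on $\Omega(S)$ coordinates and are separated by a single step-2 sample.

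The missing idea is that the hardness has to come from the \emph{stochastic transitions}, not from unobserved deterministic rewards; your instance is no harder with single samples than with double samples, which should be a red flag given Theorem~\ref{thm:doublesample}. The paper's construction routes each intermediate state $i\in[S]$ to four terminal states with probabilities $\tfrac14\bigl(1+\eps\, k\max(-c,j)\,\sigma_i\bigr)$, depending on a target bit $c\in\{\pm1\}$ and a nuisance $\vsigma\in\{\pm1\}^S$; the Bellman-error gap between the two candidate functions is $\Theta(\eps^2)\one{c'\ne c}$, independent of $\vsigma$. A single next-state draw from state $i$, after averaging over the uniform prior on $\sigma_i$, has a marginal that does not depend on $c$, so information about $c$ leaks only through states visited at least twice. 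A direct KL computation then bounds the divergence between the $c=1$ and $c=-1$ mixture data laws by $O(n^2\eps^4/S)$, giving indistinguishability whenever $n=O(\sqrt{S}/\eps^2)$; choosing $\eps^2$ of order $\min\{1,\sqrt{S}/n\}$ yields the stated bound.
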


Theorem~\ref{theorem:lb} reveals a fundamental difference between the single sampling regime and the double sampling regime. The lower bound in inequality~(\ref{eq:thm_lb}) depends polynomially on $S$---the cardinality of state space, which is considered to be intractably large in the setting of function approximation. In batch RL with single sampling, despite the use of function class $\cF$, the hardness of Bellman error minimization is still determined by the size of state space.
This also suggests that minimizing Bellman error
in the single sampling regime, is intrinsically different from the classic supervised learning due to the additional temporal correlation structure presented within the data.

We remark that unlike most lower bounds of similar type in prior works \citep{sutton2018reinforcement, sun2019model}, which only apply to certain restrictive classes of algorithms, Theorem \ref{theorem:lb} is completely information-theoretic, and applies to any algorithm.

To circumvent the hardness result in Theorem~\ref{theorem:lb}, additional structural assumptions are necessary.
In the following, we provide statistical gurantees for two batch RL algorithms, where different completeness assumptions on $\cF$ are used.

\subsection{Fitted Q-iteration (FQI)}\label{subsec:FQI}

\begin{algorithm}[t]
	\caption{FQI}
	\begin{algorithmic}[1] \label{alg:FQI_simp}
		\STATE \textbf{initialize} $\hat{f}_{H+1} \gets 0$.
		\FOR{$h = H, H-1, \ldots, 1$}
		\STATE $ \hat{f}_h \gets \argmin_{f_h \in \cF_h} \hat{\ell}_{h}(f_h,\hat{f}_{h+1}) := \frac{1}{n} \sum_{(s,a,r,s',h) \in \cD_h} \big( f_h(s,a) - r - V_{\hat{f}_{h+1}}(s') \big)^2.$
		\ENDFOR
		\STATE \textbf{return} $\hat{f} = (\hat{f}_1, \ldots, \hat{f}_H)$.
	\end{algorithmic}
\end{algorithm}

We consider the classical FQI algorithm. We assume that function class $\cF = \cF_1 \times \ldots \times \cF_H$ is (approximately) closed under the optimal Bellman operators $\cT_1^{\star}, \ldots, \cT_H^{\star}$, which is commonly adopted by prior analyses of FQI \citep{munos2008finite,chen2019information}.

\begin{assumption} % [Approximate Closure] 
	\label{as:approx_comp_FF}
	There exists $\epsilon>0$ such that, for all $h\in[H]$, $\sup_{f_{h+1} \in \cF_{h+1}}\inf_{f_h \in \cF_h} \norm{f_h - \mathcal{T}_h^{\star} f_{h+1} }^2_{\mu_h} \leq \epsilon$.
\end{assumption}

The FQI algorithm is closely related to approximate dynamic programming \citep{bertsekas1995neuro}. It starts by setting $\hat{f}_{H+1} := 0$ and then recursively computes Q-value functions at $h = H, H-1, \ldots, 1$. Each iteration in FQI is a least squares regression problem based on data collected at that time step. For $h \in [H]$, we denote $\cD_h$ as set of data at the $h^{\text{th}}$ step. The details of FQI are specified in Algorithm \ref{alg:FQI_simp}.

In the following Theorem~\ref{theorem:FQI_RC}, we upper bound the excess risk of the output of FQI in terms of Rademacher complexity.

\begin{restatable}[FQI, Rademacher complexity]{theorem}{FQIRC}
%\begin{theorem}[FQI, Rademacher complexity]
 \label{theorem:FQI_RC}
	There exists an absolute constant $c>0$, under Assumption \ref{as:approx_comp_FF}, with probability at least $1 - \delta$, the output of FQI $\hat{f}$ satisfies
	\begin{equation} 
	\label{eq:FQI_RC} 
	\Berr(\hat{f}) \leq  \epsilon + c \sum_{h=1}^H \popR^{\mu_h}_n (\cF_h) + c H^2\sqrt{\frac{ \log(H/\delta)}{n}}.
	\end{equation}
%\end{theorem}
\end{restatable}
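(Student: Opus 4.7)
The plan is to exploit the recursive structure of Algorithm~\ref{alg:FQI_simp} together with the independence of data across time steps, so that each iteration of FQI reduces to a standard least-squares ERM whose excess risk can be bounded by the Rademacher complexity of $\cF_h$. Concretely, because $\cD_h$ and $\cD_{h+1},\ldots,\cD_H$ are drawn independently, the estimator $\hat{f}_{h+1}$ is independent of $\cD_h$. I would therefore condition on $\hat{f}_{h+1}$ and treat $V_{\hat{f}_{h+1}}$ as a deterministic function for the analysis of the $h^{\text{th}}$ iteration.

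The first step is the bias–variance decomposition. Writing $Y := r + V_{\hat{f}_{h+1}}(s')$ with $(s,a,r,s') \sim \mu_h \times \P_h$, one has $\E[Y \mid s, a, \hat{f}_{h+1}] = (\cT_h^\star \hat f_{h+1})(s, a)$, hence
$$
\E\bigl[(f_h(s,a) - Y)^2 \bigm| \hat{f}_{h+1}\bigr] \;=\; \|f_h - \cT_h^\star \hat f_{h+1}\|_{\mu_h}^2 \;+\; \text{(term independent of $f_h$)}.
$$
Therefore the population risk minimizer within $\cF_h$ is the best approximation to the Bellman backup, and Assumption~\ref{as:approx_comp_FF} gives $\inf_{f_h \in \cF_h} \|f_h - \cT_h^\star \hat f_{h+1}\|_{\mu_h}^2 \le \epsilon$.

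The second step is a classical ERM-Rademacher bound. Since $f_h, V_{\hat{f}_{h+1}} \in [-H, H]$, the squared loss on this domain is $O(H)$-Lipschitz and bounded by $O(H^2)$. Conditionally on $\hat{f}_{h+1}$, a symmetrization argument followed by Ledoux–Talagrand contraction yields, with probability at least $1 - \delta/H$,
$$
\|\hat f_h - \cT_h^\star \hat f_{h+1}\|_{\mu_h}^2 - \inf_{f_h \in \cF_h}\|f_h - \cT_h^\star \hat f_{h+1}\|_{\mu_h}^2 \;\le\; c_1 H \cdot \popR^{\mu_h}_n(\cF_h) + c_2 H^2\sqrt{\frac{\log(H/\delta)}{n}}.
$$
Note that $V_{\hat{f}_{h+1}}$ appears only as a fixed additive offset inside the loss and is therefore absorbed by the Rademacher symmetrization, which is precisely why the complexity of $V_{\cF_{h+1}}$ does \emph{not} appear in the bound (in contrast to Theorem~\ref{thm:doublesample}).

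Finally, combine the two ingredients to obtain $\|\hat f_h - \cT_h^\star \hat f_{h+1}\|_{\mu_h}^2 \le \epsilon + c_1 H \popR^{\mu_h}_n(\cF_h) + c_2 H^2\sqrt{\log(H/\delta)/n}$, take a union bound over $h \in [H]$, and average with the $1/H$ weighting in Definition~\ref{def_Berr}; the $H$ factor in front of the Rademacher term cancels to give the claimed $\sum_h \popR^{\mu_h}_n(\cF_h)$ dependence. The main subtlety is the first step: one must be careful that the conditional independence argument is applied uniformly over the possible realizations of $\hat f_{h+1}$, so that the high-probability ERM bound holds pointwise in $\hat f_{h+1}$ and can then be marginalized. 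Everything else reduces to standard Rademacher-style generalization theory.
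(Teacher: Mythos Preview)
Your proposal is correct and follows essentially the same route as the paper: condition on $\hat f_{h+1}$ (using independence of $\cD_h$ from $\cD_{h+1},\ldots,\cD_H$), apply the standard ERM Rademacher bound to the least-squares problem at step $h$, use Ledoux--Talagrand contraction to reduce the loss-class complexity to $O(H)\cdot\popR_n^{\mu_h}(\cF_h)$, invoke Assumption~\ref{as:approx_comp_FF} for the approximation term, and then union bound over $h\in[H]$. Your observation that contraction removes the $\hat f_{h+1}$-dependence from the right-hand side (so the conditional bound marginalizes cleanly) is exactly the point the paper uses, though it states it less explicitly.
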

We remark that Assumption~\ref{as:approx_comp_FF} immediately implies that $\min_{f \in \cF} \Berr(f) \leq \epsilon$. Therefore, although the minimal Bellman error $\min_{f \in \cF} \Berr(f)$ does not explicitly appear on the right hand side, inequality~(\ref{eq:FQI_RC}) is still a variant of excess risk bound.
%can still reduce to an excess risk bound

For typical parametric function classes, the Rademacher complexity scales as $n^{-1/2}$ (see Section \ref{sec:examples}). Therefore, Theorem \ref{theorem:FQI_RC} guarantees that the excess risk decrease as $n^{-1/2}$, up to a constant error $\epsilon$ due to the approximate completeness (in Assumption \ref{as:approx_comp_FF}).
However, since Bellman error is the average of squared $L^2$-norms (Definition~\ref{def_Berr}), one may expect a faster statistical rate in this setting, similar to the case of linear regression.
For this reason, we take advantage of the localization techniques and develop sharper error bounds in Theorem~\ref{theorem:FQI_localRC}.

\begin{restatable}[FQI, local Rademacher complexity]{theorem}{FQILRC}
%\begin{theorem}[FQI, local Rademacher complexity]
 \label{theorem:FQI_localRC}
	There exists an absolute constant $c>0$, under Assumption \ref{as:approx_comp_FF}, with probability at least $1 - \delta$, the output of FQI $\hat{f}$ satisfies 
	\begin{align} 
	 \label{eq:FQI_localRC}
	\Berr(\hat{f}) \leq \epsilon +  c\sqrt{\epsilon \cdot \Delta} + c \Delta~, \qquad
	\text{~where~} \Delta := H \sum_{h=1}^H r_h^{\star} +  H^2 \frac{\log(H/\delta)}{n}~. 
	\end{align}  
	Here $r^\star_h$ is the critical radius of local Rademacher complexity $\popR^{\mu_h}_n ( \{ f_h \in \cF_h  ~|~ \norm{f_h - f^{\best}_h}_{\mu_h}^2 \leq r \})$ with $f^{\best}_h := \argmin_{f_h \in \cF_h} \|f_h - \cT_h^{\star} \hat{f}_{h+1}\|_{\mu_h}$.
%\end{theorem}
\end{restatable}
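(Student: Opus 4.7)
The plan is to analyze FQI backwards, step by step: at each time $h$ the iteration is a standard least-squares regression problem to which local Rademacher complexity theory applies around the best approximant $f_h^\best$, and a per-step bias--variance decomposition then produces the $\epsilon + \sqrt{\epsilon \Delta} + \Delta$ form.

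First, fix $h \in [H]$ and condition on $(\hat f_{h+1}, \ldots, \hat f_H)$, which are independent of the dataset $\cD_h$ used at step $h$. Set $g_h := \cT_h^\star \hat f_{h+1}$; under this conditioning $g_h$ is a deterministic function of $(s,a)$, and FQI makes $\hat f_h$ the ERM over $\cF_h$ for squared regression with responses $y := r_h(s,a) + V_{\hat f_{h+1}}(s')$ satisfying $\E[y \mid s,a] = g_h(s,a)$ and $|y| \le 2H$. By Assumption~\ref{as:approx_comp_FF}, the $L^2(\mu_h)$-projection $f_h^\best$ satisfies $\|f_h^\best - g_h\|_{\mu_h}^2 \le \epsilon$.

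Next, I apply a local Rademacher complexity oracle inequality for squared-loss ERM, in the style of Bartlett--Bousquet--Mendelson 2005 / Wainwright Chapter~14. Starting from $\hat\ell_h(\hat f_h, \hat f_{h+1}) \le \hat\ell_h(f_h^\best, \hat f_{h+1})$ and expanding $y - f_h^\best = (g_h - f_h^\best) + (y - g_h)$, the basic inequality splits into a bias cross term and a centered-noise empirical process in the variable $\hat f_h - f_h^\best$. Both are controlled by restricting to the $L^2(\mu_h)$-ball of radius $\sqrt{r}$ around $f_h^\best$, applying Talagrand's concentration with variance proxy $H^2$ and boundedness proxy $H$, and peeling over geometric shells. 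Invoking the critical-radius property from Definition~\ref{def:critrad}, with probability at least $1 - \delta/H$,
\begin{equation*}
\|\hat f_h - f_h^\best\|_{\mu_h}^2 \le c\,\delta_h, \qquad \delta_h := c\bigl(H^2 r_h^\star + H^2 \log(H/\delta)/n\bigr),
\end{equation*}
where the $H^2$ factors come from the $2H$-Lipschitzness of the squared loss and the Bernstein variance proxy. The triangle inequality combined with completeness then gives, per step,
\begin{equation*}
\|\hat f_h - g_h\|_{\mu_h}^2 \le \bigl(\|\hat f_h - f_h^\best\|_{\mu_h} + \|f_h^\best - g_h\|_{\mu_h}\bigr)^2 \le \epsilon + 2\sqrt{c\,\epsilon\,\delta_h} + c\,\delta_h.
\end{equation*}
A union bound over $h$, summation and division by $H$, Cauchy--Schwarz in the form $\tfrac{1}{H}\sum_h \sqrt{\epsilon\,\delta_h} \le \sqrt{\epsilon\cdot\tfrac{1}{H}\sum_h \delta_h}$, and the identity $\tfrac{1}{H}\sum_h \delta_h = \Theta(\Delta)$ then yield the stated bound.

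The main obstacle is the localization step for a misspecified and not-necessarily-convex class $\cF_h$: the bias cross term $\langle \hat f_h - f_h^\best, g_h - f_h^\best\rangle_{\mu_h}$ has no sign guarantee, so it cannot be discarded via a projection inequality and must instead be absorbed through a uniform empirical-process bound on the $L^2$-ball around $f_h^\best$. Getting the right dependence on the critical radius requires the peeling argument to be applied simultaneously to both the bias and the noise processes, working on the star-hull $\{\alpha(f - f_h^\best) : f \in \cF_h,\ \alpha \in [0,1]\}$ when $\cF_h$ is not itself star-shaped around $f_h^\best$.
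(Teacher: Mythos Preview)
Your per-step plan and the final aggregation are fine, but the key intermediate claim
\[
\|\hat f_h - f_h^\best\|_{\mu_h}^2 \le c\,\delta_h
\quad\text{with}\quad
\delta_h = c\bigl(H^2 r_h^\star + H^2 \log(H/\delta)/n\bigr)
\]
(independent of $\epsilon$) is not attainable for general, non-convex $\cF_h$. You correctly note that the bias cross term $\langle \hat f_h - f_h^\best,\ g_h - f_h^\best\rangle_{\mu_h}$ has no sign, but ``absorbing it through a uniform empirical-process bound'' does not remove its \emph{population} contribution, which by Cauchy--Schwarz is of order $\sqrt{\epsilon}\,\|\hat f_h - f_h^\best\|_{\mu_h}$. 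Solving the resulting self-bounding inequality gives at best $\|\hat f_h - f_h^\best\|_{\mu_h}^2 \lesssim \epsilon + \delta_h$. A concrete obstruction: take $\cF_h=\{f_1,f_2\}$ with $\|f_i-g_h\|_{\mu_h}^2 \approx \epsilon$ for both $i$ but $f_1,f_2$ on ``opposite sides'' of $g_h$, so that $\|f_1-f_2\|_{\mu_h}^2\approx 4\epsilon$; then ERM can return either one no matter how large $n$ is, forcing $\|\hat f_h - f_h^\best\|_{\mu_h}^2 \approx 4\epsilon$. With the corrected bound $\epsilon+\delta_h$, your triangle-inequality step yields a leading term $c'\epsilon$ with $c'>1$, which does \emph{not} match the statement's coefficient~$1$ in front of $\epsilon$.

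The paper sidesteps this by never bounding $\|\hat f_h - f_h^\best\|$. Instead it bounds the excess risk $\|\hat f_h - g_h\|_{\mu_h}^2 - \|f_h^\best - g_h\|_{\mu_h}^2 = \E_{\mu_h}[\ell(\hat f_h,\hat f_{h+1}) - \ell(f_h^\best,\hat f_{h+1})]$ directly. For the excess-loss class one has the Bernstein condition with an additive offset,
\[
\Var\bigl[\ell(f_h,\hat f_{h+1}) - \ell(f_h^\best,\hat f_{h+1})\bigr]
\;\le\; 32H^2\Bigl(\E_{\mu_h}[\ell(f_h,\hat f_{h+1}) - \ell(f_h^\best,\hat f_{h+1})] + 2\epsilon\Bigr),
\]
so the offset version of the Bartlett--Bousquet--Mendelson local-Rademacher theorem applies with $\eta=2\epsilon$; contraction reduces the relevant fixed point to $O(H^4 r_h^\star)$. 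This gives $\|\hat f_h - g_h\|_{\mu_h}^2 \le \epsilon + c\theta H^2 r_h^\star + c\theta H^2\log(H/\delta)/n + 2\epsilon/(\theta-1)$, and the $\epsilon + c\sqrt{\epsilon\Delta}+c\Delta$ form (with leading constant exactly $1$) comes from optimizing the free parameter~$\theta$ after the union bound and summation---not from a triangle inequality on $\|\hat f_h - f_h^\best\|$. If you reroute your argument to target the excess risk rather than $\|\hat f_h - f_h^\best\|$, the rest of your outline goes through.
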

On the RHS of inequality (\ref{eq:FQI_localRC}), the first term $\epsilon$ measures model misspecification. The other two terms $c ( \sqrt{\epsilon \cdot \Delta} + \Delta )$ can be viewed as statistical errors since $\Delta \rightarrow 0$ as sample size $n \rightarrow \infty$. 
For typical parametric function classes, the critical radius of the local Rademacher complexity scales as $n^{-1}$ (see Section \ref{sec:examples}), which decreases much faster than standard Rademacher complexity. That is, Theorem \ref{theorem:FQI_localRC} indeed guarantees faster statistical rate comparing to Theorem \ref{theorem:FQI_RC}.

Finally, we remark that $f_h^{\best}$ in Theorem~\ref{theorem:FQI_localRC} depends on $\hat{f}_{h+1}$ and therefore is random. We will show later in Section \ref{sec:examples} for many examples, the critical radius can be upper bounded independent of the choice of $f_h^{\best}$, in which case the randomness in $f_h^{\best}$ does not affect the final results.

\subsection{Minimax Algorithm} \label{sec:minimax}

The (approximate) completeness of $\cF$ in Assumption~\ref{as:approx_comp_FF} can be stringent sometimes. For instance, if there is a new function $f_h$ attached to $\cF_h$, for the sake of completeness, we need to enlarge $\cF_{h-1}$ by adding several approximations of $\cT_{h-1}^{\star} f_h$. The same goes for $\cF_{h-2}, \ldots, \cF_1$. After amplifying the function classes one by one for each step, we may obtain an exceedingly large $\cF$.

To avoid the issue above posted by the completeness assumptions on $\cF$, 
we introduce a new function class $\cG = \cG_1 \times \cdots \times \cG_H$, where $\cG_h$ consists of functions mapping from $\cS \times \cA$ to $[-H,H]$.
We assume that for each $f_{h+1} \in \cF_{h+1}$, one can always find a good approximation of $\cT_h^{\star}f_{h+1}$ in this helper function class $\cG_h$. 
\begin{assumption}
	\label{as:approx_gcomp_FF}
	There exists $\epsilon > 0$ such that, for all $h\in [H]$, $\sup_{f_{h+1} \in \cF_{h+1}}\inf_{g_h \in \cG_h} \| g_h - \cT_h^{\star} f_{h+1} \|^2_{\mu_h} \leq \epsilon$.
\end{assumption}
According to (\ref{eq:decomp}), we can approximate Bellman error $\Berr(f)$ by subtracting the variance term from $\empBerr(f)$. If $g_h$ is close to $\cT_h^{\star} f_{h+1}$, then $\big( g_h(s,a) - r - V_{f_{h+1}}(s') \big)^2$ averaged over data provides a good estimator of the variance term.
Following this intuition, we define a new loss
\begin{multline*}
\empMM(f, g)
:= \frac{1}{nH} \!\!\!\! \sum_{(s, a, r, s', h) \in \Data} \!\!\!\!\Big[ \big(f_h(s, a) - r - V_{f_{h+1}}(s')\big)^2  - \big(g_h(s,a) - r - V_{f_{h+1}}(s')\big)^2 \Big].
\end{multline*}
The minimax algorithm \citep{antos2008learning,chen2019information} then computes
\begin{equation*}
\hat{f} := \argmin_{f\in \cF} \max_{g\in \cG} \empMM(f, g).
\end{equation*}

Now we are ready to state our theoretical guarantees for the minimax algorithms.

\begin{restatable}[Minimax algorithm, Rademacher complexity]{theorem}{MMRC}
%\begin{theorem}[Minimax algorithm, Rademacher complexity] 
	\label{theorem:Minimax_RC}
	There exists an absolute constant $c>0$, under Assumption \ref{as:approx_gcomp_FF}, with probability at least $1 - \delta$, the minimax estimator $\hat{f}$ satisfies:
	\begin{align*}
	\Berr(\hat{f}) \le \min_{f \in \cF} \Berr(f) + \epsilon + c H^2 \sqrt{\frac{\log(1/\delta)}{n}} 
	+ c \sum_{h=1}^H \big(\popR^{\mu_h}_n(\cF_h) + \popR^{\mu_h}_n(\cG_h) + \popR^{\nu_h}_n(V_{\cF_{h+1}})\big).
	\end{align*}
%\end{theorem}
\end{restatable}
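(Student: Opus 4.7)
The plan is to apply the standard excess-risk analysis for M-estimators to the minimax objective, exploiting the structural fact that the variance artifact present in any single-sample estimator of the Bellman error cancels between the $f$- and $g$-pieces of $\empMM$.

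First I would compute the population counterpart $L(f,g) := \E_\mu \empMM(f,g)$. Applying the identity $\E[(u - r - V_{f_{h+1}}(s'))^2 \mid s,a] = (u - \cT_h^\star f_{h+1}(s,a))^2 + \Var_{s' \sim \P_h(\cdot \mid s,a)}(V_{f_{h+1}}(s'))$ at both $u = f_h(s,a)$ and $u = g_h(s,a)$, the conditional-variance term is identical in the two cases and cancels in the difference, yielding
\begin{equation*}
L(f,g) = \frac{1}{H}\sum_{h=1}^H \Bigl( \|f_h - \cT_h^\star f_{h+1}\|_{\mu_h}^2 - \|g_h - \cT_h^\star f_{h+1}\|_{\mu_h}^2 \Bigr).
\end{equation*}
Maximizing over $g \in \cG$ and invoking Assumption~\ref{as:approx_gcomp_FF} then gives the sandwich $\Berr(f) - \epsilon \le \max_{g \in \cG} L(f,g) \le \Berr(f)$ valid for every $f \in \cF$.

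Next set $f^\star := \argmin_{f \in \cF}\Berr(f)$ and $U := \sup_{f \in \cF,\, g \in \cG}|\empMM(f,g) - L(f,g)|$. Combining the sandwich with the minimax optimality $\max_g \empMM(\hat{f},g) \le \max_g \empMM(f^\star,g)$ produces the chain
\begin{equation*}
\Berr(\hat{f}) \le \max_g L(\hat{f},g) + \epsilon \le \max_g \empMM(\hat{f},g) + U + \epsilon \le \max_g \empMM(f^\star,g) + U + \epsilon \le \Berr(f^\star) + 2U + \epsilon,
\end{equation*}
which reduces the task to controlling $U$ with the claimed Rademacher bound.

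Finally, I would bound $U$ by bounded differences and symmetrization. A single data swap changes $\empMM$ by $O(H/n)$ after the $1/(nH)$ normalization, and McDiarmid yields a high-probability slack of order $cH^2\sqrt{\log(1/\delta)/n}$. For the expected supremum, standard symmetrization introduces Rademacher variables; since the two squares inside $\empMM$ carry opposite signs, the triangle inequality splits the supremum into an $f$-piece and a $g$-piece, each a sum over $h$ of a Rademacher average of a squared-residual functional. Talagrand's contraction applied to the $O(H)$-Lipschitz map $z \mapsto z^2$ composed with the linear residual $z = f_h(s,a) - r - V_{f_{h+1}}(s')$ reduces each squared Rademacher average to a linear one, and subadditivity of Rademacher complexity under this decomposition produces the three terms $\popR^{\mu_h}_n(\cF_h)$, $\popR^{\mu_h}_n(\cG_h)$, and $\popR^{\nu_h}_n(V_{\cF_{h+1}})$ (the constant-in-sample reward coordinate contributes only an $O(H/\sqrt{n})$ slack that is absorbed into the McDiarmid term). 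The main obstacle is keeping track of the correct marginals: the residual couples the coordinates $(s,a)$ and $s'$ inside a single square, but after subadditivity the $V_{f_{h+1}}(s')$-term becomes a Rademacher average whose sample points are drawn from $\nu_h$ by construction of the dataset rather than $\mu_h$, which is precisely why $\popR^{\nu_h}_n(V_{\cF_{h+1}})$ appears in the final bound.
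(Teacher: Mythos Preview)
Your argument is correct and, for this particular theorem, cleaner than what the paper does. The identity for $L(f,g)$, the sandwich $\Berr(f)-\epsilon\le \max_g L(f,g)\le \Berr(f)$, the four-step chain, and the bound on $U$ via bounded differences, symmetrization, scalar contraction on $z\mapsto z^2$, and subadditivity all go through as you describe. The only cosmetic point is that your ``$g$-piece'' $\ell(g,f)$ still carries $V_{f_{h+1}}$, so after contraction it also contributes a $\popR_n^{\nu_h}(V_{\cF_{h+1}})$ term; this is harmless and absorbed into the constant.

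The paper takes a different and more elaborate route. Instead of a single two-sided uniform deviation $U=\sup_{f,g}|\empMM-L|$, it proves a structured error-decomposition lemma (Lemma~\ref{lemma:Minimax_ErrorDecomp}) that reduces the task to two \emph{one-sided} concentration inequalities: one for $\hat L_{\text{MM}}(f,\bcT^{\best}f)-\hat L_{\text{MM}}(f^{\best},\bcT^{\best}f^{\best})$ over $f\in\cF$, and one for $\hat\ell(g,f^{\best})-\hat\ell(\bcT^{\best}f^{\best},f^{\best})$ over $g\in\cG$. The decomposition injects the exact minimizers $\widehat\bcT$ and $\bcT^{\best}$ and exploits their optimality before invoking concentration, and it carries a free multiplicative parameter $\alpha$. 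For the plain Rademacher result the paper sets $\alpha=1$ and the two approaches yield the same bound up to constants. The payoff of the paper's heavier machinery is reusability: the same decomposition lemma, with $\alpha=\theta/(\theta-1)$, drives the local-Rademacher Theorems~\ref{theorem:Minimax_LRC} and~\ref{theorem:Minimax_LRC'}, where one needs variance-adapted concentration and your uniform $\sup_{f,g}$ bound would be too coarse to localize.
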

As is shown in Theorem~\ref{theorem:Minimax_RC}, the excess risk is simultaneously controlled by the Rademacher complexities of $\{\cF_h\}_{h=1}^H$, $\{\cG_h\}_{h=1}^H$ and $\{V_{\cF_{h+1}}\}_{h=1}^H$.

Similar to the results for FQI, we can also develop risk bounds with faster statistical rate using the localization techniques.
For technical reasons that will be soon discussed, 
we introduce the following assumption, which can be viewed a variant of the concentrability coefficient in Assumption~\ref{as:coverage} under different initial distributions.

\begin{assumption}\label{as:Ctilde} For any policy $\pi$ and $h \in [H]$, let $P_{h,t}^{\pi}$ (or $\widetilde{P}_{h,t}^{\pi}$) denote the marginal distribution at $t > h$, starting from $\mu_h$ at time step $h$ (or from $\nu_h \times \Uniform(\cA)$ at $h+1$) and following $\pi$. There exists a parameter $\widetilde{C}$ such that
	\[ \sup_{\begin{subarray}{c} (s,a) \in \cS \times \cA \\ h \in [H], t > h \end{subarray}} \bigg( \frac{{\rm d}P_{h,t}^{\pi}}{{\rm d} \mu_t} \vee \frac{{\rm d}\widetilde{P}_{h,t}^{\pi}}{{\rm d}\mu_t}\bigg)(s,a) \ \leq \widetilde{C} \quad \text{for any policy $\pi$}. \]
\end{assumption}
For notational convenience, we define
\begin{align*}
f^{\best} = (f_1^{\best}, \ldots, f_H^{\best}) := \argmin\nolimits_{f \in \cF} \Berr(f) \qquad \text{and} \qquad g_h^{\best} := \argmin\nolimits_{g_h \in \cG_h} \| g_h - \cT_h^{\star} f_{h+1}^{\best} \|_{\mu_h}.
\end{align*}
Now we are ready to state the excess risk bound of the minimax algorithm in terms of local Rademacher complexity as follows.
\begin{restatable}[Minimax algorithm, local Rademacher complexity]{theorem}{MMLRC}
%\begin{theorem}[Minimax algorithm, local Rademacher complexity]
	\label{theorem:Minimax_LRC}
	There exists an absolute constant $c>0$, under Assumptions \ref{as:approx_gcomp_FF} and \ref{as:Ctilde}, with probability at least $1 - \delta$, the minimax estimator $\hat{f}$ satisfies:
	\begin{align} 
	% \label{eq:FQI_localRC}
	&\Berr(\hat{f}) \! \leq \! \min_{f \in \cF} \Berr(f)\!+\! \epsilon +  c\!\sqrt{\big( \min_{f \in \cF} \Berr(f)\!+\! \epsilon\big) \Delta} + c \Delta~, \label{eq:FQI_LRC} \\
	& \begin{aligned} \Delta := H^3 \sum_{h=1}^H & \left[\widetilde{C}\big(r_{f, h}^{\star} + r_{g, h}^{\star} + \tilde{r}_{f, h}^{\star}\big) + \sqrt{\widetilde{C} r_{g, h}^{\star}\epsilon} \right]  +  H^2 \frac{\log(H/\delta)}{n}~. \end{aligned} \notag
	\end{align} 
	where $\widetilde{C}$ is the concentrability coefficient in Assumption \ref{as:Ctilde}, and $r_{f, h}^{\star}, r_{g, h}^{\star}, \tilde{r}_{f, h}^{\star}$ are the critical radius of the following local Rademacher complexities respectively:
	\begin{align*}
	&\popR^{\mu_h}_n \big( \big\{ f_h \in \cF_h  ~\big|~ \norm{f_h - f^{\best}_h}_{\mu_h}^2 \leq r \big\} \big)~, \\
	&\popR^{\mu_h}_n \big( \big\{ g_h \in \cG_h  ~\big|~ \norm{g_h - g_h^{\best}}_{\mu_h}^2 \leq r \big\}\big)~, \\
	&\popR^{\nu_h}_n \big( \big\{ V_{f_{h+1}} ~\big|~ f_{h+1} \in \cF_{h+1}, \norm{f_{h+1} - f^{\best}_{h+1}}^2_{\nu_h \times \Uniform(\cA)} \le r \big\} \big)~.
	\end{align*}
%\end{theorem}
\end{restatable}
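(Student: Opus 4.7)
I plan to prove the theorem in four stages: (i) extract a basic inequality from the minimax optimality of $\hat{f}$; (ii) relate $\E\empMM$ to the Bellman error using Assumption~\ref{as:approx_gcomp_FF}; (iii) bound $\sup_{f,g}|\empMM(f,g)-\E\empMM(f,g)|$ via localization around the reference pair $(f^\best, g^\best)$; and (iv) close a self-bounding inequality to recover the stated rate.

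For Stage~(i), the definition $\hat{f} \in \argmin_{f \in \cF} \max_{g \in \cG} \empMM(f,g)$ gives $\max_g \empMM(\hat{f}, g) \le \max_g \empMM(f^\best, g)$. Stage~(ii) uses $\E[(h(s,a)-r-V_{f_{h+1}}(s'))^2] = \|h - \cT_h^\star f_{h+1}\|_{\mu_h}^2 + c_f$, where the constant $c_f$ depends on $f$ but not on $h$ or the candidate $h$. This constant cancels in the $\empMM$ difference, giving
\[ \E\empMM(f,g) = \frac{1}{H}\sum_{h=1}^H \bigl[\|f_h - \cT_h^\star f_{h+1}\|_{\mu_h}^2 - \|g_h - \cT_h^\star f_{h+1}\|_{\mu_h}^2\bigr]. \]
Combined with Assumption~\ref{as:approx_gcomp_FF}, this yields $\Berr(f) - \epsilon \le \max_g \E\empMM(f,g) \le \Berr(f)$ for every $f \in \cF$.

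For Stage~(iii), I exploit the algebraic identity
\[ (f_h - r - V_{f_{h+1}})^2 - (g_h - r - V_{f_{h+1}})^2 = (f_h - g_h)(f_h + g_h - 2r - 2V_{f_{h+1}}). \]
Subtracting the corresponding summand at $(f^\best, g^\best)$ and expanding, the centered empirical process splits into three pieces, each linear in one of the deviations $f_h - f^\best_h$, $g_h - g^\best_h$, or $V_{f_{h+1}} - V_{f^\best_{h+1}}$. The local Rademacher machinery of \cite{bartlett2005local}, together with Talagrand's inequality to convert in-expectation bounds into tail bounds, can be applied to each of the three localized classes whose sub-root envelopes have fixed points $r^\star_{f,h}$, $r^\star_{g,h}$, and $\tilde{r}^\star_{f,h}$. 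The outcome should be a high-probability uniform bound of the shape
\[ |\empMM(f,g) - \E\empMM(f,g)| \le c\sum_{h=1}^H \sqrt{D_h(f,g)\, r^\star_h} + c\sum_{h=1}^H r^\star_h + c H^2 \tfrac{\log(H/\delta)}{n}, \]
where $D_h(f,g) := \|f_h - f^\best_h\|_{\mu_h}^2 + \|g_h - g^\best_h\|_{\mu_h}^2 + \|V_{f_{h+1}} - V_{f^\best_{h+1}}\|_{\nu_h}^2$ and $r^\star_h$ absorbs the three step-$h$ critical radii.

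Stage~(iv) calibrates $D_h(\hat{f},\hat{g})$ in terms of $\Berr(\hat{f}) - \Berr(f^\best) + \epsilon$. Writing $\hat{f}_h - f^\best_h = (\hat{f}_h - \cT_h^\star \hat{f}_{h+1}) - (f^\best_h - \cT_h^\star f^\best_{h+1}) + (\cT_h^\star \hat{f}_{h+1} - \cT_h^\star f^\best_{h+1})$, the first two summands contribute squared Bellman residuals in $\mu_h$-norm, while the third pushes a difference at step $h+1$ back into the $\mu_h$-measure along a greedy policy, which is exactly where Assumption~\ref{as:Ctilde} supplies the change-of-measure cost $\widetilde{C}$. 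The piece $\|V_{f_{h+1}} - V_{f^\best_{h+1}}\|_{\nu_h}^2$ is handled analogously after dominating it by the $f$-difference in $\nu_h \times \Uniform(\cA)$ norm. Unrolling across the $H$ steps yields $\sum_h D_h(\hat{f},\hat{g}) \le c H^2 \widetilde{C}\bigl(\Berr(\hat{f}) - \Berr(f^\best) + \epsilon\bigr)$, with an additive $\sqrt{\widetilde{C} r^\star_{g,h}\epsilon}$ cross term arising because $g^\best_h$ only $\epsilon$-approximates $\cT_h^\star f^\best_{h+1}$. Inserting this into the Stage~(iii) bound and combining with Stage~(i) gives a self-bounding inequality of the form $\Berr(\hat{f}) - \epsilon - \Berr(f^\best) \le c\sqrt{(\Berr(\hat{f}) - \Berr(f^\best) + \epsilon)\Delta} + c\Delta$, which after a standard AM--GM absorption step produces exactly~\eqref{eq:FQI_LRC}. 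The main obstacle I anticipate is precisely this coupling in Stage~(iv): the empirical inner maximizer $\hat{g}$ is tuned to $\hat{f}_{h+1}$ whereas the reference $g^\best$ is defined relative to $f^\best_{h+1}$, so the localization of $\cG_h$ around $g^\best$ only controls the component of $\hat{g}$ that tracks $f^\best_{h+1}$, and reconciling this misalignment is what forces the cross term $\sqrt{\widetilde{C} r^\star_{g,h}\epsilon}$ into $\Delta$.
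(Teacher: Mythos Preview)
Your plan has a genuine gap in Stage~(iv), precisely at the point you flag as ``the main obstacle''. You claim $\sum_h D_h(\hat f,\hat g) \lesssim H^2\widetilde C\bigl(\Berr(\hat f)-\Berr(f^\best)+\epsilon\bigr)$, but your argument only handles the $f$-pieces of $D_h$; nothing in the sketch controls $\|\hat g_h - g^\best_h\|_{\mu_h}$. The empirical inner maximizer $\hat g_h$ targets $\cT_h^\star \hat f_{h+1}$, and Assumption~\ref{as:Ctilde} lets you push $\cT_h^\star \hat f_{h+1}$ toward $\cT_h^\star f^\best_{h+1}$, but the remaining statistical gap $\|\hat g_h - \cT_h^\best \hat f_{h+1}\|_{\mu_h}$ would need its own regression analysis, uniformly over the random target $\hat f_{h+1}$; your localized class for $\cG_h$ is centred at the fixed $g^\best_h$, so $r^\star_{g,h}$ does not directly cover that piece. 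Relatedly, applying the local-Rademacher machinery to the bivariate class $\{L_{\text{MM}}(f,g)\}$ needs a Bernstein link $\Var[\,\cdot\,]\lesssim \E[\,\cdot\,]+\text{const}$, and since $\E L_{\text{MM}}(f,g)=\Berr(f)-\|g-\bcT^\star f\|_{\bmu}^2$ depends on $\|g-\bcT^\star f\|$ rather than $\|g-g^\best\|$, that link for the $g$-direction is not established either. Saying the misalignment ``forces the cross term $\sqrt{\widetilde C r^\star_{g,h}\epsilon}$'' is not a resolution; that term arises elsewhere.

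The paper sidesteps the coupling by a different decomposition (Lemma~\ref{lemma:Minimax_ErrorDecomp}). It never localizes over the empirical $\hat g$; instead it works with the \emph{population} operator $\bcT^\best f := \argmin_{g\in\cG}\|g-\bcT^\star f\|_{\bmu}$ and, using the minimax optimality of $\hat f$ together with the optimality of $\widehat\bcT$, shows
\[
\empMM\bigl(\hat f,\bcT^\best\hat f\bigr)-\empMM\bigl(f^\best,\bcT^\best f^\best\bigr)
\;\le\; -\bigl(\hat\ell(\widehat\bcT f^\best, f^\best)-\hat\ell(\bcT^\best f^\best, f^\best)\bigr).
\]
This splits the problem into (a) concentration of $f\mapsto L_{\text{MM}}(f,\bcT^\best f)$, a \emph{univariate} process in $f$, and (b) a least-squares problem in $g$ at the \emph{fixed} $f^\best$. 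For~(a), Lemma~\ref{lemma:VQ-Vf} under Assumption~\ref{as:Ctilde} localizes $f_h$, $V_{f_{h+1}}$, and $\cT_h^\best f_{h+1}$ simultaneously; the set $\{\cT_h^\best f_{h+1}:f\in\cF(r)\}\subseteq\cG_h$ sits in a ball around $g^\best_h$ of radius $O(\widetilde C r)+O(\epsilon)$, and it is this $+O(\epsilon)$ slack, pushed through the sub-root fixed-point calculus, that produces the $\sqrt{\widetilde C r^\star_{g,h}\epsilon}$ cross term. For~(b), since $f^\best$ is deterministic, standard FQI-style localization around $g^\best_h$ applies directly and delivers the $r^\star_{g,h}$ contribution. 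This decoupling into a univariate $f$-process plus a fixed-target $g$-regression is the key idea your proposal is missing.
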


Similar to Theorem~\ref{theorem:FQI_localRC}, our upper bound in (\ref{eq:FQI_LRC}) can also be viewed as a combination of model misspecification error ($\min_{f \in \cF} \Berr(f) + \epsilon$) and statistical error ($c \sqrt{\big( \min_{f \in \cF} \Berr(f) + \epsilon \big) \Delta} + c\Delta$). As $n \rightarrow \infty$, the model misspecification error is nonvanishing and the statistical error tends to zero. Again for typical parametric function classes, the critical radius of the local Rademacher complexity scales as $n^{-1}$ (see Section \ref{sec:examples}), and Theorem \ref{theorem:Minimax_LRC} claims the excess risk of the minimax algorithm also decreases as $n^{-1}$ except a constant model misspecification error $\epsilon$.

Intuitively, Assumption \ref{as:Ctilde} is required in Theorem \ref{theorem:Minimax_LRC} to allow that $\Berr(f)$ close to $\Berr(f^{\best})$ implies $f_h$ in the neighborhood of $f_h^{\best}$ for each step $h\in [H]$. We conjecture such additional assumption is unavoidable if we would like to upper bound the excess risk using the local Rademacher complexity of $\cF_h$, $\cG_h$ and $V_{\cF_{h+1}}$ for the minimax algorithm. 

In Appendix \ref{app:proof_thm_MM}, we present an alternative version of Theorem~\ref{theorem:FQI_localRC}, which does not require Assumption \ref{as:Ctilde} but bound the excess risk using the local Rademacher complexity of a composite function class depending on the loss, $\cF$, and $\cG$. The alternative version recovers the sharp result in \cite{chen2019information} when the function classes $\cF$ and $\cG$ both have finite elements.

Finally, our upper bounds for the minimax algorithm contain Radermacher complexities of the function class $V_{\cF}$. 
We can conveniently control them using the Radermacher complexities of function class $\cF$ as follows.

\begin{proposition} \label{lemma:VF}
	Let $\cF$ be a set of functions over $\cS \times \cA$ and $\rho$ be a measure over $\cS$.
	We have the following inequality,
	\[ \popR_n^{\rho} (V_{\cF}) \leq \sqrt{2} A \popR_n^{\rho \times \Uniform(\cA)}( \cF), \]
	where $A$ is the cardinality of the set $\cA$.
\end{proposition}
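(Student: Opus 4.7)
The plan is to combine Maurer's vector-contraction inequality for Rademacher complexities with a Jensen/convexity-of-supremum trick. The former removes the non-smooth $\max$ at a cost of $\sqrt{2}$, and the latter converts a fully-vectorized Rademacher sum over all state-action pairs into a Rademacher sum over one uniformly sampled action per state, at a cost of $A$.

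First, I would view each $f \in \cF$ as inducing a map $\bar f : \cS \to \R^{A}$ by $\bar f(s) = (f(s,a))_{a \in \cA}$, so that $V_f(s) = \phi(\bar f(s))$ with $\phi(\vx) = \max_{a} \vx_a$. Because $|\max(\vx) - \max(\vy)| \leq \|\vx - \vy\|_\infty \leq \|\vx-\vy\|_2$, the map $\phi$ is $1$-Lipschitz in the Euclidean norm, so Maurer's vector-contraction inequality applies and yields
\[
\popR_n^\rho(V_\cF) \;\leq\; \frac{\sqrt 2}{n}\, \E_{s_1,\ldots,s_n \sim \rho}\, \E_{\{\sigma_{i,a}\}}\, \sup_{f \in \cF}\, \sum_{i=1}^n \sum_{a \in \cA} \sigma_{i,a}\, f(s_i, a),
\]
where $\{\sigma_{i,a}\}$ are i.i.d.\ Rademacher random variables indexed by $(i,a) \in [n]\times\cA$.

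Next, I would introduce i.i.d.\ auxiliary actions $a_1,\ldots,a_n \sim \Uniform(\cA)$ independent of all other randomness, and use the elementary identity
\[
\sum_{i, a} \sigma_{i,a}\, f(s_i,a) \;=\; A \cdot \E_{a_1,\ldots,a_n}\!\Big[\sum_i \sigma_{i, a_i}\, f(s_i, a_i)\Big],
\]
which follows from $\E_{a_i}[\sigma_{i,a_i} f(s_i,a_i)] = A^{-1}\sum_a \sigma_{i,a} f(s_i,a)$. Taking $\sup_f$ on both sides and applying Jensen (convexity of the supremum: $\sup_f \E[\,\cdot\,] \leq \E\sup_f[\,\cdot\,]$) pulls $\E_{\vec a}$ outside, at the cost of the factor $A$. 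Then taking $\E$ over $\{\sigma_{i,a}\}$, for any fixed $\vec a$ the $n$ used entries $\{\sigma_{i,a_i}\}_{i\in[n]}$ are i.i.d.\ Rademacher (rename them $\sigma_1,\ldots,\sigma_n$), while the unused entries $\sigma_{i,a'}$ with $a' \neq a_i$ marginalize away freely. Averaging in $\vec s \sim \rho^n$ and dividing by $n$ recognizes the right-hand side as exactly $A\cdot \popR_n^{\rho \times \Uniform(\cA)}(\cF)$; chaining with the Maurer step gives the claim.

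The main non-routine ingredient is the invocation of Maurer's vector contraction with $\phi=\max$ at Lipschitz constant $1$ in the \emph{Euclidean} norm (not merely in $\ell_\infty$, which is what is first apparent); the remaining steps -- the averaging identity, the Jensen step, and the marginalization of the unused $\sigma_{i,a'}$ -- are elementary, but one must carefully track the $A$-factor when swapping $\sup_f$ with $\E_{\vec a}$ so that it is neither doubled nor lost.
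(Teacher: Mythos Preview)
Your proof is correct and follows essentially the same route as the paper: apply Maurer's vector contraction to the $1$-Lipschitz map $\max:\R^A\to\R$ to pick up the $\sqrt{2}$, then introduce i.i.d.\ uniform actions and a convexity argument to extract the factor $A$. The only cosmetic difference is in the second step: the paper first uses $\sup_f \sum_a \le \sum_a \sup_f$ and then identifies the resulting average with the target Rademacher complexity, whereas you apply Jensen directly to the identity $\sum_a = A\,\E_a$ before taking the supremum---your version is in fact the cleaner bookkeeping.
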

% !TEX root = batchrl-fa_icml.tex

\section{Examples}
\label{sec:examples}

Below we give four examples of function classes, each with an upper bound on Rademacher complexity, as well as the critical radius of the local Rademacher complexity. Throughout this section we use notation $r^{\star, \rho}_n(\cF, f_o)$ to denote the critical radius of local Rademacher complexity $\popR^{\rho}_n ( \{ f \in \cF  ~|~ \norm{f - f_o}^2_{\rho} \leq r \})$.

\paragraph{Function class with finite element.} First, we consider the function class $\cF$ with $|\cF| < \infty$. Under the normalization that $f \in [0, H]$ for any $f \in \cF$, we have the following.
\begin{restatable}[]{proposition}{finite}
%\begin{proposition}
	\label{prop:R_bound_finite}
For function class $\cF$ defined above, for any data distribution $\rho$ and any anchor function $f_o \in \cF$:
\begin{align*}
 \mathcal{R}^{\rho}_n ( \mathcal{F} ) \leq 2 H \max \bigg\{ \sqrt{\frac{\log|\mathcal{F}|}{n}}, \frac{\log |\mathcal{F}|}{n} \bigg\}~, \qquad r^{\star, \rho}_n(\cF, f_o) \le \frac{2H \log|\mathcal{F}|}{n}~.
\end{align*}
%\end{proposition}
\end{restatable}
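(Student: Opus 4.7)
For the Rademacher complexity bound, the plan is to invoke the classical Massart finite-class lemma. Since every $f \in \cF$ takes values in $[0,H]$, subtracting any fixed anchor from all of $\cF$ does not change the Rademacher complexity (the shift contributes a zero-mean term), and the standard MGF plus union-bound argument then yields
\[ \popR^\rho_n(\cF) \;\le\; H\sqrt{\tfrac{2\log|\cF|}{n}}. \]
To convert this into the max form of the statement, I split into two cases. When $\log|\cF| \le n$, the $\sqrt{\cdot}$ term dominates inside $\max\{\sqrt{\log|\cF|/n},\log|\cF|/n\}$, and $H\sqrt{2\log|\cF|/n} \le 2H\sqrt{\log|\cF|/n}$. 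When $\log|\cF| > n$, the trivial deterministic inequality $\popR^\rho_n(\cF) \le H$ is itself dominated by $2H\log|\cF|/n$. This is routine bookkeeping.

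For the critical-radius bound, the plan is to exhibit an explicit sub-root upper bound $\psi$ on the local Rademacher complexity and then evaluate its fixed point. Fix $r \geq 0$, write $\cF_r := \{f \in \cF : \|f - f_o\|_\rho^2 \le r\}$, and for each $f \in \cF_r$ define the centered variable $X_f := \tfrac{1}{n}\sum_{i=1}^n \sigma_i(f(x_i) - f_o(x_i))$. Subtracting $f_o$ inside the supremum produces only a zero-mean term, so $\popR^\rho_n(\cF_r) = \E[\sup_{f \in \cF_r} X_f]$. Each $X_f$ is an average of $n$ i.i.d.\ summands bounded by $H$ in absolute value, with per-sample variance $\E[(f(x)-f_o(x))^2] = \|f-f_o\|_\rho^2 \le r$. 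Bernstein's inequality together with a union bound over the $|\cF|$ candidate functions therefore yields
\[ \popR^\rho_n(\cF_r) \;\le\; \sqrt{\tfrac{2r\log|\cF|}{n}} + \tfrac{c\,H\log|\cF|}{n} \;=:\; \psi(r) \]
for an absolute constant $c$. The function $\psi$ is visibly nondecreasing, and $\psi(r)/\sqrt{r}$ is nonincreasing on $r>0$, so $\psi$ is sub-root in the sense of Definition~\ref{def:critrad}. Solving the quadratic $\psi(r^\star) = r^\star$ in $\sqrt{r^\star}$ then gives $r^\star = O(H\log|\cF|/n)$, and tracking constants confirms $r^\star \le 2H\log|\cF|/n$ (with the trivial bound $r^\star \le H^2$ handling the small-$H$ regime where Bernstein leaves slack).

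The main point of technical care is the Bernstein step. One must use the \emph{population} variance bound $\|f-f_o\|_\rho^2 \le r$ (rather than an empirical second moment, which is not uniformly controlled over $\cF_r$; this is legitimate because $\popR^\rho_n$ already averages over the draw of $X$) and must track the boundedness constant as $H$ rather than $H^2$. This variance-aware concentration is precisely what converts the $n^{-1/2}$ rate of the global Rademacher complexity into the $n^{-1}$ rate characteristic of local Rademacher bounds. Once the correct variance and boundedness parameters are identified, the verification of sub-rootness and the fixed-point computation are elementary algebra.
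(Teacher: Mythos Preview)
Your proposal is correct and follows essentially the same route as the paper. The paper also derives the local bound via a variance-aware finite-class argument: it applies a Bennett-type MGF bound (their Lemma~\ref{lemma:MGF}) inside the log-sum-exp trick to obtain the sub-root function $\psi(r)=2\max\{\sqrt{r\log|\cF|/n},\,H\log|\cF|/n\}$, then reads off the fixed point---exactly the mechanism you describe under the name ``Bernstein plus union bound,'' and with the same population-variance observation you flag. The only cosmetic difference is that the paper derives the global bound as the special case $f_o=0$, $r=H^2$ of the local bound rather than invoking Massart separately.
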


\paragraph{Linear functions.} Let $\phi: \mathcal{S} \times \mathcal{A} \rightarrow \mathbb{R}^d$ be the feature map to a $d$-dimensional Euclidean space, and consider the function class $\cF  \subset \{ w\trans \phi ~|~ w\in \R^d, \norm{w} \le H  \}$. Under the normalization that $\norm{\phi(s, a)} \le 1$ for any $(s, a)$, we have
\begin{restatable}[]{proposition}{linear}
%\begin{proposition}
	\label{prop:R_bound_linear}
For linear function class $\cF$ defined above, for any data distribution $\rho$ and any anchor function $f_o \in \cF$:
\begin{align*}
 \mathcal{R}^{\rho}_n ( \mathcal{F} ) \leq H \sqrt{\frac{2d}{n}}~, \qquad r^{\star, \rho}_n(\cF, f_o) \le   \frac{2d}{n}.
\end{align*}
%\end{proposition}
\end{restatable}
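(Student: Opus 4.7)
The plan is to treat the two bounds via a common feature--weight reformulation. Writing any $f = w\trans \phi$ with $\|w\| \leq H$ and exploiting duality of the Euclidean norm, I would express the empirical Rademacher average as
\[
\hat{\popR}_X(\cF) \;=\; \frac{1}{n}\,\E_\sigma \sup_{\|w\|\le H} \Big\langle w,\, \sum_i \sigma_i \phi(x_i)\Big\rangle \;=\; \frac{H}{n}\,\E_\sigma \Big\| \sum_i \sigma_i \phi(x_i)\Big\|.
\]
For the global bound, I would apply Jensen's inequality to pull the norm inside a square root, use the independence of the Rademacher signs so that cross-terms vanish, and then invoke $\|\phi(x_i)\|\le 1$ to obtain $\hat{\popR}_X(\cF) \le H/\sqrt{n}$. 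Taking expectation over $X$ gives $\popR^\rho_n(\cF)\le H/\sqrt{n} \le H\sqrt{2d/n}$ (the factor $\sqrt{2d}\ge 1$ is only needed to match the stated form).

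For the local bound, I would change variables to $u := w - w_o$, where $f_o = w_o\trans \phi$, so that $\|f - f_o\|_\rho^2 = u\trans \Sigma u$ with $\Sigma := \E_\rho[\phi\phi\trans]$. The constraint set is then contained in the ellipsoid $\{u : u\trans \Sigma u \le r\}$, and by monotonicity of Rademacher complexity I can work with the ellipsoid directly. The supremum of $\langle u, Y\rangle$ over $u\trans\Sigma u\le r$, where $Y := \tfrac{1}{n}\sum_i \sigma_i \phi(x_i)$, is $\sqrt{r\cdot Y\trans \Sigma^{-1} Y}$. Applying Jensen's inequality and computing $\E[YY\trans] = \Sigma/n$ (again by independence of the signs and data) yields
\[
\popR^\rho_n\bigl(\{f \in \cF : \|f-f_o\|_\rho^2 \le r\}\bigr) \;\le\; \sqrt{\tfrac{r}{n}\,\tr(\Sigma^{-1}\Sigma)} \;=\; \sqrt{rd/n}.
\]
The function $\psi(r) := \sqrt{rd/n}$ is sub-root with fixed point $r^\star = d/n$, and for $r\ge r^\star$ it dominates the local complexity above, so by the definition of critical radius $r^{\star,\rho}_n(\cF,f_o) \le d/n \le 2d/n$.

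The main technical obstacle I anticipate is that $\Sigma$ may fail to be invertible, which makes $\Sigma^{-1}$ and the step $\sup_{u\trans\Sigma u\le r}\langle u,Y\rangle = \sqrt{r\cdot Y\trans\Sigma^{-1}Y}$ ambiguous. I would resolve this by projecting onto the range of $\Sigma$: since $\phi(x)\in \mathrm{range}(\Sigma)$ almost surely under $\rho$, any component of $u$ orthogonal to that range contributes nothing to $\langle u,\phi(x)\rangle$ almost surely, so the analysis reduces to the invertible case on the range with $d$ replaced by $\rank(\Sigma) \le d$. Equivalently one can add a ridge $\eps I$ to $\Sigma$, carry out the computation, and let $\eps\to 0$, which preserves the $\sqrt{rd/n}$ bound in the limit. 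Apart from this measure-theoretic detail, both inequalities follow from the elementary ellipsoid/Jensen calculation above.
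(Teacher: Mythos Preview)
Your proposal is correct and takes a genuinely different route from the paper. The paper proves this proposition as a corollary of the kernel result (Proposition~\ref{prop:R_bound_kernel}): it orthonormalizes $\phi$ in $L^2(\rho)$, views the linear class as an RKHS ball with kernel $k=\phi\trans\phi$, and then invokes Mendelson's eigenvalue bound $\popR_n^{\rho}(\{f:\|f\|_{\mathcal K}\le 1,\|f\|_\rho^2\le r\})\le\sqrt{\tfrac{2}{n}\sum_i r\wedge\lambda_i}$ to obtain $\psi(r)=\sqrt{2rd/n}$ with fixed point $2d/n$. Your argument is more elementary: you compute the sup over the ellipsoid $\{u:u\trans\Sigma u\le r\}$ directly via Cauchy--Schwarz/duality, then apply Jensen and $\E[YY\trans]=\Sigma/n$ to get $\psi(r)=\sqrt{rd/n}$. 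This avoids the RKHS machinery and Mendelson's theorem entirely, and in fact yields sharper constants (your global bound $H/\sqrt{n}$ is dimension-free, and your critical radius $d/n$ is half the paper's $2d/n$). The paper's route has the bookkeeping advantage that the linear case falls out for free once the kernel proposition is established; your route is self-contained and shows that nothing beyond linear algebra is needed here. Your handling of singular $\Sigma$ via projection onto $\mathrm{range}(\Sigma)$ is correct and mirrors what the paper does in the sparse-linear proof using $\Sigma^\dagger$.
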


\paragraph{Functions in RKHS.} Consider a Reproducing Kernel Hilbert Space (RKHS) $\cH$ associated with a positive kernel $k: (\cS \times \cA) \times (\cS \times \cA) \rightarrow \R$. % \yaqi{
Suppose that $k\big((s,a),(s,a)\big) \leq 1$ for any $(s,a) \in \cS \times \cA$. Consider the function class $\cF \subseteq \{f \in \cH ~|~ \norm{f}_{\mathcal{K}} \le H \}$, %} 
%Consider the function class $\cF \subset \{ f\in \cH ~|~ \norm{f}_{\mathcal{K}} \le H, \norm{f}_\rho \le H\}$, 
here $\norm{\cdot}_{\mathcal{K}}$ denotes the RKHS norm. Define an integral operator $\mathscr{T}: L^2(\rho) \rightarrow L^2(\rho)$ as 
\begin{equation*}
\mathscr{T} f := \mathbb{E}_{(s,a)\sim\rho}\big[ k\big( \cdot, (s,a) \big) f(s,a) \big].
\end{equation*}
	Suppose that $\mathbb{E}_{(s,a) \sim \rho}\big[k\big( (s,a), (s,a) \big)\big] < +\infty$. % and $\mathscr{T}$ is a trace-class operator. 
	Let $\big\{ \lambda_{i}(\mathscr{T}) \big\}_{i=1}^{\infty}$ be the eigenvalues of $\mathscr{T}$, arranging in a nonincreasing order. Then
\begin{restatable}[]{proposition}{kernel}
%\begin{proposition}
\label{prop:R_bound_kernel}
For kernel function class $\cF$ defined above, for any data distribution $\rho$ and any anchor function $f_o \in \cF$:
\begin{align*}
 \mathcal{R}^{\rho}_n ( \mathcal{F} ) \leq H \sqrt{\frac{2}{n} \sum_{i=1}^{\infty} 1 \wedge \big( 4 \lambda_i(\mathscr{T}) \big)},\qquad r^{\star, \rho}_n(\cF, f_o) \le   2 \min_{j \in \mathbb{N}} \left\{ \frac{j}{n} + H \sqrt{\frac{2}{n} \sum_{i=j+1}^{\infty} \lambda_i(\mathscr{T})} \right\}.
\end{align*}
%\end{proposition}
\end{restatable}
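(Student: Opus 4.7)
The plan is to reduce both claims to a single Mendelson-type Rademacher complexity bound for balls in the RKHS $\cH$, and then carry out a standard fixed-point calculation for the critical radius. Under the trace hypothesis $\mathbb{E}_{(s,a)\sim\rho}[k((s,a),(s,a))]<\infty$, the operator $\mathscr{T}$ is trace-class, so it admits $L^2(\rho)$-orthonormal eigenfunctions $\{\phi_i\}_{i\ge 1}$ with eigenvalues $\lambda_i(\mathscr{T})$, and $k(x,y)=\sum_i \lambda_i(\mathscr{T})\phi_i(x)\phi_i(y)$ in the Mercer sense. Every $f\in\cH$ admits the expansion $f=\sum_i a_i\phi_i$ with $\|f\|_{L^2(\rho)}^2=\sum_i a_i^2$ and $\|f\|_{\mathcal{K}}^2=\sum_i a_i^2/\lambda_i(\mathscr{T})$.

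The core step is the \emph{master inequality}: for any $R,r>0$,
\[
\popR_n^\rho\bigl(\{g\in\cH:\ \|g\|_{\mathcal{K}}\le R,\ \|g\|_{L^2(\rho)}^2\le r\}\bigr)\ \le\ \sqrt{\tfrac{2}{n}\sum_{i\ge 1} r\wedge (R^2\lambda_i(\mathscr{T}))}.
\]
After symmetrization, the supremum equals $\sup_a \sum_i a_i Z_i$ with $Z_i=n^{-1}\sum_t \sigma_t\phi_i(x_t)$. Splitting the sum at an arbitrary index $j$ and applying Cauchy--Schwarz separately (using $\sum_{i\le j} a_i^2\le r$ for the head and the factorization $a_i=(a_i/\sqrt{\lambda_i})\sqrt{\lambda_i}$ together with the RKHS bound for the tail), then taking expectations via $\mathbb{E}[Z_i^2]=\lambda_i(\mathscr{T})/n$, and finally optimizing over $j$, yields the stated eigenvalue sum.

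Both parts of the proposition follow from this master bound. For the first, any $f\in\cF$ has $\|f\|_{\mathcal{K}}\le H$, and since $k\le 1$ on the diagonal gives $\|f\|_{L^2(\rho)}\le H$, the master bound with $R=H$, $r=H^2$ produces $\popR_n^\rho(\cF)\le H\sqrt{(2/n)\sum_i 1\wedge\lambda_i(\mathscr{T})}$, which is at most the stated $H\sqrt{(2/n)\sum_i 1\wedge(4\lambda_i(\mathscr{T}))}$ since $1\wedge\lambda\le 1\wedge 4\lambda$. For the critical radius, any $f\in\cF$ with $\|f-f_o\|_\rho^2\le r$ yields a shifted function $f-f_o\in\{g\in\cH:\|g\|_{\mathcal{K}}\le 2H,\ \|g\|_{L^2(\rho)}^2\le r\}$; applying the master bound with $R=2H$ and truncating at any index $j$ gives the sub-root upper bound $\psi(r)=\sqrt{(2/n)(jr+4H^2 T_j)}$, where $T_j=\sum_{i>j}\lambda_i(\mathscr{T})$. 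Solving the fixed-point equation $(r^\star)^2-(2j/n)r^\star-8H^2T_j/n=0$ and using $\sqrt{a+b}\le\sqrt{a}+\sqrt{b}$ gives $r^\star\le 2j/n+2H\sqrt{2T_j/n}$; taking the infimum over $j$ recovers the claimed bound.

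The main obstacle is the master inequality itself: the infinite-dimensional Mercer expansion, the truncated Cauchy--Schwarz splitting, and the convergence of the eigenfunction series require careful handling, especially when interchanging expectation and infinite sums. However, this bound is classical in the RKHS Rademacher complexity literature (Mendelson 2002; Bartlett--Bousquet--Mendelson 2005), so once invoked, verifying the norm constraints on $\cF$ and $\cF-f_o$ and solving the quadratic fixed-point equation are routine.
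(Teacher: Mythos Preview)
Your proposal is correct and follows essentially the same route as the paper: both invoke Mendelson's RKHS Rademacher bound (the paper cites it as Theorem~41 of \citet{mendelson2002geometric}), apply it to the shifted ball $\{g\in\cH:\|g\|_{\mathcal{K}}\le 2H,\ \|g\|_\rho^2\le r\}$ to obtain $\psi(r)=\sqrt{(2/n)\sum_i r\wedge(4H^2\lambda_i)}$, and then solve the identical quadratic $r^2\le (2/n)(jr+4H^2\sum_{i>j}\lambda_i)$ for the fixed point. One small slip in your informal sketch: under the $L^2(\rho)$-orthonormal Mercer basis you set up, $\mathbb{E}[Z_i^2]=1/n$, not $\lambda_i/n$ (the extra $\lambda_i$ enters only through the tail Cauchy--Schwarz factorization), but since you ultimately cite the classical bound this does not affect your argument.
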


\paragraph{Sparse linear functions.} Let $\phi: \mathcal{S} \times \mathcal{A} \rightarrow \mathbb{R}^d$ be the feature map to a $d$-dimensional Euclidean space, and consider the function class $\cF  \subset \{ w\trans \phi ~|~ w\in \R^d, \norm{w}_0 \le s  \}$. Assume that when $(s, a) \sim \rho$, $\phi(s, a)$ satisfies a Gaussian distribution with covariance $\Sigma$. Assume $\norm{f}_\rho \le H$ for any $f \in \cF$. Furthermore, denote $\cn_s(\Sigma)$ to be the upper bound such that $\cn_s(\Sigma) \ge \lambda_{\max}(M)/\lambda_{\min}(M)$ for any matrix $M$ that is a $s \times s$ principal submatrix of $\Sigma$. Then
\begin{restatable}[]{proposition}{sparse}
%\begin{proposition}
\label{prop:R_bound_sparse}
There exists an absolute constant $c>0$, for sparse linear function class $\cF$ defined above, assume the data distribution $\rho$ satisfies the conditions specified above, then for any anchor function $f_o \in \cF$:
\begin{align*}
 \mathcal{R}^{\rho}_n ( \mathcal{F} ) \leq cH \sqrt{\kappa_s(\Sigma)} \sqrt{\frac{s \log d}{n}}, \qquad r^{\star, \rho}_n(\cF, f_o) \le   c^2 \kappa_s(\Sigma) \cdot \frac{ s \log d}{n}.
\end{align*}
%\end{proposition}
\end{restatable}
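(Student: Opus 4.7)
The plan is to bound $\popR_n^\rho(\cF)$ by a support-union argument, then obtain the critical-radius bound by applying the same machinery to a ``localized'' difference class.

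For the Rademacher bound, I would write $\popR_n^\rho(\cF) = \E_{X,\sigma} \sup_{w} \langle w, \eta \rangle$ with $\eta := n^{-1} \sum_i \sigma_i \phi(x_i)$. Since each $\phi(x_i) \sim \mathcal{N}(0, \Sigma)$, the symmetry of the Gaussian gives $\sigma_i \phi(x_i) \stackrel{d}{=} \phi(x_i)$ and hence $\eta \stackrel{d}{=} \mathcal{N}(0, \Sigma/n)$. I would then decompose $\cF = \bigcup_{|S|=s} \cF_S$ where $\cF_S$ restricts the support to $S$, and for each $S$ use the change of variable $u = \Sigma_{SS}^{1/2} w_S$ to see that $\sup_{w \in \cF_S} \langle w, \eta \rangle = H \, \|\Sigma_{SS}^{-1/2} \eta_S\|_2$. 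To obtain the $\sqrt{\kappa_s(\Sigma)}$ factor cleanly, I would follow a slightly lossy chain: bound $\|\Sigma_{SS}^{-1/2} \eta_S\|_2 \le \|\eta_S\|_2 / \sqrt{\lambda_{\min}(\Sigma_{SS})} \le \sqrt{s/\lambda_{\min}(\Sigma_{SS})} \, \|\eta\|_\infty$, then invoke $\Sigma_{jj} \le \kappa_s(\Sigma) \, \lambda_{\min}(\Sigma_{SS})$ for any $S \ni j$ of size $s$ so that $1/\lambda_{\min}(\Sigma_{SS}) \le \kappa_s(\Sigma)/\max_j \Sigma_{jj}$, and combine with the sub-Gaussian maximal inequality $\E \|\eta\|_\infty \lesssim \sqrt{(\max_j \Sigma_{jj}) \log d / n}$. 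The resulting bound has exactly the form $cH \sqrt{\kappa_s(\Sigma)} \sqrt{s \log d / n}$.

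For the critical-radius bound, I would argue that the local class $\{f - f_o : f \in \cF, \|f - f_o\|_\rho^2 \le r\}$ consists of linear functions in $w - w_o$ whose support has size at most $2s$ and whose $L^2(\rho)$-norm is at most $\sqrt{r}$. Applying the Rademacher bound just proven (with $s \mapsto 2s$ and $H \mapsto \sqrt{r}$) to this difference class, and noting that translating a class by $-f_o$ does not change the Rademacher complexity, gives
\[
\popR_n^\rho\big(\{f \in \cF : \|f - f_o\|_\rho^2 \le r\}\big) \le c \sqrt{r \, \kappa_s(\Sigma) \, \frac{s \log d}{n}},
\]
after absorbing the factor of 2 from $2s$ into the constant $c$. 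The right-hand side, as a function of $r$, is nondecreasing and $r \mapsto \psi(r)/\sqrt{r}$ is constant, so it is sub-root. Solving the fixed-point equation $\psi(r) = r$ yields $r^\star \le c^2 \kappa_s(\Sigma) \, s \log d / n$, matching the claim.

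The main obstacle is keeping the analysis tight enough that $\kappa_s(\Sigma)$ — rather than the various $\lambda_{\min}(\Sigma_{SS})$ or $\max_j \Sigma_{jj}$ appearing along the way — emerges as the clean final coefficient; the key identity $\max_j \Sigma_{jj}/\lambda_{\min}(\Sigma_{SS}) \le \kappa_s(\Sigma)$ for any $S\ni j$ of size $s$ is what ties the sub-Gaussian $\ell_\infty$ concentration of $\eta$ to the restricted-eigenvalue bound. A secondary subtlety is that the difference class in the local version has sparsity $2s$ rather than $s$, which requires either absorbing the doubling into the constant or tacitly using the convention that $\kappa_s(\Sigma)$ also upper bounds the condition number of $2s \times 2s$ principal submatrices (otherwise one would need to replace $\kappa_s$ by $\kappa_{2s}$ in the statement).
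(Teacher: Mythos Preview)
Your argument has a genuine gap at the ``key identity'' step. The inequality $\Sigma_{jj}\le\kappa_s(\Sigma)\,\lambda_{\min}(\Sigma_{SS})$ holds only for $j\in S$; it does \emph{not} yield $1/\lambda_{\min}(\Sigma_{SS})\le\kappa_s(\Sigma)/\max_{j\in[d]}\Sigma_{jj}$ uniformly over all $S$, which is what you need to cancel against $\E\|\eta\|_\infty\lesssim\sqrt{\max_{j\in[d]}\Sigma_{jj}\cdot\log d/n}$. Counterexample: take $d=4$, $s=2$, $\Sigma=\mathrm{blockdiag}\bigl(100\,I_2,\,B\bigr)$ with $B=\bigl(\begin{smallmatrix}1&1-\epsilon\\1-\epsilon&1\end{smallmatrix}\bigr)$. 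For $\epsilon=0.01$ one gets $\kappa_2(\Sigma)=199$, $\max_j\Sigma_{jj}=100$, and $\min_{|S|=2}\lambda_{\min}(\Sigma_{SS})=\epsilon=0.01$ (attained at $S=\{3,4\}$), so your claimed bound would read $100\le 199/100=1.99$. The underlying issue is that routing through the unnormalized $\|\eta\|_\infty$ discards the scale-free structure of $\|\Sigma_{SS}^{-1/2}\eta_S\|_2$: the coordinate achieving $\|\eta\|_\infty$ may sit in a high-variance block completely unrelated to the support $S$ at which $\lambda_{\min}(\Sigma_{SS})$ is small. A simple fix is to standardize first: with $D=\mathrm{diag}(\Sigma)$ one has $\|\Sigma_{SS}^{-1/2}\eta_S\|_2\le\sqrt{\kappa_s(\Sigma)}\,\|D_{SS}^{-1/2}\eta_S\|_2\le\sqrt{s\,\kappa_s(\Sigma)}\,\|D^{-1/2}\eta\|_\infty$ (using $\max_{j\in S}\Sigma_{jj}\le\lambda_{\max}(\Sigma_{SS})$), and then $\E\|D^{-1/2}\eta\|_\infty\lesssim\sqrt{\log d/n}$ since each coordinate of $D^{-1/2}\eta$ has variance $1/n$.

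For comparison, the paper does not go through $\|\eta\|_\infty$ at all. It writes the (local) complexity as $\sqrt{r}\,\E\max_{\alpha}Y_\alpha$ with $Y_\alpha:=\bigl\|\tfrac{1}{n}\sum_k\sigma_k\phi_\alpha(x_k)\bigr\|_{\Sigma_\alpha^\dagger}$, bounds $\E Y_\alpha\le\sqrt{s/n}$ directly (a scale-free second-moment computation), and then controls $\E\max_\alpha(Y_\alpha-\E Y_\alpha)$ by establishing sub-Gaussian/sub-exponential tails for each $Y_\alpha-\E Y_\alpha$ (Talagrand's convex-distance inequality over the Rademacher signs, plus Gozlan's theorem exploiting the $T_2$ property of Gaussians over the data) and applying a maximal inequality over the $\le d^s$ supports. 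The factor $\sqrt{\kappa_s(\Sigma)}$ enters only as the sub-Gaussian constant in this concentration step. That route covers non-centered Gaussian features (where your identity $\eta\stackrel{d}{=}\mathcal{N}(0,\Sigma/n)$ fails) and in fact any feature law satisfying a $T_2$ inequality, at the price of heavier machinery and a side condition $n\gtrsim s\log d$. Your remark about the local step producing $2s$-sparse differences, and the resulting $\kappa_s$ versus $\kappa_{2s}$ ambiguity, is well taken and matches the paper's own reduction.
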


\paragraph{End-to-end results.} Finally, to obtain an end-to-end result that upper bounds the suboptimality in values for specific function classes listed above, we can simply combine (a) the result that upper bound the value suboptimality using the Bellman error (Lemma \ref{lem:surrogate}); (b) the results that upper bound the Bellman error in terms of (local) Rademacher complexity (Theorems \ref{thm:doublesample}, \ref{theorem:FQI_RC}-\ref{theorem:Minimax_LRC}); (c) the upper bounds of (local) Rademacher complexity for specific function classes (Propositions \ref{prop:R_bound_finite}-\ref{prop:R_bound_sparse}).

% !TEX root = main.tex

\section{Conclusion}
\label{sec:conclu}

This paper studies batch RL with general value function approximation from the lens of statistical learning theory.
We identify the intrinsic difference between batch reinforcement learning and classical supervised learning (Theorem \ref{theorem:lb}) due to the additional temporal correlation structure presented in the RL data. Under mild conditions, this paper also provides upper bounds on the generalization performance of several popular batch RL algorithms in terms of the (local) Rademacher complexities of general function classes. We hope our results shed light on the future research in further bridging the gap between statistical learning theory and RL.

\bibliographystyle{abbrvnat}
\bibliography{batchrl_ref}

\appendix

% !TEX root = main.tex

\section{Proof of Results for Double Sampling (Theorem~\ref{thm:doublesample})}

Throughout the supplementary materials, we omit the subscript $\rho$ in population Rademacher complexty $\popR_n^{\rho}(\cdot)$ if the distribution is clear from the context. 

In this part, we prove \Cref{thm:doublesample} in \Cref{sec:double}.
We first define some auxiliary notations to simplify the writing. We divide the dataset $\DData$ into $\DData = \DData_1 \cup \ldots \cup \DData_H$, where $\DData_h$ consists of $n$ independent sample tuples collected at the $h^{\text{th}}$ time step.
For $f_h,g_h \in \cF_h$, denote
\[ \ell_{\text{DS}}(g_h,f_h)(s,a,r,s',\tilde{s}') := \big( g_h(s,a) - r - V_{f_h}(s') \big)^2 - \frac{1}{2}\big( V_{f_h}(s') - V_{f_h}(\tilde{s}') \big)^2. \]
Define an expected value $\E_{\mu_h} \ell_{\text{DS}}(g_h,f_h) := \E \big[ \ell_{\text{DS}}(g_h,f_h)(s,a,r,s', \tilde{s}') \big]$ with $(s,a) \sim \mu_h$, $r = r_h(s,a)$, $s',\tilde{s}' \overset{i.i.d.}{\sim} \P_h(\cdot \, | \, s_h, a_h)$ and its empirical version $\hat{\ell}_{\text{DS}}(g_h,f_h) := \frac{1}{n} \sum_{(s,a,r,s',\tilde{s}',h) \in \DData_h} \ell_{\text{DS}}(g_h,f_h)(s, a, r, s', \tilde{s}')$. It is easy to see that $\E_{\mu_h} \ell_{\text{DS}}(g_h,f_h) = \| g_h - \cT_h^{\star} f_h \|_{\mu_h}^2$.
For any $\bfun = (f_1, \ldots, f_H) \in \bFun$, we have
\[ L_{\text{DS}}(\bfun) : = \frac{1}{H}\sum_{h=1}^H \ell_{\text{DS}}(f_h, f_{h+1}), \quad \E_{\mu} L_{\text{DS}}(\bfun) = \Berr(\bfun) \quad \text{and} \quad \empDS(\bfun) := \frac{1}{H} \sum_{h=1}^H \hat{\ell}_{\text{DS}}(f_h, f_{h+1}), \]
where $f_{H+1} := 0$. Note that the loss function $\empDS(\bfun)$ is an empirical estimation of $\Berr(\bfun)$.

Theorem~\ref{thm:doublesample} provides an upper error bound for the BRM estimator
$\empbfun = \argmin_{\bfun \in \bFun} \empDS(\bfun)$, of which the proof is given below.

\ds*

\begin{proof}[Proof of Theorem \ref{thm:doublesample}]
	We apply the uniform concentration inequalites in Lemma \ref{lemma:RC}. Let $\bfun^{\best}$ be a minimizer of the Bellman error within the function class $\bFun$, {\it i.e.} $\bfun^{\best} \in \argmin_{\bfun \in \bFun} \Berr(\bfun)$. By noting that $L_{\text{DS}}(\bfun) \in \big[-2H^2,4H^2\big]$, we have with probabliity at least $1 - \delta$,
	\begin{equation} \label{eq:double_RC_1} \E_{\mu} L_{\text{DS}} (\empbfun) - \E_{\mu} L_{\text{DS}} (\bfun^{\best}) \leq \big( \empDS(\empbfun) - \empDS(\bfun^{\best}) \big) + 2 \popR_n \big(\big\{ L_{\text{DS}}(\bfun) \bigm| \bfun \in \bFun \big\} \big) + 6H^2 \sqrt{\frac{2 \log(2/\delta)}{n}}. \end{equation}
	We use the relations $\E_{\mu} L_{\text{DS}}(\empbfun) = \Berr(\empbfun)$, $\E_{\mu} L_{\text{DS}}(\bfun^{\best}) = \Berr(\bfun^{\best}) = \min_{\bfun \in \bFun} \Berr(\bfun)$ and $\empDS(\empbfun) \leq \empDS(\bfun^{\best})$ and reduce \cref{eq:double_RC_1} to
	\begin{align} \label{eq:double_0} \Berr(\empbfun) \leq \min_{\bfun \in \bFun} \Berr(\bfun) + 2 \popR_n \big(\big\{ L_{\text{DS}} \bigm| \bfun \in \bFun \big\} \big) + 6 H^2 \sqrt{\frac{2 \log(2/\delta)}{n}}. \end{align}
	It then remains to simplify the form of Rademacher complexity $\popR_n \big(\big\{L_{\text{DS}}(\bfun) \bigm| \bfun \in \bFun \big\} \big)$.
	
	Due to the sub-additivity of Rademacher complexity, we have
	\begin{align} \label{eq:double_3} \popR_n\big( \big\{ L_{\text{DS}}(\bfun) \bigm| \bfun \in \bFun \big\} \big) \leq \frac{1}{H} \sum_{h=1}^H \popR_n\big( \{ \ell_{\text{DS}}(f_h,f_{h+1}) \mid f_h \in \cF_h, f_{h+1} \in \cF_{h+1} \} \big). \end{align}
	In order to tackle the term $\popR_n\big( \{ \ell_{\text{DS}}(f_h,f_{h+1}) \mid f_h \in \cF_h, f_{h+1} \in \cF_{h+1} \} \big)$ on the right hand side, we apply the vector-form contraction property of Rademacher complexity in \Cref{lemma:contraction_RC}.
	By letting \begin{align*} \tilde{\phi}_{h,1} := f_h(s,a), \quad \tilde{\phi}_{h,2} := r_h + V_{f_{h+1}}(s') \quad \text{and} \quad \tilde{\phi}_{h,3} := r_h + V_{f_{h+1}}(\tilde{s}'), \end{align*}
	we can write
	\begin{align*} \ell_{\text{DS}}(f_h,f_{h+1}) = \frac{1}{2} \big(\tilde{\phi}_{h,1}, \tilde{\phi}_{h,2}, \tilde{\phi}_{h,3}\big)^{\top} \tilde{\boldsymbol{A}} \left( \begin{array}{c} \tilde{\phi}_{h,1} \\ \tilde{\phi}_{h,2} \\ \tilde{\phi}_{h,3} \end{array} \right) \qquad \text{with } \tilde{\boldsymbol{A}} = \left( \begin{array}{ccc} 2 & -2 & 0 \\ -2 & 1 & 1 \\ 0 & 1 & -1 \end{array} \right).
	\end{align*}
	Since the spectral norm $\| \tilde{\boldsymbol{A}} \|_2 \leq 4$ and $\big\| \big(\tilde{\phi}_{h,1}, \tilde{\phi}_{h,2}, \tilde{\phi}_{h,3}\big)^{\top} \big\|_2 \leq \sqrt{3} H$ due to the boundedness of $f_h$ and $\cT_h^{\star} f_{h+1}$, we find that $\ell_{\text{DS}}(f_h,f_{h+1})$ is ($4\sqrt{3}H$)-Lipschitz with respect to the vector $\big(\tilde{\phi}_{h,1}, \tilde{\phi}_{h,2}, \tilde{\phi}_{h,3}\big)^{\top}$.
	\Cref{lemma:contraction_RC} then implies
	\begin{align} \label{eq:double_1} \mathcal{R}_n \big(\big\{ \ell_{\text{DS}}(f_h,f_{h+1}) \bigm| f_h \in \cF_h, f_{h+1} \in \cF_{h+1} \big\} \big) \leq 10 H \Big( \popR_n \big(\big\{ \tilde{\phi}_{h,1} \big\} \big) + \popR_n \big\{ \tilde{\phi}_{h,2} \big\} \big) + \popR_n \big\{ \tilde{\phi}_{h,3} \big\} \big) \Big). \end{align}
	Recalling that $s'$ and $\tilde{s}'$ are {\it i.i.d.} conditioned on $(s, a)$, we use the sub-additivity of Rademacher complexity and find that
	\begin{equation} \label{eq:double_4} \begin{aligned} & \popR_n \big(\big\{ \tilde{\phi}_{h,1} \big\} \big) \leq \popR_n^{\mu_h} (\cF_h) \\ & \popR_n \big(\big\{ \tilde{\phi}_{h,2} \big\} \big) = \popR_n \big(\big\{ \tilde{\phi}_{h,3} \big\} \big) \leq \popR_n(\{ r_h \}) \!+\! \popR_n^{\nu_h} (V_{\cF_{h+1}} ), \end{aligned} \end{equation}
	where $\nu_h$ is the marginal distribution of $s'$ in the $h^{\text{th}}$ step.
	Note that $\{ r_h \}$ is a singleton, therefore, $\popR_n(\{r_h\}) = 0$.
	It follows from \cref{eq:double_1,eq:double_4} that 
	\begin{equation} \label{eq:double_5} \begin{aligned} & \mathcal{R}_n \big(\big\{ \ell_{\text{DS}}(f_h,f_{h+1}) \bigm| f_h \in \cF_h, f_{h+1} \in \cF_{h+1} \big\} \big) \leq 10 H \big( \popR_n^{\mu_h} (\cF_h) + 2 \popR_n^{\nu_h} ( V_{f_{h+1}} ) \big). \end{aligned} \end{equation}
	
	Combining \cref{eq:double_3,eq:double_5}, we learn that
	\begin{align} \label{eq:double_6}
		\mathcal{R}_n \big(\big\{ L_{\text{DS}}(\bfun) \bigm| \bfun \in \bFun \big\} \big)
		\leq 10 \sum_{h=1}^H \big( \popR_n^{\mu_h} ( \cF_h ) + 2 \popR_n^{\nu_h} (V_{\cF_{h+1}}) \big).
	\end{align}
	Plugging \cref{eq:double_6} into \cref{eq:double_0}, we finish the proof.
	
\end{proof}
% !TEX root = main.tex

\section{Proof of Results for FQI (Theorems~\ref{theorem:FQI_RC}~and~\ref{theorem:FQI_localRC})}

	In this section, we analyze the FQI estimator defined in \Cref{alg:FQI_simp}. For any $f_h \in \cF_h$ and $f_{h+1} \in \cF_{h+1}$, we denote
	\begin{align} \label{eq:def_ell} \ell(f_h,f_{h+1})(s,a,r,s') := \big(f_h(s,a) - r - V_{f_{h+1}}(s')\big)^2, \end{align}
	therefore, $\hat{\ell}_h(f_h,f_{h+1}) := \frac{1}{n} \sum_{(s,a,r,s',h) \in \cD_h} \ell(f_h,f_{h+1})(s,a,r,s')$.
	Note that each iteration in FQI solves an empirical loss minimization problem $\hat{f}_h := \argmin_{f_h \in \cF_h} \hat{\ell}_h(f_h,\hat{f}_{h+1})$.
	The empirical loss $\hat{\ell}_h(f_h,\hat{f}_{h+1})$ approximates
	\begin{align*} 
		\E_{\mu_h} \ell(f_h,\hat{f}_{h+1}) = & \E\big[ \ell(f_h,\hat{f}_{h+1}) \bigm| (s,a) \sim \mu_h, s' \sim \P_h(\cdot \mid s,a) \big] \\ = & \| f_h - \cT_h^{\star} \hat{f}_{h+1} \|_{\mu_h}^2 + \E_{\mu_h} {\rm Var}_{s' \sim \P_h(\cdot|s,a)}(V_{\hat{f}_{h+1}}(s')).
	\end{align*}
	Recall that
	\begin{equation} \label{def_tilde_f_h} f^{\best}_h = \argmin_{f_h \in \cF_h}\|f_h - \cT_h^{\star} \hat{f}_{h+1} \|_{\mu_h}. \end{equation}
	$f_h^{\best}$ minimizes $\E_{\mu_h}\ell(f_h,\hat{f}_{h+1})$.
	
	In the sequel, we develop upper bounds for Bellman error $\Berr(\empbfun)$ based on (local) Rademathcer complexities.

	\subsection{Analyzing FQI with Rademacher Complexity (Theorem~\ref{theorem:FQI_RC})} \label{app:proof_thm_FQI_RC}
	
	\FQIRC*
	
	\begin{proof}[Proof of \Cref{theorem:FQI_RC}]
		By \Cref{lemma:RC}, 
		with probability at least $1 - \delta$, for any $f_h \in \cF_h$,
		\begin{equation} \label{eq:FQI_main_1} \begin{aligned} \E_{\mu_h} \ell(f_h, \hat{f}_{h+1}) - \E_{\mu_h} \ell(f^{\best}_h, \hat{f}_{h+1}) \leq & \big( \hat{\ell}_h(f_h,\hat{f}_{h+1}) - \hat{\ell}_h(f^{\best}_h,\hat{f}_{h+1}) \big) \\ & + 2 \popR_n \big(\big\{ \ell(f_h,\hat{f}_{h+1}) - \ell(f^{\best}_h,\hat{f}_{h+1}) \, \big| \, f_h \in \cF_h \big\}\big) + 4H^2 \sqrt{\frac{2\log(2/\delta)}{n}}, \end{aligned} \end{equation}
		where $f^{\best}_h$ is defined in \cref{def_tilde_f_h} and we have used $\ell(f_h,\hat{f}_{h+1}) - \ell(f^{\best}_h,\hat{f}_{h+1}) \in [-2H^2,2H^2]$.
		
		Specifically, we take $f_h = \hat{f}_h$ in \cref{eq:FQI_main_1}. Due to the optimality of $\hat{f}_h$, we have $\hat{\ell}_h(\hat{f}_h,\hat{f}_{h+1}) \leq \hat{\ell}_h(f^{\best}_h,\hat{f}_{h+1}) $. We further use the relation
		\begin{align} \label{eq:FQI0} \| \hat{f}_h - \cT_h^{\star} \hat{f}_{h+1} \|_{\mu_h}^2 = \big(\E_{\mu_h} \ell (\hat{f}_h, \hat{f}_{h+1}) - \E_{\mu_h} \ell(f_h^{\best}, \hat{f}_{h+1})\big) + \| f_h^{\best} - \cT_h^{\star} \hat{f}_{h+1} \|_{\mu_h}^2. \end{align}
		and \Cref{as:approx_comp_FF}. It follows that
		\begin{equation} \label{eq:RC} \big\| \hat{f}_h - \cT_h^{\star} \hat{f}_{h+1} \big\|_{\mu_h}^2 \leq 2 \popR_n \big(\big\{ \ell(f_h,\hat{f}_{h+1}) - \ell(f^{\best}_h,\hat{f}_{h+1}) \, \big| \, f_h \in \cF_h \big\}\big)  + 4H^2 \sqrt{\frac{2\log(2/\delta)}{n}} + \epsilon.  \end{equation}
		
		We now simplify the Rademacher complexity term in \cref{eq:RC}. Due to the symmmetry of Rademacher random variables, we have $\popR_n \big(\big\{ \ell(f_h,\hat{f}_{h+1}) - \ell(f^{\best}_h,\hat{f}_{h+1}) \, \big| \, f_h \in \cF_h \big\}\big) = \popR_n \big(\big\{ \ell(f_h,\hat{f}_{h+1}) \, \big| \, f_h \in \cF_h \big\}\big)$. We also note that the loss function $\ell$ is ($4H$)-Lipschitz in its first argument. In fact, since $|f_h| \leq H$ for all $f_h \in \cF_h$ and $r + V_{\hat{f}_{h+1}}(s') \in [-H, H]$, it holds that for any $f_h, f_h' \in \cF_h$,
		\begin{equation} \label{Lip} \begin{aligned} & \big|\ell(f_h,\hat{f}_{h+1})(s,a,r,s') - \ell(f_h',\hat{f}_{h+1})(s,a,r,s')\big| \\ = & |f_h(s,a) - f_h'(s,a)| \big|f_h(s,a) + f_h'(s,a) - 2r - 2V_{\hat{f}_{h+1}}(s')\big| \\ \leq & 4 H |f_h(s,a) - f_h'(s,a)|. \end{aligned} \end{equation}
		According to the contraction property of Rademacher complexity (see Lemma \ref{lemma:contraction_RC_0}), we have
		\begin{equation} \label{RC_2} \popR_n \big(\big\{ \ell(f_h,\hat{f}_{h+1}) - \ell(f^{\best}_h,\hat{f}_{h+1}) \, \big| \, f_h \in \cF_h \big\}\big) = \popR_n \big(\big\{ \ell(f_h,\hat{f}_{h+1}) \, \big| \, f_h \in \cF_h \big\}\big) \leq 2H \popR_n^{\mu_h} (\cF_h). \end{equation}
		
		Plugging \cref{RC_2} into \cref{eq:RC} and applying union bound, we find that with probability at least $1-\delta$,
		\[ \mathcal{E}(\empbfun) = \frac{1}{H} \sum_{h=1}^H \big\| \hat{f}_h - \cT_h^{\star} \hat{f}_{h+1} \big\|_{\mu_h}^2 \leq 8 \sum_{h=1}^H \popR_n^{\mu_h}(\cF_h) + 4H^2 \sqrt{\frac{2\log(2H/\delta)}{n}} + \epsilon, \]
		which completes the proof.
	\end{proof}

	\subsection{Analyzing FQI with Local Rademacher Complexity (Theorem~\ref{theorem:FQI_localRC})} \label{app:proof_thm_FQI_LRC}
	
	\FQILRC*

	\begin{proof}[Proof of \Cref{theorem:FQI_localRC}]
		Recall that we have shown in \cref{Lip} that $\ell(f,g)$ is ($4H$)-Lipchitz in its first argument $f$. Under Assumption \ref{as:approx_comp_FF}, for $f_h^{\best}$ shown in \cref{def_tilde_f_h}, we have
		\[ \begin{aligned} & {\rm Var}\big[\ell(f_h,\hat{f}_{h+1}) - \ell(f^{\best}_h, \hat{f}_{h+1})\big] \\ \leq & \mathbb{E}\big[ \big(\ell(f_h,\hat{f}_{h+1}) - \ell(f^{\best}_h, \hat{f}_{h+1})\big)^2 \big] \leq 16 H^2 \mathbb{E} \Big[ \big| f_h(s_h,a_h) - f^{\best}_h(s_h,a_h) \big|^2 \Big] \\ = & 16 H^2 \big\| f_h - f^{\best}_h \big\|_{\mu_h}^2 \leq 32 H^2 \Big( \big\| f_h - \cT_h^{\star} \hat{f}_{h+1} \big\|_{\mu_h}^2    + \big\| f^{\best}_h - \cT_h^{\star} \hat{f}_{h+1} \big\|_{\mu_h}^2 \Big) \\ = & 32 H^2 \Big[ \Big( \big\| f_h  -  \cT_h^{\star} \hat{f}_{h+1} \big\|_{\mu_h}^2    - \big\| f^{\best}_h  -  \cT_h^{\star} \hat{f}_{h+1} \big\|_{\mu_h}^2 \Big) + 2\big\| f^{\best}_h  -  \cT_h^{\star} \hat{f}_{h+1} \big\|_{\mu_h}^2 \Big] \\ \leq & 32 H^2 \Big( \mathbb{E}_{\mu_h}\big[\ell(f_h,\hat{f}_{h+1}) - \ell(f^{\best}_h, \hat{f}_{h+1})\big] + 2 \epsilon \Big). \end{aligned} \]
		When applying \Cref{theorem:LRC}, we are supposed to take a sub-root function larger than
		\[ \psi_{\FQI}(r) := 32H^2 \popR_n \Big\{ \ell(f_h,\hat{f}_{h+1}) - \ell(f^{\best}_h,\hat{f}_{h+1}) \, \Big| \, f_h \in \cF_h, 32 H^2 \Big( \mathbb{E}\big[\ell(f_h,\hat{f}_{h+1}) - \ell(f^{\best}_h, \hat{f}_{h+1})\big] + 2 \epsilon \Big) \leq r \Big\}. \]
		Note that
		\begin{align*}
		\psi_{\FQI}(r)
		\leq & 32H^2 \popR_n \Big( \Big\{ \ell(f_h,\hat{f}_{h+1}) - \ell(f^{\best}_h,\hat{f}_{h+1}) \, \Big| \, f_h \in \cF_h, 16 H^2 \big\| f_h - f^{\best}_h \big\|_{\mu_h}^2 \leq r \Big\} \Big) \\
		\leq & 128 H^3 \popR_n \Big( \Big\{ f_h - f^{\best}_h \, \Big| \, f_h \in \cF_h, 16 H^2 \big\| f_h - f^{\best}_h \big\|_{\mu_h}^2 \leq r \Big\} \Big) \\
		= & 128 H^3 \popR_n \Big( \Big\{ f_h \in \cF_h \, \Big| \, 16 H^2 \big\| f_h - f^{\best}_h \big\|_{\mu_h}^2 \leq r \Big\} \Big) \leq 128 H^3 \psi_h\Big(\frac{r}{16H^2} \Big)
		\end{align*}
		where $\psi_h$ is a sub-root function satisfying $\psi_h(r) \geq \popR_n \big(\big\{ f_h \in \cF_h \bigm| \|f_h - f_h^{\best}\|_{\mu_h}^2 \leq r \big\}\big)$ and the positive fixed point $r_h^{\star}$ of $\psi_h$ is the corresponding critical radius.
		In the second inequality, we have used the contraction property of Rademacher complexity (see \Cref{lemma:contraction_RC_0}) and the Lipschitz continuity of $\ell$. The equality in the last line is due to the symmetry of Rademacher random variables. According to \Cref{lemma:subroot}, the positive fixed point of $128H^3 \psi_h\big(\frac{r}{16H^2} \big)$ is upper bounded by $1024 H^4 r_h^{\star}$.
		
		We apply \cref{eq:theoremLRC1} in \Cref{theorem:LRC} and use the \cref{eq:FQI0} and $\hat{\ell}_h(\hat{f}_h,\hat{f}_{h+1}) \leq \hat{\ell}_h(f^{\best}_h,\hat{f}_{h+1})$. It follows that for a fixed parameter $\theta$, with probability at least $1-\delta$,
		\begin{align*} & \big\| \hat{f}_h - \cT_h^{\star} \hat{f}_{h+1} \big\|_{\mu_h}^2 - \big\| f^{\best}_h - \cT_h^{\star} \hat{f}_{h+1} \big\|_{\mu_h}^2 \\ \leq & c H^2 r_h^{\star} + \frac{c H^2\log(1/\delta)}{n} + c (\theta-1)\bigg( H^2 r_h^{\star} + c \frac{H^2\log(1/\delta)}{n} \bigg) + \frac{2\epsilon}{\theta-1}, \end{align*}
		where $c > 0$ is a universal constant.
		By union bound and \Cref{as:approx_comp_FF}, we have
		\[ \Berr(\hat{f}) \leq \epsilon + c H \sum_{h=1}^H r_h^{\star} + c H^2\frac{\log(H/\delta)}{n} + c (\theta-1)\bigg( H \sum_{h=1}^H r_h^{\star} + \frac{H^2\log(1/\delta)}{n} \bigg) + \frac{2\epsilon}{\theta-1}. \]
		We further take $\theta := 1 + \frac{\sqrt{\epsilon}}{2H} \big( \frac{1}{H} \sum_{h=1}^H r_h^{\star} + \frac{\log(H/\delta)}{n} \big)^{-\frac{1}{2}}$ and find that
		\begin{align*} \Berr(\hat{f}) \leq \epsilon + c H \sum_{h=1}^H r_h^{\star} + c H^2 \frac{\log(H/\delta)}{n} + c \sqrt{\epsilon \Big( H \sum_{h=1}^H r_h^{\star} + H^2\frac{\log(1/\delta)}{n}\Big)}, \end{align*}
		which completes the proof.
	\end{proof}
	
% !TEX root = main.tex

\section{Proof of Results for Minimax Algorithm (Theorems~\ref{theorem:Minimax_RC}, \ref{theorem:Minimax_LRC}~and~\ref{theorem:Minimax_LRC'})} \label{app:proof_thm_MM}

In this part, we prove the statistical guarantees for minimax algorithm in \Cref{sec:minimax}.

\paragraph{Notations} We first introduce some notations that will be used later in the analyses.
For any vector-valued function $\bfun = (f_1, \ldots, f_H) \in L^2(\mu_1) \times \ldots \times L^2(\mu_H)$, we denote $\| \bfun \|_{\bmu} := \sqrt{\frac{1}{H} \sum_{h=1}^H \| f_h \|_{\mu_h}^2}$ for short. Parallel to the optimal Bellman operator $\cT_h^{\star}$, we define $\cT_h^{\best}$ and $\widehat{\cT}_h$ as \[ \cT_h^{\best} f_{h+1} := \argmin_{g_h \in \cG_h} \| g_h - \cT_h^{\star} f_{h+1} \|_{\mu_h} \quad \text{and} \quad \widehat{\cT}_h f_{h+1} := \argmin_{g_h \in \cG_h} \frac{1}{n} \sum_{(s,a,r,s',h) \in \cD_h} \big( g_h(s,a) - r - V_{f_{h+1}}(s') \big)^2. \]
Let $\bcT^{\star}$, $\bcT^{\best}$, $\widehat{\bcT}$ be their vector form, given by
\begin{align}  \label{eq:def_T}
	\begin{aligned}
		\bcT^{\star} \bfun := & (\cT_1^{\star}f_2, \ldots, \cT_H^{\star}f_{H+1}), \\ \bcT^{\best} \bfun := & (\cT_1^{\best}f_2, \ldots, \cT_H^{\best}f_{H+1}), \\ \widehat{\bcT} \bfun := & (\widehat{\cT}_1 f_2, \ldots, \widehat{\cT}_H f_{H+1}),
	\end{aligned}
\end{align}
for any $\bfun \in \bFun$.

Similar to the definition of $\ell$ in \cref{eq:def_ell}, for any $g_h \in \cG_h \cup \cF_h$ and $f_{h+1} \in \cF_{h+1}$, we take $$\ell(g_h, f_{h+1})(s,a,r,s') = \big( g_h(s,a) - r - V_{f_{h+1}}(s') \big)^2.$$ For any $\bfun \in \cF$, $\bg \in \cF \cup \cG$ and $\{ (s_h, a_h, r_h, s_h') \}_{h=1}^H \in (\cS \times \cA \times \mathbb{R} \times \cS)^H$, let
\[ \ell(\bg, \bfun)(\cdot) := \frac{1}{H} \sum_{h=1}^H \ell(g_h,f_{h+1})(s_h,a_h,r_h,s_h') = \frac{1}{H} \sum_{h=1}^H \big( g_h(s_h,a_h) - r_h - V_{f_{h+1}}(s_h') \big)^2. \]
Denote 
\begin{align*} \E_{\bmu} \ell(\bg,\bfun) := \frac{1}{H} \sum_{h=1}^H \E_{\mu_h} \ell(g_h,f_{h+1}) = & \frac{1}{H} \sum_{h=1}^H \E \big[ \ell(g_h, f_{h+1})(s,a,r,s') \bigm| (s,a) \sim \mu_h, s' \sim \P_h(\cdot \mid s,a) \big] \\ = & \| \bg - \bcT^{\star} \bfun \|_{\bmu}^2 + \frac{1}{H} \sum_{h=1}^H \E_{\mu_h} {\rm Var}_{s' \sim \P_h(\cdot \mid s,a)} (V_{f_{h+1}}(s')) \end{align*}
\[ \text{and} \qquad \hat{\ell}(\bg,\bfun) := \frac{1}{H} \sum_{h=1}^H \hat{\ell}_h(g_h,f_{h+1}) = \frac{1}{nH} \sum_{(s,a,r,s',h) \in \cD} \big( g_h(s,a) - r - V_{f_{h+1}}(s') \big)^2. \]

The loss function in minimax algorithm then can be written as
\begin{align} \label{eq:def_LMM} \begin{aligned} & L_{\text{MM}}(\bfun, \bg) := \ell(\bfun, \bfun) - \ell(\bg, \bfun), \\ & \E_{\bmu} L_{\text{MM}}(\bfun, \bg) := \E_{\bmu} \ell(\bfun, \bfun) - \E_{\bmu} \ell(\bg, \bfun) \\ & \hat{L}_{\text{MM}}(\bfun, \bg) := \hat{\ell}(\bfun, \bfun) - \hat{\ell}(\bg, \bfun). \end{aligned} \end{align}
Note that $\E_{\bmu} L_{\text{MM}}(\bfun, \bg) = \| \bfun - \bcT^{\star} \bfun \|_{\bmu}^2 - \| \bg - \bcT^{\star} \bfun \|_{\bmu}^2 = \Berr(f) - \| \bg - \bcT^{\star} \bfun \|_{\bmu}^2$.

With our newly-defined notations, we formulate the minimax estimator as
\begin{equation} \label{eq:Minimax} \empbfun = \argmin_{\bfun \in \bFun} \max_{\bg \in \bG} \hat{L}_{\text{MM}}(\bfun,\bg) = \argmin_{\bfun \in \bFun} \hat{L}_{\text{MM}}(\bfun, \widehat{\bcT} \bfun). \end{equation}

In the analysis of minimax algorithm, we take $\bfun^{\best}$ as the function in $\bFun$ that minimizes the Bellmen risk, {\it i.e.}
\[ \bfun^{\best} := \argmin_{\bfun \in \bFun} \Berr(\bfun). \]

\paragraph{Main results}

\MMRC*

\MMLRC*

Aside from \Cref{theorem:Minimax_RC,theorem:Minimax_LRC}, we also have an alternative statistical guarantee for $\Berr(\hat{f})$ using local Rademacher complexity for composite function $L_{\text{MM}}(\bfun, \bcT^{\best}\bfun)$. See \Cref{theorem:Minimax_LRC'} below.
\begin{theorem}[Minimax algorithm, local Rademacher complexity, alternaltive] \label{theorem:Minimax_LRC'}
	There exists an absolute constant $c>0$, under \Cref{as:approx_gcomp_FF}, with probability at least $1 - \delta$, the minimax estimator $\hat{f}$ satisfies:
	\begin{align} 
	% \label{eq:FQI_localRC}
	&\Berr(\hat{f}) \! \leq \! \min_{f \in \cF} \Berr(f)\!+\! \epsilon +  c\!\sqrt{\big( \min_{f \in \cF} \Berr(f)\!+\! \epsilon\big) \Delta} + c \Delta~, \label{eq:FQI_LRC'} \\
	& \begin{aligned} \Delta := H^2 r_L^{\star} + H \sum_{h=1}^H r_{g,h}^{\star} + H^2 \frac{\log(H/\delta)}{n} ~. \end{aligned} \notag
	\end{align} 
	where $r_L^{\star}$ and $r_{g, h}^{\star}$ are the critical radius of the following local Rademacher complexities respectively:
	\begin{align*}
	&\popR^{\mu_h}_n \big( \big\{ L_{\text{MM}}(\bfun, \bcT^{\best} \bfun)  ~\big|~ \bfun \in \bFun, \E \big[ L_{\text{MM}}(\bfun, \bcT^{\best} \bfun)^2 \big] \leq r \big\} \big)~, \\
	&\popR^{\mu_h}_n \big( \big\{ g_h \in \cG_h  ~\big|~ \norm{g_h - g_h^{\best}}_{\mu_h}^2 \leq r \big\}\big)~.
	\end{align*}
\end{theorem}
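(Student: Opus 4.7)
The plan is to bound $\Berr(\hat{f}) - \min_{f \in \cF}\Berr(f) - \epsilon$ by a chain that alternates between the population and empirical minimax losses, relying on localized concentration on two distinct function classes: the composite class $\{L_{\text{MM}}(\bfun, \bcT^{\best}\bfun) : \bfun \in \bFun\}$ on the $\bfun$-side, and each helper class $\cG_h$ on the $\bg$-side. Throughout let $f^{\best} := \argmin_{f \in \cF} \Berr(f)$ and $g_h^{\best} := \cT_h^{\best} f^{\best}_{h+1}$ as in the theorem statement.

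The first step is a reduction from the value Bellman-error gap to a population $L_{\text{MM}}$-gap. Using $\E_{\bmu} L_{\text{MM}}(\bfun, \bg) = \Berr(\bfun) - \|\bg - \bcT^{\star}\bfun\|_{\bmu}^2$ together with Assumption~\ref{as:approx_gcomp_FF} applied at $\bg = \bcT^{\best}\bfun$, I get
\[ \Berr(\hat{f}) - \Berr(f^{\best}) - \epsilon \;\le\; \E_{\bmu} L_{\text{MM}}(\hat{f}, \bcT^{\best}\hat{f}) - \E_{\bmu} L_{\text{MM}}(f^{\best}, \bcT^{\best} f^{\best}). \]
For the second step I would apply a Bartlett--Bousquet--Mendelson style local Rademacher bound to $\{L_{\text{MM}}(\bfun, \bcT^{\best}\bfun) : \bfun \in \bFun\}$, localized by the second moment exactly as in the theorem statement. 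This converts the population gap into the empirical gap $\hat{L}_{\text{MM}}(\hat{f}, \bcT^{\best}\hat{f}) - \hat{L}_{\text{MM}}(f^{\best}, \bcT^{\best} f^{\best})$ plus contributions of order $H^2 r_L^{\star} + H^2 \log(H/\delta)/n$ and a fast-rate cross term $\sqrt{(\min_f\Berr(f)+\epsilon)\,H^2 r_L^{\star}}$. The prerequisite is a Bernstein condition $\E[L_{\text{MM}}(\bfun,\bcT^{\best}\bfun)^2] \lesssim H^2\,\E[L_{\text{MM}}(\bfun,\bcT^{\best}\bfun)] + H^2 \epsilon$, which I would derive by factoring each summand as $(f_h - \cT^{\best}_h f_{h+1})(f_h + \cT^{\best}_h f_{h+1} - 2r - 2V_{f_{h+1}})$, using $H$-boundedness of the second factor and Assumption~\ref{as:approx_gcomp_FF} to translate the first factor back to $\E[L_{\text{MM}}] + \epsilon$.

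In the third step I would handle the empirical gap via the algorithm's minimax optimality. Since $\hat{\bcT}\bfun$ maximizes $\hat{L}_{\text{MM}}(\bfun,\cdot)$ over $\bG$ and $\hat{f}$ minimizes $\max_{\bg} \hat{L}_{\text{MM}}$,
\[ \hat{L}_{\text{MM}}(\hat{f}, \bcT^{\best}\hat{f}) \;\le\; \hat{L}_{\text{MM}}(\hat{f}, \hat{\bcT}\hat{f}) \;\le\; \hat{L}_{\text{MM}}(f^{\best}, \hat{\bcT} f^{\best}), \]
so the empirical gap collapses to $\frac{1}{H}\sum_{h=1}^{H} \big[\hat{\ell}_h(g_h^{\best}, f^{\best}_{h+1}) - \hat{\ell}_h(\hat{\cT}_h f^{\best}_{h+1}, f^{\best}_{h+1})\big]$. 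Each summand is the empirical excess risk of the least-squares minimizer $\hat{\cT}_h f^{\best}_{h+1} \in \cG_h$ against the \emph{fixed} regression target $r + V_{f^{\best}_{h+1}}(s')$---exactly the FQI problem---so the local Rademacher analysis of Theorem~\ref{theorem:FQI_localRC} (with $\cF_h$ replaced by $\cG_h$ and anchor $g_h^{\best}$) bounds each summand by $c H^2 r_{g,h}^{\star} + c\sqrt{\epsilon\cdot H^2 r_{g,h}^{\star}} + c H^2\log(H/\delta)/n$. Summing over $h$ supplies the $H\sum_h r_{g,h}^{\star}$ contribution to $\Delta$ and the matching $\sqrt{\cdot}$ cross term.

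The main obstacle is the Bernstein-type variance condition in the second step: unlike the FQI setting, $L_{\text{MM}}$ is not a single-variable squared loss---both the predictor $f_h$ and the target $V_{f_{h+1}}$ depend on $\bfun$ as $h$ varies---so the standard ``variance $\lesssim$ Lipschitz constant $\times$ expectation'' argument for squared excess losses does not apply off the shelf, and the additive $H^2\epsilon$ offset coming from $\bcT^{\best} \neq \bcT^{\star}$ must be tracked carefully through the localization radius. Once this condition is in hand, the remaining assembly is algebraic: adding the three inequalities above cancels $\hat{L}_{\text{MM}}(f^{\best}, \bcT^{\best} f^{\best})$ and yields the stated bound $\Berr(\hat{f}) \le \min_f\Berr(f) + \epsilon + c\sqrt{(\min_f\Berr(f)+\epsilon)\Delta} + c\Delta$.
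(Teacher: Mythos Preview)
Your proposal is correct and follows essentially the same route as the paper: the paper's proof combines an error-decomposition lemma (\Cref{lemma:Minimax_ErrorDecomp}) with two localized concentration inequalities---\Cref{lemma:Minimax_LRC_f} for the composite class $\{L_{\text{MM}}(\bfun,\bcT^{\best}\bfun)\}$ using exactly the Bernstein condition you identify, and \Cref{lemma:Minimax_LRC_ga} for each $\cG_h$ with the fixed target $f^{\best}_{h+1}$---and then optimizes a common slack parameter $\theta$ at the end. The one organizational difference is that the paper carries the multiplicative factor $\alpha=\theta/(\theta-1)$ through both concentration bounds simultaneously so that it cancels cleanly when the $g$-side empirical term is substituted back (your Step~3 actually needs the \emph{reverse} direction of the Bartlett bound to control $\hat\ell_h(g_h^{\best})-\hat\ell_h(\widehat\cT_h f^{\best}_{h+1})$, not Theorem~\ref{theorem:FQI_localRC} itself), whereas you optimize $\theta$ within Step~2 before bounding the empirical gap; deferring that optimization to the very end, as the paper does, is what makes the $\sqrt{(\min_f\Berr(f)+\epsilon)\Delta}$ cross term come out with the right constants.
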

In contrast to \Cref{theorem:Minimax_LRC}, \Cref{theorem:Minimax_LRC'} does not rely on the additional \Cref{as:Ctilde}.
In general, \Cref{theorem:Minimax_LRC'} provides a tighter upper bound for $\Berr(\empbfun)$ than \Cref{theorem:Minimax_LRC} when the function class $\big\{ L_{\text{MM}}(\bfun, \bcT^{\best} \bfun)  ~\big|~ \bfun \in \bFun \big\}$ has a clear structure and $r_L^{\star}$ is easy to estimate. For instance, this is the case if both $\bfun$ and $\bG$ have finite elements. Based on \Cref{theorem:Minimax_LRC'}, we can recover the sharp results for finite function classes in \citet{chen2019information}.

\Cref{as:approx_gcomp_FF} used in our analysis of minimax algorithm can be relaxed to: 
\[ \text{``There exist constants $\epsilon > 0$ and $\zeta \in [0,1)$ such that $\inf_{\bg \in \bG} \| \bg - \bcT^{\star} \bfun \|_{\bmu}^2 \leq \epsilon + \zeta \Berr(\bfun)$ for any $\bfun \in \bFun$.''} \]
In this way, we only need a high-quality approximation of $\bcT^{\star} \bfun$ in $\bG$ when $\bfun$ lies within a neighborhood of the optimal Q-function. We can easily generalize our analyses to this case. However, in order to avoid unnecessary clutter, we stick to the current \Cref{as:approx_gcomp_FF}.

\paragraph{Proof outline} Our analyses in this section are devoted to the proofs of \Cref{theorem:Minimax_RC,theorem:Minimax_LRC,theorem:Minimax_LRC'}.
\begin{enumerate}
	\item We first translate the estimation of $\Berr(\empbfun)$ into deriving uniform concentration bounds for $\hat{L}_{\text{MM}}(\bfun, \bcT^{\best} \bfun) - \hat{L}_{\text{MM}}(\bfun^{\best}, \bcT^{\best} \bfun^{\best})$ and $\hat{\ell}(\bg, \bfun) - \hat{\ell}(\bcT^{\best}\bfun^{\best}, \bfun^{\best})$ (\Cref{lemma:Minimax_ErrorDecomp} in \Cref{app:proof_minimax_decomp}). The error decomposition lemma is shared among the proofs of \Cref{theorem:Minimax_RC,theorem:Minimax_LRC,theorem:Minimax_LRC'}.
	\item We then develop the desired uniform concentration bounds using Rademacher complexities (\Cref{app:proof_minimax_Rad}) and local Rademacher complexities (\Cref{app:proof_minimax_locRad}) separately. In particular, when tackling $\hat{L}_{\text{MM}}(\bfun, \bcT^{\best} \bfun) - \hat{L}_{\text{MM}}(\bfun^{\best}, \bcT^{\best} \bfun^{\best})$, we have two alternative analyses involving local Rademacher complexities of different types of function classes. One leads to \Cref{theorem:Minimax_LRC} and the other results in \Cref{theorem:Minimax_LRC'}.
	\item In \Cref{app:proof_minimax_conclude}, we integrate the error decomposition result and uniform concentration bounds, and finish the proofs of theorems.
\end{enumerate}

\subsection{Error Decomposition} \label{app:proof_minimax_decomp}
We provide a decomposition of the Bellman error $\Berr(\empbfun)$ and upper bound the error using some uniform concentration inequalities.
\begin{lemma}[Error decomposition] \label{lemma:Minimax_ErrorDecomp}
	Suppose there exist $\alpha > 0$ and $Err_f, Err_g > 0$ such that the following concentration inequailities hold simultaneously.
	\begin{enumerate}
		\item For any $\bfun \in \bFun$, \begin{equation} \label{eq:bound_f} 
		\E_{\bmu} L_{\text{MM}} \big(\bfun, \bcT^{\best}\bfun\big) - \E_{\bmu} L_{\text{MM}} \big(\bfun^{\best},\bcT^{\best}\bfun^{\best}\big) \leq \alpha  \big( \hat{L}_{\text{MM}} \big(\bfun, \bcT^{\best}\bfun\big) - \hat{L}_{\text{MM}} \big(\bfun^{\best},\bcT^{\best}\bfun^{\best}\big) \big) + Err_f.
		\end{equation}
		\item For any $\bg \in \bG$,
		\begin{equation} \label{eq:bound_ga}
		\E_{\bmu} \ell \big(\bg, \bfun^{\best} \big) - \E_{\bmu} \ell \big(\bcT^{\best}\bfun^{\best}, \bfun^{\best} \big) \leq \alpha \big( \hat{\ell} \big(\bg, \bfun^{\best} \big) - \hat{\ell} \big(\bcT^{\best}\bfun^{\best}, \bfun^{\best} \big) \big) + Err_g.
		\end{equation}
	\end{enumerate}
	Then under Assumption \ref{as:approx_gcomp_FF}, the Bellman error satisfies
	\begin{equation} \label{eq:minimax_err_decomp} \Berr(\empbfun) \leq \min_{\bfun \in \bFun} \Berr(\bfun) + Err_f + Err_g + \epsilon. \end{equation}
\end{lemma}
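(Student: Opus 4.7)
The plan is to chain together the two given concentration inequalities via a carefully chosen intermediate pivot, using the minimax optimality of $\hat{f}$ and the definitions of $\widehat{\bcT}$ and $\bcT^{\best}$.

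The first step is to reduce everything to $\mathbb{E}_{\bmu} L_{\text{MM}}(\bfun, \bcT^{\best}\bfun)$, because $\mathbb{E}_{\bmu} L_{\text{MM}}(\bfun, \bcT^{\best}\bfun) = \Berr(\bfun) - \|\bcT^{\best}\bfun - \bcT^{\star}\bfun\|_{\bmu}^2$, and by \Cref{as:approx_gcomp_FF} the squared error is at most $\epsilon$ for every $\bfun \in \bFun$. Thus it suffices to show that $\mathbb{E}_{\bmu} L_{\text{MM}}(\hat{\bfun}, \bcT^{\best}\hat{\bfun}) \le \mathbb{E}_{\bmu} L_{\text{MM}}(\bfun^{\best}, \bcT^{\best}\bfun^{\best}) + Err_f + Err_g$, and then read off inequality~(\ref{eq:minimax_err_decomp}) from $\mathcal{E}(\hat{\bfun}) \le \mathbb{E}_{\bmu} L_{\text{MM}}(\hat{\bfun}, \bcT^{\best}\hat{\bfun}) + \epsilon$ and $\mathbb{E}_{\bmu} L_{\text{MM}}(\bfun^{\best}, \bcT^{\best}\bfun^{\best}) \le \Berr(\bfun^{\best}) = \min_{\bfun \in \bFun}\Berr(\bfun)$.

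Second, I will bound $\hat{L}_{\text{MM}}(\hat{\bfun}, \bcT^{\best}\hat{\bfun}) - \hat{L}_{\text{MM}}(\bfun^{\best}, \bcT^{\best}\bfun^{\best})$ using a three-step chain. Since $\bcT^{\best}\hat{\bfun} \in \bG$ and $\widehat{\bcT}\hat{\bfun}$ is the empirical maximizer of $\hat{L}_{\text{MM}}(\hat{\bfun},\cdot)$ (by its definition as the minimizer of $\hat{\ell}(\cdot,\hat{\bfun})$), we have $\hat{L}_{\text{MM}}(\hat{\bfun}, \bcT^{\best}\hat{\bfun}) \le \hat{L}_{\text{MM}}(\hat{\bfun}, \widehat{\bcT}\hat{\bfun})$. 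The minimax optimality of $\hat{\bfun}$ in~(\ref{eq:Minimax}) then gives $\hat{L}_{\text{MM}}(\hat{\bfun}, \widehat{\bcT}\hat{\bfun}) \le \hat{L}_{\text{MM}}(\bfun^{\best}, \widehat{\bcT}\bfun^{\best})$. It remains to compare $\hat{L}_{\text{MM}}(\bfun^{\best}, \widehat{\bcT}\bfun^{\best})$ with $\hat{L}_{\text{MM}}(\bfun^{\best}, \bcT^{\best}\bfun^{\best})$, whose difference equals $\hat{\ell}(\bcT^{\best}\bfun^{\best}, \bfun^{\best}) - \hat{\ell}(\widehat{\bcT}\bfun^{\best}, \bfun^{\best})$. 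Applying hypothesis~(\ref{eq:bound_ga}) with $\bg = \widehat{\bcT}\bfun^{\best}$ and using that the population minimizer $\bcT^{\best}\bfun^{\best}$ makes $\mathbb{E}_{\bmu}\ell(\widehat{\bcT}\bfun^{\best}, \bfun^{\best}) - \mathbb{E}_{\bmu}\ell(\bcT^{\best}\bfun^{\best}, \bfun^{\best}) \ge 0$ yields $\hat{\ell}(\bcT^{\best}\bfun^{\best}, \bfun^{\best}) - \hat{\ell}(\widehat{\bcT}\bfun^{\best}, \bfun^{\best}) \le Err_g/\alpha$.

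Finally, chaining these three inequalities delivers $\hat{L}_{\text{MM}}(\hat{\bfun}, \bcT^{\best}\hat{\bfun}) - \hat{L}_{\text{MM}}(\bfun^{\best}, \bcT^{\best}\bfun^{\best}) \le Err_g/\alpha$, and feeding this into~(\ref{eq:bound_f}) with $\bfun = \hat{\bfun}$ gives $\mathbb{E}_{\bmu} L_{\text{MM}}(\hat{\bfun}, \bcT^{\best}\hat{\bfun}) - \mathbb{E}_{\bmu} L_{\text{MM}}(\bfun^{\best}, \bcT^{\best}\bfun^{\best}) \le \alpha \cdot (Err_g/\alpha) + Err_f = Err_f + Err_g$, closing the argument via the first step. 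The main conceptual hurdle is choosing the pivot $\bcT^{\best}\bfun$ (rather than $\bcT^{\star}\bfun$ or $\widehat{\bcT}\bfun$) so that the bound on the $\bG$-side is already packaged by hypothesis~(\ref{eq:bound_ga}), while simultaneously respecting the opposite direction of the inner max (the inequality $\hat{L}_{\text{MM}}(\bfun^{\best}, \widehat{\bcT}\bfun^{\best}) \ge \hat{L}_{\text{MM}}(\bfun^{\best}, \bcT^{\best}\bfun^{\best})$ goes the wrong way, which is precisely why we need hypothesis~(\ref{eq:bound_ga}) to pay for it).
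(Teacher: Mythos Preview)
Your proposal is correct and follows essentially the same approach as the paper's proof: both reduce the Bellman error to the gap in $\E_{\bmu}L_{\text{MM}}(\cdot,\bcT^{\best}\cdot)$ via Assumption~\ref{as:approx_gcomp_FF}, then use the chain $\bcT^{\best}\hat{\bfun}\to\widehat{\bcT}\hat{\bfun}\to\widehat{\bcT}\bfun^{\best}\to\bcT^{\best}\bfun^{\best}$ (inner-max optimality, minimax optimality, then hypothesis~(\ref{eq:bound_ga}) with $\bg=\widehat{\bcT}\bfun^{\best}$) to bound the empirical difference by $Err_g/\alpha$, and finish with hypothesis~(\ref{eq:bound_f}). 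The only cosmetic difference is that the paper writes the chain as a single algebraic identity (its equation~(\ref{eq:minimax_err_decomp_0})) before dropping terms, whereas you phrase it as three successive inequalities.
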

\begin{proof}
	By definition of function $\E_{\bmu} L_{\text{MM}}(f,g)$ in \cref{eq:def_LMM}, we find that for any $\bfun \in \bFun$,
	\begin{align*}
	\E_{\bmu} L_{\text{MM}} \big(\bfun, \bcT^{\best}\bfun\big) = & \E_{\bmu} \ell (\bfun,\bfun) - \E_{\bmu} \ell (\bcT^{\best}\bfun,\bfun) = \| \bfun - \bcT^{\star} \bfun \|_{\bmu}^2 - \| \bcT^{\best} \bfun - \bcT^{\star} \bfun \|_{\bmu}^2 =  \Berr(\bfun) - \| \bcT^{\best} \bfun - \bcT^{\star} \bfun \|_{\bmu}^2.
	\end{align*}
	We learn from \Cref{as:approx_gcomp_FF} that $\|\bcT^{\best} \bfun -\bcT^{\star} \bfun \|_{\bmu}^2 \leq \epsilon$ for any $\bfun \in \bFun$, therefore,
	\begin{equation} \label{eq:Lmu} \begin{aligned} \E_{\bmu} L_{\text{MM}} \big(\bfun, \bcT^{\best}\bfun\big) - \E_{\bmu} L_{\text{MM}} \big(\bfun^{\best}, \bcT^{\best}\bfun^{\best}\big) = & \Berr(\bfun) - \Berr(\bfun^{\best}) - \| \bcT^{\best} \bfun - \bcT^{\star} \bfun \|_{\bmu}^2 + \| \bcT^{\best} \bfun^{\best} - \bcT^{\star} \bfun^{\best} \|_{\bmu}^2 \\
	\geq & \Berr(\bfun) - \Berr(\bfun^{\best}) - \epsilon, \end{aligned} \end{equation}
	which implies
	\[ \Berr(\empbfun) \leq \min_{\bfun \in \bFun} \Berr(\bfun) + \Big( \E_{\bmu} L_{\text{MM}} \big(\empbfun, \bcT^{\best}\empbfun\big) - \E_{\bmu} L_{\text{MM}} \big(\bfun^{\best}, \bcT^{\best}\bfun^{\best}\big) \Big) + \epsilon. \]
	By virtue of \cref{eq:bound_f},
	\begin{equation} \label{eq:Err_decom_1} \Berr(\empbfun) \leq \min_{\bfun \in \bFun} \Berr(\bfun) + \alpha  \Big( \hat{L}_{\text{MM}} \big(\empbfun, \bcT^{\best}\empbfun\big) - \hat{L}_{\text{MM}} \big(\bfun^{\best},\bcT^{\best}\bfun^{\best}\big) \Big) + Err_f + \epsilon. \end{equation}
	In the following,
	we leverage \cref{eq:bound_ga} to estimate $  \hat{L}_{\text{MM}} \big(\empbfun, \bcT^{\best}\empbfun\big) - \hat{L}_{\text{MM}} \big(\bfun^{\best},\bcT^{\best}\bfun^{\best}\big)$.
	
	We use the definition of $L_{\text{MM}}$ and find that
	\begin{equation} \label{eq:minimax_err_decomp_0} \begin{aligned} 
	& \hat{L}_{\text{MM}} \big(\empbfun, \bcT^{\best}\empbfun\big) - \hat{L}_{\text{MM}} \big(\bfun^{\best},\bcT^{\best}\bfun^{\best}\big) 
	= \hat{\ell} (\empbfun, \empbfun) - \hat{\ell} (\bcT^{\best} \empbfun, \empbfun) - \hat{\ell} (\bfun^{\best}, \bfun^{\best}) + \hat{\ell} (\bcT^{\best} \bfun^{\best}, \bfun^{\best}) \\
	= & \big( \hat{L}_{\text{MM}} \big(\empbfun, \widehat{\bcT} \empbfun\big)+ \hat{\ell} (\widehat{\bcT} \empbfun, \empbfun) \big) - \hat{\ell} (\bcT^{\best}\empbfun, \empbfun) - 
	\big(\hat{L}_{\text{MM}} \big(\bfun^{\best}, \widehat{\bcT} \bfun^{\best}\big) + \hat{\ell} (\widehat{\bcT}\bfun^{\best}, \bfun^{\best}) \big)
	+ \hat{\ell} (\bcT^{\best} \bfun^{\best}, \bfun^{\best}) \\
	= & \big( \hat{L}_{\text{MM}} \big(\empbfun, \widehat{\bcT} \empbfun\big)
	- \hat{L}_{\text{MM}} \big(\bfun^{\best}, \widehat{\bcT} \bfun^{\best}\big) \big)
	+ \big( \hat{\ell} (\widehat{\bcT} \empbfun, \empbfun) 
	- \hat{\ell} (\bcT^{\best} \empbfun, \empbfun) \big)
	- \big( \hat{\ell} (\widehat{\bcT} \bfun^{\best}, \bfun^{\best}) 
	- \hat{\ell} (\bcT^{\best} \bfun^{\best}, \bfun^{\best}) \big)
	. \end{aligned} \end{equation}
	Since $(\bfun,\bg) := (\empbfun,\widehat{\bcT}\empbfun)$ solves the minimax optimizaiton problem \cref{eq:Minimax}, we have $\hat{L}_{\text{MM}} \big(\empbfun, \widehat{\bcT} \empbfun\big)
	\leq \hat{L}_{\text{MM}} \big(\bfun^{\best}, \widehat{\bcT} \bfun^{\best}\big)$. Due to the optimality of $\widehat{\bcT}$, it also holds that $\hat{\ell} (\widehat{\bcT} \empbfun, \empbfun) 
	\leq \hat{\ell} (\bcT^{\best} \empbfun, \empbfun)$. To this end, \cref{eq:minimax_err_decomp_0} reduces to
	\begin{align} \label{eq:minimax_err_decomp_1} \hat{L}_{\text{MM}} \big(\empbfun, \bcT^{\best}\empbfun\big) - \hat{L}_{\text{MM}} \big(\bfun^{\best},\bcT^{\best}\bfun^{\best}\big) \leq - \big( \hat{\ell} (\widehat{\bcT} \bfun^{\best}, \bfun^{\best}) 
	- \hat{\ell} (\bcT^{\best} \bfun^{\best}, \bfun^{\best}) \big). \end{align}
	Additionally, \cref{eq:bound_ga} implies \[ \hat{\ell} (\widehat{\bcT} \bfun^{\best}, \bfun^{\best}) 
	- \hat{\ell} (\bcT^{\best} \bfun^{\best}, \bfun^{\best}) \geq \alpha ^{-1}\Big( \E_{\bmu} \ell \big(\widehat{\bcT} \bfun^{\best}, \bfun^{\best} \big) - \E_{\bmu} \ell \big(\bcT^{\best}\bfun^{\best}, \bfun^{\best} \big) \Big) - \alpha^{-1} Err_g. \]
	Note that $\E_{\bmu} \ell \big(\widehat{\bcT} \bfun^{\best}, \bfun^{\best} \big) - \E_{\bmu} \ell \big(\bcT^{\best}\bfun^{\best}, \bfun^{\best} \big) = \| \widehat{\bcT} \bfun^{\best} - \bcT^{\star} \bfun^{\best} \|_{\bmu}^2 - \| \bcT^{\best} \bfun^{\best} - \bcT^{\star} \bfun^{\best} \|_{\bmu}^2$ and $\| \widehat{\bcT} \bfun^{\best} - \bcT^{\star} \bfun^{\best} \|_{\bmu} \geq \| \bcT^{\best} \bfun^{\best} - \bcT^{\star} \bfun^{\best} \|_{\bmu}$ by definition of $\bcT^{\best}$, therefore,
	\[ \hat{\ell} (\widehat{\bcT} \bfun^{\best}, \bfun^{\best}) 
	- \hat{\ell} (\bcT^{\best} \bfun^{\best}, \bfun^{\best}) \geq  - \alpha^{-1} Err_g. \]
	It then follows from \cref{eq:minimax_err_decomp_1} that
	\begin{equation} \label{eq:Err_decom_2} \hat{L}_{\text{MM}} \big(\empbfun, \bcT^{\best}\empbfun\big) - \hat{L}_{\text{MM}} \big(\bfun^{\best},\bcT^{\best}\bfun^{\best}\big) \leq \alpha^{-1} Err_g. \end{equation}
	
	Combining \cref{eq:Err_decom_1} and \cref{eq:Err_decom_2}, we obtain \cref{eq:minimax_err_decomp}.
\end{proof}

\subsection{Analyzing Minimax Algorithm with Rademacher Complexity} \label{app:proof_minimax_Rad}
In what follows, we develop uniform concentration inequalities \cref{eq:bound_f,eq:bound_ga} using Rademacher complexities.
\begin{lemma} \label{lemma:Minimax_Rad_f}
	With probability at least $1 - \delta$,
	\[ \E_{\bmu} L_{\text{MM}} \big(\bfun, \bcT^{\best}\bfun\big) - \E_{\bmu} L_{\text{MM}} \big(\bfun^{\best},\bcT^{\best}\bfun^{\best}\big) \leq \big( \hat{L}_{\text{MM}} \big(\bfun, \bcT^{\best}\bfun\big) - \hat{L}_{\text{MM}} \big(\bfun^{\best},\bcT^{\best}\bfun^{\best}\big) \big) + Err_f \qquad \text{for any $\bfun \in \bFun$}, \]
	where
	\[ Err_f := c \sum_{h=1}^H \big( \popR_n^{\mu_h} (\cF_h) + \popR_n^{\mu_h} (\cG_h) + \popR_n^{\nu_h} ( V_{\cF_{h+1}}) \big) + 4H^2 \sqrt{\frac{2 \log(2/\delta)}{n}} \]
	for some universal constant $c > 0$.
\end{lemma}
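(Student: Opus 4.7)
The plan is to mirror the argument already used in the proof of \Cref{thm:doublesample}: apply a standard symmetrization/concentration inequality to the function class $\{L_{\text{MM}}(\bfun, \bcT^{\best}\bfun) : \bfun \in \bFun\}$, then decompose the resulting Rademacher complexity into the three desired pieces using the contraction lemma. The key observation is that although $\bcT^{\best}\bfun$ looks like it depends on $\bfun$ in a complicated way, by construction $\cT_h^{\best} f_{h+1} \in \cG_h$ for every $f_{h+1} \in \cF_{h+1}$, so the collection $\{\cT_h^{\best} f_{h+1} : f_{h+1} \in \cF_{h+1}\}$ is a subset of $\cG_h$ and hence its empirical Rademacher complexity is at most $\popR_n^{\mu_h}(\cG_h)$.

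First, I would invoke the same uniform concentration inequality (the one labelled \Cref{lemma:RC} in the body, used at the start of the proof of \Cref{thm:doublesample}) applied to the function class $\{L_{\text{MM}}(\bfun, \bcT^{\best}\bfun) : \bfun \in \bFun\}$. Since $f_h, \cT_h^{\best}f_{h+1} \in [-H,H]$ and $r + V_{f_{h+1}}(s') \in [-H,H]$, each summand $\ell(f_h,f_{h+1}) - \ell(\cT_h^{\best}f_{h+1}, f_{h+1})$ lies in a bounded interval of width $\cO(H^2)$, so $L_{\text{MM}}(\bfun, \bcT^{\best}\bfun) \in [-\cO(H^2), \cO(H^2)]$. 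This yields, with probability at least $1-\delta$,
\begin{equation*}
\E_{\bmu} L_{\text{MM}}(\bfun, \bcT^{\best}\bfun) - \E_{\bmu} L_{\text{MM}}(\bfun^{\best}, \bcT^{\best}\bfun^{\best}) \le \hat{L}_{\text{MM}}(\bfun, \bcT^{\best}\bfun) - \hat{L}_{\text{MM}}(\bfun^{\best}, \bcT^{\best}\bfun^{\best}) + 2\popR_n(\gL) + \cO\!\Big(H^2\sqrt{\tfrac{\log(1/\delta)}{n}}\Big),
\end{equation*}
where $\gL := \{L_{\text{MM}}(\bfun, \bcT^{\best}\bfun) : \bfun \in \bFun\}$.

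Next, I would bound $\popR_n(\gL)$ by first using subadditivity over $h$ to reduce to, for each step $h$, the Rademacher complexity of the per-step class $\{\ell(f_h,f_{h+1})(\cdot) - \ell(\cT_h^{\best}f_{h+1}, f_{h+1})(\cdot)\}$. Writing each summand in the quadratic form $\tfrac{1}{2}\vec{\phi}^\top A \vec{\phi}$ in the variables
\begin{equation*}
\phi_{h,1} := f_h(s,a), \quad \phi_{h,2} := (\cT_h^{\best}f_{h+1})(s,a), \quad \phi_{h,3} := r + V_{f_{h+1}}(s'),
\end{equation*}
one sees (as in the $\ell_{\text{DS}}$ calculation) that the map $\vec{\phi}\mapsto \ell(f_h,f_{h+1}) - \ell(\cT_h^{\best}f_{h+1}, f_{h+1})$ is $\cO(H)$-Lipschitz on the bounded domain. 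Applying the vector-form contraction lemma (\Cref{lemma:contraction_RC}), the Rademacher complexity of this per-step class is bounded by $\cO(H)$ times the sum of the Rademacher complexities of the three coordinate classes. Subadditivity then gives $\popR_n^{\mu_h}(\cF_h)$, $\popR_n^{\mu_h}(\{\cT_h^{\best}f_{h+1} : f_{h+1}\in\cF_{h+1}\}) \le \popR_n^{\mu_h}(\cG_h)$, and (since $r_h$ is deterministic and $s'$ has marginal $\nu_h$) $\popR_n^{\nu_h}(V_{\cF_{h+1}})$.

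Combining the two steps and summing over $h\in[H]$ produces the claimed $Err_f$. The main conceptual point (rather than obstacle) is the observation that $\bcT^{\best}\bfun$ can be ``peeled off'' by treating $\{\cT_h^{\best}f_{h+1}\}_{f_{h+1}\in\cF_{h+1}}$ as a subclass of $\cG_h$ rather than as a function of $f_{h+1}$; the only moderately delicate part is checking the Lipschitz constant of the quadratic difference so that the contraction lemma gives the $\cO(H)$ factor cleanly, which is essentially identical to the bookkeeping already carried out in the proof of \Cref{thm:doublesample}.
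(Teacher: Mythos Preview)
Your proposal is correct and follows essentially the same route as the paper: apply \Cref{lemma:RC} for uniform concentration, then use the vector-form contraction lemma (\Cref{lemma:contraction_RC}) on a quadratic-form representation of the per-step loss, and finally invoke $\{\cT_h^{\best}f_{h+1}\}\subseteq\cG_h$ together with subadditivity to land on the three Rademacher terms. The only cosmetic difference is that the paper parametrizes the quadratic form with the \emph{two}-dimensional vector $\big(f_h-\cT_h^{\best}f_{h+1},\; r_h+V_{f_{h+1}}-\cT_h^{\best}f_{h+1}\big)$ rather than your three-dimensional $(f_h,\,\cT_h^{\best}f_{h+1},\,r+V_{f_{h+1}})$, which leads to the same bound up to constants.
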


\begin{proof}
	Note that $\big| L_{\text{MM}}\big(\bfun, \bcT^{\best}\bfun\big) - L_{\text{MM}}\big(\bfun^{\best}, \bcT^{\best}\bfun^{\best}\big) \big| \leq 8H^2$. We apply \Cref{lemma:RC} and find that
	\[ \begin{aligned} & \E_{\bmu} L_{\text{MM}} \big(\bfun, \bcT^{\best}\bfun\big) - \E_{\bmu} L_{\text{MM}} \big(\bfun^{\best},\bcT^{\best}\bfun^{\best}\big) \\ \leq & \big( \hat{L}_{\text{MM}} \big(\bfun, \bcT^{\best}\bfun\big) - \hat{L}_{\text{MM}} \big(\bfun^{\best},\bcT^{\best}\bfun^{\best}\big) \big) \\ & + 2 \popR_n \big( \big\{ L_{\text{MM}}\big(\bfun, \bcT^{\best}\bfun\big) - L_{\text{MM}}\big(\bfun^{\best}, \bcT^{\best}\bfun^{\best}\big) \, \big| \, \bfun \in \bFun \big\} \big) + 16H^2 \sqrt{\frac{2 \log(2/\delta)}{n}}. \end{aligned} \]
	Due to the symmetry of Rademacher random variables, we have
	\[ \popR_n \big( \big\{ L_{\text{MM}}\big(\bfun, \bcT^{\best}\bfun\big) - L_{\text{MM}}\big(\bfun^{\best}, \bcT^{\best}\bfun^{\best}\big) \, \big| \, \bfun \in \bFun \big\} \big) = \popR_n \big( \big\{ L_{\text{MM}}\big(\bfun, \bcT^{\best}\bfun\big) \, \big| \, \bfun \in \bFun \big\} \big). \]
	We now use \Cref{lemma:contraction_RC} to simplify the term $\popR_n \big( \big\{ L_{\text{MM}}\big(\bfun, \bcT^{\best}\bfun\big) \, \big| \, \bfun \in \bFun \big\} \big)$.
	
	Note that
	\[ \begin{aligned} L_{\text{MM}}\big(\bfun, \bcT^{\best}\bfun\big) = &  \frac{1}{H} \sum_{h=1}^H \boldsymbol{\phi}_h(\bfun)^{\top} \boldsymbol{A} \boldsymbol{\phi}_h(\bfun), \quad
	\text{where }\boldsymbol{A}:= \left(\!\! \begin{array}{ccc} 1 & -1 \\ -1 & 0 \end{array} \!\! \right), \\ &\qquad \boldsymbol{\phi}_h(\bfun) := \! \big( f_h(s_h,a_h) - \mathcal{T}_h^{\best} f_{h+1}(s_h,a_h), r_h + V_{f_{h+1}}(s_h') - \mathcal{T}_h^{\best} f_{h+1}(s_h,a_h) \big)^{\top}. \end{aligned} \]
	Since $\| \boldsymbol{\phi}_h(\bfun) \|_2 \! \leq \! \sqrt{2} H$ and $\|\boldsymbol{A}\|_2 \!=\! \frac{\sqrt{5}+1}{2}$, we learn that $\boldsymbol{\phi}_h(\bfun)^{\top} \boldsymbol{A} \boldsymbol{\phi}_h(\bfun)$ is \big($\frac{\sqrt{5}+1}{\sqrt{2}}H$\big)-Lipschitz with respect to $\boldsymbol{\phi}_h(\bfun)$.
	According to Lemma \ref{lemma:contraction_RC},
	\[ \begin{aligned} & \popR_n \big( \big\{ L_{\text{MM}}\big(\bfun, \bcT^{\best}\bfun\big) \, \big| \, \bfun \in \bFun \big\} \big) = \frac{1}{H} \sum_{h=1}^H \popR_n \big( \big\{\boldsymbol{\phi}_h(\bfun)^{\top} \boldsymbol{A} \boldsymbol{\phi}_h(\bfun) \, \big| \, \bfun \in \bFun \big\} \big) \\ \leq & (\sqrt{5} + 1) \sum_{h=1}^H \Big( \popR_n \big(\big\{ \phi_{h,1}(\bfun) \, \big| \, \bfun \in \bFun \big\} \big) + \popR_n \big(\big\{ \phi_{h,2}(\bfun) \, \big| \, \bfun \in \bFun \big\} \big) \Big). \end{aligned} \]
	Here,
	\[ \begin{aligned}
	\popR_n \big(\big\{ \phi_{h,1}(\bfun) \, \big| \, \bfun \in \bFun \big\} \big) = & \popR_n \big(\big\{ f_h - \mathcal{T}_h^{\best} f_{h+1} \, \big| \, f_h \in \cF_h, f_{h+1} \in \cF_{h+1} \big\} \big) \\ \leq & \popR_n \big(\big\{ f_h - g_h \, \big| \, f_h \in \cF_h, g_h \in \cG_h \big\} \big) \leq \popR_n^{\mu_h}(\cF_h) + \popR_n^{\mu_h}(\cG_h), \\ 
	\popR_n \big(\big\{ \phi_{h,2}(\bfun) \, \big| \, \bfun \in \bFun \big\} \big) = & \popR_n \big( \big\{ r_h + V_{f_{h+1}} - \mathcal{T}_h^{\best}f_{h+1} \, \big| \, f_{h+1} \in \cF_{h+1} \big\} \big) \\ \leq & \popR_n^{\nu_h} ( V_{\cF_{h+1}} ) + \popR_n \big( \big\{ \mathcal{T}_h^{\best}f_{h+1} \, \big| \, f_{h+1} \in \cF_{h+1} \big\} \big) \leq \popR_n^{\nu_h} (V_{\cF_{h+1}}) + \popR_n^{\mu_h}(\cG_h).
	\end{aligned} \]
	Integrating the pieces, we finish the proof of \Cref{lemma:Minimax_Rad_f}.
\end{proof}

%%%%%%%%%%%%%%%%%%%%%%%%%%%%%%%%%%%%%%%%%%%%%%%%%%%%

\begin{lemma} \label{lemma:Minimax_Rad_g}
	With probability at least $1 - \delta$, for any $\bg \in \bG$,
	\[	\E_{\bmu} \ell \big(\bg, \bfun^{\best} \big) - \E_{\bmu} \ell \big(\bcT^{\best}\bfun^{\best}, \bfun^{\best} \big) \leq \big( \hat{\ell} \big(\bg, \bfun^{\best} \big) - \hat{\ell} \big(\bcT^{\best}\bfun^{\best}, \bfun^{\best} \big) \big) + Err_g, \]
	where
	\[ Err_g := 8 \sum_{h=1}^H \popR_n^{\mu_h} ( \cG_h ) + 4H^2 \sqrt{\frac{2 \log(2/\delta)}{n}}. \]
\end{lemma}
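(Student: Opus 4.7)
The plan is to view this as a straightforward one-sided uniform concentration statement for the empirical process $\bg \mapsto \hat{\ell}(\bg, \bfun^{\best}) - \mathbb{E}_{\bmu}\ell(\bg,\bfun^{\best})$, exploiting the fact that $\bfun^{\best}$ is a fixed (deterministic) function and that $\bG = \cG_1 \times \cdots \times \cG_H$ has product structure. Specifically, I would apply the authors' uniform concentration inequality (Lemma~\ref{lemma:RC}) to the class $\cH := \{\ell(\bg,\bfun^{\best}) - \ell(\bcT^{\best}\bfun^{\best},\bfun^{\best}) \mid \bg \in \bG\}$. Every function in $\cH$ is uniformly bounded by $O(H^2)$ in absolute value, which produces the tail term $4H^2\sqrt{2\log(2/\delta)/n}$ on the right-hand side.

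Next, I would simplify $\popR_n(\cH)$. By symmetry of the Rademacher variables $\sigma_i$, subtracting the fixed pivot $\ell(\bcT^{\best}\bfun^{\best},\bfun^{\best})$ does not change the Rademacher complexity, so $\popR_n(\cH) = \popR_n(\{\ell(\bg,\bfun^{\best}) \mid \bg \in \bG\})$. Since $\ell(\bg,\bfun^{\best}) = \frac{1}{H}\sum_{h=1}^H \ell(g_h, f^{\best}_{h+1})$ is an average of $H$ terms, each depending only on the coordinate $g_h$ and the data tuples in $\cD_h$, the product structure of $\bG$ allows the supremum to pass inside the sum, giving
\begin{equation*}
\popR_n\big(\{\ell(\bg,\bfun^{\best}) \mid \bg \in \bG\}\big) = \frac{1}{H}\sum_{h=1}^H \popR_n\big(\{\ell(g_h,f^{\best}_{h+1}) \mid g_h \in \cG_h\}\big).
\end{equation*}

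Then, for each $h$, I would apply the Talagrand contraction inequality (Lemma~\ref{lemma:contraction_RC_0}). Repeating the Lipschitz calculation in \cref{Lip} with $\hat{f}_{h+1}$ replaced by the fixed $f^{\best}_{h+1}$ shows that $g_h \mapsto \ell(g_h,f^{\best}_{h+1})(s,a,r,s')$ is $(4H)$-Lipschitz, so
\begin{equation*}
\popR_n\big(\{\ell(g_h,f^{\best}_{h+1}) \mid g_h \in \cG_h\}\big) \le 4H \, \popR_n^{\mu_h}(\cG_h).
\end{equation*}
Putting these pieces together yields $\popR_n(\cH) \le 4\sum_{h=1}^H \popR_n^{\mu_h}(\cG_h)$, and the factor of $2$ coming from Lemma~\ref{lemma:RC} gives the claimed constant $8$ in $Err_g$.

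I do not expect a real obstacle: the analysis is noticeably cleaner than Lemma~\ref{lemma:Minimax_Rad_f} because $\bfun^{\best}$ is fixed, so the "target'' $r + V_{f^{\best}_{h+1}}(s')$ is a constant function of the data and only the $g_h$-argument varies. The only minor care point is checking that the uniform range bound on $\ell(g_h,f^{\best}_{h+1}) - \ell(\cT^{\best}_h f^{\best}_{h+1}, f^{\best}_{h+1})$ is compatible with the $4H^2$ constant appearing in the tail term, which follows from $|g_h|,|\cT^{\best}_h f^{\best}_{h+1}| \le H$ and $|r + V_{f^{\best}_{h+1}}| \le H$ together with the Lipschitz inequality used above.
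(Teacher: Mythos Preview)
Your proposal is correct and matches the paper's proof essentially step for step: apply Lemma~\ref{lemma:RC} to the class $\cH$, drop the fixed pivot by Rademacher symmetry, decouple over $h$ via the product structure of $\bG$, and then use the $(4H)$-Lipschitz contraction exactly as in \cref{Lip} to reduce to $\popR_n^{\mu_h}(\cG_h)$. The paper records the range bound as $\big|\ell(\bg,\bfun^{\best})-\ell(\bcT^{\best}\bfun^{\best},\bfun^{\best})\big|\le 2H^2$, yielding the same $4H^2\sqrt{2\log(2/\delta)/n}$ tail term you obtain.
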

\begin{proof}
	Note that $\big| \ell(\bg, \bfun^{\best}) - \ell\big(\bcT^{\best}\bfun^{\best}, \bfun^{\best}\big) \big| \leq 2H^2$. By \Cref{lemma:RC}, with probability at least $1 - \delta$,
	for any $\bg \in \bG$,
	\begin{equation} \label{eq:Minimax_Rad_ga_0} \begin{aligned}
	\E_{\bmu} \ell \big(\bg, \bfun^{\best} \big) - \E_{\bmu} \ell \big(\bcT^{\best}\bfun^{\best}, \bfun^{\best} \big) \leq & \big( \hat{\ell} \big(\bg, \bfun^{\best} \big) - \hat{\ell} \big(\bcT^{\best}\bfun^{\best}, \bfun^{\best} \big) \big) \\ & + 2\popR_n \big( \big\{\ell(\bg, \bfun^{\best}) - \ell\big(\bcT^{\best}\bfun^{\best}, \bfun^{\best} \big) \bigm| \bg \in \bG \big\} \big) + 4H^2 \sqrt{\frac{2 \log(2/\delta)}{n}}.
	\end{aligned} \end{equation}
	We observe that
	\begin{equation} \label{eq:Minimax_Rad_ga_1}
	\begin{aligned} 
	\popR_n \big( \big\{\ell(\bg, \bfun^{\best}) - \ell\big(\bcT^{\best}\bfun^{\best}, \bfun^{\best} \big) \, \big| \, \bg \in \bG \big\} \big)
	= \popR_n \big( \big\{ \ell(\bg, \bfun^{\best}) \, \big| \, \bg \in \bG \big\} \big) \leq \frac{1}{H} \sum_{h=1}^H \popR_n \big( \big\{ \ell(g_h,f_{h+1}^{\best}) \, \big| \, g_h \in \cG_h \big\} \big).
	\end{aligned}
	\end{equation}
	Similar to \cref{Lip}, we can show that $\ell(g_h, f_{h+1}^{\best})$ is ($4H$)-Lipschitz with respect to $g_h$, therefore,
	\begin{equation} \label{eq:Minimax_Rad_ga_2} \popR_n \big( \big\{ \ell(g_h, f_{h+1}^{\best}) \, \big| \, g_h \in \cG_h \big\}\big) \leq 4H \popR_n^{\mu_h}(\cG_h). \end{equation}
	Combining \cref{eq:Minimax_Rad_ga_0} - \cref{eq:Minimax_Rad_ga_2}, we complete the proof.
\end{proof}

\subsection{Analyzing Minimax Algorithm with Local Rademacher Complexity} \label{app:proof_minimax_locRad}
In this part, \Cref{lemma:Minimax_LRC_f,lemma:Minimax_LRC_f'} are devoted to the uniform concentration of $\hat{L}_{\text{MM}} \big(\bfun, \bcT^{\best}\bfun\big) - \hat{L}_{\text{MM}} \big(\bfun^{\best},\bcT^{\best}\bfun^{\best}\big)$ and \Cref{lemma:Minimax_LRC_ga} is concerned with $ \hat{\ell} \big(\bg, \bfun^{\best} \big) - \hat{\ell} \big(\bcT^{\best}\bfun^{\best}, \bfun^{\best} \big)$. The proof of \Cref{theorem:Minimax_LRC'} uses \Cref{lemma:Minimax_LRC_f,lemma:Minimax_LRC_ga}, while \Cref{theorem:Minimax_LRC} uses \Cref{lemma:Minimax_LRC_f',lemma:Minimax_LRC_ga}.

\paragraph{Concentration inequality \cref{eq:bound_f}, $\hat{L}_{\text{MM}} \big(\bfun, \bcT^{\best}\bfun\big) - \hat{L}_{\text{MM}} \big(\bfun^{\best},\bcT^{\best}\bfun^{\best}\big)$}
\Cref{lemma:Minimax_LRC_f} below will be used as a buiding block of the proof of \Cref{theorem:Minimax_LRC'}.
\begin{lemma} \label{lemma:Minimax_LRC_f}
	There exists a universal constant $c > 0$ such that under \Cref{as:approx_gcomp_FF}, for any fixed parameter $\theta > 1$, with probability at least $1 - \delta$, we have
	\begin{equation} \label{eq:Minimax_LRC_f}  \E_{\bmu} L_{\text{MM}} \big(\bfun, \bcT^{\best}\bfun\big) - \E_{\bmu} L_{\text{MM}} \big(\bfun^{\best},\bcT^{\best}\bfun^{\best}\big) \leq \frac{\theta}{\theta-1} \big( \hat{L}_{\text{MM}} \big(\bfun, \bcT^{\best}\bfun\big) - \hat{L}_{\text{MM}} \big(\bfun^{\best},\bcT^{\best}\bfun^{\best}\big) \big) + Err_f \end{equation}
	for any $\bfun \in \bFun$,
	with
	\[ Err_f := c \theta H^2 r_L^{\star} + c \theta H^2 \frac{\log(1/\delta)}{n} + \frac{c}{\theta-1}\big(\Berr(\bfun^{\best}) + \epsilon\big). \]
\end{lemma}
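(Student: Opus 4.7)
The plan is to apply a local Rademacher concentration theorem of Bartlett--Bousquet--Mendelson type (\Cref{theorem:LRC}) to a suitable class of loss functions derived from $L_{\text{MM}}$. I would work with the uncentered class $\{L_{\text{MM}}(\bfun, \bcT^{\best}\bfun) : \bfun \in \bFun\}$, localized directly by the functional $\E_{\bmu}[L_{\text{MM}}(\bfun, \bcT^{\best}\bfun)^2] \le r$ that appears in the stated critical radius $r_L^\star$; its elements are bounded by $2H^2$, handling the boundedness hypothesis. The main technical work is (i) establishing a Bernstein-type variance bound, and (ii) converting the uncentered uniform estimate to the centered form in \cref{eq:Minimax_LRC_f} using a single-variable concentration argument at $\bfun^{\best}$.

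For the Bernstein condition, I use the factorization $a^2 - b^2 = (a-b)(a+b)$ to rewrite
\[ L_{\text{MM}}(\bfun, \bcT^{\best}\bfun) = \frac{1}{H}\sum_{h=1}^H \big(f_h - \mathcal{T}_h^{\best}f_{h+1}\big)(s_h, a_h) \cdot \big[f_h + \mathcal{T}_h^{\best}f_{h+1} - 2r_h - 2V_{f_{h+1}}(s_h')\big](s_h,a_h,r_h,s_h'). \]
The second bracket has magnitude at most $6H$, so Cauchy--Schwarz on the $h$-average yields $\E_{\bmu}[L_{\text{MM}}(\bfun, \bcT^{\best}\bfun)^2] \le 36 H^2 \|\bfun - \bcT^{\best}\bfun\|_{\bmu}^2$. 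Triangle inequality combined with \Cref{as:approx_gcomp_FF} gives $\|\bfun - \bcT^{\best}\bfun\|_{\bmu}^2 \le 2\Berr(\bfun) + 2\epsilon$, and the identity $\E_{\bmu} L_{\text{MM}}(\bfun, \bcT^{\best}\bfun) = \Berr(\bfun) - \|\bcT^{\best}\bfun - \bcT^{\star}\bfun\|_{\bmu}^2$ from \cref{eq:Lmu} gives $\Berr(\bfun) \le \E_{\bmu} L_{\text{MM}}(\bfun, \bcT^{\best}\bfun) + \epsilon$. Combining produces the uncentered Bernstein bound $\E_{\bmu}[L_{\text{MM}}(\bfun, \bcT^{\best}\bfun)^2] \le c_1 H^2 \E_{\bmu} L_{\text{MM}}(\bfun, \bcT^{\best}\bfun) + c_2 H^2 \epsilon$ with coefficient $B = c_1 H^2$ and offset $K = c_2 H^2 \epsilon$. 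Feeding this into \Cref{theorem:LRC} produces a uniform estimate of the form
\[ \E_{\bmu} L_{\text{MM}}(\bfun, \bcT^{\best}\bfun) \le \frac{\theta}{\theta-1}\hat{L}_{\text{MM}}(\bfun, \bcT^{\best}\bfun) + c\theta H^2 r_L^\star + c\theta H^2 \frac{\log(1/\delta)}{n} + \frac{c\epsilon}{\theta-1}, \]
where the $H^2$ factors of $B$ and $K$ cancel in the LRC residual $K/((\theta-1)B)$, exactly as in the parallel FQI analysis of \Cref{theorem:FQI_localRC}.

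Converting to the centered statement \cref{eq:Minimax_LRC_f} then amounts to subtracting $\E_{\bmu} L_{\text{MM}}(\bfun^{\best}, \bcT^{\best}\bfun^{\best})$ from both sides and rearranging so the leading empirical term becomes $\frac{\theta}{\theta-1}\big(\hat{L}_{\text{MM}}(\bfun, \bcT^{\best}\bfun) - \hat{L}_{\text{MM}}(\bfun^{\best}, \bcT^{\best}\bfun^{\best})\big)$. The leftover error decomposes into $\frac{1}{\theta-1}\E_{\bmu} L_{\text{MM}}(\bfun^{\best}, \bcT^{\best}\bfun^{\best})$, which is $\le \frac{\Berr(\bfun^{\best})}{\theta-1}$ by the identity in \cref{eq:Lmu}, plus a single-variable concentration gap $\hat{L}_{\text{MM}}(\bfun^{\best}, \bcT^{\best}\bfun^{\best}) - \E_{\bmu} L_{\text{MM}}(\bfun^{\best}, \bcT^{\best}\bfun^{\best})$. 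The latter is controlled by Bernstein's inequality using the second-moment bound $\E_{\bmu}[L_{\text{MM}}(\bfun^{\best}, \bcT^{\best}\bfun^{\best})^2] \le cH^2(\Berr(\bfun^{\best})+\epsilon)$, yielding $c H \sqrt{(\Berr(\bfun^{\best})+\epsilon)\log(1/\delta)/n} + cH^2\log(1/\delta)/n$; splitting the square root via AM--GM gives a piece absorbed into $\frac{c(\Berr(\bfun^{\best})+\epsilon)}{\theta-1}$ and a piece absorbed into $c\theta H^2\log(1/\delta)/n$. Collecting everything yields the stated $Err_f$.

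The main obstacle I anticipate is this final bookkeeping: the several sources of $H^2(\Berr(\bfun^{\best})+\epsilon)$-type contributions (the LRC residual after subtracting at $\bfun^{\best}$, the Bernstein square-root after AM--GM) must be packaged cleanly into the single residual $\frac{c(\Berr(\bfun^{\best})+\epsilon)}{\theta-1}$, and the $\frac{\theta}{\theta-1}$ prefactor that appears on the single-variable concentration gap after rearrangement must be balanced via AM--GM against the $c\theta H^2 \log(1/\delta)/n$ statistical-rate term without introducing a spurious $\frac{1}{(\theta-1)^2}$ dependence. Beyond this calibration of the universal constant, the proof is essentially a transplant of the FQI analysis of \Cref{theorem:FQI_localRC} to the loss $L_{\text{MM}}$, with the difference-of-squares factorization of $L_{\text{MM}}$ playing the role of the Lipschitz continuity of $\ell$.
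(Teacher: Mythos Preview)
Your variance-to-mean derivation via the difference-of-squares factorization is correct and essentially matches the paper's (the paper gets $\E[L_{\text{MM}}(\bfun,\bcT^{\best}\bfun)^2]\le 8H^2(\Berr(\bfun)+\epsilon)$ by the same Cauchy--Schwarz route). The divergence from the paper, and the source of the obstacle you yourself flag, is your decision to apply \Cref{theorem:LRC} to the \emph{uncentered} class and then convert.

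The paper instead applies \Cref{theorem:LRC} directly to the \emph{centered} class $\{L_{\text{MM}}(\bfun,\bcT^{\best}\bfun)-L_{\text{MM}}(\bfun^{\best},\bcT^{\best}\bfun^{\best}):\bfun\in\bFun\}$. Combining your second-moment bound with \cref{eq:E_L} yields
\[
\text{Var}\big[L_{\text{MM}}(\bfun,\bcT^{\best}\bfun)-L_{\text{MM}}(\bfun^{\best},\bcT^{\best}\bfun^{\best})\big]\le 16H^2\big(\E[\,\cdot\,]+2\Berr(\bfun^{\best})+3\epsilon\big),
\]
so one takes $B=16H^2$ and $\eta=2\Berr(\bfun^{\best})+3\epsilon$. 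The output of \Cref{theorem:LRC} is then \emph{already} the centered inequality \cref{eq:Minimax_LRC_f}, with the $\frac{\eta}{\theta-1}$ term of \Cref{theorem:LRC} producing exactly $\frac{c(\Berr(\bfun^{\best})+\epsilon)}{\theta-1}$; no conversion step is needed. The only remaining work is to check that the localization $T(f)\le r$ implies $\E[L_{\text{MM}}(\bfun,\bcT^{\best}\bfun)^2]\le r/2$ (again via the second-moment bound and \cref{eq:E_L}), so the relevant sub-root function is dominated by $16H^2\psi_L(r/2)$ with fixed point $\le 128H^4 r_L^\star$ by \Cref{lemma:subroot}.

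Your conversion step, by contrast, does not close for $\theta$ near $1$. After subtracting at $\bfun^{\best}$ you must control $\frac{\theta}{\theta-1}\big(\hat L^{\best}-\E_{\bmu}L^{\best}\big)$, and Bernstein contributes a linear piece $\frac{c\theta}{\theta-1}H^2\frac{\log(1/\delta)}{n}$. Writing $\frac{\theta}{\theta-1}=1+\frac{1}{\theta-1}$ leaves a residual $\frac{c}{\theta-1}H^2\frac{\log(1/\delta)}{n}$ that fits neither into $c\theta H^2\frac{\log(1/\delta)}{n}$ (bounded as $\theta\to1^+$) nor into $\frac{c(\Berr(\bfun^{\best})+\epsilon)}{\theta-1}$ (no statistical-rate factor). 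Every AM--GM split of the square-root term runs into the same $\frac{\theta^2}{\theta-1}$ obstruction. Since the lemma is stated for all $\theta>1$ and is subsequently used with $\theta-1$ proportional to $\sqrt{\Berr(\bfun^{\best})+\epsilon}$ (which may be arbitrarily small), this gap is genuine. The fix is to work with the centered class from the start and let the offset $\eta$ in \Cref{theorem:LRC} carry $\Berr(\bfun^{\best})$.
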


\begin{proof}% [Proof of Lemma \ref{lemma:Minimax_LRC_f}]
	We consider using Theorem \ref{theorem:LRC} to analyze the concentration of $\hat{L}_{\text{MM}} \big(\bfun, \bcT^{\best}\bfun\big) - \hat{L}_{\text{MM}} \big(\bfun^{\best},\bcT^{\best}\bfun^{\best}\big)$.
	Similar to \cref{Lip}, we can show that for any $\bfun \in \bFun$,
	\[ \big| L_{\text{MM}} \big(\bfun, \bcT^{\best}\bfun\big) \big| \leq 2\sum_{h=1}^H \big|f_h(s_h,a_h) - \mathcal{T}_h^{\best}f_{h+1}(s_h,a_h)\big|. \]
	By Cauchy-Schwarz inequality,
	\begin{equation} \label{eq:E_squareL} \begin{aligned} \mathbb{E} \big[ L_{\text{MM}} \big(\bfun, \bcT^{\best}\bfun\big)^2 \big] \leq 4H^2 \big\| \bfun - \bcT^{\best} \bfun \big\|_{\bmu}^2 \leq 8H^2 \Big( \big\| \bfun - \bcT^{\star} \bfun \big\|_{\bmu}^2 + \big\| \bcT^{\best} \bfun - \bcT^{\star} \bfun \big\|_{\bmu}^2 \Big) \leq 8H^2 \big( \Berr(\bfun) + \epsilon \big), \end{aligned} \end{equation}
	where we have used \Cref{as:approx_gcomp_FF}.
	It follows that
	\[ \begin{aligned} & {\rm Var}\big[ L_{\text{MM}}\big(\bfun, \bcT^{\best}\bfun\big) - L_{\text{MM}}\big(\bfun^{\best},\bcT^{\best}\bfun^{\best}\big) \big] \leq \mathbb{E}\big[ (L_{\text{MM}}\big(\bfun, \bcT^{\best}\bfun\big) - L_{\text{MM}}\big(\bfun^{\best},\bcT^{\best}\bfun^{\best}\big))^2 \big] \\ \leq & 2 \mathbb{E} \big[ L_{\text{MM}}\big(\bfun, \bcT^{\best}\bfun\big)^2 \big] + 2 \mathbb{E} \big[ L_{\text{MM}}\big(\bfun^{\best},\bcT^{\best}\bfun^{\best}\big)^2 \big] \leq 16H^2 \big( \Berr(\bfun) + \Berr(\bfun^{\best}) + 2 \epsilon \big). \end{aligned} \]
	We also learn from \cref{eq:Lmu} that
	\begin{equation} \label{eq:E_L} \mathbb{E} \big[ L_{\text{MM}}\big(\bfun, \bcT^{\best}\bfun\big) - L_{\text{MM}}\big(\bfun^{\best}, \bcT^{\best}\bfun^{\best}\big) \big] \geq \Berr(\bfun) - \Berr(\bfun^{\best}) - \epsilon. \end{equation}
	We combine \cref{eq:E_squareL} and \cref{eq:E_L} and find that
	\[ {\rm Var}\big[ L_{\text{MM}}\big(\bfun, \bcT^{\best}\bfun\big) - L_{\text{MM}}\big(\bfun^{\best},\bcT^{\best}\bfun^{\best}\big) \big] \leq 16H^2 \big( \mathbb{E} \big[ L_{\text{MM}}\big(\bfun, \bcT^{\best}\bfun\big) - L_{\text{MM}}\big[\bfun^{\best}, \bcT^{\best}\bfun^{\best} \big] \big] + 2 \Berr(\bfun^{\best}) + 3\epsilon \big). \]
	
	We now apply \Cref{theorem:LRC} and aim to find a sub-root function $\psi_L$ such that $\psi_L(r) \geq \widetilde{\psi}(r)$ for
	\begin{equation} \label{eq:def_tildpsi} \begin{aligned} \widetilde{\psi}(r) := & 16H^2 \popR_n \Big( \Big\{ L_{\text{MM}}\big(\bfun, \bcT^{\best}\bfun\big) - L_{\text{MM}}\big(\bfun^{\best},\bcT^{\best}\bfun^{\best} \big) \Bigm| \bfun \in \bFun, \\ & \qquad \qquad \qquad \qquad \qquad 16H^2 \big( \mathbb{E} \big[ L_{\text{MM}}\big(\bfun, \bcT^{\best}\bfun\big) - L_{\text{MM}}\big(\bfun^{\best}, \bcT^{\best}\bfun^{\best}\big) \big] + 2\Berr(\bfun^{\best}) + 3\epsilon \big) \leq r \Big\} \Big) \\ = & 16H^2 \popR_n \Big( \Big\{ L_{\text{MM}}\big(\bfun, \bcT^{\best}\bfun\big) \, \Big| \, \bfun \in \bFun, \\ & \qquad \qquad \qquad \qquad \qquad 16H^2 \big( \mathbb{E} \big[ L_{\text{MM}}\big(\bfun, \bcT^{\best}\bfun\big) - L_{\text{MM}}\big(\bfun^{\best}, \bcT^{\best}\bfun^{\best}\big) \big] + 2\Berr(\bfun^{\best}) + 3\epsilon \big) \leq r \Big\} \Big). \end{aligned} \end{equation}
	
	Note that by \cref{eq:E_squareL,eq:E_L}, we have
	\[ 16H^2 \big( \mathbb{E} \big[ L_{\text{MM}}\big(\bfun, \bcT^{\best}\bfun\big) - L_{\text{MM}}\big(\bfun^{\best}, \bcT^{\best}\bfun^{\best}\big) \big] + 2\Berr(\bfun^{\best}) + 3\epsilon \big) \geq 2\mathbb{E} \big[ L_{\text{MM}}\big(\bfun, \bcT^{\best}\bfun\big)^2 \big], \]
	therefore,
	\begin{align*} \widetilde{\psi}(r) \leq & 16H^2 \popR_n \Big( \Big\{ L_{\text{MM}}\big(\bfun, \bcT^{\best}\bfun\big) \, \Big| \, \bfun \in \bFun, 2\mathbb{E} \big[ L_{\text{MM}}\big(\bfun, \bcT^{\best}\bfun\big)^2 \big] \leq r \Big\} \Big) \leq 16H^2 \psi_L\Big(\frac{r}{2}\Big), \end{align*}
	where
	\[ \psi_L(r) = \popR_n \Big( \Big\{ L_{\text{MM}}\big(\bfun, \bcT^{\best}\bfun\big) \, \Big| \, \bfun \in \bFun, \mathbb{E} \big[ L_{\text{MM}}\big(\bfun, \bcT^{\best}\bfun\big)^2 \big] \leq r \Big\} \Big). \]
	Let $r_L^{\star}$ be the positive fixed point of $\psi_L$.
	\Cref{lemma:subroot} implies the positive fixed point of mapping $r \mapsto 16H^2 \psi_L\big(r/2\big)$ is upper bounded by $128H^4 r_L^{\star}$. We then obtain \cref{eq:Minimax_LRC_f} by applying \cref{eq:theoremLRC1} in \Cref{theorem:LRC}.
\end{proof}

While \Cref{lemma:Minimax_LRC_f} above uses the local Rademacher complexity of a composite function $L_{\text{MM}}(\bfun, \bcT^{\best} f)$, \Cref{lemma:Minimax_LRC_f'} below provides an alternative concentration inequality for $ \hat{L}_{\text{MM}} \big(\bfun, \bcT^{\best}\bfun\big) - \hat{L}_{\text{MM}} \big(\bfun^{\best},\bcT^{\best}\bfun^{\best}\big)$, which involves the complexities of $\cF_h$, $\cG_h$ and $V_{\cF_{h+1}}$.

\begin{lemma} \label{lemma:Minimax_LRC_f'}
	Suppose Assumptions~\ref{as:approx_gcomp_FF}~and~\ref{as:Ctilde} hold. There exists a universal constant $c > 0$ such that for any fixed parameter $\theta > 1$, with probability at least $1 - \delta$,
	\begin{align} \label{eq:Minimax_LRC_f'}  & \E_{\bmu} L_{\text{MM}} \big(\bfun, \bcT^{\best}\bfun\big) - \E_{\bmu} L_{\text{MM}} \big(\bfun^{\best},\bcT^{\best}\bfun^{\best}\big) \leq \frac{\theta}{\theta-1} \big( \hat{L}_{\text{MM}} \big(\bfun, \bcT^{\best}\bfun\big) - \hat{L}_{\text{MM}} \big(\bfun^{\best},\bcT^{\best}\bfun^{\best}\big) \big) + Err_f \end{align} for any $\bfun \in \bFun$, with \begin{align} Err_f := c\theta \widetilde{C} H^3 \sum_{h=1}^H \Big( r_{f,h}^{\star} + r_{g,h}^{\star} + \widetilde{r}_{f,h+1}^{\star} + \sqrt{\epsilon r_{g,h}^{\star}/\widetilde{C}} \Big) + c\theta H^2 \frac{\log(1/\delta)}{n} + \frac{c}{\theta-1}\big(\Berr(\bfun^{\best}) + \epsilon\big). \notag \end{align}
	Here, $\widetilde{C}$ is the concentrability coefficient in \Cref{as:Ctilde}.
\end{lemma}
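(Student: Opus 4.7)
The plan is to mirror the structure of the proof of \Cref{lemma:Minimax_LRC_f}: bound the variance of the centered loss difference by a proxy $T(\bfun)$, construct a sub-root function dominating the local Rademacher complexity restricted to $\{\bfun : T(\bfun) \le r\}$, compute its critical radius, and invoke \Cref{theorem:LRC} with parameter $\theta$. The key change is that $T(\bfun)$ is now defined in a per-step manner, so the sub-root function splits into the three families of local complexities over $\cF_h$, $\cG_h$, and $V_{\cF_{h+1}}$ whose critical radii $r^{\star}_{f,h}$, $r^{\star}_{g,h}$, $\widetilde{r}^{\star}_{f,h+1}$ appear in $Err_f$.

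First I would expand $L_{\text{MM}}(\bfun,\bcT^{\best}\bfun) - L_{\text{MM}}(\bfun^{\best},\bcT^{\best}\bfun^{\best})$ step by step, using $|a^2-b^2|\le 4H|a-b|$ (valid since all functions are $H$-bounded) to obtain a pointwise Lipschitz handle in the primitives $f_h$, $\bcT^{\best}f_{h+1}$, and $V_{f_{h+1}}$. Squaring and taking expectation then yields a variance bound of the form $cH^2\sum_h\bigl(\|f_h - f^{\best}_h\|_{\mu_h}^2 + \|\bcT^{\best}f_{h+1} - g^{\best}_h\|_{\mu_h}^2 + \|V_{f_{h+1}} - V_{f^{\best}_{h+1}}\|_{\nu_h}^2\bigr)$. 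The non-expansiveness of the projection onto $\cG_h$ combined with \Cref{as:approx_gcomp_FF} and Jensen's inequality on $\cT^{\star}_h$ bounds the middle term by $(\|V_{f_{h+1}}-V_{f^{\best}_{h+1}}\|_{\nu_h}+2\sqrt{\epsilon})^2$. The vector-contraction principle (\Cref{lemma:contraction_RC}), applied exactly as in \Cref{lemma:Minimax_Rad_f}, then splits the corresponding restricted Rademacher complexity into one localized piece per step per family. \Cref{as:Ctilde} enters here to translate between the several base measures involved ($\mu_h$, $\nu_h$, $\nu_h\times\Uniform(\cA)$, $\mu_{h+1}$) at the cost of a single factor $\widetilde{C}$, ensuring that each norm can be dominated by the one appearing in the definition of the corresponding critical radius.

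After rescaling and combining the three sub-roots via \Cref{lemma:subroot}, the resulting critical radius is of order $\widetilde{C}H^3\sum_h(r^{\star}_{f,h} + r^{\star}_{g,h} + \widetilde{r}^{\star}_{f,h+1})$, while the additive $\sqrt{\epsilon}$ projection slack interacts with the $\cG_h$ critical radius through AM-GM to produce the cross term $\sqrt{\epsilon\, r^{\star}_{g,h}/\widetilde{C}}$ in $Err_f$. Plugging into \Cref{theorem:LRC} and running the same $\theta$-parameterization as in \Cref{lemma:Minimax_LRC_f} yields the claimed inequality; the $\frac{c}{\theta-1}(\Berr(\bfun^{\best})+\epsilon)$ term originates, just as before, from the offset between $\mathbb{E}_{\bmu}[L_{\text{MM}}(\bfun^{\best},\bcT^{\best}\bfun^{\best})]$ and $\Berr(\bfun^{\best})$, controlled by $\|\bcT^{\best}\bfun^{\best} - \bcT^{\star}\bfun^{\best}\|_{\bmu}^2 \le \epsilon$. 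The hardest part is the bookkeeping needed to route the three distinct variance norms through the three distinct localized families without losing factors exponential in $H$ or polynomial in $|\cA|$; \Cref{as:Ctilde} is what makes this possible, providing uniform density-ratio control across all distributional shifts that occur when re-expressing one norm in terms of another within the recursion over steps, and the subtle $\sqrt{\epsilon\, r^{\star}_{g,h}/\widetilde{C}}$ correction must be retained to obtain the sharp joint dependence on the misspecification $\epsilon$ and the $\cG_h$-complexity.
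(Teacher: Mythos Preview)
Your high-level plan is right and matches the paper: keep the variance-proxy setup from \Cref{lemma:Minimax_LRC_f}, split the restricted Rademacher complexity into per-step, per-family pieces via vector contraction, and bound each piece's critical radius using \Cref{lemma:subroot}. But you have misidentified where and how \Cref{as:Ctilde} enters, and this is precisely the crux of the lemma.

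The step you are missing is the content of \Cref{lemma:VQ-Vf}. In the paper's proof the functional $T(\bfun)$ stays exactly as in \Cref{lemma:Minimax_LRC_f}, namely $T(\bfun)=16H^2\big(\E_{\bmu}[L_{\text{MM}}(\bfun,\bcT^{\best}\bfun)-L_{\text{MM}}(\bfun^{\best},\bcT^{\best}\bfun^{\best})]+2\Berr(\bfun^{\best})+3\epsilon\big)$, so $T(\bfun)\le B(P\bfun+\eta)$ is automatic. The localization $T(\bfun)\le r$ then yields $\|(\bfun-\bcT^{\star}\bfun)-(\bfun^{\best}-\bcT^{\star}\bfun^{\best})\|_{\bmu}^2\le r/(8H^2)$, and \Cref{lemma:VQ-Vf} converts this \emph{vector Bellman-residual} bound into the per-step bounds $\|f_h-f_h^{\best}\|_{\mu_h}^2\lesssim\widetilde{C}r$, $\|f_{h+1}-f_{h+1}^{\best}\|_{\nu_h\times\Uniform(\cA)}^2\lesssim\widetilde{C}r$, and $\|\cT_h^{\best}f_{h+1}-\cT_h^{\best}f_{h+1}^{\best}\|_{\mu_h}^2\lesssim\widetilde{C}r+\epsilon$. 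That conversion is \emph{not} a change of measure between $\mu_h$, $\nu_h$, $\nu_h\times\Uniform(\cA)$, $\mu_{h+1}$: it is a telescoping identity that expresses $f_h(s,a)-f_h^{\best}(s,a)$ as an expectation, over a roll-out from step $h$ under a greedy policy, of the Bellman residual differences $(f_\tau-\cT_\tau^{\star}f_{\tau+1})-(f_\tau^{\best}-\cT_\tau^{\star}f_{\tau+1}^{\best})$ at all later steps $\tau\ge h$. \Cref{as:Ctilde} bounds that roll-out distribution by $\widetilde{C}\mu_\tau$, which is how the factor $\widetilde{C}$ appears. Your description (``translate between the several base measures $\mu_h,\nu_h,\nu_h\times\Uniform(\cA),\mu_{h+1}$'') does not capture this mechanism; if you try to execute your sketch literally you will find no way to pass from the constraint $T(\bfun)\le r$ (or equivalently from small $P\bfun$) to $\|f_h-f_h^{\best}\|_{\mu_h}^2\le r'$ without the telescoping argument, since there is no direct inequality between a single-step value discrepancy and the global Bellman error.
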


\begin{proof}
	In this proof, we estimate the critical radius of $\widetilde{\psi}(r)$ in \cref{eq:def_tildpsi} in an alternative way. In particular, we use parameters $r_{f,h}^{\star}$, $r_{g,h}^{\star}$ and $\widetilde{r}_{f,h}^{\star}$ defined in the statement of \Cref{theorem:Minimax_LRC}. The key step is to upper bound $\widetilde{\psi}(r)$ by the local Rademacher complexities $\popR^{\mu_h}_n \big( \big\{ f_h \in \cF_h  ~\big|~ \norm{f_h - f^{\best}_h}_{\mu_h}^2 \leq r \big\} \big)$,
	$\popR^{\mu_h}_n \big( \big\{ g_h \in \cG_h  ~\big|~ \norm{g_h - g_h^{\best}}_{\mu_h}^2 \leq r \big\}\big)$ and
	$\popR^{\nu_h}_n \big( \big\{ V_{f_{h+1}} ~\big|~ f_{h+1} \in \cF_{h+1}, \norm{f_{h+1} - f^{\best}_{h+1}}^2_{\nu_h \times \Uniform(\cA)} \le r \big\} \big)$.
	
	We take a shorthand $\bFun(r) := \big\{ \bfun \in \bFun \, \big| \, 16H^2 \big( \mathbb{E} \big[ L_{\text{MM}}\big(\bfun, \bcT^{\best}\bfun\big) - L_{\text{MM}}\big(\bfun^{\best}, \bcT^{\best}\bfun^{\best}\big) \big] + 2\Berr(\bfun^{\best}) + 3\epsilon \big) \leq r \big\}$ and rewrite $\widetilde{\psi}(r)$ as $\widetilde{\psi}(r) = 16H^2 \popR_n \big( \big\{ L_{\text{MM}}\big(\bfun, \bcT^{\best}\bfun\big) \, \big| \, \bfun \in \bFun(r) \big\} \big)$.
	Similar to \Cref{lemma:Minimax_Rad_f}, one can show that there exists a univeral constant $c > 0$ such that
	\begin{align*} & \widetilde{r} \leq c H^2 \sum_{h=1}^H \big( \psi_{h,1}(r) + \psi_{h,2}(r) + \psi_{h,3}(r) \big). \end{align*}
	where $\psi_{h,1}(r) := \popR_n^{\mu_h}\big( \big\{ f_h \bigm| \bfun \in \bFun(r) \big\} \big)$, $\psi_{h,2}(r) := \popR_n^{\mu_h}\big( \big\{ \cT_h^{\best} f_{h+1} \bigm| \bfun \in \bFun(r) \big\} \big)$ and $\psi_{h,3}(r) := \popR_n^{\nu_h}\big( \big\{ V_{f_{h+1}} \bigm| \bfun \in \bFun(r) \big\} \big)$.
	In the sequel, we simplify $\psi_{h,1}$, $\psi_{h,2}$ and $\psi_{h,3}$.
	
	For any $\bfun \in \bFun(r)$, due to \cref{eq:E_L}, we have
	\begin{align*}
		\big\| (\bfun - \bcT^{\star} \bfun) - (\bfun^{\best} - \bcT^{\star} \bfun^{\best}) \big\|_{\bmu}^2 \leq & 2 \big\| \bfun - \bcT^{\star} \bfun \big\|_{\bmu}^2 + 2 \big\| \bfun^{\best} - \bcT^{\star} \bfun^{\best} \big\|_{\bmu}^2 = 2 \Berr(\bfun) + 2 \Berr(\bfun^{\best}) \\ \leq & 2 \E\big[ L_{\text{MM}}\big[\bfun,\bcT^{\best}\bfun\big] - L_{\text{MM}}\big(\bfun^{\best},\bcT^{\best}\bfun^{\best}\big) \big] + 4 \Berr(\bfun^{\best}) + 2 \epsilon \leq \frac{r}{8H^2}.
	\end{align*}
	We use \Cref{lemma:VQ-Vf} and find that under Assumptions~\ref{as:approx_gcomp_FF}~and~\ref{as:Ctilde}, for any $\bfun \in \bFun$,
	\begin{align*}
		& \big\| f_h - f^{\best}_h \big\|_{\mu_h}^2 \leq \frac{\widetilde{C}r}{8}, \\
		& \begin{aligned} \big\| \cT_h^{\best} f_{h+1} - \cT_h^{\best} f_{h+1}^{\best} \big\|_{\mu_h}^2 \leq & \big( \big\| \cT_h^{\star} f_{h+1} - \cT_h^{\star} f_{h+1}^{\best} \big\|_{\mu_h} + 2 \sqrt{\epsilon} \big)^2 \\ \leq & 2 \big\| \cT_h^{\star} f_{h+1} - \cT_h^{\star} f_{h+1}^{\best} \big\|_{\mu_h}^2 + 8 \epsilon \leq \frac{\widetilde{C}r}{4} + 8 \epsilon, \end{aligned} \\
		& \big\| f_{h+1} - f^{\best}_{h+1} \big\|_{\nu_h \times \Uniform(\cA)}^2 \leq \frac{\widetilde{C}r}{8}.
	\end{align*}
	It follows that
	\begin{align*}
		& \psi_{h,1}(r) = \popR_n^{\mu_h}\big( \big\{ f_h \Bigm| \bfun \in \bFun(r) \big\} \big) \leq \popR_n^{\mu_h}\Big( \Big\{ f_h \in \cF_h \Bigm| \big\| f_h - f^{\best}_h \big\|_{\mu_h}^2 \leq \frac{\widetilde{C}r}{8} \Big\} \Big), \\
		& \psi_{h,2}(r) = \popR_n^{\mu_h}\big( \big\{ \cT_h^{\best} f_{h+1} \bigm| \bfun \in \bFun(r) \big\} \big) \leq \popR_n^{\mu_h}\Big( \Big\{ g_h \in \cG_h \Bigm| \big\| g_h - \cT_h^{\best} f_{h+1}^{\best} \big\|_{\mu_h}^2 \leq \frac{\widetilde{C}r}{4} + 8\epsilon \Big\} \Big), \\ & \psi_{h,3}(r) = \popR_n^{\nu_h}\big( \big\{ V_{f_{h+1}} \bigm| \bfun \in \bFun(r) \big\} \big) \leq \popR_n^{\nu_h}\Big( \Big\{ V_{f_{h+1}} \Bigm| f_{h+1} \in \cF_{h+1}, \big\| f_{h+1} - f^{\best}_{h+1} \big\|_{\nu_h \times \Uniform(\cA)}^2 \leq \frac{\widetilde{C}r}{8} \Big\} \Big).
	\end{align*}
	Recall that $r_{f,h}^{\star}$, $r_{g,h}^{\star}$ and $\widetilde{r}_{f,h+1}^{\star}$ are respectively the fixed points of
	\begin{align*}
		& \psi_{f,h}(r) = \popR_n^{\mu_h} \big( \big\{ f_h \in \cF_h \bigm| \big \| f_h - f_h^{\best} \|_{\mu_h}^2 \leq r \big\} \big), \\ & \psi_{g,h}(r) = \popR_n^{\mu_h} \big( \big\{ g_h \in \cG_h \bigm| \big \| g_h - \cT_h^{\best} f_{h+1}^{\best} \|_{\mu_h}^2 \leq r \big\} \big) \qquad \text{and} \\ &
		\widetilde{\psi}_{f,h}(r) = \popR_n^{\nu_h} \big( \big\{ V_{f_{h+1}} \bigm| f_{h+1} \in \cF_{h+1}, \big\| f_{h+1} - f^{\best}_{h+1} \big\|_{\widetilde{\mu}_{h+1}}^2 \leq r \big\} \big).
	\end{align*}
	According to \Cref{lemma:subroot}, the positive fixed points of $\psi_{h,1}$, $\psi_{h,2}$ and $\psi_{h,3}$ are upper bounded by $8\widetilde{C} r_{f,h}^{\star}$, $4 \widetilde{C}r_{g,h}^{\star} + \sqrt{32\epsilon \widetilde{C}r_{g,h}^{\star}}$ and $8\widetilde{C}\widetilde{r}_{f,h}$, therefore, the critical radius $\widetilde{r}^{\star}$ of $\widetilde{\psi}(r)$ satisfies
	\begin{align*} \widetilde{r}^{\star} \leq & c^2H^4 \Bigg(\sum_{h=1}^H \Big( \sqrt{8\widetilde{C}r_{f,h}^{\star}} + \sqrt{4\widetilde{C}r_{g,h}^{\star}} + \sqrt[4]{32 \epsilon \widetilde{C}r_{g,h}^{\star}} + \sqrt{8\widetilde{C}\widetilde{r}_{f,h}^{\star}} \Big) \Bigg)^2 \\ \leq & c' \widetilde{C} H^5 \sum_{h=1}^H \Big( r_{f,h}^{\star} + r_{g,h}^{\star} + \widetilde{r}_{f,h}^{\star} + \sqrt{\epsilon r_{g,h}^{\star}/\widetilde{C}}\Big), \end{align*}
	where $c, c' > 0$ are universal constants.
	
	We then apply \cref{eq:theoremLRC1} in \Cref{theorem:LRC} and obtain \cref{eq:Minimax_LRC_f'}.
\end{proof}

\paragraph{Concentration inequality \cref{eq:bound_ga}, $\hat{\ell} \big(\bg, \bfun^{\best} \big) - \hat{\ell} \big(\bcT^{\best}\bfun^{\best}, \bfun^{\best} \big)$}
\begin{lemma} \label{lemma:Minimax_LRC_ga}
	Suppose Assumption \ref{as:approx_gcomp_FF} holds. Then there exists a universal constant $c>0$ such that for any fixed parameter $\theta > 1$, with probability at least $1 - \delta$,
	\begin{align} \label{eq:Minimax_LRC_ga} & \E_{\bmu} \ell \big(\bg, \bfun^{\best} \big) - \E_{\bmu} \ell \big(\bcT^{\best}\bfun^{\best}, \bfun^{\best} \big) \leq \frac{\theta}{\theta-1} \big( \hat{\ell} \big(\bg, \bfun^{\best} \big) - \hat{\ell} \big(\bcT^{\best}\bfun^{\best}, \bfun^{\best} \big) \big) + Err_g, \\
	& \text{with} \qquad Err_g := c\theta H \sum_{h=1}^H r_{g,h}^{\star} + c\theta H^2 \frac{\log(H/\delta)}{n} + \frac{c\epsilon}{\theta-1}. \notag \end{align}
\end{lemma}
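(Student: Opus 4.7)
The plan is to apply the local Rademacher complexity machinery (Theorem \ref{theorem:LRC}) one time step at a time, and then combine the $H$ resulting inequalities by a union bound, which accounts for the $\log(H/\delta)$ factor in $Err_g$. Fix $h\in[H]$ and consider the centered loss family
\[
\big\{ \ell(g_h, f_{h+1}^{\best}) - \ell(\cT_h^{\best} f_{h+1}^{\best}, f_{h+1}^{\best}) \;\big|\; g_h \in \cG_h \big\},
\]
whose elements are bounded by $O(H^2)$ and vanish at $g_h = \cT_h^{\best} f_{h+1}^{\best}$. Exactly as in \eqref{Lip}, each loss difference is $(4H)$-Lipschitz in its first argument, hence
\[
\E\big[\big(\ell(g_h,f_{h+1}^{\best}) - \ell(\cT_h^{\best} f_{h+1}^{\best},f_{h+1}^{\best})\big)^2\big] \leq 16H^2 \,\|g_h - \cT_h^{\best} f_{h+1}^{\best}\|_{\mu_h}^2 .
\]

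The heart of the argument is to control the norm $\|g_h - \cT_h^{\best} f_{h+1}^{\best}\|_{\mu_h}^2$ by the expected loss difference, up to the approximation slack $\epsilon$. I would use the identity
\[
\E_{\mu_h}\big[\ell(g_h,f_{h+1}^{\best}) - \ell(\cT_h^{\best} f_{h+1}^{\best},f_{h+1}^{\best})\big] = \|g_h - \cT_h^{\star} f_{h+1}^{\best}\|_{\mu_h}^2 - \|\cT_h^{\best} f_{h+1}^{\best} - \cT_h^{\star} f_{h+1}^{\best}\|_{\mu_h}^2,
\]
combined with the triangle-inequality estimate $\|g_h - \cT_h^{\best} f_{h+1}^{\best}\|_{\mu_h}^2 \le 2\|g_h - \cT_h^{\star} f_{h+1}^{\best}\|_{\mu_h}^2 + 2\epsilon$ and Assumption \ref{as:approx_gcomp_FF} (which gives $\|\cT_h^{\best} f_{h+1}^{\best} - \cT_h^{\star} f_{h+1}^{\best}\|_{\mu_h}^2 \le \epsilon$). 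Chaining these inequalities yields the Bernstein-type condition
\[
\Var\big[\ell(g_h,f_{h+1}^{\best}) - \ell(\cT_h^{\best} f_{h+1}^{\best},f_{h+1}^{\best})\big] \leq cH^2 \Big( \E_{\mu_h}\big[\ell(g_h,f_{h+1}^{\best}) - \ell(\cT_h^{\best} f_{h+1}^{\best},f_{h+1}^{\best})\big] + \epsilon\Big),
\]
which is the standard variance-to-mean bound required by Theorem \ref{theorem:LRC}.

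Next I would upper bound the relevant local Rademacher complexity: by the contraction property (Lemma \ref{lemma:contraction_RC_0}) and the $(4H)$-Lipschitz estimate above,
\[
\popR_n\big(\{\ell(g_h,f_{h+1}^{\best}) - \ell(\cT_h^{\best} f_{h+1}^{\best},f_{h+1}^{\best}) : g_h\in\cG_h,\ \|g_h - \cT_h^{\best} f_{h+1}^{\best}\|_{\mu_h}^2 \le r\}\big) \leq 8H\,\popR_n^{\mu_h}\big(\{g_h\in\cG_h : \|g_h - g_h^{\best}\|_{\mu_h}^2 \le r\}\big),
\]
where $g_h^{\best} = \cT_h^{\best} f_{h+1}^{\best}$ as defined in the body of the paper. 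Rescaling via Lemma \ref{lemma:subroot} shows that the critical radius of the composite sub-root function is $O(H^2) \cdot r_{g,h}^{\star}$. Applying inequality \eqref{eq:theoremLRC1} of Theorem \ref{theorem:LRC} then yields, with probability $\ge 1 - \delta/H$, a per-step inequality of the form
\[
\E_{\mu_h}\big[\ell(g_h,f_{h+1}^{\best}) - \ell(g_h^{\best},f_{h+1}^{\best})\big] \leq \tfrac{\theta}{\theta - 1}\big(\hat\ell_h(g_h,f_{h+1}^{\best}) - \hat\ell_h(g_h^{\best},f_{h+1}^{\best})\big) + c\theta H^3 r_{g,h}^{\star} + c\theta H^2\tfrac{\log(H/\delta)}{n} + \tfrac{c\epsilon}{\theta - 1}
\]
for every $g_h \in \cG_h$. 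Averaging these over $h \in [H]$ (dividing by $H$) and taking the union bound produces the desired inequality \eqref{eq:Minimax_LRC_ga}. The main bookkeeping obstacle will be tracking the powers of $H$ so that the final complexity term collapses to $cH\sum_h r_{g,h}^{\star}$ as stated; this requires carefully exploiting the $1/H$ normalization in the definition of $\hat\ell$ (so that the Lipschitz and boundedness constants at each step carry a factor $1/H$ in the averaged loss), rather than a naive union bound that would give an $H^3$ factor.
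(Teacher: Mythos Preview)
Your approach is essentially identical to the paper's: decompose into per-step losses, run the FQI-style local Rademacher argument (Lipschitz $\Rightarrow$ variance bound $\Rightarrow$ contraction $\Rightarrow$ Theorem~\ref{theorem:LRC}) for each $h$, then union bound. The paper simply compresses all of this into the phrase ``in a way similar to Theorem~\ref{theorem:FQI_localRC}.''

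One clarification on the $H$-bookkeeping you flag at the end: no special trick with the $1/H$ normalization is needed. Following the FQI computation verbatim, the sub-root upper bound is $128H^3\,\psi_{g,h}(r/16H^2)$, whose fixed point is $\le 1024H^4 r_{g,h}^\star$ (Lemma~\ref{lemma:subroot}); dividing by $B=32H^2$ in \eqref{eq:theoremLRC1} already gives the per-step term $c\theta H^2 r_{g,h}^\star$ (not $H^3$). Averaging the $H$ per-step inequalities with the $1/H$ prefactor then yields exactly $c\theta H\sum_h r_{g,h}^\star$, as stated.
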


\begin{proof}
	Note that
	\begin{align*} \ell\big(\bg, \bfun^{\best}\big) - \ell\big(\bcT^{\best}\bfun^{\best}, \bfun^{\best}\big) = & \frac{1}{H} \sum_{h=1}^H \big( \ell(g_h, f_{h+1}^{\best}) - \ell\big(\mathcal{T}_h^{\best}f_{h+1}^{\best}, f_{h+1}^{\best}\big) \big). \end{align*}
	We can analyze the concentration of $\ell(g_h, f_{h+1}^{\best}) - \ell\big(\mathcal{T}_h^{\best}f_{h+1}^{\best}, f_{h+1}^{\best}\big)$ in a way similar to \Cref{theorem:FQI_localRC}. It follows that for any $h \in [H]$, with probability at least $1 - \delta$,
	\begin{align*} & \E_{\bmu} \ell (g_h, f_{h+1}^{\best}) - \E_{\bmu} \ell \big(\mathcal{T}_h^{\best}f_{h+1}^{\best}, f_{h+1}^{\best}\big) \\ \leq & \frac{\theta}{\theta-1} \big( \hat{\ell} (g_h, f_{h+1}^{\best}) - \hat{\ell} \big(\mathcal{T}_h^{\best}f_{h+1}^{\best}, f_{h+1}^{\best}\big) \big) 
	+ 8 c_1 \theta H^2 r_{g,h}^{\star} + (2c_2 + 8 c_3 \theta) H^2\frac{\log(1/\delta)}{n} + \frac{2\epsilon}{\theta-1}, \end{align*}
	for any $g_h \in \cG_h$,
	where $c_1, c_2, c_3$ are the constants in \Cref{theorem:LRC}. By union bound, we can further derive \cref{eq:Minimax_LRC_ga}.
\end{proof}

\subsection{Proof of Theorems~ \ref{theorem:Minimax_RC}, \ref{theorem:Minimax_LRC}~and~\ref{theorem:Minimax_LRC'}} \label{app:proof_minimax_conclude}

\begin{proof}[Proof of \Cref{theorem:Minimax_RC}]
	Combining \Cref{lemma:Minimax_ErrorDecomp,lemma:Minimax_Rad_f,lemma:Minimax_Rad_g}, we obtain Theorem \ref{theorem:Minimax_RC}.
\end{proof}
\begin{proof}[Proof of \Cref{theorem:Minimax_LRC,theorem:Minimax_LRC'}]
	Plugging Lemmas \Cref{lemma:Minimax_LRC_f,lemma:Minimax_LRC_ga} into \Cref{lemma:Minimax_ErrorDecomp} yields that with probability at least $1 - \delta$,
	\[ \begin{aligned} \Berr(\empbfun) \leq & \min_{\bfun \in \bFun} \Berr(\bfun) + \epsilon + c \theta H^2 \Bigg( r_L^{\star} + \frac{1}{H} \sum_{h=1}^H r_{g,h}^{\star} + \frac{\log(H/\delta)}{n} \Bigg) + \frac{c}{\theta-1}\big(\Berr(\bfun^{\best}) + \epsilon\big) \end{aligned} \]
	for a universal constant $c > 0$.
	By letting
	\[ \theta := 1 + \sqrt{\frac{\Berr(\bfun^{\best}) + \epsilon}{cH^2\big( r_L^{\star} + \frac{1}{H}\sum_{h=1}^H r_{g_h}^{\star} + \frac{\log(H/\delta)}{n}\big)}}, \]
	we have
	\[ \begin{aligned} \Berr(\empbfun) \leq & \min_{\bfun \in \bFun} \Berr(\bfun) + \epsilon + cH^2 \Bigg( r_L^{\star} + \frac{1}{H} \sum_{h=1}^H r_{g_h}^{\star} + \frac{\log(H/\delta)}{n} \Bigg) \\ & + cH \sqrt{\big(\min_{\bfun \in \bFun} \Berr(\bfun^{\best}) + \epsilon\big)\Bigg(r_L^{\star} + \frac{1}{H}\sum_{h=1}^H r_{g_h}^{\star} + \frac{\log(H/\delta)}{n}\Bigg)}, \end{aligned} \]
	which finishes the proof of \Cref{theorem:Minimax_LRC'}.
	
	Similarly, by combining \Cref{lemma:Minimax_ErrorDecomp,lemma:Minimax_LRC_f',lemma:Minimax_LRC_ga}, we prove Theorem~\ref{theorem:Minimax_LRC}.
\end{proof}

\section{Examples (Propositions~\ref{prop:R_bound_finite}~to~\ref{prop:R_bound_sparse})}

In this part, we provide estimates for the (local) Rademacher complexities of four special function spaces, namely function class with finite elements, linear function space, kernel class and sparse linear space. The results presented here slightly generalize \Cref{prop:R_bound_finite,prop:R_bound_linear,prop:R_bound_kernel,prop:R_bound_sparse}.

\subsection{Function class with finite elements (Proposition~\ref{prop:R_bound_finite})}

\begin{lemma}[Full version of \Cref{prop:R_bound_finite}] \label{lemma:finite}
	Suppose $\cF$ is a discrete function class with $|\cF| < \infty$ and $f \in [0,D]$ for any $f \in \cF$. Then for any distribution $\rho$, \begin{equation} \label{finite_generalRC} \popR_n^{\rho} ( \cF ) \leq 2 D \max \bigg\{ \sqrt{\frac{\log|\cF|}{n}}, \frac{\log |\cF|}{n} \bigg\}. \end{equation}
	For any function $f^{\circ}$ with range in $[0,D]$, we have \begin{equation} \label{finite_0} \popR_n^{\rho} \big( \big\{ f \in \cF  \bigm|  \|f - f^{\circ}\|_{\rho}^2 \leq r \big\} \big) \leq \psi(r), \quad \text{where }\psi(r) := 2 \max \bigg\{ \sqrt{\frac{r\log|\cF|}{n}}, \frac{D \log |\cF|}{n} \bigg\}. \end{equation}
	$\psi$ is a sub-root function with positive fixed point
	\[ r^{\star} = \frac{2(D \vee 2) \log|\cF|}{n}. \]
\end{lemma}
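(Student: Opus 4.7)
The plan is to handle the three assertions of Lemma~B.1 in turn---the global Rademacher bound \eqref{finite_generalRC} via Massart's finite-class lemma, the local bound \eqref{finite_0} via a variance-adaptive (Bernstein-type) refinement of Massart, and the sub-root/fixed-point statement by an elementary case analysis. For \eqref{finite_generalRC}, I will invoke Massart's lemma on the empirical Rademacher complexity: since $f(x)\in[0,D]$, the vector $(f(x_i))_{i=1}^n$ has Euclidean norm at most $D\sqrt{n}$, giving $\empR_X(\cF)\leq D\sqrt{2\log|\cF|/n}$; taking expectation over $X\sim\rho^{\otimes n}$, combined with $\sqrt{2}\leq 2$ and $\max\{a,b\}\geq a$, yields the stated bound.

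For the local bound \eqref{finite_0}, my first step is to use the invariance of Rademacher complexity under subtracting the deterministic anchor $f^\circ$, which reduces the problem to upper bounding $\popR_n^{\rho}(\cF_r^\circ)$ with $\cF_r^\circ:=\{f-f^\circ : f\in\cF,\ \|f-f^\circ\|_\rho^2\leq r\}$; each $g\in\cF_r^\circ$ satisfies $|g|\leq D$ and $\E_\rho g^2\leq r$. I will then apply the classical Bernstein--Massart inequality for finite classes: for fixed $g\in\cF_r^\circ$, the sum $Y_g:=\tfrac{1}{n}\sum_i\sigma_i g(x_i)$ is, under the joint law $\sigma\otimes\rho^{\otimes n}$, an average of $n$ i.i.d.\ mean-zero random variables, each bounded by $D$ in magnitude and with variance at most $r$; classical Bernstein gives $\P(Y_g\geq t)\leq\exp(-nt^2/(2r+2Dt/3))$, and a union bound over the at most $|\cF|$ functions of $\cF_r^\circ$ together with tail integration produces
\begin{equation*}
\popR_n^{\rho}(\cF_r^\circ)\ =\ \E\max_{g\in\cF_r^\circ}Y_g\ \leq\ \sqrt{\tfrac{2r\log|\cF|}{n}}\ +\ \tfrac{D\log|\cF|}{3n}.
\end{equation*}
A short case analysis then concludes: whether $\sqrt{r\log|\cF|/n}\geq D\log|\cF|/n$ or not, the right-hand side is at most $(\sqrt{2}+\tfrac{1}{3})\max\{\sqrt{r\log|\cF|/n},D\log|\cF|/n\}<2\max\{\sqrt{r\log|\cF|/n},D\log|\cF|/n\}=\psi(r)$.

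For the sub-root property, monotonicity of $\psi$ is clear since each term inside the $\max$ is nondecreasing in $r$; similarly $\psi(r)/\sqrt{r}$ is the maximum of $2\sqrt{\log|\cF|/n}$ (constant) and $2D\log|\cF|/(n\sqrt{r})$ (nonincreasing), hence nonincreasing. The fixed point follows from a two-case analysis on which argument attains the $\max$ at $r^\star$: the $\sqrt{r}$ branch is active iff $D\leq 2$ and then $r^\star=4\log|\cF|/n$; the constant branch is active iff $D>2$ and yields $r^\star=2D\log|\cF|/n$; the two cases combine to $r^\star=2(D\vee 2)\log|\cF|/n$.

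The main obstacle is the Bernstein--Massart step, where I must invoke Bernstein under the joint law of $\sigma$ and $x$ so that the variance proxy is the \emph{population} quantity $\E_\rho g^2\leq r$ rather than the random sample variance $\tfrac{1}{n}\sum g(x_i)^2$, which need not concentrate around $r$ when $|\cF|$ is large; this exchange is legitimate precisely because $\sigma_i$ and $x_i$ are independent. The remaining work is constant chasing, and matching the factor of $2$ in $\psi$ just barely works out since $\sqrt{2}+\tfrac{1}{3}<2$.
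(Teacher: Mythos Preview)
Your proposal is correct and follows essentially the same argument as the paper. Both proofs hinge on the same key observation you flag: viewing $\sigma_i(f-f^\circ)(x_i)$ as an i.i.d.\ sequence under the \emph{joint} law of $(\sigma_i,x_i)$ so that the variance proxy is the population quantity $\|f-f^\circ\|_\rho^2\le r$. The paper carries out the Chernoff step by hand---it proves a Bennett-type MGF lemma and then optimizes the free parameter $\lambda$ in two regimes (taking $\lambda=\sqrt{\log|\cF|/(rn)}$ when $r\ge D^2\log|\cF|/n$ and $\lambda=1/D$ otherwise)---whereas you package the same computation as ``Bernstein plus the sub-gamma maximal inequality,'' arriving at $\sqrt{2r\log|\cF|/n}+\tfrac{D}{3n}\log|\cF|$ and then bounding by $2\max\{\cdot,\cdot\}$. (One small quibble: the constants $\sqrt{2}$ and $1/3$ you quote come from the MGF/Chernoff derivation of the maximal inequality, not literally from tail integration; but the cited bound is standard and correct.) For \eqref{finite_generalRC} the paper simply specializes \eqref{finite_0} with $f^\circ=0$ and $r=D^2$, while you invoke Massart directly; both are fine. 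Your sub-root and fixed-point analysis matches the paper's (which just asserts the conclusion as ``easy to see'').
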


We remark that \Cref{prop:R_bound_finite} is a corollary of \Cref{lemma:finite} with $D := H$.

%\finite*

In order to prove \Cref{lemma:finite}, we first present a preliminary lemma that will be used later. See \Cref{lemma:MGF}.
\begin{lemma} \label{lemma:MGF}
	Suppose a random variable $X$ satisfies $|X|\le D$ and $\E[X] = 0$. Then for any $\lambda >0$, we have 
	\begin{equation} \label{MGF_0} \E[e^{\lambda X}] \le \exp\left\{\lambda^2{\rm Var}[X] \left( \frac{e^{\lambda D}-1-\lambda D}{\lambda^2D^2}\right)\right\}.\end{equation}
\end{lemma}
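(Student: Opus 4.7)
The plan is to prove this via a Taylor expansion of $e^{\lambda X}$ combined with the standard bounded-moment trick. First I would write
\[ \E[e^{\lambda X}] = 1 + \lambda \E[X] + \sum_{k=2}^{\infty} \frac{\lambda^k}{k!} \E[X^k] = 1 + \sum_{k=2}^{\infty} \frac{\lambda^k}{k!} \E[X^k], \]
where the linear term vanishes by the hypothesis $\E[X] = 0$ and interchange of sum and expectation is justified by dominated convergence (each summand is bounded by $(\lambda D)^k/k!$, which is summable).

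Next I would control the higher moments via the boundedness $|X| \le D$: for all $k \ge 2$, $|X|^k \le D^{k-2} X^2$, so $\E[X^k] \le D^{k-2}\,\Var[X]$ (using $\Var[X] = \E[X^2]$). Substituting this bound gives
\[ \E[e^{\lambda X}] \le 1 + \Var[X] \sum_{k=2}^{\infty} \frac{\lambda^k D^{k-2}}{k!} = 1 + \frac{\Var[X]}{D^2}\bigl(e^{\lambda D} - 1 - \lambda D\bigr). \]

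Finally, I would apply the elementary inequality $1 + u \le e^{u}$ for $u \ge 0$ (noting that $e^{\lambda D} - 1 - \lambda D \ge 0$ for $\lambda, D > 0$) to conclude
\[ \E[e^{\lambda X}] \le \exp\!\left\{\frac{\Var[X]}{D^2}\bigl(e^{\lambda D} - 1 - \lambda D\bigr)\right\} = \exp\!\left\{\lambda^2 \Var[X]\cdot \frac{e^{\lambda D} - 1 - \lambda D}{\lambda^2 D^2}\right\}, \]
which is exactly the claimed form after factoring $\lambda^2$ inside the exponent. There is no substantive obstacle here; the only subtle point is ensuring the moment inequality $\E[X^k] \le D^{k-2}\Var[X]$ is applied correctly (it genuinely requires $k \ge 2$, which is why the $k=0, 1$ terms are handled separately above).
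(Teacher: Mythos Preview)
Your proof is correct and follows essentially the same route as the paper's. The only cosmetic difference is that the paper bounds the remainder $e^{\lambda X}-1-\lambda X$ in one stroke via the monotonicity of $x\mapsto \frac{e^{x}-1-x}{x^{2}}$ (together with $X\le D$), whereas you expand the Taylor series and bound each moment $\E[X^{k}]$ term-by-term using $|X|^{k}\le D^{k-2}X^{2}$; both arrive at the same intermediate inequality $\E[e^{\lambda X}]\le 1+\Var[X]\cdot\frac{e^{\lambda D}-1-\lambda D}{D^{2}}$ before invoking $1+u\le e^{u}$.
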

\begin{proof}% [Proof of \Cref{lemma:MGF}] 
	Note that $X \leq D$ and the mapping $x \mapsto \frac{e^x-1-x}{x^2}$ is nondecreasing, therefore, $\frac{e^{\lambda X} - 1 - \lambda X}{\lambda^2 X^2} \leq \frac{e^{\lambda D} - 1 - \lambda D}{\lambda^2 D^2}$. It follows that
	\begin{equation} \label{MGF_1} \mathbb{E}[e^{\lambda X}] = 1 + \lambda \mathbb{E}[X] +  \lambda^2 \mathbb{E}\bigg[ X^2 \bigg(\frac{e^{\lambda X} - 1 - \lambda X}{\lambda^2 X^2}\bigg)\bigg] \leq 1 + \lambda^2 {\rm Var}[X] \bigg(\frac{e^{\lambda D} - 1 - \lambda D}{\lambda^2 D^2}\bigg), \end{equation}
	where we have used the fact $\mathbb{E}[X] = 0$. Since $1 + x \leq e^x$ for any $x \in \mathbb{R}$, \cref{MGF_1} implies \cref{MGF_0}.
\end{proof}
We are now ready to prove \Cref{lemma:finite}.
\begin{proof}[Proof of \Cref{lemma:finite}]
	We can easily see that \cref{finite_generalRC} is a corollary of \cref{finite_0} by letting $f^{\circ} = 0$ and $r = D^2$, therefore, we focus on proving \cref{finite_0}.
	By definition of Rademacher complexity and the symmetry of Rademacher variables, we have
	\begin{align*} \popR_n^{\rho} \big( \big\{ f \in \cF  \bigm|  \|f - f^{\circ}\|_{\rho}^2 \leq r \big\} \big) = & \popR_n^{\rho} \big( \big\{ f - f^{\circ} \in \cF  \bigm|  \|f - f^{\circ}\|_{\rho}^2 \leq r \big\} \big) \\ = & \mathbb{E} \max\Bigg\{ \frac{1}{n} \sum_{i=1}^n \sigma_i \big( f(X_i) - f^{\circ}(X_i) \big)  \biggm|  f \in \cF, \|f - f^{\circ}\|_{\rho}^2 \leq r \Bigg\}. \end{align*}
	For any $\lambda > 0$, it holds that
	\begin{equation} \label{finite_1} \begin{aligned} \popR_n^{\rho} \big( \big\{ f \in \cF  \bigm|  \|f - f^{\circ}\|_{\rho}^2 \leq r \big\} \big) = & \frac{1}{\lambda n} \mathbb{E} \log \max_{\begin{subarray}{c}f \in \cF : \\ \|f-f^{\circ}\|_{\rho}^2 \leq r\end{subarray}} \exp\Bigg\{ \lambda \sum_{i=1}^n \sigma_i \big(f(X_i) - f^{\circ}(X_i) \big)\Bigg\} \\ \leq & \frac{1}{\lambda n} \mathbb{E} \log \sum_{\begin{subarray}{c}f \in \cF : \\ \|f-f^{\circ}\|_{\rho}^2 \leq r\end{subarray}} \exp\Bigg\{ \lambda \sum_{i=1}^n \sigma_i \big(f(X_i) - f^{\circ}(X_i) \big)\Bigg\} \\ \leq & \frac{1}{\lambda n} \log \sum_{\begin{subarray}{c}f \in \cF : \\ \|f-f^{\circ}\|_{\rho}^2 \leq r\end{subarray}} \mathbb{E} \exp\Bigg\{ \lambda \sum_{i=1}^n \sigma_i \big(f(X_i) - f^{\circ}(X_i) \big)\Bigg\}, \end{aligned} \end{equation}
	where the last line is due to Jensen's inequality. Since $(\sigma_1, X_1), \ldots, (\sigma_n, X_n)$ are {\it i.i.d.} samples,
	\begin{equation} \label{finite_2} \mathbb{E} \exp\Bigg\{ \lambda \sum_{i=1}^n \sigma_i \big(f(X_i) - f^{\circ}(X_i) \big)\Bigg\} = \Big( \mathbb{E} \exp \big\{ \lambda \sigma_1 \big( f(X_1) - f^{\circ}(X_1) \big) \big\} \Big)^n. \end{equation}
	Note that $\big| \sigma_1 \big( f(X_1) - f^{\circ}(X_1) \big) \big| \leq D$ and $\mathbb{E}\big[ \sigma_1 \big( f(X_1) - f^{\circ}(X_1) \big) \big] = 0$ since $\mathbb{E}[\sigma_1] = 0$. For any $f \in \cF$ such that $\|f-f^{\circ}\|_{\rho}^2 \leq r$, we have ${\rm Var}\big[ \sigma_1 \big( f(X_1) - f^{\circ}(X_1) \big) \big] = \mathbb{E} \big[ \big( f(X_1) - f^{\circ}(X_1) \big)^2 \big] = \|f-f^{\circ}\|_{\rho}^2 \leq r$. We apply \Cref{lemma:MGF} and derive that
	\begin{equation} \label{finite_3} \mathbb{E} \exp \big\{ \lambda \sigma_1 \big( f(X_1) - f^{\circ}(X_1) \big) \big\} \leq \exp\bigg\{ \lambda^2 r \left( \frac{e^{\lambda D}-1-\lambda D}{\lambda^2D^2}\right) \bigg\}. \end{equation}
	Combining \cref{finite_1,finite_2,finite_3}, we obtain
	\begin{equation} \label{finite_4} \begin{aligned} \popR_n^{\rho} \big( \big\{ f \in \cF  \bigm|  \|f - f^{\circ}\|_{\rho}^2 \leq r \big\} \big) \leq & \frac{1}{\lambda n} \log \sum_{\begin{subarray}{c}f \in \cF : \\ \|f-f^{\circ}\|_{\rho}^2 \leq r\end{subarray}} \Big( \mathbb{E} \exp \big\{ \lambda \sigma_1 \big( f(X_1) - f^{\circ}(X_1) \big) \big\} \Big)^n \\ \leq & \frac{1}{\lambda n} \log \Bigg( |\cF| \exp \bigg\{ n \lambda^2 r \bigg( \frac{e^{\lambda D} - 1 - \lambda D}{\lambda^2 D^2} \bigg) \bigg\} \Bigg) \\ = & \frac{\log |\cF|}{\lambda n} + \lambda r \bigg( \frac{e^{\lambda D} - 1 - \lambda D}{\lambda^2 D^2} \bigg). \end{aligned} \end{equation}
	
	For $r \geq \frac{D^2 \log|\cF|}{n}$, by letting $\lambda := \sqrt{\frac{\log |\cF|}{r n}}$, \cref{finite_4} implies
	$\popR_n^{\rho} \big( \big\{ f \in \cF  \bigm|  \|f - f^{\circ}\|_{\rho}^2 \leq r \big\} \big) \leq 2 \sqrt{\frac{r \log |\cF|}{n}}$, where we have used the fact $\frac{e^x-1-x}{x^2} \leq 1$ for any $x \leq 1$.
	When $0 \leq r < \frac{D^2 \log|\cF|}{n}$, by letting $\lambda := \frac{1}{D}$, \cref{finite_4} ensures
	$\popR_n \big( \big\{ f \in \cF  \bigm|  P(f - f^{\circ})^2 \leq r \big\} \big) \leq \frac{2D \log|\cF|}{n}$.
	Integrating the pieces, we complete the proof of \cref{finite_0}.
	
	It is easy to see that the right hand side of \cref{finite_0} is a sub-root function with positive fixed point $\frac{2(D \vee 2) \log|\cF|}{n}$.
\end{proof}

\subsection{Linear Space (Proposition~\ref{prop:R_bound_linear})}

\begin{lemma}[Full version of \Cref{prop:R_bound_linear}] \label{lemma:linear}
	Let $\phi: \cS \times \cA \rightarrow \mathbb{R}^d$ be a feature map to $d$-dimensional Euclidean space and $\rho$ be a distribution over $\cS \times \cA$. Consider a function class
	\[ \cF = \big\{ f = w^{\top} \phi \bigm| w \in \mathbb{R}^d, \|f\|_{\rho}^2 \leq B \big\}, \]
	where $B > 0$. It holds that
	\[ \popR_n^{\rho}(\cF) \leq \sqrt{\frac{2Bd}{n}}. \]
	For any $f^{\circ} \in \cF$, we have
	\[ \popR_n^{\rho}\big( \big\{ f \in \cF \bigm| \|f - f^{\circ}\|_{\rho}^2 \leq r \big\} \big) \leq \psi(r), \quad \text{where $\psi(r) := \sqrt{\frac{2rd}{n}}$}. \]
	$\psi$ is sub-root and has a positive fixed point
	\[ r^{\star} = \frac{2d}{n}. \]
\end{lemma}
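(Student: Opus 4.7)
My plan is to parametrize $\cF$ by $w \in \mathbb{R}^d$ with $f = w^\top \phi$, observe that the constraint $\|f\|_\rho^2 \leq B$ is equivalent to $w^\top \Sigma w \leq B$ for $\Sigma := \E_\rho[\phi \phi^\top]$, and then bound the Rademacher complexity by Cauchy--Schwarz in the $\Sigma$-inner product together with Jensen's inequality. Concretely, I will write
\[ \popR_n^\rho(\cF) = \E_X \E_\sigma \sup_{w^\top \Sigma w \leq B} \frac{1}{n} w^\top \!\sum_{i=1}^n \sigma_i \phi(x_i), \]
and use the inequality $w^\top v \leq \sqrt{w^\top \Sigma w}\,\sqrt{v^\top \Sigma^\dagger v}$ (valid whenever $v$ lies in the range of $\Sigma$, which is automatic here since each $\phi(x_i)$ a.s.\ lies in that range under $\rho$). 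This yields
\[ \popR_n^\rho(\cF) \leq \sqrt{B}\cdot \E_X \E_\sigma \frac{1}{n}\sqrt{\Big(\textstyle\sum_i \sigma_i \phi(x_i)\Big)^\top \Sigma^\dagger \Big(\textstyle\sum_i \sigma_i \phi(x_i)\Big)}. \]

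Next, Jensen's inequality pushes the expectation inside the square root, and independence of the $\sigma_i$'s (which are mean-zero with $\E[\sigma_i \sigma_j] = \delta_{ij}$) collapses the double sum to a single sum, giving
\[ \popR_n^\rho(\cF) \leq \sqrt{B}\cdot \sqrt{\tfrac{1}{n}\E_{x\sim \rho}[\phi(x)^\top \Sigma^\dagger \phi(x)]} = \sqrt{B}\cdot \sqrt{\tfrac{1}{n}\tr(\Sigma^\dagger \Sigma)} \leq \sqrt{Bd/n}, \]
which is even a bit tighter than the stated $\sqrt{2Bd/n}$.

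For the local Rademacher complexity, I will use the standard ``recentering'' trick: since the Rademacher random variables have mean zero, for any fixed $f^\circ$,
\[ \popR_n^\rho\!\big(\{f\in\cF : \|f - f^\circ\|_\rho^2 \le r\}\big) = \popR_n^\rho\!\big(\{f - f^\circ : f \in \cF,\ \|f - f^\circ\|_\rho^2 \le r\}\big). \]
Since $f - f^\circ = (w - w_\circ)^\top \phi$, the recentered class is contained in $\{u^\top \phi : u^\top \Sigma u \leq r\}$, and the same Cauchy--Schwarz/Jensen argument now with $B$ replaced by $r$ yields the bound $\sqrt{rd/n} \leq \sqrt{2rd/n} = \psi(r)$.

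Finally, I will verify that $\psi(r) = \sqrt{2rd/n}$ is a sub-root function ($\psi$ is nondecreasing, and $\psi(r)/\sqrt{r} = \sqrt{2d/n}$ is constant, hence nonincreasing), and solve the fixed-point equation $r^\star = \sqrt{2 r^\star d / n}$ to get $r^\star = 2d/n$. The only mild subtlety is ensuring the argument works when $\Sigma$ is singular; I will handle this by restricting attention to the range of $\Sigma$ (on which representatives of every $f \in \cF$ live $\rho$-a.s.) and interpreting $\Sigma^\dagger$ as the Moore--Penrose pseudoinverse, so that $\tr(\Sigma^\dagger \Sigma) = \rank(\Sigma) \leq d$ throughout.
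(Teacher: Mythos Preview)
Your proof is correct and takes a genuinely different route from the paper. The paper proves this lemma by \emph{reduction to the kernel case}: it assumes without loss of generality that $\phi$ is orthonormal in $L^2(\rho)$, defines the kernel $k((s,a),(s',a')) = \phi(s,a)^\top \phi(s',a')$, observes that then $\|\cdot\|_\rho = \|\cdot\|_{\mathcal{K}}$, and invokes the kernel bound (their Lemma for RKHS, which ultimately rests on Mendelson's Theorem~41). Since the integral operator has exactly $d$ nonzero eigenvalues, the kernel bound specializes to $\sqrt{2Bd/n}$ and $\sqrt{2rd/n}$.

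Your argument is more elementary and fully self-contained: you compute the supremum explicitly via Cauchy--Schwarz in the $\Sigma$-geometry and then apply Jensen. This avoids the RKHS machinery and Mendelson's theorem entirely, and it even yields the sharper constant $\sqrt{Bd/n}$ for the global complexity (the factor of $2$ in the paper comes from the triangle inequality $\|f-f^\circ\|_{\mathcal{K}} \le 2D$ used in the kernel lemma, which your direct approach sidesteps). The paper's route, on the other hand, has the conceptual benefit of exhibiting the linear case as a specialization of the kernel case, so one proof serves both propositions. Your handling of the singular-$\Sigma$ case via the pseudoinverse and the observation that $\phi(x)\in\operatorname{range}(\Sigma)$ $\rho$-a.s.\ is correct and matches the level of care in the paper's sparse-linear proof, where the same device appears.
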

\Cref{prop:R_bound_linear} in \Cref{sec:examples} is a corollary to \Cref{lemma:linear}.
In \Cref{prop:R_bound_linear}, conditions $\|w\| \leq H$ and $\| \phi(s,a) \| \leq 1$ ensure $\|f\|_{\infty} \leq H$ for $f = w^{\top} \phi$ and therefore $\|f\|_{\rho}^2 \leq H^2$. By letting $B := H^2$ in \Cref{lemma:linear}, we obtain \Cref{prop:R_bound_linear}.

%\linear*

\begin{proof}[Proof of \Cref{lemma:linear}]
	\Cref{lemma:linear} can be viewed as a consequence of \Cref{lemma:kernel_FQI} in \Cref{sec:kernel}. Without loss of generality, suppose that $\phi$ is orthonormal in $L^2(\rho)$, that is, $\int_{\cS \times \cA} \phi_i(s,a) \phi_j(s,a) \rho(s,a) {\rm d}s {\rm d}a = \begin{cases}
		1 & \text{if $i = j$}, \\ 0 & \text{if $i \neq j$}.
	\end{cases}$
	Define a kernel function $k\big( (s,a), (s',a') \big) = \phi(s,a)^{\top} \phi(s',a')$. The RKHS associated with kernel $k$ is the linear space spanned by $\phi$ endorsed with inner product $\langle f, f' \rangle_{\mathcal{K}} := w^{\top} w'$ for $f = \phi^{\top} w$, $f' = \phi^{\top} w'$. In this way, we have $\|\cdot\|_{\rho} = \|\cdot\|_{\mathcal{K}}$. For any $f \in \cF$, $\|f\|_{\rho}^2 \leq B$ implies $\|f\|_{\mathcal{K}} \leq \sqrt{B}$. We apply the results in \Cref{lemma:kernel_FQI} with $D = \sqrt{B}$. It follows that $\popR_n^{\rho}(\cF) \leq \sqrt{\frac{2B}{n} \sum_{i=1}^{\infty} 1 \wedge (4\lambda_i)} \leq \sqrt{\frac{2Bd}{n}}$ and $\popR_n^{\rho}\big( \big\{ \bfun \in \bFun \bigm| \|f - f^{\circ}\|_{\rho}^2 \leq r \big\} \big) \leq \sqrt{\frac{2}{n} \sum_{i=1}^{\infty} r \wedge \big( 4B\lambda_k \big)} \leq \sqrt{\frac{2rd}{n}}$ since $\lambda_i = 0$ for $i > d$.
\end{proof}

\subsection{Kernel Class (Proposition~\ref{prop:R_bound_kernel})} \label{sec:kernel}

We now consider kernel class, that is, a sphere in an RKHS $\mathcal{H}$ associated with a positive definite kernel $k: \cX \times \cX \rightarrow \mathbb{R}$. In our paper, $\cX = \cS \times \cA$. Let $\rho$ be a distribution over $\cX$. We are interested in Rademacher complexities of function class
\begin{align} \label{eq:def_kernel_F} \cF = \big\{ f \in \mathcal{H}  \bigm|  \|f\|_{\mathcal{K}} \leq D, \|f\|_{\rho}^2 \leq B \big\}. \end{align}
Here, $\|\cdot\|_{\mathcal{K}}$ denotes the RKHS norm and $D, B \geq 0$ are some constants. Suppose that $\E_{\rho} k(X,X) < \infty$ for $X \sim \rho$. We define an integral operator $\mathscr{T}: L^2(\rho) \rightarrow L^2(\rho)$ as 
\[ \mathscr{T}f = \int k(\cdot, y) f(y) \rho(y) {\rm d}y. \]
It is easy to see that $\mathscr{T}$ is positive semidefinite and trace-class. Let $\{ \lambda_i \}_{i=1}^{\infty}$ be the eigenvalues of $\mathscr{T}$, arranging in a nonincreasing order. By using these eigenvalues, we have an estimate for (local) Rademacher complexities of $\cF$
in \Cref{lemma:kernel_FQI} below.

\begin{lemma}[Full version of \Cref{prop:R_bound_kernel}] \label{lemma:kernel_FQI}
	For function class $\cF$ defined in \cref{eq:def_kernel_F}, we have
	\begin{equation} \label{eq:linear_general} \popR_n^{\rho} ( \cF ) \leq \sqrt{\frac{2}{n} \sum_{i=1}^{\infty} B \wedge ( 4D^2 \lambda_i)}.
	\end{equation}
	Let $f^{\circ}$ be an arbitrary function in $\cF$. The local Rademacher complexity around $f^{\circ}$ satisfies
	\begin{align} \label{eq:linear_local} \popR_n^{\rho} \big( \big\{ f \in \cF  \bigm|  \|f - f^{\circ}\|_{\rho}^2 \leq r \big\} \big) \leq \psi(r), \quad \text{where } \psi(r) := \sqrt{\frac{2}{n} \sum_{i=1}^{\infty} r \wedge \big( 4D^2 \lambda_i \big)}. \end{align}
	$\psi$ is a sub-root function with positive fixed point \vspace{-.2em}
	\begin{equation} \label{eq:linear} r^{\star} \leq 2 \min_{j \in \mathbb{N}} \left\{ \frac{j}{n} + D \sqrt{\frac{2}{n} \sum_{i=j+1}^{\infty} \lambda_i} \right\}. \end{equation}
\end{lemma}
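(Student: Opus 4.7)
The plan is to pass through Mercer's theorem and expand functions in the eigenbasis of $\mathscr{T}$. Since $\mathscr{T}$ is positive semidefinite and trace-class, there exist eigenfunctions $\{\phi_i\}$ that form an orthonormal system in $L^2(\rho)$ such that $k(x,y) = \sum_i \lambda_i \phi_i(x)\phi_i(y)$. Writing $f = \sum_i a_i \phi_i$, standard RKHS facts give $\|f\|_\rho^2 = \sum_i a_i^2$ and $\|f\|_\mathcal{K}^2 = \sum_i a_i^2/\lambda_i$, so the class $\cF$ corresponds to sequences $a$ satisfying $\sum_i a_i^2 \le B$ and $\sum_i a_i^2/\lambda_i \le D^2$.

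For \cref{eq:linear_general}, I will rewrite $\frac{1}{n}\sum_j \sigma_j f(X_j) = \sum_i a_i U_i$ where $U_i := \frac{1}{n}\sum_j \sigma_j \phi_i(X_j)$, and pick weights $w_i := B \wedge (D^2\lambda_i)$. The key calculation is that for any feasible $a$, $\sum_i a_i^2/w_i \le 2$: split indices into $\{i : w_i = B\}$ and $\{i : w_i = D^2\lambda_i\}$; on the first set the partial sum is bounded by $\sum_i a_i^2/B \le 1$, and on the second by $\sum_i a_i^2/(D^2\lambda_i) \le 1$. Then Cauchy--Schwarz gives $\sum_i a_i U_i \le \sqrt{2}\,\sqrt{\sum_i w_i U_i^2}$. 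Taking expectation, applying Jensen, and using $\E U_i^2 = \frac{1}{n}\|\phi_i\|_\rho^2 = \frac{1}{n}$ yields the bound (with $B\wedge D^2\lambda_i$, which is tighter than, hence implies, $B\wedge 4D^2\lambda_i$).

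For \cref{eq:linear_local}, I will use the symmetry of the Rademacher variables to rewrite the complexity in terms of $\{f - f^{\circ} : f \in \cF, \|f-f^{\circ}\|_\rho^2 \le r\}$. Since $f^{\circ} \in \cF$, the triangle inequality gives $\|f-f^{\circ}\|_\mathcal{K} \le 2D$, so this set is contained in the kernel ball with parameters $(2D, r)$ in place of $(D, B)$. Applying the global bound to this shifted class produces exactly $\sqrt{\frac{2}{n}\sum_i r \wedge (4D^2\lambda_i)}$ (this is where the factor $4$ enters). Sub-rootness of $\psi$ is immediate: $\psi$ is nondecreasing, while $\psi(r)^2/r = \frac{2}{n}\sum_i \min(1, 4D^2\lambda_i/r)$ is nonincreasing term-by-term.

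For \cref{eq:linear}, the positive fixed point satisfies $(r^\star)^2 = \frac{2}{n}\sum_i \min(r^\star, 4D^2\lambda_i)$. For any $j \in \mathbb{N}$, upper bound the right-hand side by keeping $r^\star$ on the first $j$ coordinates and $4D^2\lambda_i$ on the tail, which yields the quadratic inequality $(r^\star)^2 \le \frac{2j}{n}r^\star + \frac{8D^2}{n}\sum_{i > j}\lambda_i$. Solving and using $\sqrt{a+b} \le \sqrt{a}+\sqrt{b}$ then gives $r^\star \le \frac{2j}{n} + 2D\sqrt{\frac{2}{n}\sum_{i > j}\lambda_i}$, and taking the minimum over $j$ finishes. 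No step is really difficult; the only mildly delicate point is the weight choice $w_i = B \wedge D^2\lambda_i$ and verifying $\sum a_i^2/w_i \le 2$, which is what lets a single Cauchy--Schwarz handle both constraints simultaneously rather than having to split the index set explicitly.
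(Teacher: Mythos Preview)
Your argument is correct. Steps~2 and~3 of your plan---the shift $f \mapsto f - f^{\circ}$ via Rademacher symmetry, the triangle inequality $\|f - f^{\circ}\|_{\mathcal{K}} \le 2D$, and the quadratic-inequality computation of the fixed point---are exactly what the paper does. The genuine difference is in how the core inequality $\popR_n^\rho(\{f : \|f\|_{\mathcal{K}} \le D', \|f\|_\rho^2 \le B'\}) \le \sqrt{\frac{2}{n}\sum_i B' \wedge (D')^2\lambda_i}$ is obtained: the paper invokes Mendelson's Theorem~41 as a black box and then rescales, whereas you prove it directly via Mercer expansion and a single weighted Cauchy--Schwarz with the clever weight choice $w_i = B \wedge D^2\lambda_i$. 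Your route is more elementary and self-contained, and in fact yields a slightly sharper global bound (with $D^2\lambda_i$ rather than $4D^2\lambda_i$), because you do not pass through the centering step for \cref{eq:linear_general}; the paper only gets the factor $4$ in both bounds because it derives the global statement from the local one by setting $r = B$. The trade-off is that citing Mendelson sidesteps the measure-theoretic justification of the Mercer expansion (existence of the orthonormal eigenbasis, termwise manipulation of the series), which your sketch is implicitly assuming goes through.
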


	In \Cref{prop:R_bound_kernel}, we assume that $k(x,x) \leq 1$ for any $x \in \cX$ and $\|f\|_{\mathcal{K}} \leq H$ for any $f \in \cF$. It is then guaranteed that $|f(x)| = \big| \langle f, k(\cdot,x) \rangle_{\mathcal{K}} \big| \leq \| f \|_{\mathcal{K}} \big\| k(\cdot,x) \big\|_{\mathcal{K}} = \| f \|_{\mathcal{K}} \sqrt{ k(x,x) } \leq H$, which further implies $\|f\|_{\rho}^2 \leq H^2$. To this end, \Cref{prop:R_bound_kernel} is a consequence of \cref{lemma:kernel_FQI} by taking $D := H$ and $B := H^2$.
	
%	\kernel*

	We remark on the rate of $r^{\star}$ with respect to sample size $n$. 
	Firstly, it is evident that $r^{\star} \lesssim n^{-\frac{1}{2}}$.
	When $\lambda_i \lesssim i^{-\alpha}$ for $\alpha>1$, $r_h^{\star}$ has order $n^{-\frac{\alpha}{1+\alpha}}$ which is typical in nonparametric estimation.
	When the eigenvalues $\{ \lambda_i \}_{i=1}^{\infty}$ decay exponentially quickly, {\it i.e.} $\lambda_i \lesssim \exp(-\beta i^{\alpha})$ for $\alpha, \beta > 0$, $r^{\star}$ can be of order $n^{-1}(\log n)^{1/\alpha}$.

Our proof of \Cref{lemma:kernel_FQI} is based on a classical result shown in \Cref{theorem:kernel}.
\begin{theorem}[Theorem 41 in \citet{mendelson2002geometric}] \label{theorem:kernel}
	For every $r > 0$, we have
	\[ \popR_n^{\rho}\big(\big\{ f \in \mathcal{H}  \bigm|  \|f\|_{\mathcal{K}} \leq 1, \|f\|_{\rho}^2 \leq r \big\}\big) \leq \sqrt{ \frac{2}{n} \sum_{i=1}^{\infty} r \wedge \lambda_i}. \]
\end{theorem}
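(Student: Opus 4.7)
The plan is to prove the bound by spectrally decomposing the RKHS via Mercer's theorem and then applying a weighted Cauchy--Schwarz to exploit both constraints $\|f\|_{\mathcal{K}}\le 1$ and $\|f\|_\rho^2 \le r$ simultaneously.

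First, I would invoke Mercer's theorem for the integral operator $\mathscr{T}$ to obtain an orthonormal system $\{\phi_i\}_{i=1}^\infty$ in $L^2(\rho)$ together with eigenvalues $\{\lambda_i\}$, such that every $f\in\mathcal{H}$ admits an expansion $f=\sum_i b_i\phi_i$ with $\|f\|_\rho^2 = \sum_i b_i^2$ and $\|f\|_{\mathcal{K}}^2 = \sum_i b_i^2/\lambda_i$ (omitting indices where $\lambda_i=0$). Under this parametrisation, the constraint set becomes $\mathcal{B}(r)\defeq\{(b_i):\sum_i b_i^2\le r,\ \sum_i b_i^2/\lambda_i\le 1\}$.

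Next, for a sample $X=(X_1,\dots,X_n)$ and Rademacher signs $\{\sigma_j\}$, I would write
\begin{equation*}
\empR_X\big(\{f\in\mathcal{H}:\|f\|_{\mathcal{K}}\le 1,\|f\|_\rho^2\le r\}\big)
= \E_\sigma\Big[\sup_{(b_i)\in\mathcal{B}(r)} \sum_i b_i U_i\Big], \qquad U_i\defeq \frac{1}{n}\sum_{j=1}^n \sigma_j\phi_i(X_j).
\end{equation*}
The key step is a weighted Cauchy--Schwarz with weights $w_i = 1/(r\wedge \lambda_i) = \max(1/r,1/\lambda_i)$. For $(b_i)\in\mathcal{B}(r)$ one has $\sum_i b_i^2 w_i \le \sum_i b_i^2/r + \sum_i b_i^2/\lambda_i \le 2$, so
\begin{equation*}
\Big(\sum_i b_i U_i\Big)^2 \le \Big(\sum_i b_i^2 w_i\Big)\Big(\sum_i U_i^2/w_i\Big) \le 2\sum_i (r\wedge \lambda_i)U_i^2.
\end{equation*}

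Finally, I would take square roots, use Jensen's inequality to pull the expectation inside, take expectation over the data as well, and compute $\E[U_i^2] = \tfrac{1}{n}\E_{X\sim\rho}[\phi_i(X)^2] = \tfrac{1}{n}$ using the $L^2(\rho)$--orthonormality of $\phi_i$, yielding $\popR_n^\rho \le \sqrt{(2/n)\sum_i r\wedge\lambda_i}$. The main technical obstacle is ensuring the Mercer expansion is usable in exactly this form: I need the eigenfunctions to be simultaneously an orthonormal basis of $L^2(\rho)$ and to furnish the RKHS norm via $\sum b_i^2/\lambda_i$. This requires the trace-class condition $\E_\rho k(X,X)<\infty$ (assumed) to justify absolute convergence and the change of order in $\sum_i b_i U_i$; once that is in place, the rest of the argument is clean algebra plus Jensen.
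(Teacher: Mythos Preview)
Your proof is correct and is essentially the standard argument. The paper itself does not prove this statement; it quotes it as Theorem~41 of \citet{mendelson2002geometric} and uses it as a black box to derive Lemma~\ref{lemma:kernel_FQI}. Your Mercer-decomposition-plus-weighted-Cauchy--Schwarz argument is precisely the approach in that reference, so there is nothing to compare.
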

Now we are ready to prove \Cref{lemma:kernel_FQI}.
\begin{proof}[Proof of \Cref{lemma:kernel_FQI}]
	Since \cref{eq:linear_general} is a corollary of \cref{eq:linear_local} by setting $r = B$, we only consider \cref{eq:linear_local,eq:linear}.
	
	Due to the symmetry of Rademacher random variables,
	\begin{equation} \label{eq:kernel_1} \popR_n^{\rho} \big( \big\{ f \in \cF  \bigm|  \| f - f^{\circ} \|_{\rho}^2 \leq r \big\} \big) = \popR_n^{\rho} \big( \big\{ f - f^{\circ} \bigm| f \in \cF, \| f - f^{\circ} \|_{\rho}^2 \leq r \big\} \big). \end{equation}
	Since $\|f\|_{\mathcal{K}} \leq D$ implies $\|f - f^{\circ}\|_{\mathcal{K}} \leq 2D$, we have $\cF \subseteq \big\{ f \in \mathcal{H}  \bigm|  \| f - f^{\circ} \|_{\mathcal{K}} \leq 2 D \big\}$. It follows that
	\[ \begin{aligned} \popR_n^{\rho} \big( \big\{ f \in \cF \bigm| \| f - f^{\circ} \|_{\rho}^2 \leq r \big\} \big) \leq & \popR_n^{\rho} \big( \big\{ f - f^{\circ} \bigm| f \in \mathcal{H}, \|f-f^{\circ}\|_{\mathcal{K}} \leq 2D, \| f - f^{\circ} \|_{\rho}^2 \leq r \big\} \big) \\ = & \popR_n^{\rho} \big( \big\{ f \in \mathcal{H} \bigm|  \|f\|_{\mathcal{K}} \leq 2D, \| f \|_{\rho}^2 \leq r \big\} \big) \\ \overset{f_h':=f_h/(2D)}{=} & 2D \cdot \popR_n^{\rho} \bigg( \bigg\{ f' \in \mathcal{H}  \biggm|  \|f'\|_{\mathcal{K}} \leq 1, \| f' \|_{\rho}^2 \leq \frac{r}{4D^2} \bigg\} \bigg), \end{aligned} \]
	where we have used the translational symmetry of RKHS $\mathcal{H}$. We apply \Cref{theorem:kernel} and derive that
	\[ \begin{aligned} \popR_n^{\rho} \big( \big\{ f \in \cF  \bigm|  \| f - f^{\circ} \|_{\rho}^2 \leq r \big\} \big) \leq & 2D  \sqrt{ \frac{2}{n} \sum_{i=1}^{\infty} \frac{r}{4D^2} \wedge \lambda_i} = \sqrt{ \frac{2}{n} \sum_{i=1}^{\infty} r \wedge \big(4D^2\lambda_i\big)} = \psi(r). \end{aligned} \]
	It is evident that $\psi$ is sub-root. In the following, we estimate the positive fixed point $r^{\star}$ of $\psi$.
	
	If $r \leq r^{\star}$, then $r \leq \psi(r)$, which implies
	\[ r^2 \leq \frac{2}{n} \sum_{i=1}^{\infty} r \wedge \big(4D^2 \lambda_i\big) \leq \frac{2}{n} \Bigg( j r + 4D^2 \sum_{i=j+1}^{\infty} \lambda_i \Bigg) \qquad \text{for any $j \in \mathbb{N}$}. \]
	Solving the quadratic inequality yields
	\[ r \leq \frac{2j}{n} + 2D \sqrt{\frac{2}{n} \sum_{i=j+1}^{\infty} \lambda_i} \qquad \text{for any $j \in \mathbb{N}$}. \]
	It ensures that
	\[ r^{\star} \leq 2 \min_{j \in \mathbb{N}} \left\{ \frac{j}{n} + D \sqrt{\frac{2}{n} \sum_{i=j+1}^{\infty} \lambda_i} \right\}. \]
\end{proof}

\subsection{Sparse Linear Class (Proposition~\ref{prop:R_bound_sparse})}

Let $\phi: \cS \times \cA \rightarrow \mathbb{R}^d$ be a $d$-dimensional feature map and $\rho$ be a distribution over $\cS \times \cA$. We are interested in function class
\[ \cF_s = \big\{ f = w^{\top} \phi  \bigm|  w \in \mathbb{R}^d, \|w\|_0 \leq s, \| f \|_{\rho}^2 \leq B \big\}. \]
In the following, we provide an estimate for (local) Rademacher complexities of $\cF_s$ based on the transportation $T_2$ inequality. \Cref{prop:R_bound_sparse} would be a special case of our result in this part since Gaussian distributions always satisfy $T_2$ inequality.

\paragraph{Notations}
We denote by $\alpha \subseteq [d]$ an index set with $s$ elements.
Let $\mathcal{I} := \big\{ \alpha \subseteq [d]  \bigm|  |\alpha| = s \big\}$.
Note that $|\mathcal{I}| \leq d^s$. For any $\alpha \in \mathcal{I}$, let $\phi_\alpha$ be the subvector of $\phi$ with $ \phi_{\alpha} := (\phi_{\alpha_1}, \phi_{\alpha_2}, \ldots, \phi_{\alpha_s})^{\top}$. Denote covariance matrix $\Sigma := \mathbb{E}_{\rho}\big[ \phi \phi^{\top} \big] \in \mathbb{R}^{d \times d}$. Let $\Sigma_\alpha := \mathbb{E}_{\rho}\big[ \phi_\alpha \phi_\alpha^{\top} \big] \in \mathbb{R}^{s \times s}$ be the principal submatrix of $\Sigma$ with indices given by $\alpha$. % Suppose $\Sigma_{\alpha} \succ 0$ for any $\alpha \in \mathcal{I}$.

We use Orlicz norms $\| \cdot \|_{\psi_1}$ and $\| \cdot \|_{\psi_2}$ in the spaces of random variables. For a real-valued random variable $X$, define $\| X \|_{\psi_1} := \inf\big\{ c > 0  \bigm|  \mathbb{E}[\exp(|X|/c)-1] \leq 1 \big\}$ and $\| X \|_{\psi_2} := \inf\big\{ c > 0  \bigm|  \mathbb{E}[\exp(X^2/c^2)-1] \leq 1 \big\}$. For a random vector $X \in \mathbb{R}^d$, define $\| X \|_{\psi_1} := \sup_{u \in \mathbb{S}^{d-1}} \| u^{\top} X \|_{\psi_1}$ and $\| X \|_{\psi_2} := \sup_{u \in \mathbb{S}^{d-1}} \| u^{\top} X \|_{\psi_2}$.

For any positive semidefinite (PSD) matrix $M \in \mathbb{R}^{d \times d}$, let $M^{\dagger}$ denote its Moore–Penrose inverse and $\sqrt{M^{\dagger}} \in \mathbb{R}^{d \times d}$ be the unique PSD matrix such that $\big(\sqrt{M^{\dagger}}\big)^2 = M^{\dagger}$. We define a $M^{\dagger}$-weighted vector norm $\|\cdot\|_{M^{\dagger}}$ as $\|{\bf x}\|_{M^{\dagger}} = \sqrt{{\bf x}^{\top} M^{\dagger} {\bf x}} := \big\| \sqrt{M^{\dagger}} {\bf x} \big\|_2$ for any ${\bf x} \in \mathbb{R}^d$.

For any two distributions $\mu$ and $\nu$ on a same metric space $(\mathbb{X}, d)$, we say a measure $p(X,Y)$ over $\mathbb{X} \times \mathbb{X}$ is a coupling of $\mu$ and $\nu$ if the marginal distributions of $p$ are $\mu$ and $\nu$ respectively, {\it i.e.} $p(\cdot,\mathbb{X}) = \mu$ and $p(\mathbb{X},\cdot) = \nu$.
The \emph{quadratic Wasserstein metric} of $\mu$ and $\nu$ is defined as
\[ W_2(\mu,\nu) := \inf_{p(X,Y) \in \mathcal{C}(\mu,\nu)} \sqrt{\E[d(X,Y)^2]}, \]
where $\mathcal{C}(\mu,\nu)$ is the collection of all couplings of $\mu, \nu$.

\paragraph{Main results}

Before the statement of main results, we first introduce the notion of $T_2$ property. See \Cref{def_T2} below.

\begin{definition}[$T_2(\sigma)$ distribution] \label{def_T2} Suppose that a probability measure $\rho$ on metric space $(\mathbb{X},d)$ satisfy the \emph{quadratic transportation cost} ($T_2$) inequality
\[ W_2(\rho,\nu) \leq \sqrt{2\sigma^2 \KL{\nu}{\rho}} \qquad \text{for all measures $\nu$ on $\mathbb{X}$}, \]
then we say $\rho$ is a $T_2(\sigma)$ distribution.
\end{definition}
We remark that $T_2$ is a broad class that contains many common distributions as special cases. For example, Gaussian distribution $\mathcal{N}(\cdot, M)$ satisfies $T_2\big(\sqrt{\|M\|_2}\big)$-inequality. Strongly log-concave distributions are $T_2$. Suppose $\rho$ is a continuous measure with a convex and compact support set. If its smallest density is lower bounded within the support, then $\rho$ is $T_2$.

We have an estimate of the (local) Rademacher complexities of $\cF_s$ in Lemma \ref{lemma:ex_sparse}.

\begin{lemma}[Full version of \Cref{prop:R_bound_sparse}] \label{lemma:ex_sparse}
	Suppose that for $X \sim \rho$, the distribution of $\phi_{\alpha}(X) \in \mathbb{R}^s$ satisfies $T_2\big(\sigma(\alpha)\big)$-inequality for any $\alpha \in \mathcal{I}$. Let $\sigma_{\min}^2(\alpha)$ be the smallest positive eigenvalue of $\Sigma_{\alpha} = \E_{\rho} \big[ \phi_{\alpha} \phi_{\alpha}^{\top} \big]$.
	Let $\eta_s$ be a constant such that $\eta_s \geq \sigma(\alpha)/\sigma_{\min}(\alpha)$ for any $\alpha \in \mathcal{I}$.
	There exists a universal constant $c > 0$ such that when $n \geq c s \log d$,
	\[ \popR_n^{\rho} (\cF_s) \leq c(1 + \eta_s) \sqrt{\frac{B s \log d}{n}}. \]
	Moreover, when $n \geq c s \log d$, for any $f^{\circ} \in \cF_s$, the local Rademacher complexity of $\cF_s$ satisfies
	\[ \popR_n^{\rho} \big( \big\{ f \in \cF_s  \bigm|  \|f - f^{\circ}\|_{\rho}^2 \leq r \big\} \big) \leq \psi(r), \qquad \text{with }\psi(r) := c\sqrt{r}(1 + \eta_s) \sqrt{\frac{s\log d}{n}}. \]
	Here, $\psi(r)$ is a sub-root function with a unique positive fixed point \[r^{\star} = c^2 (1 + \eta_s)^2 \cdot \frac{ s \log d}{n}. \]
\end{lemma}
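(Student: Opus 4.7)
The plan is to decompose $\cF_s$ into the $\binom{d}{s}$ linear subclasses indexed by sparsity pattern, reduce each subclass to a closed-form supremum of a linear functional over an ellipsoid, and then combine a variance-based mean bound with $T_2$-based Gaussian concentration, finishing with a union bound over supports.

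Concretely, let $\mathcal{I}$ be the collection of all $s$-subsets of $[d]$, so $|\mathcal{I}|\le d^s$, and define $\cF_\alpha := \{w^\top\phi : \mathrm{supp}(w)\subseteq\alpha,\, w^\top\Sigma_\alpha w \le B\}$. Since $\cF_s=\bigcup_\alpha \cF_\alpha$ and $\sup_{w^\top\Sigma_\alpha w\le B}\, w^\top\zeta = \sqrt{B\,\zeta^\top\Sigma_\alpha^\dagger\zeta}$, the population Rademacher complexity can be rewritten as
\[
\popR_n^\rho(\cF_s) \;=\; \sqrt{B}\cdot\mathbb{E}\max_{\alpha\in\mathcal{I}}\|Z_\alpha\|_{\Sigma_\alpha^\dagger},\qquad Z_\alpha \;:=\; \frac{1}{n}\sum_{i=1}^n\sigma_i\phi_\alpha(X_i).
\]
A direct calculation gives $\mathbb{E}\|Z_\alpha\|_{\Sigma_\alpha^\dagger}^2 = n^{-1}\mathrm{tr}(\Sigma_\alpha^\dagger\Sigma_\alpha)\le s/n$, so Jensen's inequality yields the mean bound $\mathbb{E}\|Z_\alpha\|_{\Sigma_\alpha^\dagger}\le \sqrt{s/n}$. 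For the deviation, I condition on $\sigma$ and view $F_\alpha(y_1,\ldots,y_n) := \|n^{-1}\sum_i\sigma_i y_i\|_{\Sigma_\alpha^\dagger}$ as a function on $(\mathbb{R}^s)^n$; combining the bound $\|u\|_{\Sigma_\alpha^\dagger}\le\sigma_{\min}(\alpha)^{-1}\|u\|_2$ with Cauchy--Schwarz and $\sigma_i^2=1$ shows that $F_\alpha$ is $(\sqrt n\,\sigma_{\min}(\alpha))^{-1}$-Lipschitz in the product Euclidean norm. Tensorization of the quadratic transportation inequality gives that $(\phi_\alpha(X_1),\ldots,\phi_\alpha(X_n))$ satisfies $T_2(\sigma(\alpha))$ on $(\mathbb{R}^s)^n$, so the Bobkov--G\"otze characterization delivers a sub-Gaussian tail for $F_\alpha$ with parameter at most $\sigma(\alpha)/(\sqrt n\,\sigma_{\min}(\alpha))\le \eta_s/\sqrt n$.

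Integrating this sub-Gaussian tail and union-bounding over the $d^s$ supports then produces, for an absolute constant $c$,
\[
\mathbb{E}\max_\alpha \|Z_\alpha\|_{\Sigma_\alpha^\dagger} \;\le\; \sqrt{s/n} + c\,\eta_s\sqrt{s\log d/n} \;\le\; c\,(1+\eta_s)\sqrt{s\log d/n},
\]
which gives the global bound on $\popR_n^\rho(\cF_s)$. For the local version, Rademacher symmetry yields $\popR_n^\rho(\{f\in\cF_s : \|f-f^\circ\|_\rho^2\le r\})=\popR_n^\rho(\{f-f^\circ : f\in\cF_s,\ \|f-f^\circ\|_\rho^2\le r\})$, and the latter set is contained in the $2s$-sparse $\rho$-ball $\{g : \|g\|_0\le 2s,\ \|g\|_\rho^2\le r\}$. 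Rerunning the argument with $2s$ and $r$ in place of $s$ and $B$ (absorbing the factor of $2$ and the transition $\eta_s \to \eta_{2s}$ into the universal constant) produces $\psi(r)$. Finally, $\psi(r) = c\sqrt{r}(1+\eta_s)\sqrt{s\log d/n}$ is sub-root because $\sqrt r$ is nondecreasing and $\psi(r)/\sqrt r$ is constant in $r$, and solving $\psi(r^\star)=r^\star$ gives $r^\star = c^2(1+\eta_s)^2 s\log d/n$.

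The main obstacle is the concentration step: showing that $F_\alpha$ is Lipschitz with a constant uniform in $\sigma$ (which hinges on $\sigma_i^2=1$ and Cauchy--Schwarz to decouple the Rademacher signs from the Euclidean norm on $(\mathbb{R}^s)^n$), applying the tensorization of $T_2$ to the $n$-fold product, and converting the resulting sub-Gaussian deviation into a clean expected-maximum bound after union-bounding over the exponentially many supports.
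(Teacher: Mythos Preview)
Your decomposition $\cF_s=\bigcup_\alpha\cF_\alpha$, the ellipsoidal closed form $\popR_n^\rho(\cF_s)=\sqrt{B}\,\mathbb{E}\max_\alpha\|Z_\alpha\|_{\Sigma_\alpha^\dagger}$, the mean bound $\mathbb{E}\|Z_\alpha\|_{\Sigma_\alpha^\dagger}\le\sqrt{s/n}$, and the local reduction via $f-f^\circ\in\cF_{2s}$ all match the paper exactly. The concentration step is where you diverge: you condition on $\sigma$ and invoke $T_2$ tensorization plus Bobkov--G\"otze to get a pure sub-Gaussian tail. The paper instead conditions on the data $X$, applies Talagrand's convex-distance inequality to the Rademacher randomness (with random Lipschitz constant $n^{-1}\|\Sigma_\alpha^{\dagger/2}\Phi_\alpha\|_2$, which in turn forces a sub-Gaussian operator-norm bound), and then uses Gozlan's theorem for the $X$-fluctuation of $\mathbb{E}_\sigma[Y_\alpha\mid X]$. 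That two-stage route yields a mixed $\psi_1/\psi_2$ tail, which is why the paper needs the auxiliary Lemma~\ref{lemma:sparse_1} and the hypothesis $n\ge cs\log d$ to kill the sub-exponential piece. Your approach is more direct and, if completed, would apparently dispense with both.

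There is, however, a real gap in your deviation step. Conditioning on $\sigma$ and applying $T_2$ concentration gives sub-Gaussianity of $F_\alpha$ around the \emph{conditional} mean $\mathbb{E}_X[F_\alpha\mid\sigma]$, not around the unconditional mean $\mathbb{E}F_\alpha\le\sqrt{s/n}$ that you substitute in the next line. Because the features are not assumed centered, $\mathbb{E}_X[F_\alpha\mid\sigma]$ genuinely depends on $\sigma$: for $\sigma=(1,\ldots,1)$ it is at least $\|\mathbb{E}\phi_\alpha\|_{\Sigma_\alpha^\dagger}$, which is $O(1)$ rather than $O(n^{-1/2})$. To close the argument you must separately control $\mathbb{E}_\sigma\bigl[\max_\alpha\mathbb{E}_X[F_\alpha\mid\sigma]\bigr]$. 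This is recoverable---a direct calculation gives $\mathbb{E}_X[F_\alpha^2\mid\sigma]\le n^{-2}\bigl(ns+(\sum_i\sigma_i)^2\bigr)$ uniformly in $\alpha$ (using $(\mathbb{E}\phi_\alpha)^\top\Sigma_\alpha^\dagger(\mathbb{E}\phi_\alpha)\le 1$), and after Jensen and $\mathbb{E}_\sigma$ this is $O(\sqrt{s/n})$---but your proposal as written omits the step, and the displayed inequality $\mathbb{E}\max_\alpha\|Z_\alpha\|_{\Sigma_\alpha^\dagger}\le\sqrt{s/n}+c\eta_s\sqrt{s\log d/n}$ does not follow from what precedes it.
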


When $\phi(X)$ follows a non-degenerated Gaussian distribution with covariance matrix $\Sigma \in \mathbb{R}^{d \times d}$, we have $\sigma(\alpha) \leq \sqrt{\lambda_{\max}(\Sigma_{\alpha})}$. Since $\E_{\rho}\big[\phi_{\alpha} \phi_{\alpha}^{\top} \big] \succeq \Sigma_{\alpha}$, it also holds that $\sigma_{\min}(\alpha) = \sqrt{\lambda_{\min}\big(\E_{\rho}\big[\phi_{\alpha} \phi_{\alpha}^{\top} \big]\big)} \geq \sqrt{\lambda_{\min}(\Sigma_{\alpha})}$. According to \Cref{lemma:ex_sparse}, we take a parameter $\kappa_s(\Sigma)$ such that $\kappa_s(\Sigma) \geq {\lambda_{\max}(\Sigma_{\alpha})/\lambda_{\min}(\Sigma_{\alpha})} \geq 1$ for all $\alpha \in \mathcal{I}$. In this way, the result in \Cref{lemma:ex_sparse} holds for $\eta_s = \sqrt{\kappa_s(\Sigma)}$ and reduces to \Cref{prop:R_bound_sparse}.

\paragraph{Proof of main results}

In the sequel, we prove \Cref{lemma:ex_sparse}. We first present some preliminary results.

\begin{lemma} \label{lemma:sparse_1}
	For arbitrary random variables $X_1, X_2, \ldots, X_m \geq 0$ ($m \geq 2$) satisfying $\| X_i \mathds{1}_{\{|X_i| \leq R\}} \|_{\psi_2} \leq \kappa_2$ and $\| X_i \mathds{1}_{\{|X_i| > R\}} \|_{\psi_1} \leq \kappa_1$ for $i = 1,2,\ldots,m$ and some parameter $R \geq 1$, we have
	\[ \E \max_{1 \leq i \leq m} X_i \leq c \Big( \kappa_2 \sqrt{\log m} + m (\kappa_1 + R) e^{-cR/\kappa_1}\Big), \]
	where $c > 0$ is a universal constant.
\end{lemma}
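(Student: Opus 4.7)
My plan is to split each $X_i$ into a bounded truncated part plus a heavy tail, bound the maximum of the truncated parts by the standard sub-Gaussian maximal inequality, and control the maximum of the tail parts crudely by summing individual expectations (which are small because the tails are concentrated on $[R,\infty)$ with sub-exponential decay).

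Concretely, for each $i$ define $Y_i := X_i \mathds{1}_{\{|X_i| \le R\}}$ and $Z_i := X_i \mathds{1}_{\{|X_i| > R\}}$, so that $X_i = Y_i + Z_i$ and both are nonnegative. Then $\max_i X_i \le \max_i Y_i + \max_i Z_i$, and it suffices to bound each term. For the truncated term, the hypothesis gives $\|Y_i\|_{\psi_2} \le \kappa_2$; the standard argument via $\mathbb{E}\exp(Y_i^2/\kappa_2^2) \le 2$, Jensen's inequality applied to $t \mapsto \exp(t/\kappa_2^2)$, and a union bound on the MGF yields
\begin{equation*}
\mathbb{E}\max_i Y_i \le \sqrt{\mathbb{E}\max_i Y_i^2} \le \kappa_2 \sqrt{\log(2m)} \lesssim \kappa_2 \sqrt{\log m}
\end{equation*}
(using $m \ge 2$).

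For the tail term, I will use the crude bound $\mathbb{E}\max_i Z_i \le \sum_{i=1}^{m} \mathbb{E} Z_i$ together with the observation that $Z_i$ is either zero or at least $R$. From $\|Z_i\|_{\psi_1} \le \kappa_1$ one obtains the tail estimate $\Pr(Z_i > t) \le 2 e^{-t/\kappa_1}$ for all $t \ge 0$. Splitting the layer-cake integral at $R$ gives
\begin{equation*}
\mathbb{E} Z_i = \int_0^{R} \Pr(Z_i > 0)\, dt + \int_R^{\infty} \Pr(Z_i > t)\, dt \le 2R e^{-R/\kappa_1} + 2\kappa_1 e^{-R/\kappa_1} = 2(R+\kappa_1)\, e^{-R/\kappa_1},
\end{equation*}
where the first integral uses $\Pr(Z_i > 0) = \Pr(|X_i| > R) \le 2e^{-R/\kappa_1}$. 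Summing over $i$ yields $\mathbb{E}\max_i Z_i \le 2m(R+\kappa_1) e^{-R/\kappa_1}$.

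Combining the two bounds gives the claimed inequality with a universal constant. There is no real obstacle here: the only delicate point is making sure the tail bound $\Pr(Z_i > t) \le 2e^{-t/\kappa_1}$ follows cleanly from $\|Z_i\|_{\psi_1} \le \kappa_1$ (which it does by Markov applied to $\mathbb{E}\exp(Z_i/\kappa_1) \le 2$), and that the constant $c$ in the final statement is chosen large enough to absorb both the $\sqrt{\log(2m)}$ factor and the factor of $2$ in the tail estimate.
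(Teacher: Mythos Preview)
Your proposal is correct and follows essentially the same approach as the paper: both split $X_i$ into its truncated and tail parts, bound the maximum of the truncated parts via the Jensen/MGF sub-Gaussian maximal inequality, and bound the tail contribution by a layer-cake integral using the $\psi_1$ tail estimate together with the fact that $Z_i$ is either $0$ or exceeds $R$. The only cosmetic difference is that you pass through $\E\max_i Z_i \le \sum_i \E Z_i$ before doing the layer-cake computation, whereas the paper applies the layer cake to $\max_i Z_i$ directly and union-bounds inside the integral; these yield identical bounds.
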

\begin{proof}
	We first note that $\E \big[\max_{1 \leq i \leq m} X_i\big] \leq U + V$ with $U := \E\big[\max_{1 \leq i \leq m} X_i \mathds{1}_{\{X_i \leq R\}}\big]$ and $V := \E\big[\max_{1 \leq i \leq m} X_i \mathds{1}_{\{|X_i| \geq R\}}\big]$. In what follows, we analyze $U$ and $V$ separately.
	
	By definition of $\psi_2$-norm and our assumption $\big\|X_i \mathds{1}_{\{ |X_i| \leq R \}}\big\|_{\psi_2} \leq \kappa_2$, we have $\E\big[\exp(X_i^2\mathds{1}_{\{|X_i| \leq R\}}/\kappa_2^2)-1\big] \leq 1$ for $i = 1,2,\ldots,m$. It follows that
	\begin{align*} \E \bigg[ \max_{1 \leq i \leq m} \frac{X_i^2 \mathds{1}_{\{|X_i| \leq R\}}}{\kappa_2^2} \bigg] \overset{\begin{subarray}{c} \text{Jensen's} \\ \text{inequality} \end{subarray}}{\leq} & \log \E \bigg[ \max_{1 \leq i \leq m} \exp\Big( \frac{X_i^2 \mathds{1}_{\{|X_i| \leq R\}}}{\kappa_2^2} \Big) \bigg] \\ \leq & \log \Bigg( \sum_{i=1}^m \E \bigg[ \exp\Big(\frac{X_i^2 \mathds{1}_{\{|X_i| \leq R\}}}{\kappa_2^2}\Big) \bigg] \Bigg) \leq \log(2m) \leq 2\log m. \end{align*}
	Therefore, by Jensen's inequality $U = \E\big[ \max_{1 \leq i \leq m} X_i \mathds{1}_{\{|X_i| \leq R\}} \big] \leq \sqrt{\E\big[ \max_{1 \leq i \leq m} X_i^2 \mathds{1}_{\{|X_i| \leq R\}} \big]} \leq \kappa_2\sqrt{2\log m}$.
	
	Recall that $\| X_i \mathds{1}_{\{|X_i|>R\}} \|_{\psi_1} \leq \kappa_1$, which implies there exists a universal constant $c \geq 1$ such that $\P\big( |X_i| \mathds{1}_{\{|X_i| \geq R\}} > t \big) \leq c e^{-ct/\kappa_1}$. Using this fact, we find that
	\begin{align*}
	    V \leq & \E\bigg[ \max_{1 \leq i \leq m} |X_i| \mathds{1}_{\{|X_i| > R\}} \bigg] = \bigg( \int_0^R +\int_R^{\infty} \bigg) \P\Big(\max_{1 \leq i \leq m} |X_i| \mathds{1}_{\{|X_i|>R\}} \geq t \Big) {\rm d}t
	    \\ = & R \P\Big( \max_{1 \leq i \leq m} |X_i| \mathds{1}_{\{|X_i| > R\}} \geq R \Big) + \int_R^{\infty} \P\Big( \max_{1 \leq i \leq m} |X_i| \mathds{1}_{|X_i| > R\}} \geq t \Big) {\rm d}t \\ \overset{\begin{subarray}{c}\text{union} \\ \text{bound} \end{subarray}}{\leq} & m R \P\big( |X_i| \mathds{1}_{\{|X_i| > R\}} \geq R \big) + m \int_R^{\infty} \P\big( |X_i| \mathds{1}_{\{|X_i| > R\}} \geq t \big) {\rm d}t \\ \leq & m R \cdot c e^{-cR/\kappa_1} + m \int_R^{\infty} c e^{-ct/\kappa_1} {\rm d}t = c mR e^{-cR/\kappa_1} + m \kappa_1 e^{-cR/\kappa_1} \leq c m(\kappa_1 + R)e^{-cR/\kappa_1}.
	\end{align*}
	
	Integrating the pieces, we finish the proof.
\end{proof}

\begin{lemma} \label{lemma:sparse_2}
    Let $X_1, X_2, \ldots, X_n \in \mathbb{R}^d$ be {\it i.i.d.} random vectors satisfying $T_2(\sigma)$-inequality and $\E[X_1X_1^{\top}] = M \in \mathbb{R}^{d \times d}$. Suppose that $n \geq d$. Let $\sigma_1, \sigma_2, \ldots, \sigma_n$ be Rademacher random variables independent of $X_1, X_2, \ldots, X_n$. Then $Y := \big\| \frac{1}{n} \sum_{k=1}^n \sigma_k X_k \big\|_{M^{\dag}}$ satisfies
    \[ \big\| |Y - \E[Y]| \mathds{1}\big\{|Y-\E[Y]| \leq \big(1+\sigma\textstyle{\sqrt{\|M^{\dagger}\|_2}}\big)\big\} \big\|_{\psi_2} \leq \displaystyle{c\bigg(\frac{1}{\sqrt{n}}+\sigma\sqrt{\frac{\|M^{\dagger}\|_2}{n}}\bigg)} \]
    \[ \text{and} \qquad \qquad \big\| |Y - \E[Y]| \mathds{1}\big\{|Y-\E[Y]| > \big(1+\sigma\textstyle{\sqrt{\|M^{\dagger}\|_2}}\big)\big\} \big\|_{\psi_1} \leq \displaystyle{c\bigg(\frac{1}{n}+\frac{\sigma\sqrt{\|M^{\dagger}\|_2}}{n}\bigg)}. \]
\end{lemma}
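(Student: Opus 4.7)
The assertion is a Bernstein-style concentration for $Y = \|\frac{1}{n}\sum_k \sigma_k X_k\|_{M^\dagger}$, recast as paired $\psi_2$/$\psi_1$ bounds on the truncated and untruncated pieces of $Y - \E[Y]$. The truncation level $R := 1 + \sigma\sqrt{\|M^\dagger\|_2}$ is precisely the transition point at which a Gaussian tail $e^{-t^2/v^2}$ and an exponential tail $e^{-t/b}$ cross, with $v \asymp n^{-1/2}(1 + \sigma\sqrt{\|M^\dagger\|_2})$ and $b \asymp n^{-1}(1 + \sigma\sqrt{\|M^\dagger\|_2})$. I would therefore first establish a single Bernstein tail $\P(|Y-\E[Y]| > t) \le C e^{-c\min(t^2/v^2,\,t/b)}$ and then deduce the two stated $\psi_2$/$\psi_1$ bounds by elementary truncation computations.

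My plan is to split $Y - \E[Y] = (Y - \E_{\vsigma}[Y\mid X]) + (\E_{\vsigma}[Y\mid X] - \E[Y])$, with $X := (X_1,\ldots,X_n)$ and $\vsigma := (\sigma_1,\ldots,\sigma_n)$, and to treat the two summands with distinct tools. For the \emph{$X$-piece}, each fixed $\vsigma$ makes $X \mapsto Y(X,\vsigma)$ Lipschitz of constant $\sqrt{\|M^\dagger\|_2/n}$ in the product $\ell^2$ metric:
\[
|Y(X,\vsigma) - Y(X',\vsigma)| \le \Big\|\tfrac{1}{n}\sum_k \sigma_k(X_k - X_k')\Big\|_{M^\dagger} \le \tfrac{\sqrt{\|M^\dagger\|_2}}{n}\,\Big\|\sum_k \sigma_k(X_k - X_k')\Big\|_2 \le \sqrt{\tfrac{\|M^\dagger\|_2}{n}}\,d_2(X,X'),
\]
by Cauchy--Schwarz, and Lipschitzness is preserved under $\vsigma$-averaging. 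Since each $X_k$ is $T_2(\sigma)$, Talagrand's tensorization theorem yields $T_2(\sigma)$ on the product $\ell^2$ space, so Marton's transport--entropy argument produces $\|\E_{\vsigma}[Y\mid X] - \E[Y]\|_{\psi_2} \lesssim \sigma\sqrt{\|M^\dagger\|_2/n}$, exactly the ``$\sigma$-factor'' in the claimed $\psi_2$ bound.

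For the \emph{$\vsigma$-piece} I would condition on $X$ and invoke convex concentration on the hypercube. The map $f(\vsigma) := \|\frac{1}{n}\sum_k \sigma_k X_k\|_{M^\dagger}$ is convex on $[-1,1]^n$, and setting $u := M^\dagger(\sum_k \sigma_k X_k)$ one computes $\|\nabla_{\vsigma} f\|_2^2 = u^\top(\sum_k X_k X_k^\top) u/\|u\|_M^2 \le \lambda_{\max}(\hat\Sigma)/n^2$, where $\hat\Sigma := M^{-1/2}(\sum_k X_k X_k^\top) M^{-1/2}$. Talagrand's convex concentration on the cube then gives
\[
\P\big(|Y - \E_{\vsigma}[Y\mid X]| > t \,\big|\, X\big) \le C\exp\!\Big(-c\,\tfrac{t^2 n^2}{\lambda_{\max}(\hat\Sigma)}\Big).
\]
To remove the conditioning I would use $T_2(\sigma)$ once more: for $\|w\|_M \le 1$, $w^\top X_k$ has second moment $\le 1$ and is sub-Gaussian with parameter $\lesssim 1 + \sigma\sqrt{\|M^\dagger\|_2}$, so $(w^\top X_k)^2$ is sub-exponential with $\psi_1$-norm $\lesssim 1 + \sigma^2 \|M^\dagger\|_2$. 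Bernstein's inequality combined with a standard $\varepsilon$-net over the unit $M$-ball of $\mathbb{R}^d$ then yields, on an event $\mathcal E$ of probability $\ge 1 - e^{-cd}$ (and hence $\ge 1 - e^{-cn}$ under the hypothesis $n \ge d$ with suitable constants),
\[
\lambda_{\max}(\hat\Sigma) \le C\big(n + d(1 + \sigma^2\|M^\dagger\|_2)\big) \lesssim n\big(1 + \sigma^2\|M^\dagger\|_2\big).
\]

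Combining the two pieces on $\mathcal E$ gives the sub-Gaussian scale $v \lesssim n^{-1/2}(1 + \sigma\sqrt{\|M^\dagger\|_2})$ for $Y - \E[Y]$; truncating at $R = v^2/b \asymp 1 + \sigma\sqrt{\|M^\dagger\|_2}$ produces the first displayed $\psi_2$ bound. The complementary contribution---from $\mathcal E^c$ together with the Bernstein tail beyond $R$---is a sub-exponential random variable of scale $\lesssim (1 + \sigma\sqrt{\|M^\dagger\|_2})/n$, yielding the second displayed $\psi_1$ bound. The main technical hurdle is the simultaneous control of two different concentration mechanisms (convex concentration on the Rademacher cube and transport--cost concentration for the $X_k$'s), coupled through the \emph{random} Lipschitz constant $\sqrt{\lambda_{\max}(\hat\Sigma)}/n$; aligning the ``typical'' sample--covariance event $\mathcal E$ with the exact truncation threshold $R$ is what delivers the paired $\psi_2$/$\psi_1$ bounds with precisely the declared constants.
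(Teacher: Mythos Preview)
Your proposal follows essentially the same route as the paper: the same decomposition into the $\boldsymbol{\sigma}$-piece and the $X$-piece, Talagrand's convex concentration on the hypercube for the former (with random Lipschitz constant $n^{-1}\|\sqrt{M^\dagger}{\bf X}\|_2$), Gozlan/$T_2$ for the latter, and sub-Gaussian matrix concentration to control that random operator norm. The one place where the paper is sharper is the final combination: rather than fixing a single good event $\mathcal{E}$, the paper keeps the full tail $\P\big(\|\sqrt{M^\dagger}{\bf X}\|_2 \ge \sqrt{n}+c(\sqrt{d}+t_2)(1+\sigma\sqrt{\|M^\dagger\|_2})\big)\le ce^{-ct_2^2}$, pairs it with the Talagrand tail in $t_1$ to obtain a two-parameter bound $c(e^{-ct_1^2}+e^{-ct_2^2})$, and then sets $t_2=\sqrt{n}$ when $t_1\le\sqrt{n}$ (giving the $\psi_2$ regime) and $t_2=t_1$ when $t_1>\sqrt{n}$ (giving the $\psi_1$ regime, since the deviation then scales like $t_1^2/n$). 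Your single-event sketch does not quite close the $\psi_1$ part: on $\mathcal{E}^c$ you only have $\P(\mathcal{E}^c)\le e^{-cn}$ with no control of $Y$ itself, which is insufficient for deviations $t\gg R$; the full tail of the operator norm is what supplies the missing decay. (Also a small slip: $n\ge d$ gives $e^{-cd}\ge e^{-cn}$, not the reverse; to reach probability $1-e^{-cn}$ you must take the deviation parameter in the matrix bound of order $\sqrt{n}$ rather than $\sqrt{d}$, which still yields $\lambda_{\max}(\hat\Sigma)\lesssim n(1+\sigma^2\|M^\dagger\|_2)$.)
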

\begin{proof}
    We take shorthands ${\bf X} := [ X_1, X_2, \ldots, X_n ] \in \mathbb{R}^{d \times n}$, $\boldsymbol{\sigma} := (\sigma_1, \ldots, \sigma_n)^{\top} \in \mathbb{R}^n$ and rewrite $Y$ as $Y = \frac{1}{n} \| {\bf X} \boldsymbol{\sigma} \|_{M^{\dagger}} $. Note that $Y - \E Y = \big(Y - \E_{\boldsymbol{\sigma}}[Y \mid {\bf X}]\big) + \big(\E_{\boldsymbol{\sigma}}[Y \mid {\bf X}] - \E Y\big)$. In the following, we analyze these two terms separately.
    
    Note that $\nabla_{\sigma} Y = n^{-1} \|{\bf X}\boldsymbol{\sigma}\|_{M^{\dagger}}^{-1} {\bf X}^{\top} M^{\dagger} {\bf X} \boldsymbol{\sigma}$ and $\| \nabla_{\boldsymbol{\sigma}} Y \|_2 \leq n^{-1} \big\|\sqrt{M^{\dagger}}{\bf X}\big\|_2$, therefore, $Y$ is {($n^{-1} \big\|\sqrt{M^{\dagger}}{\bf X}\big\|_2$)-Lipschitz} with respect to $\sigma$ in the Euclidean norm. Moreover, $Y$ is convex in $\boldsymbol{\sigma}$ and the Rademacher random variables are independent and bounded. We use Talagrand's inequality (See Theorem~4.20 and Corollary~4.23 in \cite{van2014probability}.) and obtain that there exists a universal constant $c > 0$ such that
    \begin{align} \label{eq:sparse_1} \P\Big( \big| Y - \E_{\boldsymbol{\sigma}}[Y \mid {\bf X}] \big| \geq t_1 n^{-1} \big\| \sqrt{M^{\dagger}}{\bf X} \big\|_2 \Bigm| {\bf X} \Big) \leq c e^{-ct_1^2} \qquad \text{for any $t_1 > 0$}. \end{align}
    We next consider the concentration of $\big\| \sqrt{M^{\dagger}}{\bf X} \big\|_2$. For random vector $X$, we define $$\|X\|_{\psi_2} := \sup_{{\bf u}\in\mathbb{R}^d, \|{\bf u}\|_2 \leq 1} \| {\bf u}^{\top} X \|_{\psi_2}.$$ Since $X$ satisfies $T_2(\sigma)$-inequality, according to Gozlan's theorem (Theorem~4.31 in \citet{van2014probability}), we find that $\big\| \sqrt{M^{\dagger}} (X - \E X) \big\|_{\psi_2} \leq c \sigma \sqrt{\|M^{\dagger}\|_2}$ for some universal constant $c > 0$.
    Additionally, we have $\big\|\sqrt{M^{\dagger}} \E X\big\|_2 \leq \sqrt{\big\| \E\big[ \sqrt{M^{\dagger}} XX^{\top} \sqrt{M^{\dagger}} \big] \big\|_2} = 1$. Therefore, $\big\| \sqrt{M^{\dagger}} X \big\|_{\psi_2} \leq \big\| \sqrt{M^{\dagger}} (X - \E X) \big\|_{\psi_2} + \big\| \sqrt{M^{\dagger}} \E X \big\|_2 \leq 1 + c\sigma\sqrt{\|M^{\dagger}\|_2}$. We now apply Theorem~5.39 in \citet{vershynin2010introduction} and obtain that
    \[ \P\Big(\big\| \sqrt{M^{\dagger}} {\bf X} \big\|_2 \geq \sqrt{n} + c(\sqrt{d}+t)\big\|\sqrt{M^{\dagger}}X\big\|_{\psi_2}\Big) \leq ce^{-ct^2}, \]
    which further implies
    \begin{align} \label{eq:sparse_2} \P\Big(\big\| \sqrt{M^{\dagger}} {\bf X} \big\|_2 \geq \sqrt{n} +  c\big(\sqrt{d}+t_2\big) \big(1+\sigma\textstyle{\sqrt{\|M^{\dagger}\|_2}}\big) \Big) \leq ce^{-ct_2^2} \qquad \text{for all $t_2 > 0$}. \end{align}
    Combining \cref{eq:sparse_1} and \cref{eq:sparse_2}, we learn that
    \begin{align} \label{eq:sparse_A} \P\Big( \big| Y - \E_{\boldsymbol{\sigma}}[Y \mid {\bf X}] \big| \geq t_1n^{-\frac{1}{2}} + c t_1 n^{-1} \big(\sqrt{d}+t_2\big) \big(1+\sigma\textstyle{\sqrt{\|M^{\dagger}\|_2}}\big) \Big) \leq c \big(e^{-ct_1^2}+e^{-ct_2^2}\big). \end{align}
    
    As for the second term $\E_{\boldsymbol{\sigma}}[Y \mid {\bf X}] - \E Y$, we use the $T_2(\sigma)$ property of sample distribution and Gozlan's theorem (Theorem~4.31 in \citet{van2014probability}). We first show that $\E_{\boldsymbol{\sigma}}[Y \mid {\bf X}]$ is $\sqrt{\frac{\|M^{\dagger}\|_2}{n}}$-Lipschitz with respect to Frobenius norm $\|\cdot\|_F$. In fact,
    \begin{align*}
        & \big| \E_{\sigma}[Y \mid {\bf X}] - \E_{\boldsymbol{\sigma}}[Y \mid {\bf X}'] \big| = \frac{1}{n} \big| \E_{\sigma} \| {\bf X} \boldsymbol{\sigma} \|_{M^{\dagger}} - \E_{\boldsymbol{\sigma}}\| {\bf X}' \sigma \|_{M^{\dagger}} \big| \leq \frac{1}{n} \E_{\boldsymbol{\sigma}} \big| \| {\bf X} \boldsymbol{\sigma} \|_{M^{\dagger}} - \| {\bf X}' \boldsymbol{\sigma} \|_{M^{\dagger}} \big| \\ \leq & \frac{1}{n} \E_{\boldsymbol{\sigma}} \big\| ({\bf X} - {\bf X}') \boldsymbol{\sigma} \big\|_{M^{\dagger}} \leq \frac{1}{n} \sqrt{\| M^{\dagger} \|_2} \| {\bf X} - {\bf X}'\|_2 \E_{\boldsymbol{\sigma}} \|\boldsymbol{\sigma}\|_2 \leq \sqrt{\frac{\|M^{\dagger}\|_2}{n}} \|{\bf X} - {\bf X}'\|_F.
    \end{align*}
    We then apply Gozlan's theorem and find that there exists a universal constant $c > 0$ such that
    \begin{align} \label{eq:sparse_B} \P\Bigg( \big| \E_{\sigma}[Y \mid {\bf X}] - \E[Y] \big| \geq t_1 \sigma \sqrt{\frac{\|M^{\dagger}\|_2}{n}} \Bigg) \leq c e^{-ct_1^2} \qquad \text{for any $t_1 > 0$}. \end{align}
    
    Integrating \cref{eq:sparse_A} and \cref{eq:sparse_B} and using the condition $n \geq d$, we find that
    \[ \P\Big(\big| Y - \E[Y] \big| \geq t_1 n^{-\frac{1}{2}} \big(1 + c n^{-\frac{1}{2}} t_2\big) \big(1+\sigma\textstyle{\sqrt{\|M^{\dagger}\|_2}}\big) \Big) \leq c \big(e^{-ct_1^2}+e^{-ct_2^2}\big). \]
    If $0 \leq t_1 \leq \sqrt{n}$, then by letting $t_2 = \sqrt{n}$, we have
    \[ \P\Big( \big| Y - \E [Y] \big| \geq c t_1 n^{-\frac{1}{2}} \big(1+\sigma\textstyle{\sqrt{\|M^{\dagger}\|_2}}\big) \Big) \leq c e^{-ct_1^2}. \]
    Otherwise, when $t_1 > \sqrt{n}$, we take $t_2 = t_1$ and obtain
    \[ \P\Big( \big|Y - \E[Y]\big| \geq c t_1^2 n^{-1} \big(1+\sigma\textstyle{\sqrt{\|M^{\dagger}\|_2}}\big) \Big) \leq c e^{-ct_1^2}. \]
    We then finish the proof by combining these two cases.
\end{proof}

We are now ready to prove Lemma \ref{lemma:ex_sparse}.
\begin{proof}[Proof of Lemma \ref{lemma:ex_sparse}]
	Note that $\popR_n^{\rho} \big( \big\{ f \in \cF_s  \bigm|  \|f - f^{\circ}\|_{\rho}^2 \leq r \big\} \big) = \popR_n^{\rho} \big( \big\{ f - f^{\circ} \bigm| f \in \cF_s,  \|f - f^{\circ}\|_{\rho}^2 \leq r \big\} \big) \leq \popR_n^{\rho} \big( \big\{ f \in \cF_{2s} \bigm|  \|f\|_{\rho}^2 \leq r \big\} \big)$. Therefore, we can easily obtain upper bounds for $\popR_n^{\rho} \big( \big\{ f \in \cF_s  \bigm|  \|f - f^{\circ}\|_{\rho}^2 \leq r \big\} \big)$ by analyzing $\popR_n^{\rho} \big( \big\{ f \in \cF_s  \bigm|  \|f\|_{\rho}^2 \leq r \big\} \big)$. To this end, in the following, we focus on the local Rademacher complexity \[ \popR_n^{\rho} \big( \big\{ f \in \cF_s  \bigm|  \|f\|_{\rho}^2 \leq r \big\} \big). \] 

	To simplify the notation, we write $x := (s,a)$.
	Note that
	\[ \begin{aligned} \popR_n^{\rho} \big( \big\{ f \in \cF_s \bigm|  \|f\|_{\rho}^2 \leq r \big\} \big) = & \mathbb{E} \sup \Bigg\{ \frac{1}{n} \sum_{k=1}^K \sigma_k f(x_k)  \biggm|  f \in \cF, \|f\|_{\rho}^2 \leq r \Bigg\} \\ = & \mathbb{E} \sup \Bigg\{ \frac{1}{n} \sum_{k=1}^n \sigma_k \phi_{\alpha}(x_k)^{\top} w  \biggm|  \alpha \in \mathcal{I}, w \in \mathbb{R}^s, w^{\top} \Sigma_{\alpha} w \leq r \Bigg\}. \end{aligned} \]
	We fix $\alpha$, $\{\sigma_k\}_{k=1}^n$ and $\{ x_k \}_{k=1}^n$ and then optimize $w \in \mathbb{R}^s$. Since $x_k \in {\rm supp}(\rho)$, one always has $\frac{1}{n} \sum_{k=1}^K \sigma_k \phi_{\alpha}(x_k) \in {\rm range}(\Sigma_{\alpha})$ with probability one. The supremum is therefore acheived at \[ w := \frac{\sqrt{r} \Sigma_{\alpha}^{\dag}\big[ \frac{1}{n} \sum_{k=1}^n \sigma_k \phi_{\alpha}(x_k) \big]}{\big\| \frac{1}{n} \sum_{k=1}^n \sigma_k \phi_{\alpha}(x_k) \big\|_{\Sigma_{\alpha}^{\dag}}}. \]
	It follows that
	\[ \popR_n^{\rho} \big( \big\{ f \in \cF_s  \bigm|  \|f\|_{\rho}^2 \leq r \big\} \big) = \sqrt{r} \mathbb{E} \max_{\alpha \in \mathcal{I}} Y_{\alpha}, \qquad \text{where }Y_{\alpha} := \Bigg\| \frac{1}{n} \sum_{k=1}^n \sigma_k \phi_{\alpha}(x_k) \Bigg\|_{\Sigma_{\alpha}^{\dag}}. \]
	
	We further upper bound the local Rademacher complexity by
	\begin{equation} \label{eq:sparse_5'} \popR_n^{\rho} \big( \big\{ f \in \cF_s  \bigm|  \|f\|_{\rho}^2 \leq r \big\} \big) \leq \sqrt{r} \bigg(\underbrace{\max_{\alpha \in \mathcal{I}} \mathbb{E}[Y_{\alpha}]}_{E_1} + \underbrace{\mathbb{E} \Big[ \max_{\alpha \in \mathcal{I}} \big( Y_{\alpha} - \mathbb{E}[Y_{\alpha}] \big) \Big]}_{E_2}\bigg). \end{equation}
	In the following, we estimate the two terms in the right hand side of \cref{eq:sparse_5'} separately.
	
	Define $\boldsymbol{\sigma} := (\sigma_1, \ldots, \sigma_n)^{\top} \in \mathbb{R}^n$ and $\Phi_{\alpha} := \big[ \phi_{\alpha}(x_1), \ldots, \phi_{\alpha}(x_n) \big] \in \mathbb{R}^{s \times n}$. We reform $Y_{\alpha}$ as $Y_{\alpha} = n^{-1} \| \Phi_{\alpha} \boldsymbol{\sigma} \|_{\Sigma_{\alpha}^{\dag}}$. It follows that
	\[ \mathbb{E}[Y_{\alpha}^2] = \frac{1}{n^2} {\mathbb{E} \big[\| \Phi_{\alpha} \boldsymbol{\sigma} \|_{\Sigma_{\alpha}^{\dag}}^2\big]}  = \frac{1}{n^2} {\mathbb{E} \big[ (\Phi_{\alpha} \boldsymbol{\sigma})^{\top} \Sigma_{\alpha}^{\dag} (\Phi_{\alpha} \boldsymbol{\sigma}) \big]} = \frac{1}{n^2} {\mathbb{E} \big[ {\rm Tr}( \Sigma_{\alpha}^{\dag} \Phi_{\alpha} \boldsymbol{\sigma}\boldsymbol{\sigma}^{\top}\Phi_{\alpha}^{\top}) \big]}. \]
	We use the relations $\frac{1}{n} \mathbb{E}\big[ \Phi_{\alpha} \Phi_{\alpha}^{\top} \big] = \Sigma_{\alpha}$ and $\mathbb{E}[\boldsymbol{\sigma} \boldsymbol{\sigma}^{\top}] = I_s$ where $I_r$ represents the identity matrix in $\mathbb{R}^{s \times s}$. The inequality above is then reduced to
	\begin{equation} \label{eq:sparse_4'} \mathbb{E}[Y_{\alpha}] \leq \sqrt{\mathbb{E}[Y_{\alpha}^2]} \leq \frac{1}{\sqrt{n}} \sqrt{{\rm rank}(\Sigma_{\alpha})} \leq \sqrt{\frac{s}{n}}. \end{equation}
	To this end, we have $E_1 \leq \sqrt{s/n}$.
	
	Now we focus on $E_2$.
	Since $\phi_{\alpha}(x)$ satisfies $T_2\big(\sigma(\alpha)\big)$-inequality. Applying \Cref{lemma:sparse_2}, we find that if $n \geq s$,
	\[ \big\| |Y_{\alpha} - \E[Y_{\alpha}]| \mathds{1}\big\{|Y_{\alpha}-\E[Y_{\alpha}]| \leq \big(1+\sigma(\alpha)/\sigma_{\min}(\alpha)\big)\big\} \big\|_{\psi_2} \leq \frac{c}{\sqrt{n}}\bigg(1+\frac{\sigma(\alpha)}{\sigma_{\min}(\alpha)}\bigg) \leq \frac{c}{\sqrt{n}}(1+\eta_s) \]
    \[ \text{and} \qquad \qquad \big\| |Y_{\alpha} - \E[Y_{\alpha}]| \mathds{1}\big\{|Y_{\alpha}-\E[Y_{\alpha}]| > \big(1+\sigma(\alpha)/\sigma_{\min}(\alpha)\big)\big\} \big\|_{\psi_1} \leq \frac{c}{n} \bigg(1+\frac{\sigma(\alpha)}{\sigma_{\min}(\alpha)}\bigg) \leq \frac{c}{n}(1+\eta_s). \]
    We further use \Cref{lemma:sparse_2} and obtain
    \begin{align*} \E \max_{\alpha \in \mathcal{I}} |Y_{\alpha} - \E[Y_{\alpha}]| \leq & c(1+\eta_s)\Big( n^{-\frac{1}{2}} \sqrt{\log|\mathcal{I}|} + |\mathcal{I}| e^{-cn} \Big) \\ \leq & c(1+\eta_s)\Big( n^{-\frac{1}{2}} \sqrt{s\log d} + \exp\big(-cn + s \log d\big) \Big). \end{align*}
    If $n \geq c' s \log d$ for some sufficiently large constant $c'$, then
    \begin{align} \label{eq:sparse_3} E_2 = \E \max_{\alpha \in \mathcal{I}} \big|Y_{\alpha} - \E[Y_{\alpha}]\big| \leq c(1+\eta_s)\sqrt{\frac{s\log d}{n}}. \end{align}
    
    Plugging \cref{eq:sparse_4',eq:sparse_3} into \cref{eq:sparse_5'}, we complete our proof.
\end{proof}

% !TEX root = main.tex

\section{Proof of Lower Bound (Theorem~\ref{theorem:lb})} 
\label{app:proof_thm_lb}

In this section, we will prove a stronger version of \Cref{theorem:lb}, which is \Cref{thm:lower_bound_main2}. In \Cref{thm:lower_bound_main2}, we show that in the same setting as \Cref{theorem:lb}, even if additionally assuming \Cref{as:coverage} holds with $C=1$, i.e., $\mu_h$ is the true marginal distribution of the single-action MDP, and the algorithm knows $\{\mu_h\}_{h=1}^H$, it still takes $\Omega(\frac{\sqrt{S}}{\eps^2})$ samples for the learning algorithm $\mathfrak{A}$ to achieve $\eps$ optimality gap for Bellman error. This further justifies the necessity of \Cref{as:approx_comp_FF} and \Cref{as:approx_gcomp_FF} in the single sampling regime.

\lb*

\begin{theorem}\label{thm:lower_bound_main2}
For any $\eps<0.5$ and $S\ge 2$, there is a family of single-action, $S+5$-state MDPs ($H=3$) with the same underlying distributions $\mu_h$ (satisfying \Cref{as:coverage} with $C=1$) and the same reward function (thus the MDPs only differ in probabiilty transition matrices) and a function class $\cF$ of size $2$, such that all learning algorithm $\mathfrak{A}$ that takes $n$ pairs of states $(s,a,r,s')$ and output a value function in $\cF$ must suffer $\Omega(\eps^2)$ expected optimality gap in terms of mean-squared bellman error w.r.t $\mu$ if $n=O(\frac{\sqrt{S}}{\eps^2})$.

Mathematically, it means for any learning algorithm $\mathfrak{A}$, there is a single-action, $S+5$-state MDP defined above, such that for $D=\cup_h\{(s_i,a_i,r_i,s'_i,h)\}_{i=1}^n$ sampled from $\cM$ and $\mu$, if $n=O(\frac{\sqrt{S}}{\eps^2})$, we have
 \[  \E_{D}\left[\Berr_{\cM}\left(\mathfrak{A}(D)\right)\right]\ge \min_{f\in\cF}\Berr_{\cM}\left(f\right)  + \Omega(\eps^2). \]
\end{theorem}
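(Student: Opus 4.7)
The approach is a two-point Le Cam reduction: construct a pair of single-action, $H=3$ MDPs $\mathcal{M}_0$ and $\mathcal{M}_1$ sharing the same data distribution $\mu$ and the same reward function, together with a function class $\cF=\{f^{(0)},f^{(1)}\}$ of size two, such that (i) under $\mathcal{M}_b$ the ``correct'' function is $f^{(b)}$ and the excess Bellman error of the other is $\Theta(\eps^2)$, and (ii) the $n$-sample laws of the two MDPs remain statistically indistinguishable for $n=O(\sqrt{S}/\eps^2)$. Any algorithm outputting $\hat{f}\in\cF$ then induces a binary test for the latent index $b$, and a standard Le Cam argument converts the small TV between the sample distributions into an $\Omega(\eps^2)$ lower bound on the expected excess risk.

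The MDP is built on the state set $\{s_1\}\cup\{s_{2,i}\}_{i=1}^S\cup\{s_{3,g},s_{3,b},s_{3,m},s_{\mathrm{end}}\}$. Fix a balanced sign pattern $\tau\in\{\pm 1\}^S$ with $\sum_i \tau_i=0$ (assume $S$ even; the odd case is handled by dropping a coordinate), and set $\tau^{(0)}:=\tau$, $\tau^{(1)}:=-\tau$. From $s_1$ one goes uniformly to $\{s_{2,i}\}$; from $s_{2,i}$ under $\mathcal{M}_b$ one goes to $(s_{3,g},s_{3,b},s_{3,m})$ with probabilities $(\tfrac{1}{3}+\delta\tau_i^{(b)},\tfrac{1}{3}-\delta\tau_i^{(b)},\tfrac{1}{3})$; each step-$3$ state transitions to $s_{\mathrm{end}}$. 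Rewards are $r_3(s_{3,g})=1$, $r_3(s_{3,b})=0$, $r_3(s_{3,m})=\tfrac{1}{2}$, and zero elsewhere. The balance $\sum_i\tau_i=0$ (together with the symmetric $s_{3,m}$) makes $\mu_h$ identical across both MDPs, in particular $\mu_3(s_{3,g})=\mu_3(s_{3,b})=\mu_3(s_{3,m})=\tfrac{1}{3}$, so \Cref{as:coverage} holds with $C=1$. I take $\cF=\{f^{(0)},f^{(1)}\}$ with $f^{(b)}_1(s_1)=\tfrac{1}{2}$, $f^{(b)}_2(s_{2,i})=\tfrac{1}{2}+c\tau_i^{(b)}$, and $f^{(b)}_3=V_3^{\star}$ (common to both MDPs since $r_3$ is common), where $c,\delta>0$ will be tuned.

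A direct Bellman-error computation then shows that the step-$1$ and step-$3$ contributions vanish (the former because $\tau^{(b)}$ is balanced, the latter because $f_3^{(b)}=V_3^{\star}$), while the step-$2$ contribution under $\mathcal{M}_b$ equals $(c-\delta)^2/3$ for $f^{(b)}$ and $(c+\delta)^2/3$ for $f^{(1-b)}$. Hence the excess-risk gap between the two members of $\cF$ is $\Delta:=4c\delta/3$. Meanwhile the only non-trivial contribution to $\KL{\P^n_{\mathcal{M}_0}}{\P^n_{\mathcal{M}_1}}$ comes from the step-$2$ samples, and per sample it is of order $\delta^2$ (the two categorical distributions at each $s_{2,i}$ differ by $\pm 2\delta$ at two coordinates), giving $\KL{\P^n_{\mathcal{M}_0}}{\P^n_{\mathcal{M}_1}}\le C_0\, n\delta^2$ for an absolute constant $C_0$. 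Choosing $\delta=c_1\eps/S^{1/4}$ and $c=c_2\eps S^{1/4}$ for small absolute constants $c_1,c_2$ then yields $\Delta=\Theta(\eps^2)$ and a total KL of order $n\eps^2/\sqrt{S}$; Pinsker's inequality gives $\mathrm{TV}(\P^n_{\mathcal{M}_0},\P^n_{\mathcal{M}_1})\le 1/4$ whenever $n\le c_3\sqrt{S}/\eps^2$ for a sufficiently small absolute constant $c_3$. A standard two-point argument finishes the proof: for at least one $b\in\{0,1\}$, $\E_D[\Berr_{\mathcal{M}_b}(\mathfrak{A}(D))-\min_{f\in\cF}\Berr_{\mathcal{M}_b}(f)]\ge (1/4)\Delta=\Omega(\eps^2)$.

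The heart of the argument, and the step that required new insight beyond a textbook two-point lower bound, is the \emph{decoupling} between the information parameter $\delta$ (governing per-sample KL as $\delta^2$) and the function-class scale $c$ (governing the Bellman-error gap linearly as $c\delta$). The extra $\sqrt{S}$ factor over the usual $\Omega(1/\eps^2)$ rate arises precisely from pushing $c/\delta$ up to $\sqrt{S}$, which is the largest ratio compatible with the boundedness $|f_2^{(b)}|\le H$ and the probability constraint $\delta\le 1/3$; verifying that $c=\Theta(\eps S^{1/4})$ and $\delta=\Theta(\eps/S^{1/4})$ stay within these ranges for the stated $\eps<1/2$ and $S\ge 2$, and that the auxiliary buffer state $s_{3,m}$ is indeed what is needed to keep $\mu_3$ independent of $b$ (so that concentrability $C=1$ is maintained) are the only other delicate bookkeeping items.
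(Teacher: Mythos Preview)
Your two-point construction has a genuine gap that cannot be repaired by bookkeeping. The ``decoupling'' you propose sets $c=\Theta(\eps S^{1/4})$, but the functions in $\cF$ must take values in $[-H,H]$ with $H=3$ fixed. Since $f_2^{(b)}(s_{2,i})=\tfrac12+c\tau_i^{(b)}$, boundedness forces $c\le 5/2$, which fails for any fixed $\eps\in(0,1/2)$ once $S$ is moderately large. If instead you cap $c$ at a constant, then the gap $\Delta=4c\delta/3=\Theta(\eps^2)$ forces $\delta=\Theta(\eps^2)$; the per-sample KL becomes $\Theta(\eps^4)$, and with $n=\Theta(\sqrt{S}/\eps^2)$ samples the total KL is $\Theta(\sqrt{S}\,\eps^2)$, which blows up in $S$ and makes the two MDPs perfectly distinguishable. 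In short, no fixed two-point pair $(\cM_0,\cM_1)$ with bounded $\cF$ can simultaneously keep the Bellman-error gap at $\Theta(\eps^2)$ and the $n$-sample TV bounded when $n\asymp\sqrt{S}/\eps^2$.

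The paper's proof obtains the $\sqrt{S}$ factor by a fundamentally different mechanism: it uses a \emph{mixture} over a latent sign vector $\vsigma\in\{\pm1\}^S$ (in addition to the bit $c\in\{\pm1\}$), with transition probabilities $\P^\eps_{c,\vsigma}(t_{j,k}\mid i)=\tfrac14(1+\eps k\max(-c,j)\sigma_i)$ and a function class whose values lie in $\{0,\tfrac12,1\}$ regardless of $S$. After marginalizing over $\sigma_i$, a \emph{single} sample from state $i$ carries zero information about $c$ (the marginal over $t_{j,k}$ is uniform for either $c$); information accrues only through \emph{collisions}, i.e.\ multiple visits to the same middle state. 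The KL contribution from a state visited $m$ times satisfies $G_\eps(0)=G_\eps(1)=0$ and $G_\eps(m)\le 2m^2\eps^4$, so the total KL is $O(n^2\eps^4/S)$ via a birthday-type count of collisions, giving indistinguishability for $n=O(\sqrt{S}/\eps^2)$. The $C=1$ version then conditions $\vsigma$ on being balanced and controls the resulting perturbation by a short Wasserstein argument. This mixture/collision structure is precisely the missing idea in your proposal.
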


Below we will prove \Cref{thm:lower_bound_main2}. To better illustrate the idea of the hard instance, we will first prove a slightly weaker version with $C=2$ (\Cref{thm:lower_bound_main}) in \Cref{subsec:C=2} and in \Cref{subsec:proof_full_lb} we will prove \Cref{thm:lower_bound_main2} by slightly twisting the proof in \Cref{subsec:C=2}.

\subsection{Warm-up with $C=2$}\label{subsec:C=2}
\newcommand{\startstate}{s_{\text{start}}}

We construct the hard instances for single sampling in the following way. 

\paragraph{Hard Instance Construction:} We first generate a uniform random bit $c\in\{-1,1\}$, and a Radamacher vector  $\vsigma\in\{\pm1\}^{S}$. For each $c,\vsigma$, we define MDP $\cM^\eps_{c,\vsigma} = (\cS,\cA,H,\P^\eps_{c,\vsigma},r)$ below, where $0< \eps < 0,5$. The claim is the distribution of $\cM^\eps_{c,\vsigma}$ serves as the distribution of hard instances.   Note that only $\P^\eps_{c,\vsigma}$ in the tuple defining $\cM^\eps_{c,\vsigma}$ depends on $c$ and $\vsigma$. Here the probability transition matrix $\cM^\eps_{c,\vsigma}$ is the same for all $h=1,2,\ldots,H$.

Let $\mathcal{S} = \{\startstate\}\cup \{1,\ldots,{S}\}\cup \{t_{j,k}\}_{j,k\in\{-1,1\}}$, $H=2$, $|\cA|=1$ and the initial state is $\startstate$. Since there's only one action, below we will just drop the dependence on action and thus simplify the notation. We will always define the probability transition matrix in the way such that in the $2$nd step, we will reach some state among $1,\ldots,S$ and in the $3$rd step, we will reach some state among $t_{j,k}$.

\begin{figure}[!htbp]
     \centering
     \vspace{-.6cm}
     \begin{subfigure}[b]{0.5\textwidth}
         \centering
         \includegraphics[width=\textwidth]{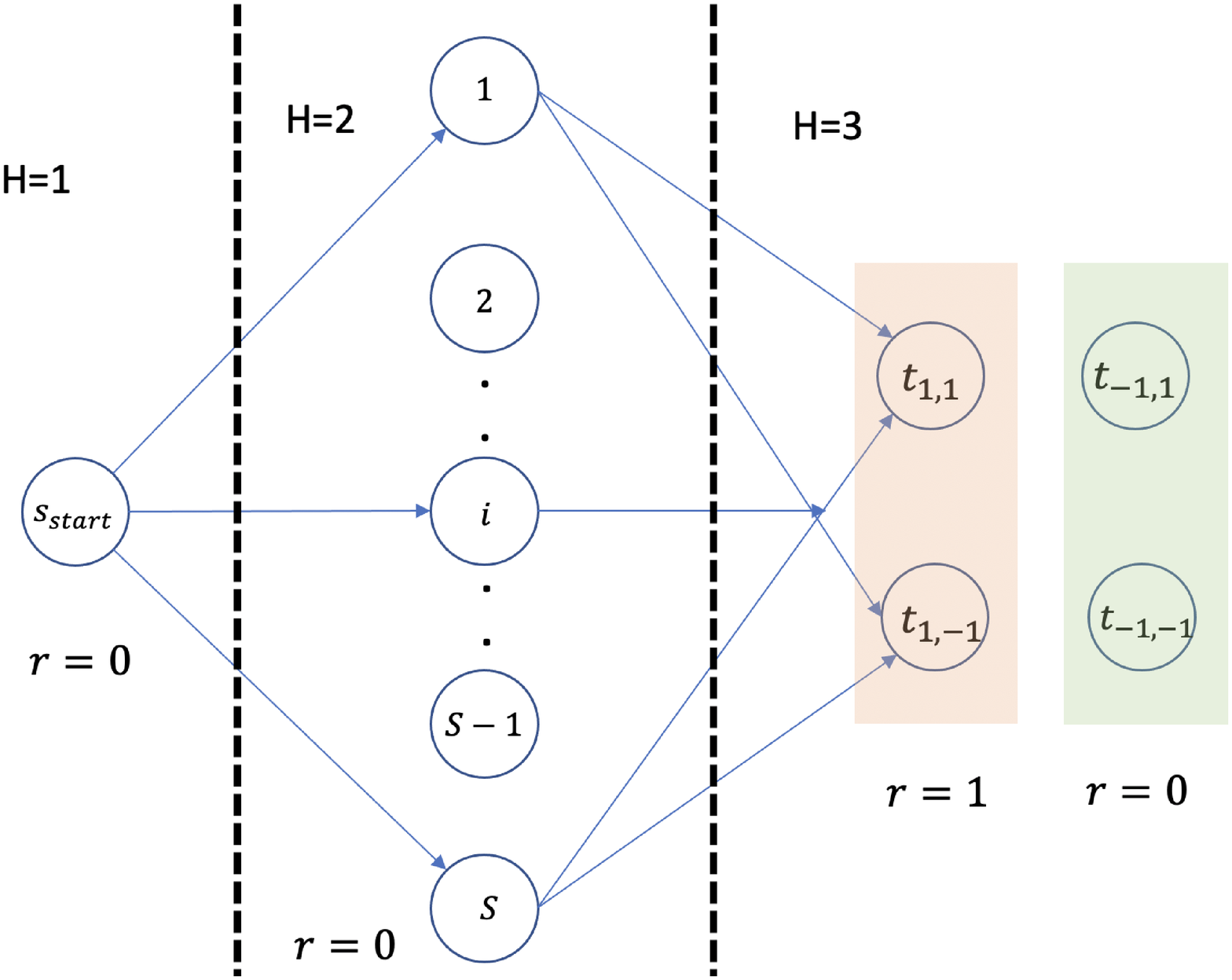}
		\caption{Illustration of the $3$-stage, single action MDP. Each state can be visited for at most one $h=1,2,3$. $r$ is the reward for each state.(action omitted since there's only one)}
         \label{fig:model}
     \end{subfigure}
     \hspace{3em}
     \begin{subfigure}[b]{0.35\textwidth}
         \centering
         \includegraphics[width=0.7\textwidth]{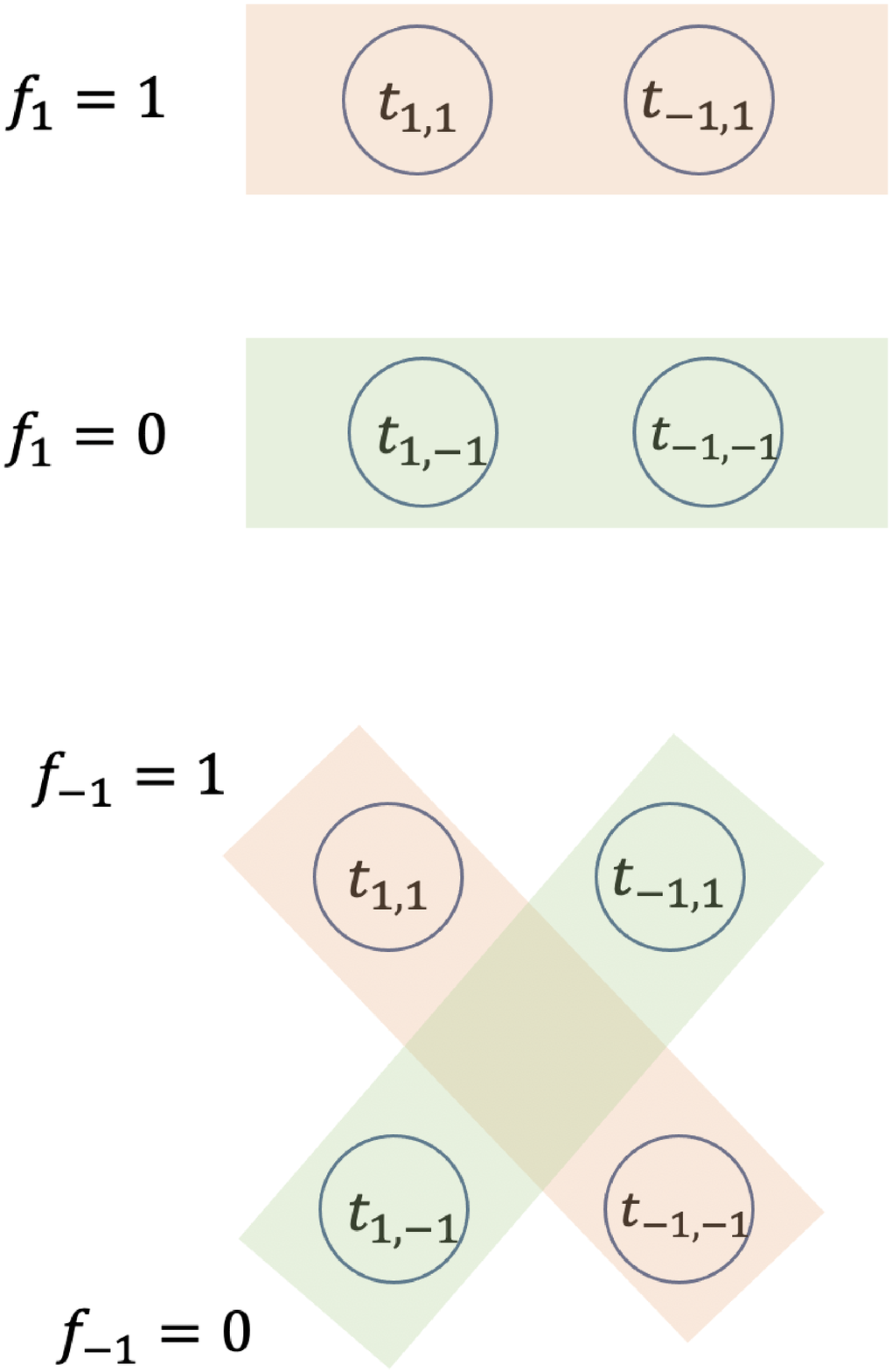}
         \caption{Illustration of $f_1$ and $f_{-1}$. They only differ on $t_{-1,1}$ and $t_{-1,-1}$. For $h=3$, the Bellman error $\norm{f_{c'}-\cT^3_{c,\vsigma} f_{c'}}_{2,\mu_3}^2=0.5$, regardless of $c'$ and $c$.}
         \label{fig:f}
     \end{subfigure}
     \vspace{0.2cm}
     \begin{subfigure}[b]{0.72\textwidth}
         \centering
         \includegraphics[width=\textwidth]{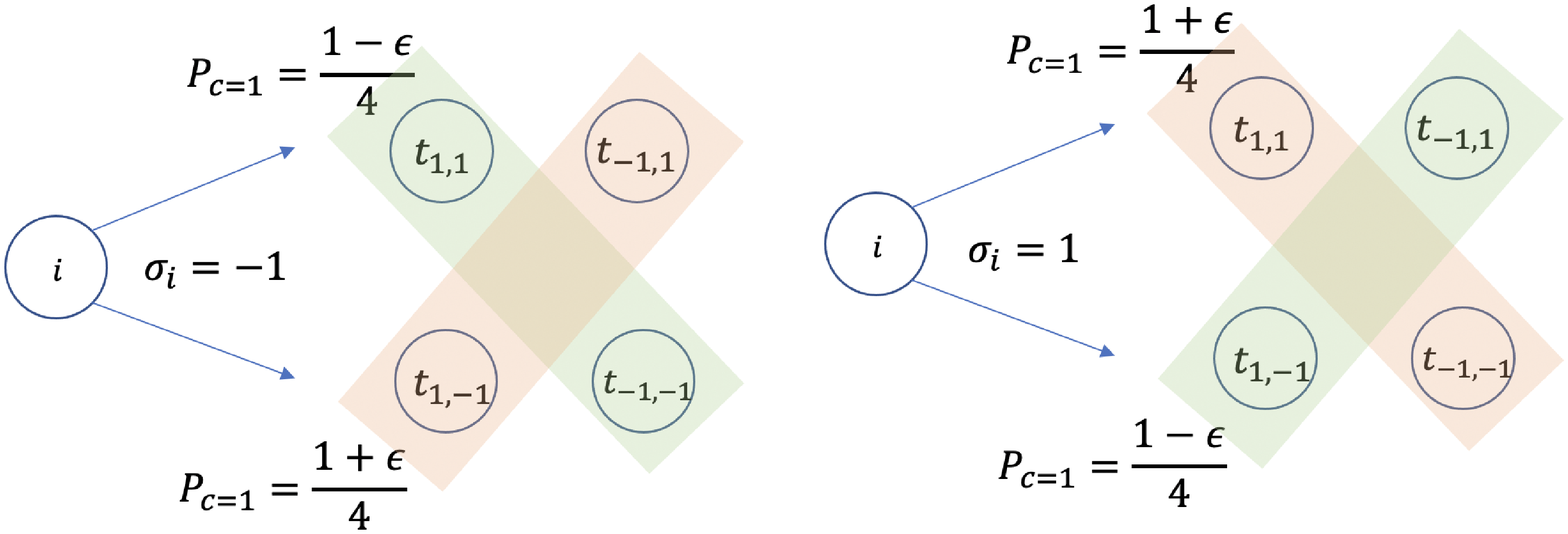}
         \caption{Illustration of $\overline{\P}^\eps_{1,\sigma_i}$. When $c=1$, there are two different but equally likely types of state $i$, depending on their probability transition matrix for the next step.}
         \label{fig:P_1}
     \end{subfigure}
     \vspace{0.2cm}
     \begin{subfigure}[b]{0.72\textwidth}
         \centering
         \includegraphics[width=\textwidth]{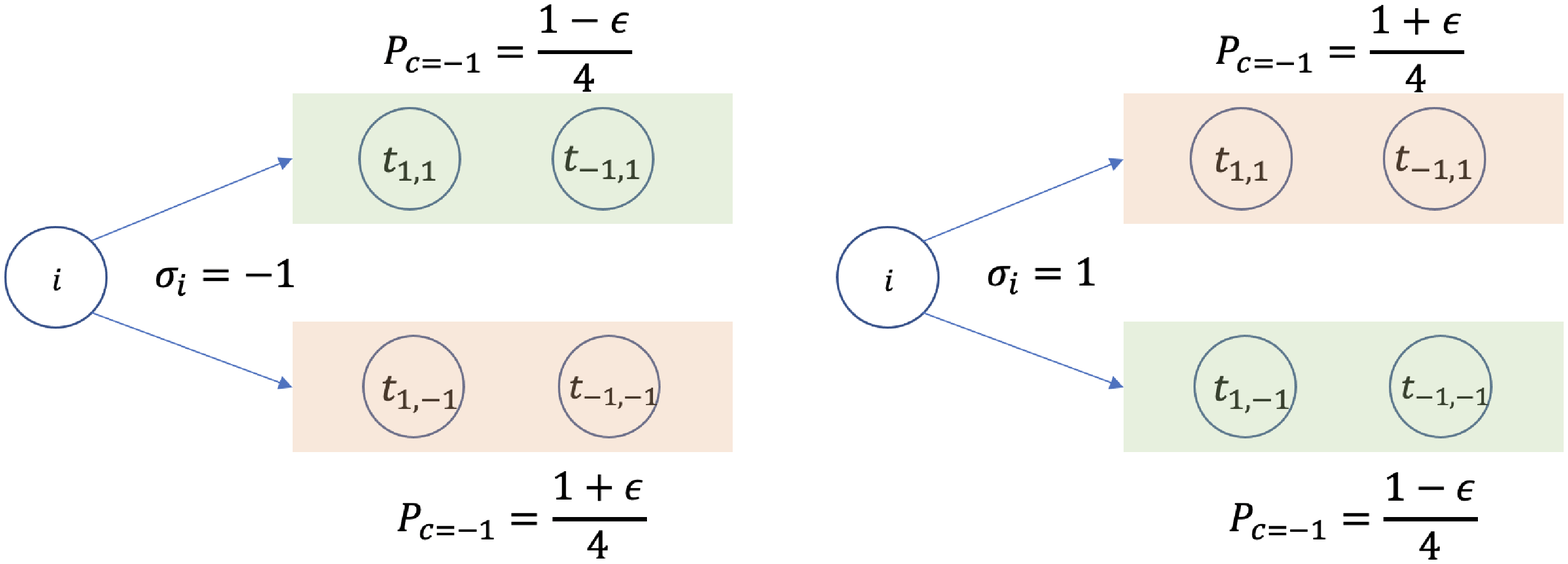}
         \caption{Illustration of $\overline{\P}^\eps_{-1,\sigma_i}$. When $c=1$, there are two different but equally likely types of state $i$, depending on their probability transition matrix for the next step.}
         \label{fig:P_-1}
     \end{subfigure}
     \caption{Graphical illustration of the hard instances $\cM^\eps_{c,\vsigma}$. As shown in \Cref{eq:c_sigma_bellman_error}, the total Bellman error is only determined by the Bellman error for $h=2$, which is equal to optimal error + $\frac{\eps^2}{12} \one{c\neq c'}  $ if $f_{c'}$ is the returned function. The main idea of the proof is to show it's difficult to guess $c$ via the observed dataset $D$ if $D$ only contains single-sampled data. As a sanity check, for any $c$ and sample $(i,t_{j,k})$, if $\sigma_i\overset{unif}{\sim}\{\pm 1\}$, the marginal distribution of $t_{j,k}$ is always uniform, but for double sampling of form $(i,t_{j,k},t_{j',k'})$, we can decide $c$ by simply looking at histogram of $(t_{j,k},t_{j',k'})$.}
\end{figure}

\paragraph{Function class:} $\cF = \{f_1,f_{-1}\}$, where  $f_c(\startstate) = \frac{1}{2}$, $f_c(i)=\frac{1}{2},\forall 1\le  i \le S$ and $f_{c}(t_{j,k})= \frac{k\max(c,j )+1}{2}$, 
 $\forall c,j,k\in\{\pm 1\}$. Compared to the notation in the main paper, we drop the dependency on $h$ for $f\in \cF$. This is because the MDP will reach a disjoint set of states for each step $h$ (see below).
  
\paragraph{Probability Transition Matrix:} 
We define the probability transition matrix below.  Specifically, for $i\in \{1,\ldots,S\}$ and $j,k\in\{\pm 1\}$, \(\P^\eps_{c,\vsigma}(t_{j,k}\mid i) \equiv \overline{\P}^\eps_{c,\sigma_i}(t_{j,k}):=0.25(1 + \eps k \max(-c,j)\sigma_i ).\)

\begin{table}[!htbp]
\centering
\begin{tabular}{|l|c|c|c|c|}
\hline
 \diagbox{From}{To} & $\startstate$ &  $i (i=1,\ldots,S)$ & $t_{j,k}$ & $s_{\text{end}}$\\
  \hline
  $\startstate$&  0 &  $\frac{1}{S}$ & 0 & 0 \\
  \hline
  $i (i=1,\ldots,S)$ & 0  & 0  & $ \overline{\P}^\eps_{c,\sigma_i}(t_{j,k}):=0.25(1 + \eps k \max(-c,j)\sigma_i )$ & 0\\
  \hline 
  $t_{j,k}$ & $0$ & $0$ & $0$ & 1\\
  \hline
\end{tabular}
\caption{Probability Transition Matrix $\P^\eps_{c,\vsigma}$ for MDP $\cM^\eps_{c,\vsigma}$. Starting from $\startstate$, the process terminates as it reaches $s_{\text{end}}$ in the $4$th step.}
\end{table}

\paragraph{Reward Function:} $r(\startstate) =0, r(i) =0,\forall 1\le i\le S$, $r(t_{j,k})=\frac{j+1}{2}, \forall j,k\in \{-1,1\}$. 
 
\paragraph{Underlying distribution:} We define the underlying distribution for batch data $\mu$ as $\mu_{2}(i) = \frac{1}{S}$ and $\mu_{3}(t_{j,k}) = \frac{1}{4}$, we can check that \Cref{as:coverage} is satisfied with $C=2$ as $\eps<0.5$. Define $\cT^1_{c,\vsigma}, \cT^2_{c,\vsigma},\cT^3_{c,\vsigma}$ be the Bellman operator of $\cM^\eps_{c,\sigma}$, we have $\forall \vsigma\in \{-1,1\}^S$, $\forall c,c'\in \{-1,1\}$, 
$$\norm{f_{c'}-\cT^3_{c,\vsigma} f_{c'}}_{2,\mu_3}^2 = \norm{f_{c'}-r}_{2,\mu_3}^2 =  \pr{j\neq k\max(c',j)} =0.5,$$
\[ \begin{aligned} \norm{f_{c'}-\cT^2_{c,\vsigma} f_{c'}}_{2,\mu_2}^2 = & \Bigg\|\sum_{j,k\in\{\pm 1\}} \P_{c,\vsigma}(t_{j,k}\mid i) f_{c'}(t_{j,k})\Bigg\|_{2,\mu_2}^2 \\ = & \frac{1}{64} \bigg\| \sum_{j,k}\eps\sigma_i k^2\max(j,c)\max(j,-c') \bigg\|_{2,\mu_2}^2 = \frac{\eps^2}{4} \one{c\neq c'}, \end{aligned} \]  
$$\norm{f_{c'}-\cT^1_{c,\vsigma} f_{c'}}_{2,\mu_3}^2 = \norm{f_c(\startstate) - f_c(i)}_{2,\mu_1}^2 = 0.$$

Thus
\begin{equation}\label{eq:c_sigma_bellman_error}
\Berr_{c,\vsigma}(f_{c'})\equiv \Berr_{\cM_{c,\vsigma}}(f_{c'})  = \frac{1}{3}\sum_{h=1}^3\norm{f_{c'}-\cT^h_{c,\vsigma} f_{c'}}_{2,\mu_h}^2 =\frac{1}{3}(0.5 + \frac{\eps^2}{4} \one{c\neq c'}).	
\end{equation}

From \cref{eq:c_sigma_bellman_error} we can see minimizing Bellman error in this case is equivalent to predict $-c$. And any algorithm predicts $c$ wrongly, i.e., outputs $f_{c'}$ with $c'\neq c$ with constant probability, will suffer $\Omega(\eps^2)$ expected optimality gap. More specifically, we can show that for random $\vsigma$, it's information-theoretically hard to predict $c$ correctly given $D$, which leads to the following theorem.

\begin{theorem}\label{thm:lower_bound_main}
For $c\overset{iid}{\sim} \{-1,1\}$, $\vsigma\overset{iid}{\sim} \{-1,1\}^S$, $D=\cup_{h=1}^3\{(s_i,a_i,r_i,s'_i,h)\}_{i=1}^n$ sampled from $\cM^\eps_{c,\vsigma}$ and $\mu$, we have for any learning algorithm $\mathfrak{A}$ with $n=O(\frac{\sqrt{S}}{\eps^2})$ samples,
\[ \E_{c,\vsigma}\E_{D }\left[\Berr_{c,\vsigma}\left(\mathfrak{A}(D)\right)\right] \ge \E_{c,\vsigma}\left[\min_{c'\in\{-1,1\}}\Berr_{c,\vsigma}\left(f_{c'}\right)\right] + \Omega(\eps^2). \]
Or equivalently (and more specifically), if we view $\widetilde{\mathfrak{A}}(D)$ as the modified version of $\mathfrak{A}$, whose range is $\{-1,1\}$ and satisfies $\mathfrak{A} = f_c$ with $c = \widetilde{\mathfrak{A}}(D)$. Then we have 
\[ \E_{c,\vsigma}\E_{D }\left[\one{\widetilde{\mathfrak{A}}(D) \neq c}\right] \ge  \Omega(\eps^2). \]
\end{theorem}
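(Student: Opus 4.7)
The plan is to reduce the lower bound to binary hypothesis testing on the hidden bit $c\in\{\pm1\}$ and to bound the total variation between the two resulting data distributions via Pinsker's inequality and a per-state KL calculation. By \cref{eq:c_sigma_bellman_error}, for any $\vsigma$, $\Berr_{c,\vsigma}(f_{c'}) - \min_{c''}\Berr_{c,\vsigma}(f_{c''}) = \tfrac{\eps^2}{12}\one{c'\neq c}$. Let $\widetilde{\mathfrak{A}}(D)\in\{\pm1\}$ denote the identifier of $\mathfrak{A}(D)\in\{f_1,f_{-1}\}$. Then the theorem reduces to showing
\begin{equation*}
\E_{c,\vsigma}\Pr_D\bigl[\widetilde{\mathfrak{A}}(D)\neq c\bigr]\;=\;\Omega(1)\qquad\text{whenever }n=O(\sqrt S/\eps^2).
\end{equation*}
Letting $P_+,P_-$ be the marginal distribution of $D$ given $c=+1,-1$ (averaged over $\vsigma\sim\Uniform(\{\pm1\}^S)$), Le Cam's lemma gives $\E_c\Pr_D[\widetilde{\mathfrak{A}}(D)\neq c]\ge\tfrac12(1-d_{\mathrm{TV}}(P_+,P_-))$, so it suffices to show $d_{\mathrm{TV}}(P_+,P_-)\le\tfrac12$, and by Pinsker this follows from $\KL{P_+}{P_-}\le\tfrac18$.

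Next I would simplify the KL divergence. Only step $h=2$ transitions depend on $c$, and the state indices $\{i_m\}_{m=1}^n$ visited at $h=2$ are i.i.d.\ $\Uniform([S])$ and independent of $(c,\vsigma)$. Conditioned on $\{i_m\}$, the outcomes $\{(j_m,k_m)\}$ factorize across states because the coordinates $\sigma_i$ of $\vsigma$ are independent. Writing $I_i=\{m:i_m=i\}$, $N_i=|I_i|$, and $Q^c_i$ for the $\sigma_i$-averaged joint distribution of the observations at state $i$, the chain rule and tensorization of KL give
\begin{equation*}
\KL{P_+}{P_-}\;=\;\E_{\{i_m\}}\sum_{i=1}^S\KL{Q^1_i}{Q^{-1}_i}.
\end{equation*}
Expanding the transition probability as $p^c_{j,k}(\sigma)=\tfrac14(1+\eps\alpha^c_{j,k}\sigma)$ with $\alpha^1_{j,k}=jk$ and $\alpha^{-1}_{j,k}=k$, and using the identity $\tfrac12\sum_{\sigma\in\{\pm1\}}\sigma^{|T|}=\one{|T|\text{ even}}$, one obtains the closed form
\begin{equation*}
Q^c_i(\vec y)\;=\;4^{-N_i}\sum_{\substack{T\subseteq I_i\\|T|\text{ even}}}\eps^{|T|}\prod_{m\in T}\alpha^c_{y_m}.
\end{equation*}
The key structural fact is that for $N_i\in\{0,1\}$ only the empty index set survives, so $Q^1_i=Q^{-1}_i$ and $\KL{Q^1_i}{Q^{-1}_i}=0$: single visits to a state carry no information about $c$. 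For $N_i\ge 2$, the $\eps^2$ term of the expansion gives $Q^1_i-Q^{-1}_i=4^{-N_i}\eps^2\sum_{m<m'}k_mk_{m'}(j_mj_{m'}-1)+O(\eps^4)$, and the standard bound $\KL{Q^1_i}{Q^{-1}_i}\le\chi^2(Q^1_i,Q^{-1}_i)$ together with a second-moment computation under the uniform measure on $\vec y$ (in which almost all cross terms vanish by independence of the $k_m,j_m$ across $m$) yields $\KL{Q^1_i}{Q^{-1}_i}\le C\binom{N_i}{2}\eps^4$ for a universal constant $C$.

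Summing over states and taking expectation gives $\KL{P_+}{P_-}\le C\eps^4\,\E\bigl[\sum_i\binom{N_i}{2}\bigr]=C\eps^4\binom{n}{2}/S\le C\eps^4 n^2/(2S)$, because $\sum_i\binom{N_i}{2}$ counts collision pairs among $n$ i.i.d.\ uniform state indices. Choosing $n\le c_0\sqrt S/\eps^2$ for a small absolute constant $c_0$ forces $\KL{P_+}{P_-}\le\tfrac18$, hence $d_{\mathrm{TV}}(P_+,P_-)\le\tfrac12$ and $\Pr[\widetilde{\mathfrak{A}}(D)\neq c]\ge\tfrac14$, giving expected excess Bellman error $\Omega(\eps^2)$ as required. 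The main obstacle is the per-state KL bound for large $N_i$: the leading-order expansion $\chi^2(Q^1_i,Q^{-1}_i)\approx 2\binom{N_i}{2}\eps^4$ is only valid when $\eps^2N_i\lesssim 1$, and higher-order terms in $\eps^{2k}$ for $k\ge 3$ must be shown to contribute at most a constant factor. The cleanest fix is to condition on the high-probability balls-and-bins event $\{\max_i N_i\lesssim\log S\}$ (which holds with probability $\ge 1-1/S$ for $n\le S$), handle the complementary event by the trivial bound $d_{\mathrm{TV}}\le 1$, and verify the uniform bound $\chi^2(Q^1_i,Q^{-1}_i)\le C\eps^4 N_i^2$ on the good event directly from the closed form of $Q^c_i$.
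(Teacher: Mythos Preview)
Your argument is correct in outline and shares the paper's key structural insights: the reduction to binary testing on $c$, the per-state factorization after conditioning on the visited indices $\{i_m\}$, the observation that $N_i\le 1$ carries zero information about $c$, and the collision count $\E\sum_i\binom{N_i}{2}=\binom{n}{2}/S$ as the governing quantity. Where you differ from the paper is in \emph{which} KL you compute. You work directly with $\KL{P_+}{P_-}$, a divergence between two $\vsigma$-mixtures; the paper instead introduces the null distribution $P_0=(\mu_2\circ\P^0)^n$ (the $\eps=0$ model), bounds $\|\,\E_{\vsigma}(\mu_2\circ\P^\eps_{c,\vsigma})^n-P_0\,\|_{TV}$ for each $c$ separately via Pinsker, and combines by the TV triangle inequality.

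This detour through the null is not cosmetic. Because $P_0$ is a genuine product measure (no mixture), the per-state KL becomes $G_\eps(m)=\KL{(\Uniform\{\pm1\})^m}{\tfrac12(R_{+1,\eps}^m+R_{-1,\eps}^m)}$, i.e.\ KL from a product to a two-point mixture. The paper then gets the clean inequality $G_\eps(m)\le 2m^2\eps^4$ valid for \emph{all} $m$, using only $(1-\eps)^m+(1+\eps)^m\ge 2+m(m-1)\eps^2$ and $-\log(1+x)\le -x+x^2/2$; no truncation of higher-order terms and no conditioning on $\max_i N_i$ is needed. By contrast, your per-state quantity $\KL{Q^1_i}{Q^{-1}_i}$ has a mixture on both sides, and the $\chi^2$ route genuinely requires controlling the tail of the $\eps^{2k}$ expansion. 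Your proposed fix (condition on $\max_i N_i\lesssim\log S$) can be pushed through, but you should also use that $n\eps^2\le c_0\sqrt S$ forces $\eps^2\max_i N_i$ to be bounded with high probability (small $\eps$ gives small $\eps^2 N_i$; constant $\eps$ gives $n\lesssim\sqrt S$ and hence $\max_i N_i=O(1)$), which is what actually tames the higher-order terms. The paper's null-comparison trick sidesteps this entirely.
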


Towards proving \Cref{thm:lower_bound_main}, we need the following lower bound, where $\mu_2\circ \P^\eps_{c,\vsigma}$ is defined as the joint distribution of $(s,s')$, where $s\sim \mu_2$ and $s' \sim \P_{c,\vsigma| s}$. Note that when $\eps = 0$, $\P^0_{c,\vsigma}(\cdot\mid i)$ becomes uniform distribution for every $1\le i\le S$, and thus is independent of $c,\vsigma$, which could be denoted by $\P^0$ therefore.

\begin{lemma}\label{lem:TV_to_KL}
 If $n\le 0.1\frac{S^{0.5}}{\eps^{2}}$, then $\norm{\E_{\vsigma} \left(\mu_2\circ \P^\eps_{c,\vsigma}\right)^n -\left(\mu_2\circ \P^0\right)^n}_{TV}\le 0.1$, for all $c\in \{-1,1\}$.	
\end{lemma}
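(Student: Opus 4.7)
The plan is to bound this total-variation distance via Ingster's second-moment (chi-squared) method. Writing $P := \E_{\vsigma}(\mu_2 \circ \P^\eps_{c,\vsigma})^n$ and $Q := (\mu_2 \circ \P^0)^n$, the standard Cauchy--Schwarz bound gives $\|P - Q\|_{TV} \le \tfrac{1}{2}\sqrt{\chi^2(P, Q)}$, so it suffices to prove $\chi^2(P, Q) \lesssim n^2\eps^4/S$, which is at most $0.01$ under the hypothesis $n \le 0.1\,S^{1/2}/\eps^2$. The second moment identity $\chi^2(P, Q) + 1 = \E_{\vsigma, \vsigma'} \E_Q \bigl[\tfrac{dP_\vsigma}{dQ}\tfrac{dP_{\vsigma'}}{dQ}\bigr]$, with $P_\vsigma := (\mu_2 \circ \P^\eps_{c,\vsigma})^n$ and $\vsigma'$ an independent copy of $\vsigma$, reduces the problem to a polynomial expectation that I can compute in closed form.

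First I would write out the single-sample likelihood ratio. Under $Q$, the next-state $t_{j,k}$ is uniform on the four sinks, independently of the starting index $i \sim \mu_2$, so $\frac{dP^\eps_{c,\vsigma}}{dP^0}(i, t_{j,k}) = 1 + \eps \alpha \sigma_i$ where $\alpha := k\max(-c, j) \in \{\pm 1\}$. For both values of $c$, the map $(j,k) \mapsto \alpha$ pushes the uniform distribution on $\{\pm 1\}^2$ to the uniform distribution on $\{\pm 1\}$, so the $\alpha_\ell$'s arising from $n$ iid samples are iid Rademacher under $Q$ and independent of the $i_\ell$'s. The $n$-sample ratio then factorizes as $\frac{dP_\vsigma}{dQ} = \prod_{\ell=1}^n (1 + \eps \alpha_\ell \sigma_{i_\ell})$.

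Next I would exchange the order of expectations and exploit the independence of the $\sigma_i$'s across $i$. Grouping sample indices by $G_i := \{\ell : i_\ell = i\}$ of size $m_i$ and using $\E_{\sigma_i}[\sigma_i^k] = \indct{k \text{ even}}$, the $\vsigma$-average of the likelihood ratio becomes a product over $i$ of $\tfrac{1}{2}\bigl[\prod_{\ell \in G_i}(1+\eps\alpha_\ell) + \prod_{\ell \in G_i}(1-\eps\alpha_\ell)\bigr]$. Squaring this (for the $\vsigma, \vsigma'$ factors) and then averaging over the iid Rademacher $\alpha_\ell$'s, the $i$-th factor collapses, via $\E_Q[(1 \pm \eps\alpha)^2] = 1 + \eps^2$ and $\E_Q[(1+\eps\alpha)(1-\eps\alpha)] = 1 - \eps^2$, to $f(m_i, \eps) := \tfrac{(1+\eps^2)^{m_i} + (1-\eps^2)^{m_i}}{2}$. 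Expanding $\prod_i f(m_i, \eps) = 2^{-S}\sum_{T \subseteq [S]} (1+\eps^2)^{N_T}(1-\eps^2)^{n-N_T}$ with $N_T := \sum_{i \in T} m_i \sim \mathrm{Binom}(n, |T|/S)$ and applying the binomial mgf yields the closed form
\begin{equation*}
\chi^2(P, Q) + 1 \;=\; \E_W\!\bigl[(1 + \eps^2 W/S)^n\bigr], \qquad W := \sum_{i=1}^S \eta_i,
\end{equation*}
where $\eta_1,\dots,\eta_S$ are iid Rademacher.

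The final step is a sub-Gaussian tail bound. Since $|\eps^2 W/S| \le \eps^2 \le 1$, the base $1 + \eps^2 W/S$ is nonnegative and I can apply $1+x \le e^x$ to get
\begin{equation*}
\chi^2(P, Q) + 1 \;\le\; \E\!\bigl[e^{n\eps^2 W/S}\bigr] \;=\; \cosh(n\eps^2/S)^S \;\le\; e^{n^2 \eps^4/(2S)}.
\end{equation*}
Under $n \le 0.1\,S^{1/2}/\eps^2$ this is at most $e^{0.005}$, giving $\chi^2 \le 0.006$ and thus $\|P - Q\|_{TV} \le \tfrac{1}{2}\sqrt{0.006} < 0.04 \le 0.1$. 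The main obstacle I anticipate is the combinatorial bookkeeping in the third paragraph: one must verify carefully that the product over $\ell$ of $(1+\eps\alpha_\ell\sigma_{i_\ell})$ reorganizes correctly into a product over $i$ after averaging over $\vsigma$, and that the $\alpha$-moments under $Q$ kill precisely the cross-terms to leave the tidy $f(m_i,\eps)$ formula; once this identity is nailed down, the rest is routine generating-function algebra and a one-line Rademacher mgf estimate.
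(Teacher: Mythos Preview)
Your proposal is correct and takes a genuinely different route from the paper. The paper bounds the total variation via Pinsker's inequality applied to $\KL{(\mu_2\circ\P^0)^n}{\E_{\vsigma}(\mu_2\circ\P^\eps_{c,\vsigma})^n}$, decomposes this KL by conditioning on the occupancy sets $E_i=\{\ell: s_\ell=i\}$, and then proves a separate lemma that the per-state contribution $G_\eps(m)$ is $O(m^2\eps^4)$; summing and taking expectations over the multinomial counts gives $\KL{P}{Q}\le 4n^2\eps^4/S$. You instead use Ingster's second-moment method, computing $\chi^2(P\|Q)+1$ in closed form as $\E_W[(1+\eps^2 W/S)^n]$ with $W$ a Rademacher sum, and finish with a one-line MGF bound. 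Your route is more standard for mixture-vs-product lower bounds and is cleaner: it sidesteps the auxiliary $G_\eps(m)$ estimate entirely and yields a slightly sharper constant. The paper's KL approach, on the other hand, makes the ``double sampling'' intuition visible through $G_\eps(0)=G_\eps(1)=0$, i.e., states visited at most once contribute nothing. The combinatorial bookkeeping you flag as an obstacle is in fact routine: averaging over the Rademacher $\alpha_\ell$'s first (before grouping) immediately collapses each sample's factor to $1+\eps^2\sigma_{i_\ell}\sigma'_{i_\ell}$, and the rest is straightforward.
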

\begin{proof}
For convenience, we denote $\left(\mu_2\circ \P^0\right)^n$ by $P$ and $\E_{\vsigma} \left(\mu_2\circ \P^\eps_{c,\vsigma}\right)^n$ by $Q$. By Pinsker's inequality, we have $\norm{P-Q}_{TV}\le \sqrt{2KL(P,Q)}$, for any distribution $P,Q$. Thus it suffices to upper bound $KL(P,Q)$ by $0.05$.

We define $E_i$ as a random subset, i.e., $E_i = \{l|1\le l\le n, s_l = i\}$, given $D = \{(s_i,s_i')\}_{i=1}^n$.
 Then for both $\E_{\vsigma} \left(\mu_2\circ \P^\eps_{c,\vsigma}\right)^n$ and $\left(\mu_2\circ \P^0\right)^n$, $s_1,\ldots,s_n$ are i.i.d. distributed by $\mu_2$.  Note that 
\begin{equation}
\begin{split}
	&Q(s'_1,\ldots,s'_n\mid s_1,\ldots,s_n ) \\
	=& \sum_{\vsigma\in\{-1,1\}^S}p(\vsigma)Q(s'_1,\ldots,s'_n\mid s_1,\ldots,s_n,\vsigma) \\
	= &\sum_{\vsigma\in\{-1,1\}^S}\prod_{i=1}^S p(\sigma_i)\prod_{i=1}^S Q(s'_{E_i}|E_i,\sigma_i)\\
	= &\prod_{i=1}^S\left(\sum_{\sigma_i\in\{-1,1\}} p(\sigma_i)Q(s'_{E_i}|E_i,\sigma_i)\right),
\end{split}
\end{equation}
and 
\begin{equation}
\begin{split}
	P(s'_1,\ldots,s'_n\mid s_1,\ldots,s_n ) 	= \prod_{i=1}^S P(s'_{E_i}|E_i).\\
\end{split}
\end{equation}

For any tuple $(s_1,\ldots, s_n)$ and  subset $E\subset \{1,\ldots, n\}$, we define $s_{E}$ as the sub-tuple of $s$ with length $|E|$ selected by $E$. Define $P_{E_i},Q_{E_i}$ as the  distribution of $s'_{E_i}$ conditioned on $E_i$. In detail, $Q_{E_i}(s'_{E_i}) = \sum\limits_{\sigma_i\in\{-1,1\}} p(\sigma_i)Q(s'_{E_i}|E_i,\sigma_i)$ and $P_{E_i}(s'_{E_i}) = P(s'_{E_i}|E_i)$.  Note that for $Q$, $s'_{E_i}$ are i.i.d.  conditioned on $E_i$ and $\sigma_i$, i,.e., $ Q(s'_{E_i}|E_i,\sigma_i) =\prod_{l\in E_i}\overline{\P}^\eps_{c,\sigma_i}(s'_l)$. Therefore the distribution $Q_{E_i}$ only depends on $|E_i|$, so does $P_{E_i}$. 

Thus we can write the KL divergence as:

\begin{equation}
\begin{split}
	&\KL{\left(\mu_2\circ \P^0\right)^n}{\E_{\vsigma} \left(\mu_2\circ \P^\eps_{c,\vsigma}\right)^n} = \KL{P}{Q} = \Exp{D\sim P}{\log\frac{ P(D)}{ Q(D)}} \\
	=&  \Exp{D\sim P}{\log\frac{ P(s'_1,\ldots,s'_n\mid s_1,\ldots,s_n)}{  Q(s'_1,\ldots,s'_n\mid s_1,\ldots,s_n)} + \log\frac{P(s_1,\ldots,s_n)}{Q(s_1,\ldots,s_n)}}\\
	=&  \Exp{D\sim P}{\log\frac{  \prod_{i=1}^S P(s'_{E_i}|E_i)}{  \prod_{i=1}^S Q(s'_{E_i}|E_i)} + \log\frac{P(s_1,\ldots,s_n)}{Q(s_1,\ldots,s_n)}}  \quad  (P(s_1,\ldots,s_n) = Q(s_1,\ldots,s_n))\\
	=&  \Exp{D\sim P}{\sum_{i=1}^S \log\frac{P_{E_i}(s'_{E_i})}{  Q_{E_i}(s'_{E_i})} } \\
	=& \sum_{i=1}^S\Exp{D\sim P}{ \log\frac{P_{E_i}(s'_{E_i})}{  Q_{E_i}(s'_{E_i})} }.\\
\end{split}
\end{equation}

By the definition of $P$ and $Q$, given $c\in{\pm 1}$ and $\eps>0$, we can see that $\Exp{D\sim P}{ \log\frac{P_{E_i}(s'_{E_i})}{  Q_{E_i}(s'_{E_i})} }$ only a function of $|E_i|$, and we denote it by $G_{c,\eps}(|E_i|)$. Thus we have

\begin{equation}
\begin{split}
&\KL{\left(\mu_2\circ \P^0\right)^n}{\E_{\vsigma} \left(\mu_2\circ \P^\eps_{c,\vsigma}\right)^n} \\
=& \KL{P}{Q}\\
=&  \sum_{i=1}^S\sum_{m=0}^n \Exp{D\sim P}{\sum_{i=1}^S \log\frac{P_{E_i}(s'_{E_i})}{  Q_{E_i}(s'_{E_i})} \middle\vert |E_i| = m} P(|E_i| = m)\\
=&  \sum_{i=1}^S\sum_{m=0}^n G_{c,\eps}(m) P(|E_i| = m)\\
=&  S\sum_{m=0}^n G_{c,\eps}(m) P(|E_1| = m).\\
\end{split}
\end{equation}

The last step is because $E_i$ are i.i.d. distributed. For convenience, we will denote $P(|E_1| = m)$ by $P_N(m)$.

 It can be shown that $G_{c,\eps}(m)$ is independent of $c$, and thus we drop $c$ in the subscription. We could even simplify the expression of $G_{\eps}(m)$ by defining $R_{\sigma,\eps}(j) = 0.5(1+j\sigma\eps)$ over $\{-1,1\}$ (For $c=1$, this is effectively grouping $(t_{1,1},t_{-1,-1})$ into a state, say $1$, and  $(t_{1,-1},t_{-1,1})$ into another state, say $-1$.)

\[G_{\eps}(m) = \KL{\left(\textrm{Unif}\{-1,1\}\right)^m}{\frac{\left(R'_{-1,\eps} \right)^m+ \left(R'_{1,\eps}\right)^m}{2}}.\]

Below are some basic properties of $G_\eps(m)$.

\begin{itemize}
	\item $G_\eps(0) =0 $.
	\item $G_\eps(1) = 0$.
	\item $G_\eps(m) \le \frac{6m^2+m}{8}\eps^4 + \frac{m}{2}\frac{\eps^4}{1-\eps^2}\le 2m\eps^4$, for $\eps^2\le \frac{1}{2}$.
\end{itemize}

The first two properties can be verified by direct calculation, and the third property is proved in \Cref{lem:third_bullet}.

Now it remains to calculate $P_N(1)$ and $\Exp{P}{|E_1|^2}$. 
We have 

\[P_N(1) = n\frac{1}{S}(1-\frac{1}{S})^{n-1}\le \frac{n}{S}(1-\frac{n}{S}), \]
and 

\[\Exp{P}{|E_1|^2} = \Exp{P}{(\sum_{i=1^n}\one{s_i =1})^2} = \Exp{P}{\sum_{i=1}^{n}\one{s_i =1} + \sum_{i,j=1,\i\neq j}^{n}\one{s_i=s_j =1}} = \frac{n}{S} + \frac{n(n-1)}{S^2}.\]

Thus we conclude that 
\begin{align*}
	\KL{P}{Q} 
	= & S\sum_{m=2}^n P_N(m)  G_\eps(m) 
	\le S\sum_{m=2}^n P_N(m)\cdot 2m^2\eps^4
	= 2\big(\sum_{m=2}^n P_N(m)m^2\big) S\eps^4 \\
	= & 2\left(\Exp{P}{|E_1|^2} - P_N(1)\right)  S\eps^4
	= 2\left(\frac{n(n-1)}{S^2} +\frac{n^2}{S^2} \right) S\eps^4
	\le \frac{4n^2\eps^4}{S}.
\end{align*}

Since $n\le 0.1\frac{S^{0.5}}{\eps^{2}}$, we have $\norm{P-Q}_{TV}\le \sqrt{2KL(P,Q)} \le \sqrt{0.08} \le 0.1$, which completes the proof.
\end{proof}

\begin{lemma}\label{lem:third_bullet}
For $\eps^2\le \frac{1}{2}$, we have $$G_\eps(m) \le \frac{6m^2+m}{8}\eps^4 + \frac{m}{2}\frac{\eps^4}{1-\eps^2}\le 2m^2\eps^4.$$

\end{lemma}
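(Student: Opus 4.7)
My plan is to reduce the KL divergence to a one-dimensional computation in the sufficient statistic $k = \#\{i:x_i=1\}$, then exploit a tidy closed form for the likelihood ratio and carry out a Taylor-type bound with an explicit quartic remainder.

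First, since $\bar Q(x) = \tfrac12 Q_1(x) + \tfrac12 Q_{-1}(x)$ depends on $x$ only through $k$, the likelihood ratio is a function of $k$ alone. Writing $P_1=(1+\eps)/2$, $P_0=(1-\eps)/2$, $Z = k - m/2$, and $\alpha = \log\tfrac{1+\eps}{1-\eps}$, the identity $P_1^kP_0^{m-k} + P_0^kP_1^{m-k} = 2(1-\eps^2)^{m/2}\cosh(\alpha Z)\cdot 2^{-m}\cdot 2^m$ gives $r(k) := \bar Q(x)/P(x) = (1-\eps^2)^{m/2}\cosh(\alpha Z)$. Taking $\E_{k\sim\mathrm{Bin}(m,1/2)}[-\log r(k)]$ yields the clean formula
\[ G_\eps(m) \;=\; -\tfrac{m}{2}\log(1-\eps^2) \;-\; \E\bigl[\log\cosh(\alpha Z)\bigr]. \]

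Second, I would bound the two terms with elementary inequalities. For the first, expanding $-\log(1-\eps^2) = \sum_{j\ge 1}\eps^{2j}/j$ and using $1/j \le 1$ for $j\ge 2$ to sum the geometric tail gives $-\tfrac{m}{2}\log(1-\eps^2) \le \tfrac{m\eps^2}{2} + \tfrac{m\eps^4}{2(1-\eps^2)}$. For the second, I would prove the global inequality $\log\cosh(y) \ge y^2/2 - y^4/12$ by setting $f(y):=\log\cosh(y)-y^2/2+y^4/12$, noting $f'(y) = \tanh(y)-y+y^3/3$, whose derivative is $\tanh^2(y)\ge 0$, so $f'$ is odd-with-nonneg-derivative and hence has the sign of $y$, making $f$ minimized at $0$. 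Taking expectations with the central Binomial moments $\E Z^2 = m/4$ and $\E Z^4 = m(3m-2)/16$ then yields $\E\log\cosh(\alpha Z) \ge \alpha^2 m/8 - \alpha^4 m(3m-2)/192$.

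Third, from $\sinh(\alpha/2) = \eps/\sqrt{1-\eps^2}$ and $\sinh(x)\ge x$ I get $\alpha^2 \ge 4\eps^2$, so $-\alpha^2 m/8 \le -m\eps^2/2$, which cancels the $m\eps^2/2$ from the first piece, leaving
\[ G_\eps(m) \;\le\; \tfrac{m\eps^4}{2(1-\eps^2)} \;+\; \tfrac{\alpha^4 m(3m-2)}{192}. \]
The remaining task is to show $\alpha^4 \le 48\eps^4$ when $\eps^2\le 1/2$, after which $\alpha^4 m(3m-2)/192 \le \eps^4 m(3m-2)/4 \le \tfrac{(6m^2+m)\eps^4}{8}$ follows algebraically (since $-2m\le m/2$). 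The main obstacle is precisely this sharper control on $\alpha^4$: the easy bound $\alpha^2\le 4\eps^2/(1-\eps^2)$ (again from $\sinh(x)\ge x$) only gives $\alpha^4\le 64\eps^4$ on $\eps^2\le 1/2$, which is too weak for $m\ge 4$. To tighten it, I would observe that $\tanh^{-1}(\eps)/\eps = \sum_{k\ge 0}\eps^{2k}/(2k+1)$ is monotonically increasing on $[0,1/\sqrt 2]$, so its supremum on that interval is $\sqrt{2}\log(1+\sqrt 2)$ (using $\tanh^{-1}(1/\sqrt 2)=\tfrac12\log(3+2\sqrt 2)=\log(1+\sqrt 2)$); this gives $\alpha^4/\eps^4 \le 16(\sqrt 2\log(1+\sqrt 2))^4 < 48$. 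Finally, the second inequality $\tfrac{6m^2+m}{8}\eps^4 + \tfrac{m\eps^4}{2(1-\eps^2)} \le 2m^2\eps^4$ is routine: use $1/(1-\eps^2)\le 2$ to bound the last term by $m\eps^4$, observe $m\le m^2/2$ for $m\ge 2$, and check $m=1$ by inspection.
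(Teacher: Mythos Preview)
Your argument is correct in substance, but two justifications are miswritten. First, the second derivative of $f(y)=\log\cosh y - y^2/2 + y^4/12$ is $y^2-\tanh^2 y$, not $\tanh^2 y$; it is still nonnegative because $|\tanh y|\le |y|$, so the conclusion $\log\cosh y\ge y^2/2-y^4/12$ stands. Second, applying $\sinh(x)\ge x$ to $\sinh(\alpha/2)=\eps/\sqrt{1-\eps^2}$ yields the \emph{upper} bound $\alpha\le 2\eps/\sqrt{1-\eps^2}$, not $\alpha\ge 2\eps$; the lower bound you need follows instead from $\tanh(x)\le x$, i.e.\ $\eps=\tanh(\alpha/2)\le \alpha/2$.

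Your route and the paper's share the same skeleton---reduce to the sufficient statistic, split off a $(1-\eps^2)$ factor, lower-bound the remaining piece so that the $\eps^2$ terms cancel---but differ in the key inequality. The paper factors as $\bigl((1-\eps)^{|W|}+(1+\eps)^{|W|}\bigr)(1-\eps^2)^{(m-|W|)/2}$ with $W=\sum_i x_i$ and uses the binomial bound $(1-\eps)^{k}+(1+\eps)^{k}\ge 2\bigl(1+\binom{k}{2}\eps^2\bigr)$ followed by $-\log(1+u)\le -u+u^2/2$; everything stays in powers of $\eps$, so no conversion step is needed. You instead pull out the full $(1-\eps^2)^{m/2}$, recognize the rest as $2\cosh(\alpha Z)$, and invoke the clean global inequality $\log\cosh y\ge y^2/2-y^4/12$. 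Your form is analytically tidier, but the price is having to translate $\alpha$ back to $\eps$, which forces the endpoint check $\alpha^4/\eps^4\le 16\bigl(\sqrt{2}\,\log(1+\sqrt{2})\bigr)^4\approx 38.6<48$; the paper's approach avoids any such numerical verification.
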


\begin{proof}[Proof of \Cref{lem:third_bullet}]\label{proof:}
\newcommand{\sx}{|\vx|}
Let $x_1,\ldots, x_n\overset{i.i.d.}{\sim}\{-1,1\}$, we have 

\[G_\eps(m) = -\Exp{\vx}{\log \left(\prod_{i=1}^m (1-x_i\eps) + \prod_{i=1}^m (1+x_i\eps) \right)}.\]

For convenience, we define $\sx:= |\sum_{i=1}^m x_i|$. Note that 
\[\prod_{i=1}^m (1-x_i\eps) + \prod_{i=1}^m (1+x_i\eps) = \left((1-\eps)^{\sx} + (1+\eps)^{\sx}\right) (1-\eps^2)^{\frac{m-\sx}{2}}.\]

Thus 
\[ G_\eps(m) = -\Exp{\vx}{\log  \left((1-\eps)^{\sx} + (1+\eps)^{\sx}\right)} - \frac{m-\sx}{2}\Exp{\vx}{\log (1-\eps^2)}. \]

For the first term, we have 
\begin{align*}
&-\Exp{\vx}{\log  \left((1-\eps)^{\sx} + (1+\eps)^{\sx}\right)} \\
\le &-\Exp{\vx}{\log  \left(1 + \frac{\sx(\sx-1)}{2}\eps^2\right)}	\\
 \le &\Exp{\vx}{-\frac{\sx(\sx-1)}{2}\eps^2 + \frac{1}{2} \left(\frac{\sx(\sx-1)}{2}\eps^2\right)^2}\\
 \le &\Exp{\vx}{-\frac{\sx(\sx-1)}{2}\eps^2 + \frac{\sx^4}{8}\eps^4}\\
  = & -\frac{m}{2} \eps^2 + \Exp{\vx}{\frac{\sx}{2}}\eps^2 + \frac{6m^2+m}{8}\eps^4.\\
\end{align*}

For the second term, we have
\begin{align*}
	-\Exp{\vx}{\log (1-\eps^2)} = \Exp{\vx}{\log (1+ \frac{\eps^2}{1-\eps^2})}\le \frac{\eps^2}{1-\eps^2} = \eps^2 + \frac{\eps^4}{1-\eps^2}.
\end{align*}

Thus $G_\eps(m)$ only contains $\eps^4$ terms, i.e.,

\[ G_\eps(m) \le \frac{6m^2+m}{8}\eps^4 + \frac{m}{2}\frac{\eps^4}{1-\eps^2}\le 2m^2\eps^4, \]

the last step is by assumption $\eps^2\le \frac{1}{2}$.
\end{proof}

\begin{proof}[Proof of \Cref{thm:lower_bound_main}]

In our case, since $r$ is known and $|\cA|=1$, we can simplify the each data in $D$ into the form of $(s,s',h)$. Further since the probability transition matrix for $h=1$ and $h=3$ are known, below we will assume $D$ only contains $n$ pairs of $(s,s',2)$, and we will call these states by $\{s_i\}_{i=1}^ n $ and $\{s'_i\}_{i=1}^n$. Since $\norm{f_{c'}-\cT^3_{c,\vsigma} f_{c'}}_{2,\mu_2}^1$ and $\norm{f_{c'}-\cT^3_{c,\vsigma} f_{c'}}_{2,\mu_2}^1$ are constant for all $c,c'$, we only need to consider $\norm{f_{c'}-\cT^2_{c,\vsigma} f_{c'}}_{2,\mu_3}^2$ as our loss.

Recall we define $\mu_2\circ \P^\eps_{c,\vsigma}$ as the joint distribution of $(s,s')$, where $s\sim \mu_2$ and $s' \sim \P_{c,\vsigma| s}$. Thus the dataset $D$ can be viewed as sampled from $\E_{c,\vsigma} \left(\mu_2\circ \P_{c,\vsigma}\right)^n$, i.e., $D$ is sampled from a mixture of product measures.

By \Cref{lem:TV_to_KL}, we know 
\begin{align*}
&\big\| \E_{\vsigma} \left(\mu_2\circ \P^\eps_{1,\vsigma}\right)^n -\E_{\vsigma} \left(\mu_2\circ \P^\eps_{-1,\vsigma}\right)^n \big\|_{TV} 	\\
\le & \big\| \E_{\vsigma} \left(\mu_2\circ \P^\eps_{1,\vsigma}\right)^n -\E_{\vsigma} \left(\mu_2\circ \P^0\right)^n \big\|_{TV} + \big\| \E_{\vsigma} \left(\mu_2\circ \P^\eps_{-1,\vsigma}\right)^n -\E_{\vsigma} \left(\mu_2\circ \P^0\right)^n \big\|_{TV} 	\\
\le &0.2.
\end{align*}

Thus if we denote the distribution of $widetilde{\mathfrak{A}}(D)$ by $X_c$, where $D\sim \E_{\vsigma} \left(\mu_2\circ \P^\eps_{c,\vsigma}\right)^n$ and $\vsigma \sim \{-1,1\}^S$, and $\widetilde{\mathfrak{A}}$  can be random, the above inequality implies $\pr{X_{-1}\neq X_1}\le 0.2$, and therefore we have

\begin{equation}\label{eq:c_to_gap'}
\begin{split}
	\E_{c,\vsigma}\E_{D}\left[\Berr_{c,\vsigma}\left(\mathfrak{A}(D)\right)\right]  
	=&\frac{1}{2}\left( \E_{\vsigma,D}\left[\Berr_{1,\vsigma}\left(\mathfrak{A}(D)\right)\right] + \E_{\vsigma,D}\left[\Berr_{-1,\vsigma}\left(\mathfrak{A}(D)\right)\right]\right)\\
	=&\frac{1}{6} + \frac{\eps^2}{24} \left( \pr{X_1 \neq 1 } + \pr{X_{-1}\neq -1 } \right)\\
	=&\frac{1}{6} + \frac{\eps^2}{24} \left( \pr{X_1 \neq 1 } + \pr{X_{-1}\neq -1 } + \pr{X_1 \neq X_{-1}} \right) -\frac{\eps^2}{24}\pr{X_1 \neq X_{-1}} \\
	\ge& \frac{1}{6}+ \frac{\eps^2}{24} - \frac{\eps^2}{24}\pr{X_1 \neq X_{-1}}\\
	\ge& \frac{1}{6}+ \frac{\eps^2}{24} - \frac{\eps^2}{24}\times 0.2\\
	= & \frac{1}{6}+ \frac{\eps^2}{30}\\
	= & \E_{c,\vsigma}\left[\min_{c'\in\{-1,1\}}\Berr_{c,\vsigma}\left(f_{c'}\right)\right]  + \frac{\eps^2}{30}.
\end{split}
\end{equation}

\end{proof}

\subsection{Proof of Theorem~\ref{thm:lower_bound_main2}}\label{subsec:proof_full_lb}

Now we will prove \Cref{thm:lower_bound_main2} by slightly twisting the distribution of hard instances (MDPs) constructed in the previous subsection.

\begin{proof}[Proof of \Cref{thm:lower_bound_main2}]
W.O.L.G, we can assume $S$ is even and $S = 2S'$ (o.w. we can just abandon one state.) The only modification from the previous lower bound with $C=2$ is now the distribution of $\sigma$ is defined as the conditional distribution of $P$ on $\sum_{i=1}^{S} = 0$, i.e.,  $P'(\vsigma) = P(\vsigma|\sum_{i=1}^{S} \sigma_i = 0)$, where $P$ is the uniform distribution on $\{-1,1\}^S$. The main idea is that the data distribution (i.e., distribution of $(s,s')$) shouldn't be very different even if we add this additional `balancedness' restriction.
 We further define a metric $d$ on $\{-1,1\}^S$. In detail, for $\vsigma,\vsigma'\in\{-1,1\}^d$, we define $d(\vsigma,\vsigma') = \frac{\sum_{i=1}^S |\sigma_i-\sigma'_i|}{2S}$. We have the following lemma:
\begin{lemma}
\begin{equation}\label{eq:wasserstein}
W_1^d(P,P') = 	\frac{1}{2S}\E_{P}{\left|\sum_{i=1}^S \sigma_i\right|},
\end{equation}
where $W_1^d(P,P')$ is defined as $\min\limits_{\vsigma\sim P,\vsigma'\sim P'}\E[d(\vsigma,\vsigma')]$.

By Cauchy Inequality, we have
	\begin{equation}
		W_1^d(P,P') = \frac{1}{2S}\E_{P}{\left|\sum_{i=1}^S \sigma_i\right|}	\le \frac{1}{2S}\sqrt{\E_{P}{\left(\sum_{i=1}^S \sigma_i\right)^2}} = \frac{1}{2\sqrt{S}}
	\end{equation}
\end{lemma}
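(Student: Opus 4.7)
The plan is to prove the equality in (\ref{eq:wasserstein}) by sandwiching $W_1^d(P,P')$ between the same quantity from above and below. The lower bound will come from a specific $1$-Lipschitz test function, and the upper bound will come from an explicit coupling.

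For the lower bound, I will use the Kantorovich--Rubinstein duality. Consider the test function $f(\vsigma) = \frac{1}{2S}\bigl|\sum_{i=1}^S \sigma_i\bigr|$. Flipping a single coordinate $\sigma_i$ changes $\sum_i \sigma_i$ by exactly $2$, hence changes $f$ by at most $1/S$, which equals the $d$-distance of such a change; more generally the reverse triangle inequality together with the fact that changing $k$ coordinates changes $\sum_i \sigma_i$ by at most $2k$ shows $|f(\vsigma)-f(\vsigma')| \le d(\vsigma,\vsigma')$, so $f$ is $1$-Lipschitz with respect to $d$. Since $\sum_i \sigma_i \equiv 0$ under $P'$ we have $\E_{P'} f = 0$, and therefore
\[
W_1^d(P,P') \;\ge\; \E_P f - \E_{P'} f \;=\; \frac{1}{2S}\,\E_P\Bigl|\sum_{i=1}^S \sigma_i\Bigr|.
\]

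For the upper bound, I will exhibit a coupling $(\vsigma,\vsigma')$ with $\vsigma\sim P$ and $\vsigma'\sim P'$ whose expected $d$-distance is exactly $\frac{1}{2S}\E_P|\sum_i \sigma_i|$. The coupling is the natural ``minimum-flip'' one: sample $\vsigma\sim P$, let $k = \frac{1}{2}\sum_i \sigma_i$, and if $k>0$ choose a uniformly random subset of size $k$ from the $+1$-coordinates of $\vsigma$ and flip them to $-1$ (symmetrically for $k<0$, and do nothing if $k=0$). By construction $\vsigma'$ always satisfies $\sum_i \sigma'_i = 0$, and $d(\vsigma,\vsigma') = |k|/S = \frac{1}{2S}|\sum_i \sigma_i|$ deterministically. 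So it only remains to check the marginal of $\vsigma'$ is $P'$.

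The marginal check is the one non-routine step, and I expect it to be the main obstacle (though a mild one). The cleanest argument is a symmetry argument: the joint law of $(\vsigma,\vsigma')$ is invariant under the action of the symmetric group $S_S$ permuting coordinates, because both $P$ and the flipping rule are permutation-invariant. Hence the marginal law of $\vsigma'$ is a permutation-invariant distribution supported on balanced vectors, and since the only such distribution is the uniform one on $\{\vsigma' : \sum_i \sigma'_i = 0\}$, it coincides with $P'$. Taking expectation of $d(\vsigma,\vsigma')$ under this coupling then gives $W_1^d(P,P') \le \frac{1}{2S}\E_P|\sum_i \sigma_i|$, matching the lower bound and yielding (\ref{eq:wasserstein}). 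The Cauchy--Schwarz bound $\le \frac{1}{2\sqrt{S}}$ then follows immediately from $\E_P|\sum_i \sigma_i| \le \sqrt{\E_P(\sum_i\sigma_i)^2} = \sqrt{S}$, as already indicated in the excerpt.
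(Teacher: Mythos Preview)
Your proposal is correct. For the upper bound you construct exactly the same coupling as the paper: for each $\vsigma$, map uniformly to the balanced vectors at minimum Hamming distance, and argue the $\vsigma'$-marginal is $P'$ by permutation symmetry. The paper's proof stops there --- it only establishes
\[
W_1^d(P,P') \le \frac{1}{2S}\,\E_{P}\Bigl|\sum_{i=1}^S \sigma_i\Bigr|,
\]
which is all that is actually used downstream. You go further and prove the matching lower bound via Kantorovich--Rubinstein duality with the test function $f(\vsigma)=\frac{1}{2S}\bigl|\sum_i\sigma_i\bigr|$, thereby justifying the \emph{equality} claimed in the lemma statement, which the paper asserts but does not verify. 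So your argument is strictly more complete than the paper's, at the cost of invoking KR duality; since only the upper bound is needed for the subsequent TV comparison, the paper's shorter route suffices for the application.
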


\begin{proof}
For even $S$, we define $B$ as the set of the ``balanced'' $\vsigma$, i.e., $B = \{\vsigma | \sum_{i=1}^S \sigma_i' = 0\}$. 
For every $\vsigma\in \{-1,1\}^S$, we define $Q_\vsigma$ as the uniform distribution on $U_\vsigma = \{\vsigma' \mid d(\vsigma,\vsigma') = \frac{|\sum_{i=1}^S\sigma_i|}{2S}\}\cap B$, i.e. $\vsigma'\in U_\vsigma$ if and only if $\vsigma'\in B $ and $d(\vsigma,\vsigma') = \min_{\vsigma'\in B}d(\vsigma,\vsigma')$.

Now we define $\Gamma(\vsigma,\vsigma') = P(\vsigma)Q_\vsigma(\vsigma')$. By definition the marginal distribution of $\Gamma$ on $\vsigma$ is $P$. By symmetry, the marginal distribution of $\vsigma'$ is $P'$. Thus by definition of $W_1$,

\[W_1^d(P,P') \le \Exp{\vsigma,\vsigma'\sim \Gamma}{d(\vsigma,\vsigma')} = \frac{1}{2S}\E_{P}{\left|\sum_{i=1}^S \sigma_i\right|}. \]
\end{proof}

\begin{lemma}\label{lem:TV_diff_sigma}
 \begin{equation}
\big\| \left(\mu_2\circ \P^\eps_{c,\vsigma}\right)^n -\left(\mu_2\circ \P^\eps_{c,\vsigma'}\right)^n \big\|_{TV} 	\le C\eps\sqrt{n d(\vsigma,\vsigma')}.
 \end{equation}
\end{lemma}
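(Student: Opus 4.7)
The plan is to bound the TV distance between the two product measures by going through KL divergence, exploit tensorization, and then use the fact that only coordinates where $\sigma_i \neq \sigma'_i$ contribute.

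\textbf{Step 1 (Pinsker + tensorization).} Let $P := \mu_2\circ \P^\eps_{c,\vsigma}$ and $Q := \mu_2\circ \P^\eps_{c,\vsigma'}$, viewed as distributions on $(s,s')\in \{1,\ldots,S\}\times \{t_{j,k}\}$. By Pinsker's inequality and tensorization of KL divergence,
\[ \big\|P^n - Q^n\big\|_{TV} \leq \sqrt{\tfrac{1}{2} \KL{P^n}{Q^n}} = \sqrt{\tfrac{n}{2} \KL{P}{Q}}~. \]
So it suffices to show $\KL{P}{Q} \leq C' \eps^2 \, d(\vsigma,\vsigma')$ for some constant $C'$.

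\textbf{Step 2 (decomposition over $s$).} Since $P$ and $Q$ share the same $s$-marginal $\mu_2 = \text{Unif}\{1,\ldots,S\}$, the chain rule for KL gives
\[ \KL{P}{Q} = \tfrac{1}{S}\sum_{i=1}^S \KL{\overline{\P}^\eps_{c,\sigma_i}}{\overline{\P}^\eps_{c,\sigma'_i}}~. \]
For indices $i$ with $\sigma_i = \sigma'_i$, the corresponding term vanishes. For the remaining indices, we need to bound $\KL{\overline{\P}^\eps_{c,+1}}{\overline{\P}^\eps_{c,-1}}$ (and its symmetric version, which equals it).

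\textbf{Step 3 (per-coordinate KL is $O(\eps^2)$).} By inspection of $\overline{\P}^\eps_{c,\sigma}(t_{j,k}) = 0.25(1+\eps k \max(-c,j)\sigma)$, the four probabilities are a permutation of $\{\tfrac{1\pm\eps}{4},\tfrac{1\pm\eps}{4}\}$, and flipping $\sigma$ simply swaps the two ``$+\eps$'' atoms with the two ``$-\eps$'' atoms. A direct computation (or reduction to a two-point distribution with masses $\tfrac{1\pm\eps}{2}$) yields
\[ \KL{\overline{\P}^\eps_{c,+1}}{\overline{\P}^\eps_{c,-1}} \;=\; \eps \log\frac{1+\eps}{1-\eps} \;\leq\; \frac{2\eps^2}{1-\eps^2} \;\leq\; C_0\, \eps^2 \]
for $\eps \leq 1/2$.

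\textbf{Step 4 (assemble).} Since $d(\vsigma,\vsigma') = \frac{1}{2S}\sum_i |\sigma_i - \sigma'_i| = \frac{1}{S}|\{i : \sigma_i \neq \sigma'_i\}|$, combining Steps~2--3 yields $\KL{P}{Q} \leq C_0 \eps^2\, d(\vsigma,\vsigma')$. Plugging this back into Step~1 gives $\|P^n - Q^n\|_{TV} \leq \sqrt{C_0/2}\, \eps\sqrt{n\, d(\vsigma,\vsigma')}$, which is the claimed bound with $C := \sqrt{C_0/2}$.

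The only delicate point is Step~3 --- verifying that the KL between the two per-coordinate conditional laws scales as $\eps^2$ rather than $\eps$ --- but this is immediate from the standard Bernoulli KL expansion once one recognizes that the four-atom distribution reduces to a binary one after grouping atoms with equal mass.
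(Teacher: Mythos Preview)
Your proof is correct and follows essentially the same route as the paper: Pinsker plus KL tensorization, then the chain rule over the shared $\mu_2$-marginal, and finally the explicit $O(\eps^2)$ computation of the per-coordinate KL $\eps\log\frac{1+\eps}{1-\eps}$. The only cosmetic differences are in the Pinsker constant convention and your explicit grouping of the four atoms into a binary distribution, both of which are absorbed into the unspecified constant $C$.
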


\begin{proof}
First, note that
\begin{equation}
	\begin{split}
		&\KL{\mu_2\circ \P^\eps_{c,\vsigma}}{\mu_2\circ \P^\eps_{c,\vsigma'}} \\
		= & \KL{\mu_2}{\mu_2} + \Exp{i\sim \mu_2}{\KL{\P^\eps_{c,\vsigma}(\cdot\mid i)}{\P^\eps_{c,\vsigma'}(\cdot \mid i)}}\\
		= & 0+ \Exp{i\sim \mu_2}{\KL{\P^\eps_{c,\sigma_i}}{\P^\eps_{c,\sigma'_i}}}\\
		= & \prob{i\sim \mu_2}{\sigma_i\neq \sigma'_i} \cdot \left( \frac{1+\eps}{2}\log \frac{1+\eps}{1-\eps}+ \frac{1-\eps}{2}\log \frac{1-\eps}{1+\eps}\right)\\
		= & \prob{i\sim \mu_2}{\sigma_i\neq \sigma'_i} \cdot \eps\log \frac{1+\eps}{1-\eps}\\
		= & \prob{i\sim \mu_2}{\sigma_i\neq \sigma'_i} \cdot \frac{2\eps^2}{1-\eps}\\
		\le & 4d(\vsigma,\vsigma') \eps^2.
	\end{split}
\end{equation}
Thus we have 
\begin{equation}
\begin{split}
	&\big\| \left(\mu_2\circ \P^\eps_{c,\vsigma}\right)^n -\left(\mu_2\circ \P^\eps_{c,\vsigma'}\right)^n\big\|_{TV} \\
\le &\sqrt{2 \KL{\left(\mu_2\circ \P^\eps_{c,\vsigma}\right)^n}{\left(\mu_2\circ \P^\eps_{c,\vsigma'}\right)^n}}\\
\le &\sqrt{2 n \KL{\mu_2\circ \P^\eps_{c,\vsigma}}{\mu_2\circ \P^\eps_{c,\vsigma'}}}\\
\le & \eps\sqrt{8m d(\vsigma,\vsigma')}.
\end{split}
\end{equation}

\end{proof}

Let $\Gamma(\vsigma,\vsigma')$ be the joint probabilistic distribution on $\{-1,1\}^S \times \{-1,1\}^S$ which attains the \cref{eq:wasserstein}. Therefore the marginal distribution of $\Gamma$ is $P$ and $P'$.  And thus we have for any $c\in \{-1,1\}$, 

\begin{equation}
\begin{split}
	&\norm{\Exp{\vsigma\sim P}{\left(\mu_2\circ \P^\eps_{c,\vsigma}\right)^n} -\Exp{\vsigma\sim P'}{\left(\mu_2\circ \P^\eps_{c,\vsigma'}\right)^n}}_{TV} \\
	\le & \Exp{\vsigma,\vsigma'\sim \Gamma}{\norm{ \left(\mu_2\circ \P^\eps_{c,\vsigma}\right)^n -\left(\mu_2\circ \P^\eps_{c,\vsigma'}\right)^n}_{TV} }\\
	\le &\Exp{\vsigma,\vsigma'\sim \Gamma}{\eps\sqrt{8n d(\vsigma,\vsigma')}}\\
	\le & \eps\sqrt{n\Exp{\vsigma,\vsigma'\sim \Gamma}{8d(\vsigma,\vsigma')}}\\
	 = & \eps \sqrt{8n W_1^d(P,P')}\\
	\le & 2\eps n^{0.5} S^{-0.25}.
\end{split}
\end{equation}

Therefore, when $n\le  \frac{\sqrt{S}}{400\eps^2}$, for any $c\in \{-1,1\}$,
\[\norm{\Exp{\vsigma\sim P}{\left(\mu_2\circ \P^\eps_{c,\vsigma}\right)^n} -\Exp{\vsigma\sim P'}{\left(\mu_2\circ \P^\eps_{c,\vsigma'}\right)^n}}_{TV}\le 0.1. \]

By \Cref{lem:TV_to_KL}, we have \[\norm{\Exp{\vsigma\sim P'}{\left(\mu_2\circ \P^\eps_{1,\vsigma'}\right)^n}-\Exp{\vsigma\sim P'}{\left(\mu_2\circ \P^\eps_{-1,\vsigma'}\right)^n}}_{TV} \le 0.1+0.1+0.1+0.1 = 0.4.\] 

Thus using the same argument in \cref{eq:c_to_gap'}, 
In detail, denote the distribution of $\mathfrak{A}(D)$ by $X_c$, where $D\sim \E_{\vsigma} \left(\mu_2\circ \P^\eps_{c,\vsigma}\right)^n$, $\vsigma \sim \{-1,1\}^S$,  the above inequality implies $\pr{X_{-1}\neq X_1}\le 0.4$, and therefore we have 
\begin{equation}\label{eq:c_to_gap}
\begin{split}
	\E_{c,\vsigma}\E_{D}\left[\Berr_{c,\vsigma}\left(\mathfrak{A}(D)\right)\right]  
	=&\frac{1}{2}\left( \E_{\vsigma,D}\left[\Berr_{1,\vsigma}\left(\mathfrak{A}(D)\right)\right] + \E_{\vsigma,D}\left[\Berr_{-1,\vsigma}\left(\mathfrak{A}(D)\right)\right]\right)\\
	=&\frac{1}{6} + \frac{\eps^2}{24} \left( \pr{X_1 \neq 1 } + \pr{X_{-1}\neq -1 } \right)\\
	=&\frac{1}{6}  + \frac{\eps^2}{24} \left( \pr{X_1 \neq 1 } + \pr{X_{-1}\neq -1 } + \pr{X_1 \neq X_{-1}} \right) -\frac{\eps^2}{24}\pr{X_1 \neq X_{-1}} \\
	\ge& \frac{1}{6} + \frac{\eps^2}{24} - \frac{\eps^2}{24} \times 0.4\\
	= & \frac{1}{6} + \frac{1}{40} \eps^2\\
	= & \E_{c,\vsigma}\left[\min_{c'\in\{-1,1\}}\Berr_{c,\vsigma}\left(f_{c'}\right)\right]  + \frac{\eps^2}{40}.
\end{split}
\end{equation}

\end{proof}
% !TEX root = main.tex

\section{Auxiliary Results} \label{append:aux}

In this section, we prove some auxiliary lemmas. \Cref{app:proof_lemma_surrogate} considers the relation between Bellman error and suboptimality in values (\Cref{lem:surrogate}). \Cref{app:proof_lemma_Ctilde} provides a supporting lemma used in the proof of \Cref{theorem:Minimax_LRC}. \Cref{app:proof_lemma_VF} presents a full version of \Cref{lemma:VF}.

\subsection{Connections between Bellman error and suboptimality in value (Lemma~\ref{lem:surrogate})} \label{app:proof_lemma_surrogate}

In this part, we present several possible ways to connect Bellman error $\Berr(\bfun)$ with the suboptimality gap $V_1^{\star}(s_1) - V_1^{\pi_{\bfun}}(s_1)$.

\paragraph{Via concentrability coefficient}

\gap*

\Cref{lem:surrogate} gives a feasible method to upper bound $V_1^{\star}(s_1) - V_1^{\pi_{\bfun}}(s_1)$ with $\Berr(\bfun)$ using the concentrability coefficient introduced in \Cref{as:coverage}. We provide the proof of \Cref{lem:surrogate} below.
\begin{proof}[Proof of Lemma \ref{lem:surrogate}]
	The proof of \Cref{lem:surrogate} is analogous to Theorem~2 in \cite{xie2020q}. We place it here for the self-containedness of our paper. In discussions below, we omit the subscript $h$ in policy $\pi_{f_h}$ and simply write $\pi_{\bfun}$ to ease the notation. We first note that since $\pi_f$ is greedy w.r.t $f$, therefore,
	\begin{equation} \label{eq:pre_0} V_1^{\star}(s_1) - V_1^{\pi_f}(s_1) \leq V_1^{\star}(s_1) - f_1\big(s_1, \pi^\star(s_1)\big) + 
	f_1\big(s_1, \pi_f(s_1)\big) - V_1^{\pi_f}(s_1).
	\end{equation}
	Consider any policy $\pi$. Since $f_{H+1} = 0$ and $V_1^{\pi}(s_1) = \E\big[ \sum_{h=1}^H r_h \, \big| \, s_1, \pi \big]$ by definition, we have
	\begin{equation*}
	f_1\big( s_1, \pi(s_1) \big) - V_1^{\pi}(s_1) = \E \Bigg[ \sum_{h=1}^H \Big( f_h(s_h,a_h) - \E_h^{\pi} \big[ r_h + f_{h+1}\big(s_{h+1},a_{h+1}\big) \, \big| \, s_h, a_h \big] \Big) \, \Bigg| \, s_1, \pi \Bigg].
	\end{equation*}
	Therefore, combined with the fact $\pi_f$ is the greedy policy w.r.t. $f$, we can show that
	\begin{align} \label{eq:pre_1} f_1\big( s_1, \pi^\star(s_1) \big) - V_1^{\star}(s_1) \geq& \E \Bigg[ \sum_{h=1}^H \big( f_h - \cT_h^{\star} f_{h+1} \big)(s_h,a_h) \, \Bigg| \, s_1, \pi^\star \Bigg], \\
	 \label{eq:pre_2} f_1(s_1, \pi_f(s_1)) - V_1^{\pi_f}(s_1) =& \E \Bigg[ \sum_{h=1}^H \big( f_h - \cT_h^{\star} f_{h+1} \big)(s_h,a_h) \, \Bigg| \, s_1, \pi_f \Bigg]. \end{align}
	Plugging \cref{eq:pre_1,eq:pre_2} into \cref{eq:pre_0} yields
	\[ V_1^{\star}(s_1) - V_1^{\pi_f}(s_1) \leq -\E\Bigg[ \sum_{h=1}^H \big( f_h - \cT_h^{\star} f_{h+1} \big)(s_h,a_h) \, \Bigg| \, s_1, \pi^\star \Bigg] + \E \Bigg[ \sum_{h=1}^H \big( f_h - \cT_h^{\star} f_{h+1} \big)(s_h,a_h) \, \Bigg| \, s_1, \pi_f \Bigg].
	 \]
	Under \Cref{as:coverage}, by Cauchy-Swartz inequality, it holds that for any policy $\pi$:
	\begin{align*}
	\Bigg|\E \Bigg[ \sum_{h=1}^H \big( f_h - \cT_h^{\star} f_{h+1} \big)(s_h,a_h) \, \Bigg| \, s_1, \pi \Bigg] \Bigg|
	\le& \sqrt{H \sum_{h=1}^H \E \Bigg[\big( f_h - \cT_h^{\star} f_{h+1} \big)^2(s_h,a_h) \, \Bigg| \, s_1, \pi \Bigg]}
	\\
	\leq & \sqrt{C} H \sqrt{\frac{1}{H} \sum_{h=1}^H \big\| f_h - \cT_h^{\star} f_{h+1} \big\|_{\mu_h}^2},
	\end{align*}
	which finishes the proof.
\end{proof}

\paragraph{Via a weaker concentrability assumption}

We observe that \Cref{lem:surrogate} does not necessarily need an assumption as strong as \Cref{as:coverage}. In fact, the inequality $V_1^{\star}(s_1) - V_1^{\pi_{\bfun}}(s_1) \leq 2H \sqrt{C \cdot \Berr(\bfun)}$ still holds if
\begin{align} \label{eq:concen2} \E \big[ \big(f_h - \cT_h^{\star} f_{h+1}\big)(s_h,a_h) \bigm| s_1, \pi \big] \leq \sqrt{C} \big\| f_h - \cT_h^{\star} f_{h+1} \big\|_{\mu_h} \qquad \text{for $\pi = \pi^{\star}$ or $\pi = \pi_{\bfun}$ for $\bfun \in \cF$}. \end{align}

If the function class $\bFun$ and $\bcT^{\star} \bFun = \big\{ \bcT^{\star} \bfun = (\cT_1^{\star} f_2, \ldots, \cT_H^{\star} f_{H+1}) \bigm| \bfun \in \bFun \big\}$ have good structures, we may have a tighter estimate of the required $C$.
For illustrative purpose, we take a simple example where $\cF_h$ is a subset of a finite dimensional linear space and $\cT_h^{\star} f_{h+1} \in \cF_h$ for any $f_{h+1} \in \cF_{h+1}$. Let $\phi: \cS \times \cA \rightarrow \mathbb{R}^d$ be a basis of $\cF_h$ with $\| \phi(s,a) \|_2 \leq 1$. Define $\Sigma_h := \E_{\mu_h} [ \phi \phi^{\top} ] \in \mathbb{R}^{d \times d}$. For any $f = w^{\top} \phi \in \cF_h$, $\|f\|_{\infty} \leq \| w \|_2 \leq \|\Sigma_h^{\frac{1}{2}}w\|_2 \sqrt{1/\lambda_{\min} (\Sigma_h)} = \|f\|_{\mu_h} \sqrt{1/\lambda_{\min} (\Sigma_h)}$. Therefore, \cref{eq:concen2} holds for $C = \max_{h \in [H]} \{ 1/ \lambda_{\min}(\Sigma_h) \}$.

\subsection{Proof of Supporting Lemmas in Minimax Algorithm Analysis} \label{app:proof_lemma_Ctilde}

\begin{lemma} \label{lemma:VQ-Vf}
	Suppose \Cref{as:Ctilde} holds. Denote $\bfun^{\best} := \min_{\bfun \in \bFun} \Berr(\bfun)$. For $h \in [H]$, it holds that
	\begin{equation} \label{eq:Q-f_norm} \big\| f_h - f^{\best}_h \big\|_{\rho_h}^2 \leq \widetilde{C}H(H-h+1) \big\| \big( \bfun - \bcT^{\star} \bfun\big) - \big(\bfun^{\best} - \bcT^{\star} \bfun^{\best}\big) \big\|_{\bmu}^2 \quad \text{for $\rho_h = \mu_h$ or $\nu_h \times \Uniform(\cA)$}, \end{equation}
	\begin{equation} \label{eq:VQ-Vf_norm}
	\text{and} \qquad \big\| \cT_h^{\star}f_{h+1} - \cT_h^{\star}f^{\best}_{h+1} \big\|_{\mu_h}^2 \leq \widetilde{C}H(H-h) \big\| \big( \bfun - \bcT^{\star} \bfun\big) - \big(\bfun^{\best} - \bcT^{\star} \bfun^{\best}\big) \big\|_{\bmu}^2.
	\end{equation}
\end{lemma}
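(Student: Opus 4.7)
The plan is a value-function-style telescoping argument, using Assumption~\ref{as:Ctilde} to change measure at every unfolded step. Set $\varepsilon_h \defeq f_h - f_h^{\best}$ and $\Delta_h \defeq \varepsilon_h - (\cT_h^{\star} f_{h+1} - \cT_h^{\star} f_{h+1}^{\best})$, so that $\Delta_h$ is exactly the $h$-th component of $(\bfun - \bcT^{\star}\bfun) - (\bfun^{\best} - \bcT^{\star}\bfun^{\best})$, and the identity
\[
\varepsilon_h \;=\; \Delta_h + \P_h\big(V_{f_{h+1}} - V_{f_{h+1}^{\best}}\big)
\]
holds pointwise on $\cS \times \cA$.

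The crux, and the only non-routine step, is linearising the non-smooth $\max$ inside $V$ by a state-dependent policy: for each $s'$, let $\tilde{\pi}_{h+1}(s') \defeq \argmax_{a'} f_{h+1}(s',a')$ when $V_{f_{h+1}}(s') \ge V_{f_{h+1}^{\best}}(s')$ and $\tilde{\pi}_{h+1}(s') \defeq \argmax_{a'} f_{h+1}^{\best}(s',a')$ otherwise. A standard ``$\max$ versus $\max$'' comparison then gives $|V_{f_{h+1}}(s') - V_{f_{h+1}^{\best}}(s')| \le |\varepsilon_{h+1}(s',\tilde{\pi}_{h+1}(s'))|$, so
\[
|\varepsilon_h(s,a)| \;\le\; |\Delta_h(s,a)| + \E_{s' \sim \P_h(\cdot|s,a)}\big[|\varepsilon_{h+1}(s',\tilde{\pi}_{h+1}(s'))|\big].
\]
Iterating this pointwise inequality from step $h$ through step $H$ (with $\varepsilon_{H+1}=0$) and collecting the induced state-dependent policy $\tilde{\pi} = (\tilde{\pi}_{h+1}, \ldots, \tilde{\pi}_H)$ gives $|\varepsilon_h(s,a)| \le \sum_{t=h}^{H} \E^{\tilde{\pi}}[\,|\Delta_t(s_t,a_t)| \mid s_h=s,\,a_h=a]$.

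Squaring, applying Cauchy--Schwarz to the outer sum (of length $H-h+1$) and Jensen to each inner expectation, then integrating against $\rho_h$, the inner expectations become $t$-step marginals along the trajectory $\tilde{\pi}$: the $P^{\tilde{\pi}}_{h,t}$ family in the $\rho_h = \mu_h$ case, and the $\widetilde{P}^{\tilde{\pi}}_{\cdot,t}$ family (with the appropriate index shift) in the $\nu$-measure case. Assumption~\ref{as:Ctilde} bounds each such marginal by $\widetilde{C}\mu_t$, so every inner term is at most $\widetilde{C}\|\Delta_t\|_{\mu_t}^2$; summing and using $\sum_{t=1}^H \|\Delta_t\|_{\mu_t}^2 = H\|(\bfun-\bcT^{\star}\bfun)-(\bfun^{\best}-\bcT^{\star}\bfun^{\best})\|_{\bmu}^2$ proves~\eqref{eq:Q-f_norm}. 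The bound~\eqref{eq:VQ-Vf_norm} then follows from the same argument started one step deeper: the pointwise inequality $|\cT_h^{\star}f_{h+1}(s,a) - \cT_h^{\star}f_{h+1}^{\best}(s,a)| \le \E_{s'\sim\P_h(\cdot|s,a)}|\varepsilon_{h+1}(s',\tilde{\pi}_{h+1}(s'))|$ removes the $\Delta_h$ contribution, so the outer sum runs from $t=h+1$ only (length $H-h$), giving the smaller prefactor $\widetilde{C}H(H-h)$. The one delicate point is that $\tilde{\pi}$ is not a standard greedy policy for either $\bfun$ or $\bfun^{\best}$; it switches greedily on a state-by-state basis. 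This is harmless because Assumption~\ref{as:Ctilde} is stated uniformly over all policies, so the unusual construction of $\tilde{\pi}$ does not cost anything when converting the trajectory marginals back to the reference measures $\{\mu_t\}$.
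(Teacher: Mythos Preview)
Your proposal is correct and follows essentially the same route as the paper: telescope the difference $f_h - f_h^{\best}$ through the Bellman recursion, linearise the $\max$ inside $V_{f_{h+1}} - V_{f_{h+1}^{\best}}$ by choosing a suitable policy, then apply Cauchy--Schwarz and the concentrability bound of Assumption~\ref{as:Ctilde} to change measure back to $\{\mu_t\}$.

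The only noteworthy difference is in how the $\max$ is linearised. The paper uses \emph{two} standard greedy policies, $\pi_{\bfun}$ and $\pi_{\bfun^{\best}}$, to obtain the pair of one-sided inequalities $f_h - f_h^{\best} \le \E^{\pi_{\bfun}}[\sum_\tau \Delta_\tau]$ and $f_h^{\best} - f_h \le \E^{\pi_{\bfun^{\best}}}[\sum_\tau (-\Delta_\tau)]$, and then squares and integrates each; for the second statement it introduces yet another policy $\tilde\pi_{h+1}(s) = \argmax_a\{f_{h+1}(s,a) \vee f_{h+1}^{\best}(s,a)\}$. You instead build a single state-dependent switching policy from the outset and work with $|\varepsilon_h|$ directly, which unifies the treatment of both inequalities and avoids the pointwise $\max$ over two policy-indexed bounds when combining the one-sided estimates into a squared norm. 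Both constructions are covered by Assumption~\ref{as:Ctilde} (which quantifies over all policies), so the difference is purely cosmetic; your version is arguably a bit tidier.
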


\begin{proof}
	1. Let $\pi_{\bfun}$ be the greedy policy associated with $\bfun \in \bFun$.
	Since $f_{H+1} = f^{\best}_{H+1} = 0$, we have
	\begin{equation} \label{eq:Vf-VQ_2} \begin{aligned} 
	f_{h}(s, a) - f^{\best}_{h}(s, a)
	= & \E \Bigg[ \sum_{\tau=h}^H \bigg[ \Big( f_{\tau}\big(s_{\tau},a_{\tau}\big) - \E \big[ r_{\tau} + f_{\tau+1}\big(s_{\tau+1},a_{\tau+1}\big)  \, \big| \, s_{\tau}, a_{\tau} \big] \Big) \\
	& \qquad - \Big( f^{\best}_{\tau}\big(s_{\tau},a_{\tau}\big) - \E \big[ r_{\tau} + f^{\best}_{\tau+1}\big(s_{\tau+1},a_{\tau+1}\big) \, \big| \, s_{\tau}, a_{\tau} \big] \Big) \bigg] \, \Bigg| \, s_{h} = s, a_{h} = a, \pi_{\bfun} \Bigg].
	\end{aligned} \end{equation}
	Note that
	\begin{equation} \label{eq:Vf-VQ_3} \begin{aligned}
	& \E \big[ r_{\tau} + f_{\tau+1}\big(s_{\tau+1}, \pi_{f_{\tau+1}}(s_{\tau+1})\big)  \, \big| \, s_{\tau}, a_{\tau} \big] = \cT_{\tau}^{\star} f_{\tau+1}(s_{\tau}, a_{\tau}), \\
	& \E \big[ r_{\tau} + f^{\best}_{\tau+1}\big(s_{\tau+1},\pi_{f_{\tau+1}}(s_{\tau+1})\big) \, \big| \, s_{\tau}, a_{\tau} \big] \leq \cT_{\tau}^{\star} f^{\best}_{\tau+1}(s_{\tau}, a_{\tau}).
	\end{aligned} \end{equation}
	Combining \cref{eq:Vf-VQ_2,eq:Vf-VQ_3}, we learn that
	\begin{equation} \label{eq:Vf-VQ} 
	f_{h}(s, a) - f^{\best}_{h}(s, a)
	\leq \E \Bigg[ \sum_{\tau = h}^H \Big[ \big( f_{\tau} - \cT_{\tau}^{\star}f_{\tau+1}\big) - \big(f^{\best}_{\tau} - \cT_{\tau}^{\star} f^{\best}_{\tau+1}\big) \Big](s_{\tau},a_{\tau}) \, \Bigg| \, s_{h} = s, a_{h} = a, \pi_{\bfun} \Bigg].
	\end{equation}
	By symmetry, it also holds that
	\begin{equation} \label{eq:VQ-Vf} 
	f^{\best}_{h}(s, a) - f_{h}(s, a)
	\leq \E \Bigg[ \sum_{\tau = h}^H \Big[ \big(f^{\best}_{\tau} - \cT_{\tau}^{\star} f^{\best}_{\tau+1}\big) - \big( f_{\tau} - \cT_{\tau}^{\star}f_{\tau+1}\big) \Big](s_{\tau},a_{\tau}) \, \Bigg| \, s_{h} = s, a_{h} = a, \pi_{\bfun^{\best}} \Bigg].
	\end{equation}
	Under \Cref{as:Ctilde}, by Cauchy-Swartz inequality, for any policy $\pi$: 
	\begin{align*} & \E_{(s_h,a_h) \sim \mu_h} \Bigg( \E \Bigg[ \sum_{\tau = h}^H \Big[ \big( f_{\tau} - \cT_{\tau}^{\star}f_{\tau+1}\big) - \big(f^{\best}_{\tau} - \cT_{\tau}^{\star} f^{\best}_{\tau+1}\big) \Big](s_{\tau},a_{\tau}) \, \Bigg| \, s_{h}, a_{h}, \pi \Bigg] \Bigg)^2 \\ \leq & (H-h+1) \E \Bigg[ \sum_{\tau = h}^H \Big[ \big( f_{\tau} - \cT_{\tau}^{\star}f_{\tau+1}\big) - \big(f^{\best}_{\tau} - \cT_{\tau}^{\star} f^{\best}_{\tau+1}\big) \Big]^2(s_{\tau},a_{\tau}) \, \Bigg| \, (s_{h}, a_{h}) \sim \mu_h, \pi \Bigg] \\ \leq & \widetilde{C} (H-h+1) \sum_{\tau = h}^H \big\| \big( f_{\tau} - \cT_{\tau}^{\star}f_{\tau+1}\big) - \big(f^{\best}_{\tau} - \cT_{\tau}^{\star} f^{\best}_{\tau+1}\big) \big\|_{\mu_{\tau}}^2 \\ \leq & \widetilde{C} H (H-h+1) \big\| \big( \bfun - \bcT^{\star}\bfun\big) - \big(\bfun^{\best} - \bcT^{\star} \bfun^{\best}\big) \big\|_{\mu}^2. \end{align*}
	Therefore, \cref{eq:Vf-VQ,eq:VQ-Vf} imply \cref{eq:Q-f_norm}.
	
	2. We now consider $\big\| \cT_h^{\star} f_{h+1} - \cT_h^{\star} f_{h+1}^{\best} \big\|_{\mu_h}$. Take $\widetilde{\pi}_{h+1}(s) := \argmax_{a \in \mathcal{A}} \big\{ f_{h+1}(s,a) \vee f_{h+1}^{\best}(s,a) \big\}$. Then we have
	\[ \big| V_{f_{h+1}}(s) - V_{f_{h+1}^{\best}}(s) \big| \leq \big| f_{h+1}\big(s, \widetilde{\pi}_{h+1}(s)\big) - f^{\best}_{h+1}\big(s, \widetilde{\pi}_{h+1}(s)\big) \big|. \]
	It follows that
	\[ \begin{aligned} \big\| \cT_h^{\star} f_{h+1} - \cT_h^{\star} f_{h+1}^{\best} \big\|_{\mu_h} = & \big\| \E\big[V_{f_{h+1}}(s') - V_{f_{h+1}^{\best}}(s') \, \big| \, s,a\big] \big\|_{\mu_h} \\ \leq & \big\| V_{f_{h+1}} - V_{f_{h+1}^{\best}} \big\|_{\nu_h} \leq \big\| f_{h+1} - f_{h+1}^{\best} \big\|_{\nu_h \times \widetilde{\pi}_{h+1}}. \end{aligned} \]
	Similar to \cref{eq:Vf-VQ,eq:VQ-Vf}, we find that
	\begin{align*} & \big\| f_{h+1} - f_{h+1}^{\best} \big\|_{\nu_h \times \widetilde{\pi}_{h+1}}^2 \\ \leq & \max_{\pi = \pi_{\bfun} \text{ or } \pi_{\bfun^{\best}}} \E_{(s_{h+1},a_{h+1}) \sim \nu_h \times \widetilde{\pi}_{h+1}} \Bigg( \E \Bigg[ \sum_{\tau = h+1}^H \Big[ \big( f_{\tau} - \cT_{\tau}^{\star}f_{\tau+1}\big) - \big(f^{\best}_{\tau} - \cT_{\tau}^{\star} f^{\best}_{\tau+1}\big) \Big](s_{\tau},a_{\tau}) \, \Bigg| \, s_{h+1}, a_{h+1}, \pi \Bigg] \Bigg)^2 \\ \leq & (H-h) \max_{\pi = \pi_{\bfun} \text{ or } \pi_{\bfun^{\best}}} \E_{(s_{h+1},a_{h+1}) \sim \nu_h \times \widetilde{\pi}_{h+1}} \E \Bigg[ \sum_{\tau = h+1}^H \Big[ \big( f_{\tau} - \cT_{\tau}^{\star}f_{\tau+1}\big) - \big(f^{\best}_{\tau} - \cT_{\tau}^{\star} f^{\best}_{\tau+1}\big) \Big]^2(s_{\tau},a_{\tau}) \, \Bigg| \, s_{h+1}, a_{h+1}, \pi \Bigg] \\ \leq & \widetilde{C} (H-h) \sum_{\tau = h+1}^H \big\| \big( f_{\tau} - \cT_{\tau}^{\star}f_{\tau+1}\big) - \big(f^{\best}_{\tau} - \cT_{\tau}^{\star} f^{\best}_{\tau+1}\big) \big\|_{\mu_{\tau}}^2 \leq \widetilde{C} H(H-h) \big\| \big( \bfun - \bcT^{\star}\bfun\big) - \big(\bfun^{\best} - \bcT^{\star} \bfun^{\best}\big) \big\|_{\mu}^2. \end{align*}
	Therefore, we conclude that $\big\| \cT_h^{\star} f_{h+1} - \cT_h^{\star} f_{h+1}^{\best} \big\|_{\mu_h}^2 \leq \widetilde{C} H(H-h) \big\| \big( \bfun - \bcT^{\star}\bfun\big) - \big(\bfun^{\best} - \bcT^{\star} \bfun^{\best}\big) \big\|_{\mu}^2$.
\end{proof}

\subsection{Proof of Proposition~\ref{lemma:VF}} \label{app:proof_lemma_VF}

\begin{lemma}[Full version of \Cref{lemma:VF}] \label{lemma:Minimax_RC_prelim}
	Let $\widetilde{\cF}_{h+1}$ be any subset of $\cF_{h+1}$. 
	We have the following inequality,
	\[ \begin{aligned} \popR_n^{\mu_h} \big(\big\{ \cT_h^{\star} f_{h+1} \, \big| \, f_{h+1} \in \widetilde{\cF}_{h+1} \big\}\big) \leq \popR_n^{\nu_h} \big( V_{\widetilde{\cF}_{h+1}} \big) \leq \sqrt{2} A \popR_n^{\nu_h \times \Uniform(\cA)} \big( \widetilde{\cF}_{h+1} \big). \end{aligned} \]
\end{lemma}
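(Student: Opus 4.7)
The lemma packages two inequalities, and I would prove them independently. The first exploits that $\mathcal{T}_h^\star f_{h+1}$ is, up to a reward term that is $f$-independent, a conditional expectation of $V_{f_{h+1}}(s')$ given $(s,a)$, so Jensen's inequality (supremum being convex) lets us move the conditional expectation outside the supremum. The second uses Maurer's vector-contraction principle together with an averaging identity and Jensen to absorb an $nA$-fold Rademacher average into $A$ copies of an $n$-fold one, picking up the factor of $A$ in the bound.

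\textbf{First inequality.} Write $\mathcal{T}_h^\star f_{h+1}(s,a) = r_h(s,a) + \mathbb{E}_{s'\sim \mathbb{P}_h(\cdot\mid s,a)}[V_{f_{h+1}}(s')]$. Plugging into the empirical Rademacher sum, the reward piece $\tfrac{1}{n}\sum_i \sigma_i r_h(s_i,a_i)$ is independent of $f_{h+1}$ and has zero mean after taking $\mathbb{E}_\sigma$, so it contributes nothing. For the remaining term, the supremum over $f_{h+1}$ is a convex functional of $V_{f_{h+1}}$; Jensen's inequality then gives
$\sup_{f_{h+1}}\tfrac{1}{n}\sum_i \sigma_i\, \mathbb{E}[V_{f_{h+1}}(s'_i)\mid s_i,a_i] \leq \mathbb{E}_{s'_i\mid s_i,a_i}\sup_{f_{h+1}}\tfrac{1}{n}\sum_i \sigma_i V_{f_{h+1}}(s'_i).$
Taking the outer expectation over $(s_i,a_i)\sim\mu_h$ and the induced $s'_i$ yields $\popR_n^{\nu_h}(V_{\widetilde{\mathcal{F}}_{h+1}})$ by definition of $\nu_h$ as the marginal of $s'$.

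\textbf{Second inequality, setup.} View $V_f(s) = \max_{a\in\mathcal{A}} f(s,a) = \phi(f(s,1),\ldots,f(s,A))$ with $\phi=\max:\mathbb{R}^A\to\mathbb{R}$, which is $1$-Lipschitz in $\ell_\infty$ and hence in $\ell_2$. Applying the vector-valued contraction principle (Lemma~\ref{lemma:contraction_RC}) yields
$\popR_n^{\nu_h}(V_{\widetilde{\mathcal{F}}_{h+1}}) \leq \tfrac{\sqrt{2}}{n}\,\mathbb{E}_{s,\epsilon}\sup_{f}\sum_{i,a}\epsilon_{i,a} f(s_i,a),$
where $\{\epsilon_{i,a}\}_{i\in[n],a\in\mathcal{A}}$ are iid Rademacher variables.

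\textbf{Second inequality, averaging trick.} Now use the identity $\sum_{a\in\mathcal{A}}\epsilon_{i,a} f(s_i,a) = A\cdot \mathbb{E}_{a_i\sim\Uniform(\mathcal{A})}[\epsilon_{i,a_i} f(s_i,a_i)]$ for each $i$. Summing over $i$ (with independent draws of $a_i$) and applying Jensen's inequality to the convex supremum gives
$\sup_f \sum_{i,a}\epsilon_{i,a} f(s_i,a) \;=\; A\sup_f \mathbb{E}_{(a_i)}\sum_i \epsilon_{i,a_i} f(s_i,a_i) \;\leq\; A\,\mathbb{E}_{(a_i)}\sup_f \sum_i \epsilon_{i,a_i} f(s_i,a_i).$
Finally, conditional on $(a_i)$, the variables $\sigma_i := \epsilon_{i,a_i}$ are iid Rademacher and independent of $(a_i)$, so taking $\mathbb{E}_\epsilon$ reproduces $n\cdot\popR_n^{\nu_h\times\Uniform(\mathcal{A})}(\widetilde{\mathcal{F}}_{h+1})$. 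Combining yields the claimed $\sqrt{2}A$ bound.

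\textbf{Main obstacle.} The delicate step is the second inequality: a naive decomposition like splitting $\sup_f\sum_{i,a}\epsilon_{i,a}f(s_i,a)\leq\sum_a\sup_f\sum_i\epsilon_{i,a}f(s_i,a)$ does \emph{not} lead to the correct bound, since per-action Rademacher complexities on samples from $\nu_h\otimes\delta_a$ can dominate $\popR_n^{\nu_h\times\Uniform(\mathcal{A})}$ (as one sees by considering classes whose discrimination is concentrated on few actions). The averaging-plus-Jensen argument above bypasses this by keeping the supremum joint and using the linearity-in-action-distribution of the $\epsilon$-sum to rewrite it as an $A$-scaled expectation over uniform actions before pulling the supremum outside.
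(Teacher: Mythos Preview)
Your argument is correct. The first inequality is handled exactly as in the paper: drop the $f$-independent reward via the symmetry of Rademacher signs, then move the conditional expectation outside the supremum by Jensen.

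For the second inequality, both you and the paper start with Maurer's vector contraction applied to the $1$-Lipschitz map $\max:\mathbb{R}^A\to\mathbb{R}$, but diverge afterwards. The paper actually \emph{does} use the decomposition you flag as ``naive'': it bounds $\sup_f\sum_{k,a}\sigma_{k,a}f(s'_k,a)\le\sum_a\sup_f\sum_k\sigma_{k,a}f(s'_k,a)$, and then identifies the resulting action-average $\tfrac{1}{A}\sum_a\mathbb{E}\big[\sup_f\sum_k\sigma_{k,a}f(s'_k,a)\big]$ with $\mathbb{E}\big[\sup_f\sum_k\sigma_{k,a'_k}f(s'_k,a'_k)\big]$ for i.i.d.\ uniform actions $a'_k$. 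Your route instead keeps the supremum joint, rewrites the inner action-sum as $A\cdot\mathbb{E}_{a_i}[\epsilon_{i,a_i}f(s_i,a_i)]$, and pulls the supremum outside the action expectation by Jensen. So your ``main obstacle'' paragraph is a bit too pessimistic about the split-sup route---the paper takes exactly that route---but your averaging-plus-Jensen variant is cleaner in that it makes the passage to $\nu_h\times\Uniform(\cA)$ samples completely transparent, whereas the paper's identification step (written as an equality) is at best an inequality and requires more care to justify.
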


\begin{proof}
	1. Due to the symmetry of Rademacher random variables,
	\[ \begin{aligned} \popR_n^{\mu_h} \big( \big\{ \cT_h^{\star} f_{h+1} \, \big| \, f_{h+1} \in \widetilde{\cF}_{h+1} \big\} \big) = & \popR_n^{\mu_h} \Big( \Big\{ r_h + \E\big[ V_{f_{h+1}}(s_h') \, \big| \, s_h,a_h\big] \, \Big| \, f_{h+1} \in \widetilde{\cF}_{h+1} \Big\} \Big) \\ = & \popR_n^{\mu_h} \Big( \Big\{ \E\big[ V_{f_{h+1}}(s_h') \, \big| \, s_h,a_h\big] \, \Big| \, f_{h+1} \in \widetilde{\cF}_{h+1} \Big\} \Big). \end{aligned} \]
	By definition,
	\[ \popR_n^{\mu_h} \Big( \Big\{ \E\big[ V_{f_{h+1}}(s_h') \, \big| \, s_h,a_h\big] \, \Big| \, f_{h+1} \in \widetilde{\cF}_{h+1} \Big\} \Big) = \E_{\mu_h} \Bigg[ \sup_{f_{h+1} \in \widetilde{\cF}_{h+1}} \sum_{k=1}^n \sigma_k \E\big[ V_{f_{h+1}}(s_{k,h}') \, \big| \, s_{k,h},a_{k,h}\big] \Bigg]. \]
	Switching the order of supremum and the inner expectation, we derive that
	\[ \begin{aligned} \popR_n^{\mu_h} \Big( \Big\{ \E\big[ V_{f_{h+1}}(s_h') \, \big| \, s_h,a_h\big] \, \Big| \, f_{h+1} \in \widetilde{\cF}_{h+1} \Big\} \Big) \leq \E \Bigg[ \sup_{f_{h+1} \in \widetilde{\cF}_{h+1}} \sum_{k=1}^n \sigma_k V_{f_{h+1}}(s_{k,h}') \Bigg] = \popR_n^{\nu_h} \big( V_{\widetilde{\cF}_{h+1}} \big). \end{aligned} \]
	
	2. For notational convenience, let $\mathcal{A} = [A]$. Consider a vector function $\vec{\bfun}_{h+1}: \mathcal{S} \rightarrow \mathbb{R}^A$ defined as $\vec{\bfun}_{h+1}(s) := \big( f_{h+1}(s,1), f_{h+1}(s,2), \ldots, f_{h+1}(s,A) \big)\!^{\top} \in \mathbb{R}^A$. Then for any $f_{h+1}, f'_{h+1} \in \cF_{h+1}$, $\big| V_{f_{h+1}}(s) - V_{f'_{h+1}}(s) \big| \leq \| \vec{\bfun}_{h+1} - \vec{\bfun}'_{h+1} \|_{\infty} \leq \| \vec{\bfun}_{h+1} - \vec{\bfun}'_{h+1} \|_2$, {\it i.e.} the mapping $\mathbb{R}^A \ni \vec{\bfun}_{h+1}(s) \mapsto V_{f_{h+1}}(s)$ is $1$-Lipschitz. By \Cref{lemma:contraction_RC}, we have
	\[ \popR_n^{\nu_h} \big( V_{\widetilde{\cF}_{h+1}} \big) \leq \sqrt{2} \E \Bigg[ \sup_{f_{h+1} \in \widetilde{\cF}_{h+1}} \sum_{k=1}^n \sum_{a \in \mathcal{A}} \sigma_{k,a} f_{h+1}(s_k',a) \Bigg], \]
	where $s_1', s_2', \ldots, s_n'$ are {\it i.i.d.} samples generated from $\nu_h$.
	Let $a_1', a_2', \ldots, a_n' \in \mathcal{A}$ be random variables such that $\mathbb{P}(a_k' = a \, | \, s_k') = A^{-1}$ for $a \in \cA$. It follows that
	\[ \begin{aligned} & \E \Bigg[ \sup_{f_{h+1} \in \widetilde{\cF}_{h+1}} \sum_{k=1}^n \sum_{a \in \mathcal{A}} \sigma_{k,a} f_{h+1}(s_k',a) \Bigg] \leq A \E \Bigg[ \frac{1}{A} \sum_{a \in \mathcal{A}} \sup_{f_{h+1} \in \widetilde{\cF}_{h+1}} \sum_{k=1}^n \sigma_{k,a} f_{h+1}(s_k',a) \Bigg] \\
	= & A \E \Bigg[ \sup_{f_{h+1} \in \widetilde{\cF}_{h+1}} \sum_{k=1}^n \sigma_{k,a_k'} f_{h+1}(s_k',a_k') \Bigg] = A \popR_n^{\nu_h \times \Uniform(\cA)} \big( \widetilde{\cF}_{h+1} \big). \end{aligned} \]
	Therefore, $\popR_n^{\nu_h} \big( V_{\widetilde{\cF}_{h+1}} \big) \leq \sqrt{2}A\popR_n^{\nu_h \times \Uniform(\cA)} \big( \widetilde{\cF}_{h+1} \big)$.
\end{proof}

% !TEX root = main.tex

\section{Useful Results for (Local) Rademacher Complexity}

In this section, we sumarize some useful results for (local) Rademacher complexity that are used throughout our analysis.

\subsection{Concentration with Rademacher Complexity} % ~ \vspace{.5em}

Lemma \ref{lemma:RC} below shows some uniform concentration inequalities with Rademacher complexity.

\begin{lemma} \label{lemma:RC}
	Let $\mathcal{F}$ be a class of functions with ranges in $[a,b]$. With probability at least $1 - \delta$,
	\[ Pf \leq P_nf + 2 \mathcal{R}_n (\mathcal{F}) + (b-a) \sqrt{\frac{2 \log(2/\delta)}{n}}, \qquad \text{for any $f \in \mathcal{F}$}. \]
	Also, with probability at least $1 - \delta$,
	\[ P_nf \leq Pf + 2 \mathcal{R}_n (\mathcal{F}) + (b-a) \sqrt{\frac{2 \log(2/\delta)}{n}}, \qquad \text{for any $f \in \mathcal{F}$}. \]
\end{lemma}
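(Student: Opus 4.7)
The plan is to prove this by the standard symmetrization-plus-concentration argument for uniform laws of large numbers. I will treat the first inequality in detail; the second follows by applying the first to the class $-\mathcal{F} := \{-f : f \in \mathcal{F}\}$, which has the same Rademacher complexity by symmetry of the Rademacher variables.

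First I would introduce the one-sided uniform deviation
\[
\Phi(X_1, \ldots, X_n) \;:=\; \sup_{f \in \mathcal{F}} \bigl( P f - P_n f \bigr),
\]
where $P_n f = \frac{1}{n} \sum_{i=1}^n f(X_i)$ and $P f = \E f(X)$. Since each $f$ takes values in $[a,b]$, replacing a single $X_i$ by $X_i'$ changes $P_n f$ (hence $Pf - P_n f$) by at most $(b-a)/n$, so by the usual sup-of-bounded-changes argument $\Phi$ has bounded differences with constants $(b-a)/n$. McDiarmid's inequality then yields
\[
\Pr\!\bigl[\, \Phi \ge \E \Phi + t \,\bigr] \;\le\; \exp\!\Bigl( -\tfrac{2 n t^2}{(b-a)^2}\Bigr),
\]
and setting $t = (b-a)\sqrt{\log(2/\delta)/(2n)}$ gives $\Phi \le \E \Phi + (b-a)\sqrt{\log(2/\delta)/(2n)}$ with probability at least $1 - \delta/2$.

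Next I would bound $\E \Phi$ by the standard symmetrization trick. Introducing a ghost sample $X_1', \ldots, X_n'$ independent of $X_1, \ldots, X_n$ with the same distribution, we have $Pf = \E_{X'} P_n' f$, so by Jensen's inequality
\[
\E \Phi \;\le\; \E \sup_{f \in \mathcal{F}} (P_n' f - P_n f)
\;=\; \E \sup_{f \in \mathcal{F}} \frac{1}{n} \sum_{i=1}^n \bigl( f(X_i') - f(X_i) \bigr).
\]
Since $f(X_i') - f(X_i)$ is symmetric in distribution, inserting independent Rademacher signs $\sigma_i$ leaves the distribution unchanged, and a triangle inequality splits the two terms and gives $\E \Phi \le 2 \mathcal{R}_n(\mathcal{F})$.

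Combining the two steps gives the first claim with probability at least $1 - \delta/2 \ge 1 - \delta$; the constant is slightly loose (the factor $\sqrt{2\log(2/\delta)}$ rather than the sharp $\sqrt{\log(1/\delta)/2}$), but this is consistent with the form stated. The second inequality follows by the same argument applied to $-\mathcal{F}$. There is no genuine obstacle here: the only mildly delicate point is the bookkeeping that ensures $\mathcal{R}_n(-\mathcal{F}) = \mathcal{R}_n(\mathcal{F})$ and that a union bound (or simply applying McDiarmid once with the correct sign) recovers the stated one-sided form rather than a two-sided one.
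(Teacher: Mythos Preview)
Your proposal is correct and follows essentially the same approach as the paper: apply McDiarmid's bounded-differences inequality to the uniform deviation $\sup_{f\in\mathcal{F}}(Pf-P_nf)$, then bound its expectation by $2\mathcal{R}_n(\mathcal{F})$ via the standard symmetrization/ghost-sample argument. The paper's proof is terser (it just cites ``the basic property of Rademacher complexity'' for the symmetrization step), but the structure is identical.
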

\begin{proof}
	Consider the empirical process $\sup_{f \in \mathcal{F}}(Pf-P_nf)$. By McDiarmid's inequality, with probability at least $1 - \delta$,
	\begin{equation} \label{McDiarmid} \sup_{f \in \mathcal{F}} \big( Pf - P_nf \big) \leq \mathbb{E} \sup_{f \in \mathcal{F}} \big( Pf - P_nf \big) + (b-a) \sqrt{\frac{2 \log(2/\delta)}{n}}. \end{equation}
	The basic property of Rademacher complexity ensures that
	\begin{equation} \label{RC_basic} \mathbb{E} \sup_{f \in \mathcal{F}} \big( Pf - P_nf \big) \leq 2 \mathcal{R}_n (\mathcal{F}). \end{equation}
	Combining \cref{McDiarmid,RC_basic}, we finish the proof.
\end{proof}

%\vspace{.5em}

\subsection{Concentration with Local Rademacher complexity} % ~ \vspace{.5em}

In this part, we present some auxiliary results regarding local Rademacher complexity. In particular, \Cref{lem:subroot} guarantees the well-definedness of critical radius, \Cref{theorem:LRC} provides concentration inequalities and \Cref{lemma:subroot} gives some useful properties of sub-root functions.

\subsubsection{Well-definedness of critical radius}

Recall that in \Cref{def:critrad}, the critical radius $r^{\star}$ of local Rademacher complexity $\popR_n^{\rho}( \{ f \in \cF \mid T(f) \leq r \} )$ is defined as the possitive fixed point of some sub-root functions $\psi(r)$. The following \Cref{lem:subroot} ensures that $r^{\star}$ exists and is unique.

\begin{lemma}[Lemma 3.2 in \citet{bartlett2005local}] \label{lem:subroot}
	If $\psi: [0,\infty) \rightarrow [0,\infty)$ is a nontrivial sub-root function, then it is continuous on $[0, \infty)$ and the equation $\psi(r) = r$ has a unique positive solution $r^{\star}$. Moreover, for all $r > 0$, $r \geq \psi(r)$ if and only if $r^{\star} \leq r$.
\end{lemma}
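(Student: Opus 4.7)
My plan is to exploit the two defining monotonicity properties of a sub-root $\psi$ (namely $\psi$ nondecreasing and $r \mapsto \psi(r)/\sqrt{r}$ nonincreasing on $r>0$) and the assumption that $\psi$ is nontrivial (not identically zero). These will simultaneously give continuity, squeeze the graph of $\psi$ between $r$ and $\sqrt{r}$ asymptotics, and make the ``below the diagonal'' region an upward-closed ray.

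I would first establish continuity on $[0,\infty)$. Since $\psi$ is nondecreasing, its only possible discontinuities are jump discontinuities, where the left and right limits $\psi(r_0^-), \psi(r_0^+)$ exist. If $\psi(r_0^-)<\psi(r_0^+)$ at some $r_0>0$, then $\psi(r)/\sqrt{r}$ would have $\psi(r_0^-)/\sqrt{r_0} < \psi(r_0^+)/\sqrt{r_0}$, forcing this quotient to increase across $r_0$, contradicting its nonincreasing property. Hence no jumps occur at interior points. Right-continuity at $0$ is automatic because for any $r_0>0$ and $0<r<r_0$ the nonincreasing property of $\psi(r)/\sqrt{r}$ gives $\psi(r) \le \sqrt{r}\,\psi(r_0)/\sqrt{r_0} \to 0$ as $r\downarrow 0$, which combined with $\psi(0)\ge 0$ and $\psi$ nondecreasing forces $\psi(0)=0$ and continuity from the right.

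Next I would prove existence of a positive fixed point via the intermediate value theorem applied to $g(r):=\psi(r)-r$. On the lower side, nontriviality yields some $r_0>0$ with $\psi(r_0)>0$; then for $0<r<r_0$, $\psi(r)/\sqrt{r}\ge \psi(r_0)/\sqrt{r_0}=:c>0$, so $\psi(r)\ge c\sqrt{r}>r$ whenever $r<c^2$, giving $g(r)>0$ near $0$. On the upper side, for $r\ge 1$ the nonincreasing property yields $\psi(r)\le \psi(1)\sqrt{r}$, so $g(r)<0$ once $r>\psi(1)^2\vee 1$. Continuity of $g$ then produces a positive solution $r^\star$.

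For uniqueness and the ``iff'' characterization I would use the key algebraic identity: at any fixed point $r$, $\psi(r)/\sqrt{r}=\sqrt{r}$. If $r_1<r_2$ were both positive fixed points, then $\sqrt{r_1}=\psi(r_1)/\sqrt{r_1}\ge \psi(r_2)/\sqrt{r_2}=\sqrt{r_2}$, a contradiction. For the ``iff'': if $r\ge r^\star$, then $\psi(r)/\sqrt{r}\le \psi(r^\star)/\sqrt{r^\star}=\sqrt{r^\star}\le\sqrt{r}$, i.e.\ $\psi(r)\le r$; conversely if $0<r<r^\star$, the same monotone quotient gives $\psi(r)/\sqrt{r}\ge \sqrt{r^\star}>\sqrt{r}$, so $\psi(r)>r$. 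None of the steps is a serious obstacle since all of them reduce to manipulating the two monotonicity hypotheses; the only slightly delicate point is ruling out jump discontinuities and handling $r=0$, which is why I would address continuity first and keep the rest purely algebraic.
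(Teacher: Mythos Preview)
Your proof is correct. However, note that the paper does not actually supply its own proof of this lemma: it is stated with attribution to \citet{bartlett2005local} (their Lemma~3.2) and used as a black box to justify that the critical radius in Definition~\ref{def:critrad} is well defined. So there is nothing in the paper to compare against.

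For what it is worth, your argument is essentially the standard one from \citet{bartlett2005local}: rule out jumps via the nonincreasing quotient $\psi(r)/\sqrt{r}$, obtain $\psi(0)=0$ from the $\sqrt{r}$ upper bound, use nontriviality to get $\psi(r)>r$ near $0$ and the $\sqrt{r}$ growth bound to get $\psi(r)<r$ for large $r$, apply the intermediate value theorem, and then read off uniqueness and the ``iff'' from the fact that at any fixed point $\psi(r)/\sqrt{r}=\sqrt{r}$ while the quotient is monotone. All steps are sound; the only place worth a one-line remark is that $\psi(1)>0$ (otherwise $\psi\equiv 0$ by monotonicity, contradicting nontriviality), which makes your upper-side IVT bound nonvacuous.
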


\subsubsection{Concentration inequalities}

Throughout the paper, we use \Cref{theorem:LRC} below to prove uniform concentration with local Rademacher complexity. \Cref{theorem:LRC} is a variant of Theorem~3.3 in \citet{bartlett2005local}.

\begin{theorem}[Corollary of Theorem 3.3 in \citet{bartlett2005local}] \label{theorem:LRC}
	Let $\mathcal{F}$ be a class of functions with ranges in $[a,b]$ and assume that there are some functional $T: \mathcal{F} \rightarrow \mathbb{R}^+$ and some constants $B$ and $\eta$ such that for every $f \in \mathcal{F}$, ${\rm Var}[f] \leq T(f) \leq B (P f + \eta)$. Let $\psi$ be a sub-root function and let $r^{\star}$ be the fixed point of $\psi$. Assume that $\psi$ satisfies, for any $r \geq r^{\star}$, $\psi(r) \geq B \mathcal{R}_n\big(\big\{ f \in \mathcal{F} \, \big| \, T(f) \leq r \big\}\big)$. Then for any $\theta > 1$, with probability at least $1 - \delta$,
	\begin{equation} \label{eq:theoremLRC1} Pf \leq \frac{\theta}{\theta-1}P_nf + \frac{c_1\theta}{B} r^{\star} + \big(c_2(b-a) + c_3B\theta\big)\frac{\log(1/\delta)}{n} + \frac{\eta}{\theta-1}, \qquad \text{for any $f \in \mathcal{F}$}. \end{equation}
	Also, with probability at least $1 - \delta$,
	\[ P_nf \leq \frac{\theta+1}{\theta}Pf + \frac{c_1\theta}{B} r^{\star} + \big(c_2(b-a) + c_3B\theta\big)\frac{\log(1/\delta)}{n} + \frac{\eta}{\theta}, \qquad \text{for any $f \in \mathcal{F}$}. \]
	Here, $c_1, c_2, c_3 > 0$ are some universal constants.
\end{theorem}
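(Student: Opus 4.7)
The plan is to prove the first inequality by a standard peeling argument combined with Talagrand-type concentration on each slice; the second inequality follows by the symmetric argument applied with the roles of $P$ and $P_n$ swapped (using the one-sided Bousquet/Klein-Rio bound in the other direction). Fix a constant $x>1$ (eventually I will take $x$ of order $\theta/(\theta-1)$) and decompose $\mathcal{F}$ into slices
\[
\mathcal{F}_k := \{ f\in\mathcal{F} : T(f) \leq x^k r^{\star}\}, \qquad k=0,1,2,\ldots
\]
On $\mathcal{F}_k$ every function satisfies $\operatorname{Var}[f]\leq T(f) \leq x^k r^{\star}$ and $|f|\leq b-a$. Applying Talagrand's concentration inequality (Bousquet's version) to the empirical process $\sup_{f\in\mathcal{F}_k}(Pf-P_n f)$ with confidence $1-\delta_k$, I obtain, with probability at least $1-\delta_k$,
\[
\sup_{f\in\mathcal{F}_k}(Pf-P_n f) \leq 2\,\mathbb{E}\sup_{f\in\mathcal{F}_k}(Pf-P_n f) + \sqrt{\frac{2 x^k r^{\star}\log(1/\delta_k)}{n}} + \frac{c(b-a)\log(1/\delta_k)}{n}.
\]

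Second, I convert the expected supremum into a Rademacher complexity via standard symmetrization, $\mathbb{E}\sup_{f\in\mathcal{F}_k}(Pf-P_n f)\leq 2\mathcal{R}_n(\mathcal{F}_k)$. By hypothesis $\psi(r)\geq B\mathcal{R}_n(\{T(f)\leq r\})$ for $r\geq r^{\star}$, so $\mathcal{R}_n(\mathcal{F}_k)\leq \psi(x^k r^{\star})/B$. The sub-root property (Definition~\ref{def:critrad}) forces $\psi(r)/\sqrt r$ to be nonincreasing, hence $\psi(x^k r^{\star})\leq x^{k/2}\psi(r^{\star})=x^{k/2}r^{\star}$ via Lemma~\ref{lem:subroot}. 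So for every $f\in\mathcal{F}_k$ (with probability $1-\delta_k$),
\[
Pf-P_n f \;\leq\; \frac{4 x^{k/2}r^{\star}}{B} + \sqrt{\frac{2 x^k r^{\star}\log(1/\delta_k)}{n}} + \frac{c(b-a)\log(1/\delta_k)}{n}.
\]

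Third, I take a union bound over $k\geq 0$ with $\delta_k := \delta/(k+1)^2$ (so $\sum_k\delta_k\leq 2\delta$, giving only an extra $\log(k+1)$ that is absorbed into absolute constants after reshuffling $\delta$). For an arbitrary $f\in\mathcal{F}$, let $k(f)$ be the smallest $k$ with $T(f)\leq x^k r^{\star}$; by the variance-to-mean hypothesis $T(f)\leq B(Pf+\eta)$, one checks $x^{k(f)}r^{\star}\leq \max\bigl(xB(Pf+\eta),\,r^{\star}\bigr)$. Substituting this bound into the displayed inequality produces a self-bounding estimate of the form
\[
Pf - P_n f \;\leq\; A_1\sqrt{(Pf+\eta)\,r^{\star}} + A_2\sqrt{\tfrac{B(Pf+\eta)\log(1/\delta)}{n}} + A_3\tfrac{r^{\star}}{B} + A_4\tfrac{(b-a)\log(1/\delta)}{n},
\]
where the constants $A_i$ depend only on $x$ and universal constants.

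Finally, I invoke the AM-GM inequality $2\sqrt{uv}\leq u/\lambda + \lambda v$ (with $\lambda$ chosen in terms of $\theta$, specifically $\lambda \asymp (\theta-1)/\theta$) on each of the two $\sqrt{Pf\cdot(\cdot)}$ terms. This absorbs a fraction of $Pf$ on the right-hand side back into the left and leaves residuals proportional to $r^{\star}/B$, $\eta/(\theta-1)$, and $\log(1/\delta)/n$. Solving for $Pf$ yields exactly the form
\[
Pf \leq \tfrac{\theta}{\theta-1}P_n f + \tfrac{c_1\theta}{B}r^{\star} + \bigl(c_2(b-a)+c_3 B\theta\bigr)\tfrac{\log(1/\delta)}{n} + \tfrac{\eta}{\theta-1},
\]
with choices of $x=x(\theta)$ and absolute constants $c_1,c_2,c_3$. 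The lower-tail bound (the $P_n f\leq \frac{\theta+1}{\theta}Pf+\ldots$ inequality) is obtained by repeating the same peeling with $P_n f - Pf$ and the one-sided Bennett/Bousquet bound in the reverse direction. The main technical obstacle is the third step: carefully matching the slice index $k(f)$ to $Pf$ via the variance-to-mean condition without losing factors in the peeling, and then juggling the AM-GM parameter so that the final coefficient of $P_n f$ is exactly $\theta/(\theta-1)$ rather than a looser function of $\theta$.
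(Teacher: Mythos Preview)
Your overall strategy---peeling on $T(f)$, Talagrand on each slice, then self-bounding via AM--GM---is sound and is the standard route to results of this type. However, the paper (following \citet{bartlett2005local}) does \emph{not} proceed by an explicit union bound over slices. Instead, it uses the rescaling device encoded in Lemma~\ref{cite_Lemma3.8}: one forms a single class $\mathcal{G}_r = \{ (r/w(f))\, f : f\in\mathcal{F}\}$ where $w(f)$ rounds $T(f)$ up to the nearest $r\lambda^k$, so that every $g\in\mathcal{G}_r$ has variance $\lesssim r$, applies Talagrand \emph{once} to bound $V_r^+=\sup_{g\in\mathcal{G}_r}(Pg-P_ng)$, and then the purely deterministic Lemma~\ref{cite_Lemma3.8} converts $V_r^+\leq r/(\lambda B\theta)$ into the desired inequality for every $f$. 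The only modification relative to \citet{bartlett2005local} is the extra $\eta$ in the hypothesis $T(f)\leq B(Pf+\eta)$, which propagates to the $\eta/(\theta-1)$ term.

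The distinction matters for your third step. Your claim that the extra $\log(k+1)$ from $\delta_k=\delta/(k+1)^2$ ``is absorbed into absolute constants after reshuffling $\delta$'' is not correct as stated: for a given $f$, the relevant slice index satisfies $k(f)\lesssim \log_x\!\bigl(B(b+\eta)/r^\star\bigr)$, so the residual terms $(b-a)\log(k+1)/n$ and $\sqrt{B(Pf+\eta)\log(k+1)/n}$ carry an implicit $\log\log\bigl(B(b+\eta)/r^\star\bigr)$ factor. Since $r^\star$ can be as small as order $1/n$, this is not a universal constant and does not match the form of \eqref{eq:theoremLRC1}. The rescaling trick sidesteps this entirely because only one application of Talagrand is needed; if you want to stay with explicit peeling, you would need an additional argument (e.g.\ showing the extra $\log\log$ term is dominated by a small multiple of $(Pf+\eta)$ plus $r^\star/B$, which is possible but not immediate) rather than dismissing it.
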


\begin{proof} % [Proof of Theorem \ref{theorem:LRC}]
	Theorem \ref{theorem:LRC} is proved in the same way as the first part of Theorem 3.3 in \citet{bartlett2005local}, by applying the following Lemma \ref{cite_Lemma3.8} instead of Lemma 3.8 in \citet{bartlett2005local}. 
\end{proof}

Given a class $\mathcal{F}$, $\lambda > 1$ and $r > 0$, let $w(f) := \min\big\{ r \lambda^k \, \big| \, k \in \mathbb{N}, r\lambda^k \geq T(f) \big\}$ and set $\mathcal{G}_r := \big\{ \frac{r}{w(f)}f \, \big| \, f \in \mathcal{F} \big\}$. Define $V_r^+ := \sup_{g \in \mathcal{G}_r} Pg - P_n g$ and $V_r^- := \sup_{g \in \mathcal{G}_r} P_ng - Pg$.
\begin{lemma}[Corollary of Lemma 3.8 in \citet{bartlett2005local}] \label{cite_Lemma3.8}
	Assume that there is a constant $B > 0$ such that for every $f \in \mathcal{F}$, $T(f) \leq B(Pf + \eta)$. Fix $\theta > 1$, $\lambda > 0$ and $r > 0$. If $V_r^+ \leq \frac{r}{\lambda B\theta}$, then $Pf \leq \frac{\theta}{\theta-1}P_nf + \frac{r}{\lambda B \theta} + \frac{\eta}{\theta-1}$. Also, if $V_r^- \leq \frac{r}{\lambda B\theta}$, then $P_nf \leq \frac{\theta+1}{\theta}Pf + \frac{r}{\lambda B \theta} + \frac{\eta}{\theta}$.
\end{lemma}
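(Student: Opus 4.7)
The plan is to use a standard peeling/rescaling argument around the star-hull class $\cG_r$. For any $f\in\cF$, the rescaled function $g_f := \frac{r}{w(f)} f$ lies in $\cG_r$ by construction, so the hypothesis $V_r^+ \le \frac{r}{\lambda B \theta}$ gives
\[ P g_f - P_n g_f \;\le\; V_r^+ \;\le\; \frac{r}{\lambda B \theta}. \]
Since $Pg_f = \frac{r}{w(f)} Pf$ and $P_n g_f = \frac{r}{w(f)} P_n f$, multiplying through by $w(f)/r$ yields the pointwise bound
\[ Pf - P_n f \;\le\; \frac{w(f)}{\lambda B \theta} \qquad \text{for every } f \in \cF. \tag{$\ast$}\]
From here everything reduces to bounding $w(f)$ in two regimes.

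I would split on whether $T(f) \le r$ or $T(f) > r$, using the definition $w(f) = \min\{r\lambda^k : k\in\N,\ r\lambda^k \ge T(f)\}$. In the first case $w(f)=r$, so $(\ast)$ immediately gives $Pf \le P_n f + \frac{r}{\lambda B \theta}$, which is stronger than what we need. In the second case, by minimality of $w(f)$ we have $w(f) \le \lambda\, T(f)$, and the variance-type assumption $T(f) \le B(Pf + \eta)$ then yields $w(f) \le \lambda B (Pf + \eta)$. Substituting into $(\ast)$ gives
\[ Pf - P_n f \;\le\; \frac{\lambda B(Pf + \eta)}{\lambda B \theta} \;=\; \frac{Pf}{\theta} + \frac{\eta}{\theta}, \]
and rearranging $(1-1/\theta)Pf \le P_n f + \eta/\theta$ produces $Pf \le \tfrac{\theta}{\theta-1} P_n f + \tfrac{\eta}{\theta-1}$. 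Since $\frac{\eta}{\theta-1} \ge \frac{\eta}{\theta}$ and $\tfrac{\theta}{\theta-1}\ge 1$, adding the (nonnegative) slack $\frac{r}{\lambda B \theta}$ to the right-hand side shows that the second-case bound dominates the first-case bound, so the claimed inequality $Pf \le \tfrac{\theta}{\theta-1}P_n f + \tfrac{r}{\lambda B \theta} + \tfrac{\eta}{\theta-1}$ holds in both cases simultaneously for all $f\in\cF$.

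The argument for the $V_r^-$ half is entirely symmetric: $(\ast)$ becomes $P_n f - P f \le w(f)/(\lambda B\theta)$, and in the case $T(f)>r$ one again uses $w(f)\le \lambda B(Pf+\eta)$, leading to $P_n f \le (1+1/\theta)Pf + \eta/\theta$, which combined with the $T(f)\le r$ case yields the stated bound. The only subtlety to be careful about is the slightly different normalization in the two conclusions ($\frac{\theta}{\theta-1}$ vs.\ $\frac{\theta+1}{\theta}$, and $\frac{\eta}{\theta-1}$ vs.\ $\frac{\eta}{\theta}$), which tracks the asymmetry between bounding $P f$ from $P_n f$ (divide by $1-1/\theta$) and $P_n f$ from $P f$ (multiply by $1+1/\theta$). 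There is no real obstacle here; the key trick is simply the rescaling $f \mapsto (r/w(f))\,f$ that lets a single uniform bound on $\cG_r$ transfer to all of $\cF$, with the variance-type hypothesis absorbing the scale factor $w(f)$ in the large-$T(f)$ regime.
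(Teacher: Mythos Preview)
Your approach is exactly the peeling/rescaling argument the paper invokes (its own proof is the single line ``following the same reasoning as Lemma~3.8 in Bartlett et al.''), and the core mechanism---pull back the uniform bound on $\cG_r$ through the scaling $f\mapsto (r/w(f))f$, then absorb $w(f)$ via $T(f)\le B(Pf+\eta)$---is right.

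One step does not close as written, though. When you merge the two cases, you justify ``Case~2 (plus the slack $\tfrac{r}{\lambda B\theta}$) dominates Case~1'' by ``$\tfrac{\theta}{\theta-1}\ge 1$'', i.e.\ by weakening $P_nf$ to $\tfrac{\theta}{\theta-1}P_nf$. That is a weakening only when $P_nf\ge 0$, which nothing in the hypotheses ensures; concretely, with $\eta=0$, $T(f)=0$, $Pf=0$, $P_nf=-\tfrac{r}{\lambda B\theta}$ (so $(\ast)$ holds with equality and $V_r^+\le \tfrac{r}{\lambda B\theta}$ is compatible), the stated conclusion reads $0\le \tfrac{r}{\lambda B\theta}\bigl(1-\tfrac{\theta}{\theta-1}\bigr)<0$. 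The clean repair avoids the case split altogether: since $w(f)\le \max(r,\lambda T(f))\le r+\lambda T(f)\le r+\lambda B(Pf+\eta)$, your inequality $(\ast)$ gives directly
\[
Pf - P_nf \;\le\; \frac{r}{\lambda B\theta}+\frac{Pf+\eta}{\theta},
\]
which rearranges to $Pf\le \tfrac{\theta}{\theta-1}P_nf + \tfrac{r}{\lambda B(\theta-1)} + \tfrac{\eta}{\theta-1}$. This is the form in the original Bartlett et al.\ lemma, with $\theta-1$ rather than $\theta$ in the $r$-denominator; the discrepancy suggests a harmless constant slip in the lemma as stated here (irrelevant downstream, since these constants are absorbed), and it explains why your case combination cannot quite land on the printed inequality. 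The $V_r^-$ half is handled identically.
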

\begin{proof}
	When $V_r^+ \leq \frac{r}{\lambda B \theta}$, following the same reasoning as Lemma 3.8 in \citet{bartlett2005local}, we derive that $Pf \leq P_nf + \theta^{-1}(Pf + \eta)$ under the modified condition $T(f) \leq B(Pf + \eta)$. It immediately implies the first statement. Similarly, the second part is proved by showing that $P_nf \leq Pf + \theta^{-1} (Pf + \eta)$.
\end{proof}

\subsubsection{Properties of sub-root functions}

We apply the following \Cref{lemma:subroot} to simplify the forms of critical radii.

\begin{lemma} \label{lemma:subroot}
	If $\psi\!:\! [0,\infty) \!\rightarrow\! [0,\infty)$ is a nontrivial sub-root function and $r^{\star}$ is its positive fixed point, then
%	\vspace{-.6em}
	\begin{enumerate} \itemsep = -.2em
		\item $\psi(r) \leq \sqrt{r^{\star} r}$ for any $r \geq r^{\star}$.
		\item For any $c > 0$, $\widetilde{\psi}(r) := c \psi(c^{-1}r)$ is sub-root and its positive fixed point $\widetilde{r}^{\star}$ satisfies $\widetilde{r}^{\star} = c r^{\star}$.
		\item For any $C > 0$, $\widetilde{\psi}(r) := C \psi(r)$  is sub-root and its positive fixed point $\widetilde{r}^{\star}$ satisfies $\widetilde{r}^{\star} \leq (C^2 \vee 1) r^{\star}$.
		\item For any $\Delta r \!>\! 0$, $\widetilde{\psi}(r) := \psi(r\!+\!\Delta r)$  is sub-root and its positive fixed point $\widetilde{r}^{\star}$ satisfies $\widetilde{r}^{\star} \!\leq\! r^{\star} \!+\! \sqrt{r^{\star} \Delta r}$.
	\end{enumerate}
%	\vspace{-.5em}
	If $\psi_i\!\!:\!\! [0,\infty) \!\rightarrow\! [0,\infty)$, $\!i\!=\!\! 1,...,\!n$ are nontrivial sub-root functions and $r_i^{\star}\!$ is the positive fixed point of $\psi_i$, then
%	\vspace{-.8em}
	\begin{enumerate}
		\setcounter{enumi}{4}
		\item $\widetilde{\psi}(r) = \sum_{i=1}^n \psi_i(r)$ is sub-root and its positive fixed point $\widetilde{r}^{\star}$ satisfies $\widetilde{r}^{\star} \leq \big( \sum_{i=1}^n \sqrt{r_i^{\star}} \big)^2$.
	\end{enumerate}
\end{lemma}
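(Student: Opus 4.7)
The proof will verify the five claims in order; each rests on two tools that are already in hand: the definition of sub-root (nondecreasing, with $r \mapsto \psi(r)/\sqrt{r}$ nonincreasing), and the monotonicity characterization in Lemma \ref{lem:subroot} which says that for a sub-root function, $r \geq \psi(r)$ iff $r \geq r^\star$. The plan is first to establish claim (1), then to reuse it as a lemma for claims (3), (4), (5).

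For claim (1), since $\psi(r)/\sqrt{r}$ is nonincreasing, for $r \geq r^\star$ I have $\psi(r)/\sqrt{r} \leq \psi(r^\star)/\sqrt{r^\star} = \sqrt{r^\star}$, giving $\psi(r) \leq \sqrt{r^\star r}$. For claim (2), I rewrite $\widetilde\psi(r)/\sqrt{r} = \sqrt{c}\cdot\psi(c^{-1}r)/\sqrt{c^{-1}r}$ to check sub-root, and solve $c\psi(c^{-1}\widetilde r^\star) = \widetilde r^\star$ by substitution to get $\widetilde r^\star = c r^\star$ exactly.

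For claim (3), the sub-root property is immediate. If $C \leq 1$ then $\widetilde\psi(r^\star) \leq \psi(r^\star) = r^\star$, so by Lemma \ref{lem:subroot} applied to $\widetilde\psi$ we get $\widetilde r^\star \leq r^\star$. If $C > 1$, evaluate at $r = C^2 r^\star \geq r^\star$ and apply claim (1): $\widetilde\psi(C^2 r^\star) = C\psi(C^2 r^\star) \leq C\sqrt{r^\star\cdot C^2 r^\star} = C^2 r^\star$, so again by Lemma \ref{lem:subroot} $\widetilde r^\star \leq C^2 r^\star$. For claim (4), monotonicity of $\widetilde\psi(r) = \psi(r+\Delta r)$ is clear; the ratio $\widetilde\psi(r)/\sqrt{r}$ factors as $\bigl(\psi(r+\Delta r)/\sqrt{r+\Delta r}\bigr)\cdot\sqrt{1+\Delta r/r}$, a product of two nonnegative nonincreasing functions of $r$, hence nonincreasing. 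Setting $r_0 := r^\star + \sqrt{r^\star \Delta r}$ and applying claim (1) (valid since $r_0 + \Delta r \geq r^\star$),
\[
\widetilde\psi(r_0) = \psi(r_0 + \Delta r) \leq \sqrt{r^\star(r^\star + \sqrt{r^\star\Delta r} + \Delta r)} \leq r_0,
\]
where the last step follows by squaring and cancelling. Hence $\widetilde r^\star \leq r_0$. For claim (5), sub-rootness is preserved under finite sums. Set $R := \bigl(\sum_i \sqrt{r_i^\star}\bigr)^2$, so $R \geq r_i^\star$ for every $i$; applying claim (1) to each $\psi_i$ gives $\psi_i(R) \leq \sqrt{r_i^\star R}$, and summing yields $\widetilde\psi(R) \leq \sqrt{R}\sum_i\sqrt{r_i^\star} = R$, so $\widetilde r^\star \leq R$.

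There is no real obstacle; the only minor care is in claim (4), where one must verify sub-rootness of $r \mapsto \psi(r+\Delta r)/\sqrt{r}$ by splitting off the factor $\sqrt{1+\Delta r/r}$. The arithmetic at the end of claim (4) (checking $r^\star(r^\star + \sqrt{r^\star\Delta r} + \Delta r) \leq (r^\star + \sqrt{r^\star\Delta r})^2$) is a direct expansion reducing to $r^\star\sqrt{r^\star\Delta r} \geq 0$.
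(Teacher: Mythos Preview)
Your proof is correct and follows essentially the same approach as the paper: establish part (1) from the nonincreasing property of $\psi(r)/\sqrt{r}$, then leverage part (1) together with the characterization in Lemma~\ref{lem:subroot} for the remaining parts. The only tactical difference is that for parts (3)--(5) you evaluate $\widetilde\psi$ at the candidate bound and invoke $\widetilde\psi(\text{bound}) \leq \text{bound} \Rightarrow \widetilde r^\star \leq \text{bound}$, whereas the paper evaluates at $\widetilde r^\star$ itself and solves the resulting inequality (e.g., a quadratic in part (4)); the two are equivalent, and your version spares the algebra.
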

\begin{proof}
	1. Since $\psi$ is a sub-root function, we have $\frac{\psi(r)}{\sqrt{r}} \leq \frac{\psi(r^{\star})}{\sqrt{r^{\star}}}$ for any $r \geq r^{\star}$. Note that $r^{\star} > 0$ is the fixed point and $\frac{\psi(r^{\star})}{\sqrt{r^{\star}}} = \sqrt{r^{\star}}$. Therefore, $\psi(r) \leq \sqrt{r^{\star} r}$ for $r \geq r^{\star}$.
	
	2. It is evident that $\widetilde{\psi}$ is sub-root. Additionally, if $r \geq c r^{\star}$, then by \Cref{lem:subroot}, we have $\widetilde{\psi}(r) = c \psi(c^{-1}r) \leq c (c^{-1} r) = r$. In contrast, if $0 < r < c r^{\star}$, then $\widetilde{\psi}(r) = c \psi(c^{-1} r) > c(c^{-1}r) = r$. To this end, we can conclude that $\widetilde{r}^{\star} = c r^{\star}$.
	
	3. We use part 1 and derive that if $\widetilde{r}^{\star} \geq r^{\star}$ then $\widetilde{r}^{\star} = \widetilde{\psi}(\widetilde{r}^{\star}) = C \psi(\widetilde{r}^{\star}) \leq C \sqrt{r^{\star}\widetilde{r}^{\star}}$, which further implies $\widetilde{r}^{\star} \leq C^2 r^{\star}$. Therefore, $\widetilde{r}^{\star} \leq (C^2 \vee 1) r^{\star}$.
	
	4. If $\widetilde{r}^{\star} + \Delta r \geq r^{\star}$, then we have $\widetilde{r}^{\star} = \widetilde{\psi}(\widetilde{r}^{\star}) = \psi(\widetilde{r}^{\star} + \Delta r) \leq \sqrt{r^{\star}(\widetilde{r}^{\star} + \Delta r)}$ due to part 1. It follows that $\widetilde{r}^{\star} \leq \frac{1}{2}\big(r^{\star} + \sqrt{(r^{\star})^2 + 4r^{\star}\Delta r}\big) \leq r^{\star} + \sqrt{r^{\star} \Delta r}$. 
		
	5. If $\widetilde{r}^{\star} \geq \max_{i \in [n]} r_i^{\star}$, then we apply part 1 and obtain $\widetilde{r}^{\star} = \widetilde{\psi}(\widetilde{r}^{\star}) = \sum_{i=1}^n \psi_i(\widetilde{r}^{\star}) \leq \sum_{i=1}^n \sqrt{r_i^{\star} \widetilde{r}^{\star}}$. Hence, $\widetilde{r}^{\star} \leq \big( \sum_{i=1}^n \sqrt{r_i^{\star}} \big)^2$.
\end{proof}

\subsection{Contraction property of Rademacher complexity}

Our analyses use contraction properties of Rademacher complexity. See \Cref{lemma:contraction_RC_0,lemma:contraction_RC}.

\begin{lemma}[Contraction property of Rademacher complexity,  \citet{ledoux2013probability}, Theorem A.6 in \citet{bartlett2005local}] \label{lemma:contraction_RC_0} Suppose $\mathcal{F} \subseteq \{ f: \mathcal{X} \rightarrow \mathbb{R} \}$. Let $\phi: \mathbb{R} \rightarrow \mathbb{R}$ be a contraction such that $|\phi(x) - \phi(y)| \leq |y-y'|$ for any $y,y' \in \mathbb{R}$. Then for any $X_1, X_2, \ldots, X_n \in \mathcal{X}$, \vspace{-.5em}
\[ \widehat{\mathcal{R}}_X(\phi \circ \mathcal{F}) = \mathbb{E}_{\boldsymbol{\sigma}}\bigg[ \sup_{f \in \mathcal{F}} \frac{1}{n} \sum_{i=1}^n \sigma_i \phi\big( f(X_i) \big) \bigg] \leq \mathbb{E}_{\boldsymbol{\sigma}}\bigg[ \sup_{f \in \mathcal{F}} \frac{1}{n} \sum_{i=1}^n \sigma_i f(X_i) \bigg] = \widehat{\mathcal{R}}_X(\mathcal{F}). \]
	
\end{lemma}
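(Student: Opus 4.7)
The plan is to prove the Ledoux--Talagrand contraction inequality by a \emph{one-coordinate-at-a-time} argument: peel off the Rademacher variables one by one, replacing $\phi\!\circ\! f$ by $f$ at each index while keeping all other contributions untouched. Since the final inequality is between expectations of the form $\mathbb{E}_{\boldsymbol{\sigma}}[\sup_{f}\sum_i \sigma_i h_i(f)]$, and since the $\sigma_i$ are independent, it suffices (by iterating and then taking the outer expectation over the remaining $\sigma_j$'s via Fubini) to prove the following single-index claim: for any fixed functions $\psi_j\colon\mathcal{F}\to\mathbb{R}$ (which will encode $\sigma_j\,\phi(f(X_j))$ or $\sigma_j\,f(X_j)$ for $j\neq i$), and any fixed $x\in\mathcal{X}$,
\[
\mathbb{E}_{\sigma}\!\Bigl[\sup_{f\in\mathcal{F}}\bigl(\sigma\,\phi(f(x))+G(f)\bigr)\Bigr]\ \leq\ \mathbb{E}_{\sigma}\!\Bigl[\sup_{f\in\mathcal{F}}\bigl(\sigma\,f(x)+G(f)\bigr)\Bigr],
\]
where $\sigma$ is a single Rademacher variable and $G(f):=\sum_{j\neq i}\psi_j(f)$. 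Once this is established, one applies it $n$ times (at indices $i=1,\dots,n$), each time under the conditional expectation given the remaining Rademacher variables; the choice of $\psi_j$ gradually transitions from $\sigma_j\,\phi(f(X_j))$ to $\sigma_j\,f(X_j)$ as $i$ sweeps from $1$ to $n$.

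To prove the single-index claim, I would expand the expectation over $\sigma\in\{\pm 1\}$ and rewrite both sides as suprema over pairs $(f,f')\in\mathcal{F}\times\mathcal{F}$:
\[
2\,\mathbb{E}_{\sigma}\!\Bigl[\sup_{f}\bigl(\sigma\,\phi(f(x))+G(f)\bigr)\Bigr]
=\sup_{f,f'\in\mathcal{F}}\Bigl[\phi(f(x))-\phi(f'(x))+G(f)+G(f')\Bigr],
\]
and similarly for the right-hand side with $\phi(\cdot)$ replaced by the identity. It then suffices to show that for every pair $(f,f')$,
\[
\phi(f(x))-\phi(f'(x))+G(f)+G(f')\ \leq\ \sup_{\tilde f,\tilde f'}\bigl[\tilde f(x)-\tilde f'(x)+G(\tilde f)+G(\tilde f')\bigr].
\]
By the contraction hypothesis, $\phi(f(x))-\phi(f'(x))\leq|f(x)-f'(x)|$. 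Splitting into two cases: if $f(x)\geq f'(x)$, take $(\tilde f,\tilde f')=(f,f')$ on the right; if $f(x)<f'(x)$, take $(\tilde f,\tilde f')=(f',f)$, which leaves $G(f)+G(f')$ unchanged and replaces $f(x)-f'(x)$ by its absolute value. In both cases we dominate the left-hand side, completing the one-step reduction.

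The main obstacle is a notational one rather than a conceptual one: the swap-and-sup trick only works cleanly because $G(f)+G(f')$ is symmetric in $(f,f')$, so one must be careful to aggregate the two suprema (arising from the $\pm 1$ values of $\sigma$) into a single joint supremum \emph{before} invoking the contraction bound; attempting to couple the two maximizers $f$ and $f'$ \emph{prior} to this symmetrization leads nowhere. A secondary subtlety is that the suprema might not be attained, so formally one should argue with $\varepsilon$-approximate maximizers and let $\varepsilon\downarrow 0$. Finally, the iteration step requires invoking Fubini (the $\sigma_j$'s are independent) and Jensen/monotonicity of conditional expectation to push the inequality through, which is routine once the single-index inequality is in hand.
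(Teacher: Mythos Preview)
Your argument is correct and is essentially the standard Ledoux--Talagrand symmetrization proof. Note, however, that the paper does not actually supply its own proof of this lemma: it is stated with citations to \citet{ledoux2013probability} and Theorem~A.6 in \citet{bartlett2005local}, and used as a black box. Your one-coordinate peeling with the pairwise swap trick is precisely the proof that appears in those references, so there is nothing to compare.
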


\begin{lemma}[Vector-form contraction property of Rademacher complexity, \citet{maurer2016vector}] \label{lemma:contraction_RC}
	Suppose $\boldsymbol{\mathcal{F}}$ is a collection of vector-valued functions $\boldsymbol{f}: \mathcal{X} \rightarrow \mathbb{R}^d$ and $h: \mathbb{R}^d \rightarrow \mathbb{R}$ is $L$-Lipschitz with respect to the Euclidean norm, {\it i.e.} $\big|h(y) - h(y')\big| \leq L \|y-y'\|_2$ for any $y,y' \in \mathbb{R}^d$. Then for any $X_1, X_2, \ldots, X_n \in \mathcal{X}$, \vspace{-.5em}
	\[ \begin{aligned} \widehat{\mathcal{R}}_X(h \circ \boldsymbol{\mathcal{F}}) = & \mathbb{E}_{\boldsymbol{\sigma}} \bigg[ \sup_{\boldsymbol{f} \in \boldsymbol{\mathcal{F}}} \frac{1}{n} \sum_{i=1}^n \sigma_i h\big(\boldsymbol{f}(X_i)\big) \bigg] \\ \leq & \sqrt{2} L \mathbb{E}_{\boldsymbol{\sigma}} \Bigg[ \sup_{\boldsymbol{f} \in \boldsymbol{\mathcal{F}}} \frac{1}{n} \sum_{i=1}^n \sum_{j=1}^d \sigma_{i,j}f_j(X_i) \Bigg] \leq \sqrt{2}L \sum_{j=1}^d \widehat{\mathcal{R}}_X\big(\big\{ f_j \, \big| \, \boldsymbol{f} \in \boldsymbol{\mathcal{F}} \big\}\big). \end{aligned} \]
\end{lemma}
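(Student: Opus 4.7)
The plan is to prove the two inequalities separately. The right inequality is essentially bookkeeping: for each fixed realization of $\boldsymbol{\sigma}$, sub-additivity of the supremum gives
\[
\sup_{\boldsymbol{f}\in\boldsymbol{\mathcal{F}}} \sum_{i,j}\sigma_{i,j} f_j(X_i) \;\leq\; \sum_{j=1}^d \sup_{\boldsymbol{f}\in\boldsymbol{\mathcal{F}}} \sum_i \sigma_{i,j} f_j(X_i),
\]
and taking expectations (recognising that for fixed $j$ the inner quantity is exactly $n\,\widehat{\mathcal{R}}_X(\{f_j:\boldsymbol{f}\in\boldsymbol{\mathcal{F}}\})$) finishes it. So the substantive task is the left inequality, which is the Maurer (2016) vector-contraction theorem.

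My plan for the left inequality is to run an induction on the sample size $n$, mimicking the classical Ledoux--Talagrand contraction proof but with a vector-valued refinement. The atomic building block is a deterministic pointwise bound: for any $L$-Lipschitz $h:\mathbb{R}^d\to\mathbb{R}$ and any $a,b\in\mathbb{R}^d$, combining Lipschitzness with Khintchine's inequality (in the sharp form $\|v\|_2 \leq \sqrt{2}\,\mathbb{E}_{\boldsymbol\tau}|\sum_j \tau_j v_j|$ due to Szarek, where $\tau_j$ are i.i.d.\ Rademacher) gives
\[
h(a)-h(b) \;\leq\; L\|a-b\|_2 \;\leq\; \sqrt{2}\,L\;\mathbb{E}_{\boldsymbol\tau}\Big|\textstyle\sum_{j=1}^d \tau_j(a_j-b_j)\Big|.
\]

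For the induction, fix the sample $X_1,\ldots,X_n$ and condition on $\sigma_1,\ldots,\sigma_{n-1}$. Writing $W(\boldsymbol f) := \sum_{i<n}\sigma_i h(\boldsymbol f(X_i))$ and averaging over $\sigma_n\in\{\pm 1\}$,
\[
\mathbb{E}_{\sigma_n}\sup_{\boldsymbol f} \big[W(\boldsymbol f)+\sigma_n h(\boldsymbol f(X_n))\big] \;=\; \tfrac{1}{2}\sup_{\boldsymbol f,\boldsymbol f'}\big[W(\boldsymbol f)+W(\boldsymbol f')+h(\boldsymbol f(X_n))-h(\boldsymbol f'(X_n))\big].
\]
Apply the pointwise bound to $h(\boldsymbol f(X_n))-h(\boldsymbol f'(X_n))$ using a fresh independent Rademacher vector $\boldsymbol\tau^{(n)}\in\{\pm 1\}^d$. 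The absolute value can be linearised (splitting into two one-sided terms, one in $\boldsymbol f$ alone and one in $\boldsymbol f'$ alone) because $\boldsymbol f$ and $\boldsymbol f'$ already range independently under the double supremum; this produces auxiliary Rademacher contributions $\sqrt{2}L\sum_j \tau^{(n)}_j f_j(X_n)$ and $-\sqrt{2}L\sum_j \tau^{(n)}_j f'_j(X_n)$ that can be merged back into a single supremum over $\boldsymbol f$. The inductive hypothesis applied to the remaining $n-1$ terms completes the bound, and unrolling the recursion yields $\sqrt{2}L\,\mathbb{E}_{\boldsymbol\sigma}\sup_{\boldsymbol f}\sum_{i,j}\sigma_{i,j}f_j(X_i)$.

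The main obstacle is preserving the optimal constant $\sqrt{2}$ across the induction. A naive approach writes $\mathbb{E}|\sum_j\tau_j v_j|=2\,\mathbb{E}[\sum_j\tau_j v_j]_+$ and then absorbs the positive part via an additional $\{\pm 1\}$ supremum, which costs a spurious factor of $2$ and gives only $2\sqrt{2}L$. Avoiding this requires exploiting the fact that the double supremum over $(\boldsymbol f,\boldsymbol f')$ is \emph{already} present after the $\sigma_n$-averaging step, so the sign ambiguity inside the absolute value can be paid for without duplicating the class. Making this decoupling rigorous, and checking that the auxiliary $\boldsymbol\tau^{(n)}$ can be re-indexed to act as the $n$-th block of a single global Rademacher matrix $(\sigma_{i,j})_{i\leq n,\,j\leq d}$, is the delicate bookkeeping that occupies most of Maurer's argument.
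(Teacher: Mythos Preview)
The paper does not supply its own proof of this lemma; it is simply quoted from \citet{maurer2016vector} and used as a black box. Your proposal is therefore not being compared against an in-paper argument but against Maurer's original one, and your sketch is indeed a faithful outline of that proof: the right inequality is immediate from sub-additivity of the supremum, and the left inequality proceeds by induction on $n$, peeling off one Rademacher sign at a time, doubling to a joint supremum over $(\boldsymbol f,\boldsymbol f')$, invoking Lipschitzness together with the sharp Khintchine/Szarek constant $A_1=1/\sqrt{2}$, and then using the symmetry of the auxiliary Rademacher vector to recombine the two one-sided terms into a single supremum.

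One point deserves a slightly more careful statement than you give it. After bounding $h(\boldsymbol f(X_n))-h(\boldsymbol f'(X_n))\le \sqrt{2}L\,\mathbb{E}_{\boldsymbol\tau}\bigl|\sum_j\tau_j(f_j(X_n)-f'_j(X_n))\bigr|$, the ``linearisation'' is not quite a splitting of the absolute value into two independent pieces; rather, for each fixed realisation of $\boldsymbol\tau$ one writes
\[
g(\boldsymbol f)+g(\boldsymbol f')+\sqrt{2}L\Bigl|\sum_j\tau_j\bigl(f_j(X_n)-f'_j(X_n)\bigr)\Bigr|
\;\le\;
\sup_{\boldsymbol f}\Bigl[g(\boldsymbol f)+\sqrt{2}L\sum_j\tau_j f_j(X_n)\Bigr]
+\sup_{\boldsymbol f'}\Bigl[g(\boldsymbol f')-\sqrt{2}L\sum_j\tau_j f'_j(X_n)\Bigr],
\]
valid whichever sign the inner sum takes, and \emph{then} uses the sign-symmetry of $\boldsymbol\tau$ in expectation to identify the two terms. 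This is exactly Maurer's Lemma~1, and it is what keeps the constant at $\sqrt{2}$ rather than $2\sqrt{2}$. Your proposal has the right idea and correctly flags this as the delicate step, but the phrase ``splitting into two one-sided terms, one in $\boldsymbol f$ alone and one in $\boldsymbol f'$ alone'' slightly obscures that the split happens \emph{after} fixing $\boldsymbol\tau$ and \emph{before} taking the expectation over it.
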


\end{document}